\DeclarePairedDelimiter{\abs}{|}{|}
\DeclarePairedDelimiter{\norm}{\|}{\|}
\DeclarePairedDelimiter{\Bnorm}{\Big\|}{\Big\|}
\DeclarePairedDelimiter{\dnorm}{\|}{\|_\star}
\newcommand{\lrb}[1]{\left(#1\right)}
\newcommand{\brb}[1]{\bigl(#1\bigr)}
\newcommand{\Brb}[1]{\Bigl(#1\Bigr)}
\newcommand{\bsb}[1]{\bigl[#1\bigr]}
\newcommand{\Bsb}[1]{\Bigl[#1\Bigr]}
\newcommand{\lcb}[1]{\left\{#1\right\}}
\newcommand{\bcb}[1]{\bigl\{#1\bigr\}}
\newcommand{\Bcb}[1]{\Bigl\{#1\Bigr\}}
\newcommand{\Babs}[1]{\Bigl\lvert#1\Bigr\rvert}
\newcommand{\E}{\mathbb{E}}
\newcommand{\I}{\mathbb{I}}
\newcommand{\pr}{\mathbb{P}}
\newcommand{\cA}{\mathcal{A}}
\newcommand{\cB}{\mathcal{B}}
\newcommand{\cF}{\mathcal{F}}
\newcommand{\cR}{\mathbb{R}}
\DeclareMathOperator*{\argmin}{arg\,min}
\DeclareMathOperator*{\argmax}{arg\,max}
\newcommand*{\KL}{\mathop{D_\mathrm{KL}}}
\newcommand*{\TV}{\mathop{d_\mathrm{TV}}}
\newcommand{\normal}{\mathcal{N}}
\newcommand{\loss}{\ell}
\newcommand{\sign}{\text{sign}}
\newtheorem{theorem}{Theorem}[section]
\newtheorem{lemma}[theorem]{Lemma}
\newtheorem{corollary}[theorem]{Corollary}
\newtheorem{proposition}[theorem]{Proposition}
\newtheorem{definition}[theorem]{Definition}
\newtheorem{remark}[theorem]{Remark}
\crefname{theorem}{Theorem}{Theorems}
\crefname{lemma}{Lemma}{Lemmas}
\crefname{corollary}{Corollary}{Corollaries}
\crefname{assumption}{Assumption}{Assumptions}
\crefname{proposition}{Proposition}{Propositions}
\crefname{definition}{Definition}{Definitions}
\crefname{remark}{Remark}{Remarks}
\crefname{claim}{Claim}{Claims}
\title{On the necessity of adaptive regularisation:\\Optimal anytime online learning on $\boldsymbol{\ell_p}$-balls}
\author{%
  Emmeran Johnson \\
  Department of Mathematics\\
  Imperial College London\\
  \texttt{\href{mailto:emmeran.johnson17@imperial.ac.uk}{emmeran.johnson17@imperial.ac.uk}}
  \And
  David Martínez-Rubio \\
  Signal Theory and Communications Department \\
  Carlos III University of Madrid \\
  \texttt{\href{mailto:dmrubio@ing.uc3m.es}{dmrubio@ing.uc3m.es}}
  \And
  Ciara Pike-Burke \\
  Department of Mathematics\\
  Imperial College London\\
  \texttt{\href{mailto:c.pike-burke@imperial.ac.uk}{c.pike-burke@imperial.ac.uk}}
  \And
  Patrick Rebeschini \\
  Department of Statistics \\
  University of Oxford \\
  \texttt{\href{mailto:patrick.rebeschini@stats.ox.ac.uk}{patrick.rebeschini@stats.ox.ac.uk}}
}
\newif\ifTODO 
\definecolor{aquamarine}{rgb}{0.5, 1.0, 0.83}
\begin{document}

\maketitle

\begin{abstract}
    We study online convex optimisation on $\ell_p$-balls in $\mathbb{R}^d$ for $p > 2$. 
    While always sub-linear, the optimal regret exhibits a shift between the high-dimensional setting ($d > T$), when the dimension $d$ is greater than the time horizon $T$ and the low-dimensional setting ($d \leq T$).
    We show that Follow-the-Regularised-Leader (FTRL) with time-varying regularisation which is adaptive to the dimension regime is anytime optimal for all dimension regimes. 
    Motivated by this, we ask whether it is possible to obtain anytime optimality of FTRL with fixed non-adaptive regularisation. Our main result establishes that for separable regularisers, adaptivity in the regulariser is necessary, and that any fixed regulariser will be sub-optimal in one of the two dimension regimes. 
    Finally, we provide lower bounds which rule out sub-linear regret bounds for the linear bandit problem in sufficiently high-dimension for all $\ell_p$-balls with $p \geq 1$.
\end{abstract}

\section{Introduction}\label{sec:intro}

We study Online Convex Optimisation (OCO) \citep{gordon1999regret, zinkevich2003online}, a sequential game where in each round $t=1,\dots, T$, a learner selects a point $x_t$ in a convex set $V \subset \cR^d$ and suffers a convex loss $\loss_t(x_t)$, the full loss $\ell_t$ is then revealed to the learner before the next point $x_{t+1}$ is selected. The learner competes against the best fixed point in hindsight and aims to minimise its regret against this competitor: $R_T = \sum_{t=1}^T \loss_t(x_t) - \min_{u \in V} \sum_{t=1}^T \loss_t(u)$. Optimal performance is known to depend on parameters of the problem such as the geometry of the set $V$ and constraints on the losses $\loss_t$ \citep{Levy19, pmlr-v130-kerdreux21a, Kerdreux21_LinBan_UC, tsuchiya2024fast}.

We consider the setting where the action set $V = \lcb{x\in\mathbb{R}^d: \norm{x}_p \leq 1}$ is an $\ell_p$-ball in $d$-dimensional space with $p > 2$ and losses are $L$-Lipschitz with respect to $\norm{\cdot}_p$ (ensuring $R_T \leq 2LT$). 
The study of $\ell_p$-geometries with $p > 2$ has been the focus of many works in optimisation \citep{Aspremont18, Levy19, GUZMAN20151, Kerdreux2021LocalAG, Kerdreux21_LinBan_UC} as it covers sets with varying levels of curvature, offering insights for more general spaces.

In this work, we study the behaviour of the Follow-The-Regularised-Leader (FTRL) (and Online Mirror Descent (OMD)) family of algorithms in achieving anytime optimal regret guarantees. Anytime refers to the absence of knowledge of the time horizon $T$. We focus on two regimes: the high-dimensional setting where $d > T$ and the low-dimensional setting where $d \leq T$.
For $\ell_p$-balls with $p > 2$, the optimal regret exhibits a shift from the high-dimensional setting to the low-dimensional setting (see \cref{tab:regret_summary}).
If $T$ is unknown, then so is the dimension regime when the game begins. We show that anytime optimality can be achieved with FTRL by using adaptive regularisation that in early high-dimensional rounds uses a uniformly-convex regulariser of degree $p$ (see \cref{def:uniform_convexity}) and switches to a strongly-convex regulariser in round $t_0 \approx d$. Despite achieving the anytime optimal regret through adaptive regularisation, it remains an open question whether this can be achieved through OMD or FTRL with a single fixed regulariser. This would be desirable since it would provide algorithmic simplicity as well as an understanding of how to appropriately regularise $\ell_p$-balls across all dimension-regimes simultaneously. Therefore, we aim to answer the following question: \\

\textit{Can OMD or FTRL with a fixed regulariser be anytime optimal for OCO on $\ell_p$-balls with $p > 2$ ?}\\

To answer this question, it is natural to first consider the regularisers that are optimal in one of the dimension regimes. However, we give algorithmic-dependent lower bounds that show that these are not anytime optimal (\cref{proposition:highdim_LB_FTRL_OSD}, \cref{thm:lowdim_LB_FTRL_pnormp}). More generally, we also show that any strongly-convex regulariser is provably sub-optimal in the high dimensional setting (\cref{thm:FTRL_strg_cvx_LB}).

We then turn to \textbf{our main result} which provides a negative answer to the above question for separable regularisers (that separate additively over dimension: $\psi(x) = \sum_{i=1}^d g_i(x_i)$). 
The result (\cref{thm:necessity_adaptive_reg}) states that a separable regulariser (with OMD or FTRL) cannot be anytime optimal. This also establishes that the adaptive regularisation used in our procedure to achieve anytime optimality is necessary for separable regularisers. The class of separable regularisers covers a wide range of regularisers including all of the form $\norm{x}_r^r$ for any $r \geq 1$ which are commonly used in OMD and FTRL  \citep{Aspremont18, Levy19, Vary24}. Moreover, the result holds for any separate \emph{coordinate-wise} decreasing step-sizes, showing that the widely used diagonal versions of Adagrad-style algorithms \cite{JMLR:v12:duchi11a} are also anytime sub-optimal and emphasising the relevance of this result on practical methods.
As far as we are aware, results on the failure of fixed regularization are novel in online learning. However, algorithmic specific lower bounds (like the ones we have for specific regularizers in \cref{proposition:highdim_LB_FTRL_OSD} and \cref{thm:lowdim_LB_FTRL_pnormp}) have appeared in prior work (e.g. Theorems 3 \& 4 in \citep{ORABONA201850}).

For any online learning problem, the learner may not know if the game will end in low or high dimensions, and designing optimal procedures in both cases is important. Our results show that achieving this is not straightforward for $\ell_p$-balls, highlighting that this problem should not be overlooked in online learning more broadly and pointing out that the question on universality of mirror descent started in prior works \citep{Sridharan2012LearningFA} is not completely answered (see discussion in \cref{sec:Failure_fixed_sep_reg}).
We note that the $\ell_p$/$\ell_q$ setting (i.e. $V = \cB_p, \partial \ell_t(x) \subset \cB_q$) is relevant in the literature, e.g., see the open problem in \cite{pmlr-v40-Guzman15}. Our work highlights the difficulty of this setting for $p = q > 2$. %

Finally, we consider the linear bandit problem where only the loss evaluated at $x_t$ is observed (\cref{sec:bandit_feedback}). We would like to approach this problem in the same way as the full-information problem by characterising the algorithms achieving optimal regret guarantees across all dimension regimes. However, while sub-linear regret bounds have been established in the low-dimensional setting by \cite{Kerdreux21_LinBan_UC}, we show that the bandit setting is fundamentally different and much more difficult since sub-linear regret bounds become unachievable when the dimension is large enough (\cref{thm:Bandit_LB}). %

\begin{table}[h]
\centering
\caption{Optimal regret rates for OCO on $\ell_p$-balls ($p > 2$) from known results. 
\cite{Levy19} show the regret bound of $O(L T^{1/2}d^{1/2-1/p})$ achieved by OMD using a strongly-convex regulariser is optimal for $d = O(T)$. When $d \gg T$ this bound is vacuous since $R_T \leq 2LT$. 
For $d > T$, OMD with a uniformly-convex regulariser of degree $p$ guarantees a regret of $O(LT^{1-1/p})$ \citep{sridharan2010convex} and is optimal \cite{nemirovskij1983problem, GUZMAN20151, pmlr-v40-Guzman15, sridharan2010convex}.
These guarantees can also be achieved with FTRL using the same regularisers \citep{Orabona2019-uy,pmlr-v119-bhaskara20a}.}
\label{tab:regret_summary}
\begin{tabular}{ccc}\toprule
     & $d \leq T$ (low-dim) & $d > T$ (high-dim) \\\midrule
    Optimal Regret Rate & $L \sqrt{Td^{1-2/p}}$ & $LT^{1-1/p}$  \\
    Regularisation (OMD or FTRL) & Strongly-convex & Uniformly-convex \\
    \bottomrule
\end{tabular}
\end{table}

\subsection{Contributions} 

We highlight our contributions for OCO on $\ell_p$-balls for $p > 2$ below. Note that the case $p \in [1,2]$ is already understood across all dimension regimes and does not present shifts in the rate of regret with respect to the time horizon $T$ unlike in the $p > 2$ case, see \cref{rem:known_result_for_p_leq_2}, \citep{Levy19}. We focus here on FTRL but the results also hold for OMD, and we include these in \cref{sec:appendix_OMD}.

\begin{itemize}[leftmargin=*]
    \item We consider anytime bounds where the time horizon $T$ is not known in advance and show \emph{FTRL with adaptive regularisation achieves anytime optimality} (\cref{thm:anytime_optimal_regret}).
    \item We establish algorithmic-specific lower bounds for instances of FTRL that show that the {fixed regulariser achieving optimality in low-dimensions ($\norm{x}_2^2$) is provably sub-optimal in high-dimensions} (\cref{proposition:highdim_LB_FTRL_OSD}), and the {fixed regulariser achieving optimality in high-dimensions ($\norm{x}_p^p$) is provably sub-optimal in low-dimensions} (\cref{thm:lowdim_LB_FTRL_pnormp}). We also provide a more general result on the \emph{sub-optimality of any strongly-convex regulariser in high-dimensions} (\cref{thm:FTRL_strg_cvx_LB}).
    \item \textbf{Our main result:} For \textbf{separable regularisers}, or regularisers that are within a multiplicative constant of these, we show that \textbf{adaptive regularisation for OMD or FTRL is necessary to achieve anytime optimality} (\cref{thm:necessity_adaptive_reg}), ruling out the existence of a single anytime optimal separable regulariser.
    \item In \cref{sec:FTRL_prelim}, we connect results from the literature to \emph{fully characterise optimality for ${\ell_p}$-balls with ${p > 2}$ across all dimension regimes }. In particular, we highlight that FTRL with a strongly-convex regulariser achieves the optimal regret in low-dimensions and FTRL with a uniformly-convex regulariser of degree $p$ achieves the optimal regret in high-dimensions.
    \item  Finally, for bandit feedback where only $\ell_t(x_t)$ is revealed to the learner in each round $t$ instead of the full loss, we establish lower bounds for all convex $\ell_p$-balls ($p \geq 1$) showing \emph{any linear bandit learner suffers linear regret when the dimension is large enough} (\cref{thm:Bandit_LB}).
\end{itemize}
We also include some simulations in \cref{app:experiments} which validate some of our theoretical findings.

\subsection{Related works}

\textbf{High-dimensional Online Learning:} The setting where $d > T$ has been considered mostly for the stochastic linear bandit problem \citep{hao-highdimbandits, pmlr-v162-li22a, pmlr-v202-chakraborty23b, li2021regret, highdimbandit_withoutsparsity}, where the stochastic linear refers to the losses being fixed and linear but observed with i.i.d.\ noise as opposed to the harder fully adversarial nature of our setting. %
Beyond stochastic linear bandits, little attention has been given to the high-dimensional setting. 
Although the high-dimensional setting was not explicitly studied in \cite{Levy19}, the results provided for OCO on $\ell_p$-balls for $p \in [1,2]$  fully characterise regret optimality across all dimensions (see \cref{rem:known_result_for_p_leq_2}). Similarly, the results we present for the high-dimensional case of $\ell_p$-balls with $p > 2$ in \cref{sec:OPT_high_dim} follow from prior work not explicitly studying the high-dimensional setting \citep{sridharan2010convex, pmlr-v119-bhaskara20a}.

\textbf{Uniform Convexity of functions} (\cref{def:uniform_convexity}, see also \citep{Zlinescu1983OnUC, aze1995uniformly, nesterov2018lectures, Kerdreux2021LocalAG}) is the key ingredient to obtain dimension-independent regret bounds in high-dimensions. Uniformly-convex functions have been considered as regularisers for offline \citep{Aspremont18} and online optimisation \citep{sridharan2010convex, pmlr-v119-bhaskara20a}, and also as objectives \citep{iouditski2014primal} and losses \citep{Sridharan2012LearningFA}. \textbf{Uniform convexity of sets\footnote{Uniform convexity for sets and uniform convexity for functions are connected through an equivalence of uniform convexity between a set and the set-induced norm \citep{Kerdreux2021LocalAG}.} }\citep{Clarkson1936UniformlyCS, Hanner-UC-Lp, Ball94} allows for interpolation of the set curvature between strong convexity and absence of curvature. In optimisation, curvature of the action set such as strong convexity typically leads to accelerated convergence rates \citep{JMLR:v18:17-079, molinaro, garber2015faster, mhammedi2022}. Uniformly-convex sets then allow interpolation between the faster rates of strongly-convex sets and the slower rates of sets without curvature \citep{NIPS2017_22b1f2e0, pmlr-v130-kerdreux21a, Kerdreux21_LinBan_UC, tsuchiya2024fast} (see \cite{Kerdreux2021LocalAG} for an overview). In this work, we consider a natural class of uniformly-convex sets, $\ell_p$-balls for $p > 2$. These balls also interpolate between strongly-convexity ($p=2$) and absence of curvature ($p = \infty$), and in high dimensions we recover a connection between curvature and faster rates ($T^{1-1/p}$), which is absent in low-dimensions if we consider the dimension as fixed where the rate is $O(\sqrt{T})$ for all values of $p \geq 2$. %

\subsection{Notation}

We use the following notation: $r_\star$ is the dual of $r \geq 1$ satisfying $1/r + 1/r_\star = 1$, $\norm{x}_r = \brb{\sum_{i=1}^d \abs{x_i}^r}^{1/r}$ denotes the $\ell_r$ norm, $\norm{x}_\star$ is the dual norm of $\norm{x}$, $\phi_r(x) = \frac{1}{r}\norm{x}_r^r$ for $r \geq 2$, $x_{t,i}$ denotes the $i$-th entry of a vector $x_t$ with a time index $t$, $e_i$ denotes the $i$-th canonical basis vector, and  $\partial f(x)$ is the set of sub-gradients of a function $f$ at $x$. For a function $f: \cR \rightarrow \cR$, we write $f(x) = O(g(x))$ (resp. $\Omega(g(x))$) where $ \exists c > 0, N \in \cR_{>0}$ such that for all $x > N$, $f(n) \leq cg(n)$ (resp. $f(n) \geq cg(n)$). We use $\widetilde{O}$ and $\widetilde{\Omega}$ when we ignore logarithmic factors.

\section{Preliminaries}\label{sec:FTRL_prelim}

In this section, we review results from prior works on OCO for $\ell_p$-balls and connect them to fully characterise the optimal rates across all dimension regimes. The action set is $V = \cB_p = \lcb{x\in\mathbb{R}^d: \norm{x}_p \leq 1}$, the unit $\ell_p$-ball with $p > 2$. We assume we have $L$-Lipschitz losses in $\ell_{p}$ norm (i.e. $\|g_t\|_{p_\star} \leq L$ for $g_t \in \partial \ell_t(x_t)$), ensuring the regret incurred in a single round is bounded by $2L$, and the overall regret by $2LT$. 

We first present a general regret bound for FTRL using a uniformly-convex regulariser (\cref{sec:FTRL_regret}). We focus here on FTRL because of its advantages over OMD (in unbounded domains, the regret of OMD can be linear while FTRL maintains sub-linear regret \citep{ORABONA201850}), %
although the results we discuss also hold for OMD and we include these in \cref{sec:appendix_OMD}. We then consider these bounds with specific regularisers and provide matching lower bounds to establish the optimal regret in the low-dimensional setting (\cref{sec:OPT_low_dim}) and high-dimensional setting (\cref{sec:OPT_high_dim}). The results from this section follow from prior work and we include the missing proofs in \cref{sec:FTRL_analysis}.

\begin{remark}\label{rem:known_result_for_p_leq_2}
    We focus on $\ell_p$-balls for $p > 2$ because the case $p \in [1,2]$ is already understood \citep{Levy19}. \cite{Levy19} show that when $p \in [1,2]$,
    OMD with regulariser $\psi(x) = \norm{x}_a^2/2(a-1)$ and $a = \max\bcb{1 + 1/\log(2d), p}$ achieves a regret of $O(\sqrt{T/(a-1)})$ and this is optimal for all $d$ except if $T < 1/(a-1)$ for which sub-linear regret is not possible. 
\end{remark}

\subsection{FTRL and uniformly-convex regularisation}\label{sec:FTRL_regret}

In this section, we review the analysis of Follow-the-Regularised-Leader (FTRL) using a uniformly-convex regulariser \citep{pmlr-v119-bhaskara20a} which will lead to the regret guarantees in the subsequent sections. First, we provide the definition of a uniformly-convex function from \citep{nesterov2018lectures} (note there are also other standard equivalent definitions, see e.g. \citep{Kerdreux2021LocalAG}).

\begin{definition}[\cite{nesterov2018lectures}]\label{def:uniform_convexity}
    A differentiable function $f$ on a closed convex set $V$ is $\mu$-uniformly-convex on $V$ of degree $p > 2$ w.r.t.\ a norm $\norm{\cdot}$ if there exists $\mu > 0$ such that for all $x, y \in V$,
    \begin{align*}
        f(y) \geq f(x) + \langle \nabla f(x), y - x \rangle + \frac{\mu}{p} \|y-x \|^p.
    \end{align*}
\end{definition}

Uniform convexity generalises strong convexity by weakening the condition of a quadratic lower bound allowing functions that are locally much flatter. In particular, uniform convexity with $p = 2$ recovers strong convexity. Though FTRL is usually considered with a strongly-convex regulariser, its analysis can be generalised to uniformly-convex regularisers \citep{pmlr-v119-bhaskara20a}, as seen in the following theorem which we write in a general form to ensure results in following sections directly follow from it.

\begin{theorem}\label{thm:FTRL_regret_bound}
    Let $V \subset \cR^d$ be convex and consider proper convex losses $(\ell_t)_{t=1}^T$. For $t \geq 1$, let $\psi_t : \cR^d \rightarrow \cR$ be a proper, closed and differentiable $\mu_t$-uniformly-convex function on $V$ of degree $r_t \geq 2$ with respect to a norm $\norm{\cdot}_{|t}$ (we use this notation to avoid confusion with the $\ell_p$ norm $\norm{\cdot}_p$, we will denote the dual norm as $\norm{\cdot}_{|t\star}$). At time-step $t = 1, ..., T$, FTRL on linearised losses with time-varying regularisers $(\psi_t)_{t=1}^T$ outputs the following points with $g_t \in \partial \ell_t(x_t)$ for all $t$,
    \begin{align}\label{eq:FTRL_def}
        x_{t} = \argmin_{x \in V} \Bcb{\psi_t(x) + \sum_{s=1}^{t-1} \langle g_s, x \rangle}.
    \end{align}
    Then for any $u \in V$ and $g_t \in \partial \ell_t(x_t)$, the points played by FTRL satisfy the following regret bound:
    \begin{align*}
        \sum_{t=1}^T \ell_t(x_t) - \ell_t(u) \leq \psi_{T}(u) - \min_{x \in V} \psi_1(x) + \sum_{t=1}^T \Bigg\{\frac{(r_t-1)}{r_t \mu_t^{\frac{1}{r_t - 1}}} \norm{g_t}_{|t\star}^{\frac{r_t}{r_t-1}} + \psi_t(x_{t+1}) - \psi_{t+1}(x_{t+1})\Bigg\}.
    \end{align*}
\end{theorem}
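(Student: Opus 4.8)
The plan is to follow the standard ``Follow-the-Leader / Be-the-Leader'' (FTL-BTL) template, adapted to linearised losses and time-varying regularisers, and then to control the per-round stability term using uniform convexity rather than strong convexity. Write $\Phi_t(x) = \psi_t(x) + \sum_{s=1}^{t-1}\langle g_s, x\rangle$ for the FTRL objective at round $t$, so that $x_t = \argmin_{x\in V}\Phi_t(x)$. The first step is to introduce the ``linearised'' regret $\widetilde R_T = \sum_{t=1}^T \langle g_t, x_t - u\rangle$; since $\ell_t$ is convex and $g_t\in\partial\ell_t(x_t)$, we have $\ell_t(x_t)-\ell_t(u)\le \langle g_t, x_t-u\rangle$, so it suffices to bound $\widetilde R_T$. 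The second step is a Be-the-Leader-type inequality: I would prove by induction on $T$ that
\begin{align*}
    \sum_{t=1}^T \langle g_t, x_{t+1}\rangle + \psi_{t}(x_{t+1}) - \psi_{t+1}(x_{t+1}) \;\le\; \sum_{t=1}^T \langle g_t, u\rangle + \psi_T(u) - \psi_1(x_1) + \psi_1(x_1),
\end{align*}
or more cleanly, the telescoping identity
\begin{align*}
    \sum_{t=1}^{T}\langle g_t, x_t - u\rangle \;=\; \psi_{T}(u) - \Phi_{T+1}(x_{T+1}) + \Phi_1(x_1) + \sum_{t=1}^{T}\Big(\langle g_t, x_t - x_{t+1}\rangle + \psi_t(x_{t+1}) - \psi_{t+1}(x_{t+1})\Big),
\end{align*}
using $\Phi_{t+1}(x_{t+1}) \le \Phi_{t+1}(x_t) = \Phi_t(x_t) + \langle g_t, x_t\rangle + (\psi_{t+1}(x_t) - \psi_t(x_t))$ and rearranging. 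The precise bookkeeping of which $\psi_t - \psi_{t+1}$ terms appear is the fiddly part, but it is routine; the outcome is that $\widetilde R_T$ is bounded by $\psi_T(u) - \min_x \psi_1(x)$ plus $\sum_t \langle g_t, x_t - x_{t+1}\rangle$ plus the correction terms $\sum_t (\psi_t(x_{t+1}) - \psi_{t+1}(x_{t+1}))$.

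The third and key step is to bound the stability term $\langle g_t, x_t - x_{t+1}\rangle$ for each $t$. Here I use that $x_t$ minimises $\Phi_t$ and $x_{t+1}$ minimises $\Phi_{t+1} = \Phi_t + \langle g_t, \cdot\rangle$ over $V$, and that $\Phi_t$ is $\mu_t$-uniformly-convex of degree $r_t$ w.r.t.\ $\norm{\cdot}_{|t}$ (since adding linear terms to $\psi_t$ preserves uniform convexity). The first-order optimality conditions give $\langle \nabla\Phi_t(x_t), x_{t+1} - x_t\rangle \ge 0$ and $\langle \nabla\Phi_t(x_{t+1}) + g_t, x_t - x_{t+1}\rangle \ge 0$. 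Adding the uniform-convexity inequality applied to the pair $(x_t, x_{t+1})$ in both directions yields $\langle \nabla\Phi_t(x_{t+1}) - \nabla\Phi_t(x_t), x_{t+1} - x_t\rangle \ge \frac{2\mu_t}{r_t}\norm{x_{t+1}-x_t}_{|t}^{r_t}$; combining with the two optimality inequalities gives $\langle g_t, x_t - x_{t+1}\rangle \ge \frac{2\mu_t}{r_t}\norm{x_t - x_{t+1}}_{|t}^{r_t}$, hence also $\langle g_t, x_t - x_{t+1}\rangle \ge \frac{\mu_t}{r_t}\norm{x_t - x_{t+1}}_{|t}^{r_t}$ suffices. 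On the other hand $\langle g_t, x_t - x_{t+1}\rangle \le \norm{g_t}_{|t\star}\norm{x_t - x_{t+1}}_{|t}$ by Hölder's inequality for the norm/dual-norm pair. Setting $a = \norm{x_t - x_{t+1}}_{|t}$, we have $\frac{\mu_t}{r_t} a^{r_t} \le \norm{g_t}_{|t\star} a$, so $a^{r_t - 1} \le \frac{r_t}{\mu_t}\norm{g_t}_{|t\star}$, i.e.\ $a \le (\frac{r_t}{\mu_t}\norm{g_t}_{|t\star})^{1/(r_t-1)}$, and therefore $\langle g_t, x_t - x_{t+1}\rangle \le \norm{g_t}_{|t\star} \cdot a \le \mu_t^{-1/(r_t-1)} r_t^{1/(r_t-1)}\norm{g_t}_{|t\star}^{r_t/(r_t-1)}$. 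A slightly sharper route — applying Young's inequality $xy \le x^{r_t}/r_t + y^{r_t\star}/r_t\star$ directly to $\norm{g_t}_{|t\star}\norm{x_t-x_{t+1}}_{|t}$ against the lower bound — recovers the stated constant $\frac{r_t - 1}{r_t \mu_t^{1/(r_t-1)}}$; I would do the Young's-inequality version to match the theorem statement exactly.

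Putting the three steps together gives exactly the claimed bound: $\sum_t (\ell_t(x_t) - \ell_t(u)) \le \widetilde R_T \le \psi_T(u) - \min_x \psi_1(x) + \sum_t\big(\frac{r_t-1}{r_t\mu_t^{1/(r_t-1)}}\norm{g_t}_{|t\star}^{r_t/(r_t-1)} + \psi_t(x_{t+1}) - \psi_{t+1}(x_{t+1})\big)$. I expect the main obstacle to be purely organisational: getting the telescoping in step two to produce precisely the correction terms $\psi_t(x_{t+1}) - \psi_{t+1}(x_{t+1})$ with the right signs and the right leftover $-\min_x\psi_1(x)$, and handling the minor technicality that first-order optimality on a constrained convex set $V$ should be phrased via normal cones / the convexity of $V$ rather than $\nabla\Phi = 0$. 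A secondary point to be careful about is that uniform convexity of $\psi_t$ on $V$ (not all of $\cR^d$) is what we have, but since all relevant points $x_t, x_{t+1}, u$ lie in $V$ this is exactly what the argument uses. Differentiability of $\psi_t$ is assumed, so $\nabla\psi_t$ and hence $\nabla\Phi_t$ are well-defined; if one wanted to avoid differentiability one would work with subgradients throughout, but that is not needed here.
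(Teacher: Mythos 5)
Your step 3 has a genuine gap that breaks the argument in precisely the setting the theorem cares about. You claim that $x_{t+1}$ minimises $\Phi_{t+1} = \Phi_t + \langle g_t,\cdot\rangle$ over $V$, but with time-varying regularisers this identity is false: $\Phi_{t+1}(x) = \psi_{t+1}(x) + \sum_{s\le t}\langle g_s,x\rangle = \Phi_t(x) + \langle g_t,x\rangle + \bigl(\psi_{t+1}(x) - \psi_t(x)\bigr)$. The extra term $\psi_{t+1}-\psi_t$ is exactly what makes the problem time-varying, and it vanishes only when the regulariser is fixed. Consequently the first-order optimality condition you invoke for $x_{t+1}$, namely $\langle \nabla\Phi_t(x_{t+1}) + g_t, x - x_{t+1}\rangle \ge 0$, does not hold; the true condition involves $\nabla\psi_{t+1}(x_{t+1})$, not $\nabla\psi_t(x_{t+1})$, and the discrepancy $\langle\nabla\psi_t(x_{t+1}) - \nabla\psi_{t+1}(x_{t+1}), x_t - x_{t+1}\rangle$ has no sign control in general. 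Your lower bound $\langle g_t, x_t - x_{t+1}\rangle \ge \frac{\mu_t}{r_t}\norm{x_t - x_{t+1}}_{|t}^{r_t}$, and hence the subsequent Young/Hölder step, therefore does not follow.

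The fix is to introduce the auxiliary point $\tilde x_{t+1} := \argmin_{x\in V}\bigl\{\Phi_t(x) + \langle g_t,x\rangle\bigr\}$ and run your two-optimality-conditions-plus-uniform-convexity argument on the pair $(x_t, \tilde x_{t+1})$, then observe that $(\Phi_t + \langle g_t,\cdot\rangle)(x_{t+1}) \ge (\Phi_t + \langle g_t,\cdot\rangle)(\tilde x_{t+1})$, which is exactly what is needed to close the telescoping. This is essentially what the paper does, but phrased differently: it applies a value-gap lemma for uniformly convex functions — if $f$ is $\mu$-uniformly convex of degree $r$ and $x^\star$ is its minimiser, then $f(x) - f(x^\star) \le \frac{r-1}{r\mu^{1/(r-1)}}\norm{g}_\star^{r/(r-1)}$ for $g\in\partial f(x)$ — to $f = F_t + \ell_t$ at $x = x_t$, using that $g_t$ is a subgradient of $F_t+\ell_t$ at $x_t$ because $x_t$ minimises $F_t$ over $V$. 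This avoids the two-sided gradient manipulation entirely and makes the handling of $\tilde x_{t+1}$ implicit. Your route, once the auxiliary point is inserted, is a valid alternative and will recover the same constant via the Young-inequality refinement you describe; without it, the argument is only correct for a fixed regulariser, which does not cover the theorem. As a secondary remark, your ``telescoping identity'' is not actually an equality (the sign on $\Phi_1(x_1)$ is flipped, $\Phi_{T+1}(u)$ is silently replaced by $\Phi_{T+1}(x_{T+1})$, and a nonpositive $-(\Phi_t(x_{t+1}) - \Phi_t(x_t))$ term is dropped), but you flagged the bookkeeping as fiddly, and these are indeed routine to repair once the auxiliary-point issue is sorted.
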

A version of this result can be found in \cite{pmlr-v119-bhaskara20a} We include the proof in \cref{sec:proof_thm_FTRL_regret_bound} for completeness. If we consider a fixed regulariser with a step-size ($\psi_t(x) = \frac{1}{\eta_{t-1}} \psi(x)$ for a fixed $\psi$) so that the condition of uniform convexity is fixed for all rounds, we get the following result (proof in \cref{sec:proof_cor_FTRL_regret_bound_cor}).

\begin{corollary}\label{cor:FTRL_regret_bound_cor}
    Let $V \subset \cR^d$ be convex and consider proper convex losses $(\ell_t)_{t=1}^T$. Let $\psi : \cR^d \rightarrow \cR_{\geq 0}$ be a proper, closed and differentiable $\mu$-uniformly-convex function on $V$ of degree $r \geq 2$ with respect to a norm $\norm{\cdot}$. Assume $V$ is bounded and let $D$ be such that $\sup_{x \in V} \psi(x) \leq D$. Assume the losses are $L_{\norm{\cdot}}$-Lipschitz with respect to $\norm{\cdot}$. Consider using FTRL in (\ref{eq:FTRL_def}) with regularisers $\psi_t(x) = \frac{1}{\eta_{t-1}} \psi(x)$ and $\eta_{t-1} = \frac{D^{1/r_\star} \mu^{1/r}}{L_{\norm{\cdot}} (r_\star-1)^{1/r_\star} t^{1/r_\star}}$. Then for any $u \in V$,
    \begin{align*}
        \sum_{t=1}^T \ell_t(x_t) - \ell_t(u) \leq \frac{r^{1/r} r_\star^{1/r_\star}}{\mu^{1/r}} L_{\norm{\cdot}} D^{1/r} T^{1/r_\star}.
    \end{align*}
\end{corollary}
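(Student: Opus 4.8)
The plan is to specialise \cref{thm:FTRL_regret_bound} to the rescaled fixed regulariser $\psi_t = \frac{1}{\eta_{t-1}}\psi$ and then simplify the resulting bound. First I would note that scaling a $\mu$-uniformly-convex function of degree $r$ by the positive constant $1/\eta_{t-1}$ yields a function that is $\frac{\mu}{\eta_{t-1}}$-uniformly-convex of the same degree $r$ with respect to the same norm $\norm{\cdot}$, and which inherits being proper, closed and differentiable from $\psi$. Hence \cref{thm:FTRL_regret_bound} applies with $r_t = r$, $\mu_t = \mu/\eta_{t-1}$ and $\norm{\cdot}_{|t} = \norm{\cdot}$ for every $t$, and using the conjugate-exponent identities $r/(r-1) = r_\star$, $1/(r-1) = r_\star - 1$ and $(r-1)/r = 1/r_\star$ it reads
\[
\sum_{t=1}^T \ell_t(x_t) - \ell_t(u) \le \frac{\psi(u)}{\eta_{T-1}} - \min_{x\in V}\frac{\psi(x)}{\eta_0} + \sum_{t=1}^T\left\{\frac{1}{r_\star}\left(\frac{\eta_{t-1}}{\mu}\right)^{r_\star-1}\norm{g_t}_\star^{r_\star} + \left(\frac{1}{\eta_{t-1}}-\frac{1}{\eta_t}\right)\psi(x_{t+1})\right\}.
\]

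Next I would discard the non-positive contributions. Since $\psi \ge 0$ we have $-\min_{x\in V}\psi(x)/\eta_0 \le 0$, while $\psi(u) \le D$ gives $\psi(u)/\eta_{T-1} \le D/\eta_{T-1}$. The prescribed step-sizes satisfy $\eta_{t-1} \propto t^{-1/r_\star}$, hence are non-increasing in $t$, so $1/\eta_{t-1} - 1/\eta_t \le 0$; combined with $\psi(x_{t+1}) \ge 0$ each regulariser-change term is non-positive and can be dropped. Using the Lipschitz bound $\norm{g_t}_\star \le L_{\norm{\cdot}}$ on the stability terms leaves
\[
\sum_{t=1}^T \ell_t(x_t) - \ell_t(u) \le \frac{D}{\eta_{T-1}} + \frac{L_{\norm{\cdot}}^{r_\star}}{r_\star\,\mu^{r_\star-1}}\sum_{t=1}^T \eta_{t-1}^{r_\star-1}.
\]

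Finally I would substitute $\eta_{t-1} = \frac{D^{1/r_\star}\mu^{1/r}}{L_{\norm{\cdot}}(r_\star-1)^{1/r_\star}}\,t^{-1/r_\star}$. The first term becomes $D^{1/r}L_{\norm{\cdot}}\mu^{-1/r}(r_\star-1)^{1/r_\star}T^{1/r_\star}$, using $1 - 1/r_\star = 1/r$. In the sum $\eta_{t-1}^{r_\star-1}$ is proportional to $t^{-(r_\star-1)/r_\star} = t^{-1/r}$, and since $r \ge 2$ gives $1/r < 1$ I would bound $\sum_{t=1}^T t^{-1/r} \le \int_0^T s^{-1/r}\,ds = r_\star T^{1/r_\star}$ by comparing the decreasing summand to the integral; simplifying the constants via $r_\star - 1 = r_\star/r$ and $(r_\star-1)(r-1) = 1$ turns the second term into $D^{1/r}L_{\norm{\cdot}}\mu^{-1/r}r^{1/r}r_\star^{-1/r}T^{1/r_\star}$. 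Factoring $D^{1/r}L_{\norm{\cdot}}\mu^{-1/r}T^{1/r_\star}$ out of the sum, the remaining bracket is $(r_\star-1)^{1/r_\star} + r^{1/r}r_\star^{-1/r} = r_\star^{1/r_\star}r^{-1/r_\star} + r^{1/r}r_\star^{-1/r} = r^{1/r}r_\star^{1/r_\star}\big(r^{-1}+r_\star^{-1}\big) = r^{1/r}r_\star^{1/r_\star}$, where the last step uses $1/r + 1/r_\star = 1$; this matches the claimed constant exactly. The only real obstacle is the arithmetic with conjugate exponents — verifying that the powers of $\mu$, $D$, $r$ and $r_\star$ recombine into the stated closed form — while the monotonicity of $\eta_{t-1}$, the sign of $\psi$, and the harmonic-type sum bound are routine and are precisely what makes the extra error terms vanish or collapse.
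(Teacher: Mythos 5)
Your proof is correct and follows essentially the same route as the paper's: specialise \cref{thm:FTRL_regret_bound} to $\psi_t = \psi/\eta_{t-1}$, drop the non-positive initial and regulariser-change terms, substitute the prescribed step-sizes, bound $\sum_{t=1}^T t^{-1/r}$ by the integral $r_\star T^{1/r_\star}$, and collapse the constants using conjugate-exponent identities. The only minor difference is that you work through the closing algebraic identity $(r_\star-1)^{1/r_\star} + (r_\star-1)^{-1/r} = r^{1/r}r_\star^{1/r_\star}$ explicitly, which the paper states without derivation.
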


The degree $r$ of uniform convexity in the above regret bound offers a trade-off between the dependence on the horizon $T$ and the diameter $D$, while leaving the Lipschitz constant unaffected. In particular, a larger $r$ will shrink the dependence on the diameter at the cost of a worst rate w.r.t.\ $T$. This can give better regret bounds for high-dimensional problems where the dimension dependence arises through the diameter $D$, as is the case for $\ell_p$-balls (see \cref{sec:OPT_high_dim}). With $r = 2$, we recover the standard regret bound of FTRL using a strongly-convex regulariser, $R_T \leq 2 L_{\norm{\cdot}} \sqrt{DT/\mu}$. 

It is also possible to obtain bounds that are Lipschitz-adaptive (do not require knowledge of $L_{\norm{\cdot}}$) and adapt to the sequence of sub-gradients (scale with $\sum_{t=1}^T \norm{g_t}_\star^{r_\star}$ instead of $T^{1/r_\star}$) using a gradient clipping technique (see the blog-post by \cite{TVE_blog_OMD_vs_FTRL} and Section 4 in \cite{pmlr-v99-cutkosky19a}).

\subsection{Low-dimensional regime}\label{sec:OPT_low_dim}

In this section, we consider OCO on $\cB_p$ in the low-dimensional setting where $d \leq T$. Using FTRL with strongly-convex regularisation in this setting achieves the optimal regret. Specifically, we consider the squared $\ell_2$-norm $\phi_2(x) = \frac{1}{2} \norm{x}_2^2$ as the regulariser. This is $1$-strongly-convex on $V$ with respect to $\norm{\cdot}_2$ and we can apply \cref{cor:FTRL_regret_bound_cor} with $r = 2$. We have that $L_{\norm{\cdot}_2} \leq L$ since the losses are $L$-Lipschitz with respect to $\norm{\cdot}_p$ and $p_\star \leq 2$ so $\norm{g_t}_{2} \leq \norm{g_t}_{p_\star} \leq L$. The diameter of $\cB_p$ measured by $\phi_2$ is $D = \sup_{x \in \cB_p} \phi_2(x) = \sup_{x \in \cB_p} \frac{1}{2} \norm{x}_2^2 = \frac{1}{2} d^{1-2/p}.$ So FTRL guarantees $R_T \leq L \sqrt{2 d^{1-2/p}T}$, which is optimal up to constants for $d \leq T$ as shown by the theorem below. 
\begin{theorem}\label{thm:lowdim_LB}    
    Fix $d \leq T$ and let $\cA$ be any algorithm for OCO on $V = \cB_p$. There exists a sequence of $L$-Lipschitz losses w.r.t.\ $\norm{\cdot}_p$ such that $\cA$ suffers a regret of at least $\frac{1}{\sqrt{6}} L \sqrt{d^{1-2/p}T}$.
\end{theorem}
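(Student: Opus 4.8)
The plan is to prove this lower bound via a standard probabilistic (Yao-style) argument, constructing a random loss sequence and showing the expected regret is large, which then implies existence of a bad sequence. Since we are on the $\ell_p$-ball with losses Lipschitz w.r.t.\ $\norm{\cdot}_p$ (equivalently $\norm{g_t}_{p_\star} \le L$), the natural hard instance uses linear losses $\ell_t(x) = \langle g_t, x\rangle$ with i.i.d.\ sign vectors. Because $d \le T$, we have enough rounds to ``probe'' every coordinate; the idea is to split the $T$ rounds into $d$ blocks of length $\approx T/d$ and, within block $i$, let the loss gradient be supported on coordinate $i$, taking value $\pm L$ with a Rademacher sign (scaled so that $\norm{g_t}_{p_\star} = L$; since the support is a single coordinate, $\norm{c e_i}_{p_\star} = |c|$, so $c = \pm L$).

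First I would set up the random instance: partition $\{1,\dots,T\}$ into $d$ consecutive blocks $B_1,\dots,B_d$ each of size $m = T/d$ (assume divisibility, or handle floors with negligible loss), and on round $t \in B_i$ set $g_t = L \sigma_t e_i$ with $\sigma_t$ i.i.d.\ Rademacher. For any algorithm, $\E\big[\sum_t \langle g_t, x_t\rangle\big] = 0$ since $x_t$ depends only on the past and $\E[\sigma_t]=0$ conditioned on the past. Next I would compute the comparator term: $\min_{u\in\cB_p}\sum_t \langle g_t,u\rangle = -\max_{u\in\cB_p}\sum_{i=1}^d \big(L\sum_{t\in B_i}\sigma_t\big) u_i = -L\big(\sum_{i=1}^d \big|\sum_{t\in B_i}\sigma_t\big|^{p_\star}\big)^{1/p_\star}$ by duality of $\ell_p$/$\ell_{p_\star}$ on the unit ball. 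Hence the expected regret equals $L\,\E\big[\big(\sum_{i=1}^d S_i^{p_\star}\big)^{1/p_\star}\big]$ where $S_i = |\sum_{t\in B_i}\sigma_t|$ are i.i.d.

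Then I would lower-bound $\E\big[(\sum_i S_i^{p_\star})^{1/p_\star}\big]$. Since $p_\star \in (1,2)$, the map $z \mapsto z^{1/p_\star}$ is concave, but we want a lower bound, so I would instead bound below by a single dominant term or use a more careful estimate. A clean route: by Jensen applied in the other direction via the power-mean inequality, $(\sum_i S_i^{p_\star})^{1/p_\star} \ge d^{1/p_\star - 1/2}(\sum_i S_i^2)^{1/2}$? That inequality goes the wrong way for $p_\star < 2$. Instead, note $(\sum_i S_i^{p_\star})^{1/p_\star} \ge (\sum_i S_i^{2})^{1/2} \cdot$ ... no. The correct comparison for $p_\star \le 2$: $\norm{v}_{p_\star} \ge \norm{v}_2$, so $(\sum_i S_i^{p_\star})^{1/p_\star} \ge (\sum_i S_i^2)^{1/2}$. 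Then $\E(\sum_i S_i^2)^{1/2} \ge$ (by concavity of $\sqrt{\cdot}$, wrong direction again) — so instead use $\E\sqrt{X} \ge \E[X]/\sqrt{\E[X^2]}$ (Paley–Zygmund / Cauchy–Schwarz) or the simpler $\E\sqrt{X}\ge c\sqrt{\E X}$ when $X$ is a sum of i.i.d.\ nonnegative bounded-variance terms. Here $\E[S_i^2] = m = T/d$, so $\E[\sum_i S_i^2] = T$, giving $\E\sqrt{\sum_i S_i^2} \gtrsim \sqrt{T}$ — but that is missing the $d^{1/2-1/p}$ factor. The fix is that $\norm{v}_{p_\star} \ge d^{1/p_\star - 1/2}\norm{v}_2$ is FALSE; rather we should lower bound $\norm{S}_{p_\star}$ directly: with all $S_i \approx \sqrt{m}$ typically, $\norm{S}_{p_\star} \approx d^{1/p_\star}\sqrt{m} = d^{1/p_\star}\sqrt{T/d} = d^{1/p_\star - 1/2}\sqrt{T} = d^{1/2 - 1/p}\sqrt{T}$ using $1/p_\star = 1 - 1/p$. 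So the key estimate is $\E\big[(\sum_i S_i^{p_\star})^{1/p_\star}\big] \ge c\, d^{1/p_\star}\sqrt{m}$, which I would get from $\E[S_i^{p_\star}] \ge c' m^{p_\star/2}$ (a Khintchine-type lower bound, valid since $\E[S_i] \asymp \sqrt m$ and $S_i$ concentrates) together with concavity of $z\mapsto z^{1/p_\star}$ applied to $\frac1d\sum_i S_i^{p_\star}$ to pull the expectation inside — actually concavity gives $\E[(\tfrac1d\sum S_i^{p_\star})^{1/p_\star}] \le (\E[\tfrac1d\sum S_i^{p_\star}])^{1/p_\star}$, the wrong direction, so I would instead invoke concentration of $\sum_i S_i^{p_\star}$ around $d\,\E[S_i^{p_\star}]$ (it is a sum of $d$ i.i.d.\ bounded-moment terms) to conclude $\E[(\sum_i S_i^{p_\star})^{1/p_\star}] \ge \tfrac12 (d\,\E[S_i^{p_\star}])^{1/p_\star} \ge c\, d^{1/p_\star} m^{1/2}$ for a suitable universal constant, then track constants to reach the claimed $\tfrac{1}{\sqrt 6}$.

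The main obstacle I anticipate is getting the constant exactly $\tfrac{1}{\sqrt 6}$ and handling the regime issues cleanly: (i) divisibility of $T$ by $d$ (take $m = \lfloor T/d\rfloor \ge T/(2d)$, losing only a constant), and (ii) the lower bound $\E[S_i^{p_\star}] \ge c' m^{p_\star/2}$ uniformly in $m\ge1$ — for $m=1$, $S_i=1$ and the bound is fine; for general $m$ one can use the lower Khintchine inequality $\E|{\textstyle\sum}\sigma_t|^{r} \ge 2^{-r/2}(\sum 1)^{r/2}$ or a direct second-moment argument $\E[S_i^{p_\star}]\ge (\E[S_i^2])^{p_\star/2}/(\E[S_i^{2\cdot 2/(2-p_\star)}])^{\dots}$. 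An alternative that avoids concentration entirely and is probably what the authors do: use the single-block trick, i.e.\ lower bound $(\sum_i S_i^{p_\star})^{1/p_\star} \ge \max_i S_i \ge$ ... that loses the $d$ factor, so that is too weak; or pick $u$ to be the fixed vector $d^{-1/p}(\mathrm{sign}\text{-pattern})$ — but the signs are random. The cleanest is: condition and use $\E\,\norm{S}_{p_\star} \ge \E\,\langle S/\norm{S}_{p}, S\rangle$ and Jensen, or simply $\E[\norm{S}_{p_\star}] \ge \norm{\E[|S|^{p_\star}]}^{1/p_\star}$ — again wrong direction. I will go with the concentration argument; it is routine given that $S_i^{p_\star} \le m^{p_\star}$ so Hoeffding/Bernstein on the $d$ i.i.d.\ terms gives a tight-enough lower tail. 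I would then assemble: $\E[R_T] \ge L\cdot c\cdot d^{1-1/p}\cdot (T/(2d))^{1/2} = c' L \sqrt{d^{1-2/p}T}$, optimize the universal constants to hit $1/\sqrt6$, and invoke the probabilistic method to extract a deterministic bad sequence.
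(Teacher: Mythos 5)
Your hard instance is exactly the paper's: partition $\{1,\dots,T\}$ into $d$ blocks of length $m \approx T/d$, and in block $i$ play $g_t = L\sigma_t e_i$ with i.i.d.\ Rademacher signs. The observation that the learner's expected cumulative loss is zero and that the comparator's cumulative loss is $-L\big(\sum_i S_i^{p_\star}\big)^{1/p_\star}$ with $S_i = \big|\sum_{t\in B_i}\sigma_t\big|$ is also correct. The gap is in how you lower-bound $\E\big[\|S\|_{p_\star}\big]$.

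You consider and dismiss the simplest route --- \emph{``pick $u$ to be the fixed vector $d^{-1/p}(\text{sign-pattern})$ --- but the signs are random.''} --- but that dismissal is the mistake. The comparator in the regret definition is the best point \emph{in hindsight}, so it is allowed to depend on the realised losses. Taking $u = -d^{-1/p}\sum_{i=1}^d \sign\big(\sum_{t\in B_i}\sigma_t\big)\,e_i$, which lies in $\cB_p$ almost surely, gives $\sum_t \langle g_t,u\rangle = -Ld^{-1/p}\sum_i S_i$, and therefore $\min_{x\in\cB_p}\sum_t\langle g_t,x\rangle \le -Ld^{-1/p}\sum_i S_i$ pointwise. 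Now the expectation distributes over the sum by linearity: $\E[R_T] \ge Ld^{-1/p}\sum_i \E[S_i]$, and all that remains is the one-dimensional first-moment bound $\E\big|\sum_{j=1}^m \sigma_j\big| \ge \sqrt{m/3}$ (a Khintchine-type estimate; the paper cites it from \cite{esfandiari2021regret}). With $m=\lfloor T/d\rfloor \ge T/(2d)$ for $T\ge d$, this yields $\E[R_T] \ge Ld^{-1/p}\cdot d\cdot\sqrt{T/(6d)} = L\sqrt{Td^{1-2/p}/6}$, which is exactly the claimed constant. Equivalently, this is the H\"older inequality $\|S\|_{p_\star}\ge d^{1/p_\star-1}\|S\|_1 = d^{-1/p}\sum_i S_i$, which is the correct norm comparison (you tried $\|\cdot\|_{p_\star}\ge d^{1/p_\star - 1/2}\|\cdot\|_2$, which is indeed false, and $\|\cdot\|_{p_\star}\ge\|\cdot\|_2$, which loses the dimension factor).

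The concentration route you fall back on is over-engineered and would not produce the constant $1/\sqrt6$ cleanly: you would need a non-asymptotic lower tail for $\sum_i S_i^{p_\star}$ with explicit constants, plus a lower Khintchine bound for $\E[S_i^{p_\star}]$, plus a step to move the $1/p_\star$-power past the expectation against the grain of Jensen. None of this is needed once you allow the comparator to depend on the realisation; that single observation collapses the whole argument to linearity of expectation plus a first-moment bound in one dimension.
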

Optimality in low-dimension was established by \cite{Levy19} (both upper and lower bounds). However, the lower bound we present above contains better constants and a simpler analysis stemming from the ``probabilistic'' method instead of the reductions from estimation to testing used by \cite{Levy19}. The proof can be found in \cref{sec:proof_lowdimLB}.

\subsection{High-dimensional regime}\label{sec:OPT_high_dim}

In this section, we consider OCO on $\cB_p$ in the high-dimensional setting where $d > T$. We saw in the previous section that the optimal regret in low-dimensions of $O(\sqrt{d^{1-2/p}T)}$ is polynomial in the dimension. As $d \rightarrow \infty$ for fixed $T$, this polynomial dependence on the dimension cannot remain optimal since $R_T \leq 2LT$ is bounded. Nevertheless, we will see that sub-linear regret bounds (in $T$) are possible for any $d > T$, even when $d$ is such that $\sqrt{d^{1-2/p}T} \geq T$. 
However, this is not achievable using strongly-convex regularisation (we delay discussion of this failure to \cref{sec:necessity_adaptive_reg}). Instead, in this section we consider uniformly-convex regularisation of degree $p > 2$ that enforces less curvature, allowing points in the corners of $\cB_p$ to be more appropriately regularised in high dimensions. This allows us to obtain optimal regret bounds for the high-dimensional regime.

We consider FTRL on $\cB_p$ with the regulariser $\phi_p(x) = \frac{1}{p} \norm{x}_p^p$. The following proposition ensures that $\phi_p$ is uniformly-convex of degree $p$ with respect to $\norm{\cdot}_p$. This is a well-known result derived from Clarkson's inequality. We include a proof in \cref{app:UC-psi-pOMD} to provide clarity on the constant of uniform convexity. 
\begin{proposition}\label{prop:UC_of_lp}
    Fix $p > 2$. $\phi_p(x) = \frac{1}{p}\norm{x}_p^p$ is $2^{1-p}$-uniformly-convex of degree $p$ w.r.t.\ $\norm{\cdot}_p$ on $\cB_p$.
\end{proposition}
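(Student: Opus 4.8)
The plan is to establish the uniform convexity of $\phi_p(x) = \frac1p\|x\|_p^p$ on $\cB_p$ by leveraging Clarkson's inequality in the form that controls the $\ell_p$-norm of sums and differences. Recall Clarkson's inequality for $p \geq 2$: for all $a, b \in \cR^d$,
\[
\Bnorm{\frac{a+b}{2}}_p^p + \Bnorm{\frac{a-b}{2}}_p^p \leq \frac12\lr{\norm{a}_p^p + \norm{b}_p^p}.
\]
Rearranging with $a = x$, $b = y$ gives $\norm{\frac{x-y}{2}}_p^p \leq \frac12\norm{x}_p^p + \frac12\norm{y}_p^p - \norm{\frac{x+y}{2}}_p^p$, i.e. $\frac{1}{2^p}\norm{x-y}_p^p \leq \frac12\norm{x}_p^p + \frac12\norm{y}_p^p - \norm{\frac{x+y}{2}}_p^p$.

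The next step is to convert this into the gradient-based inequality in \cref{def:uniform_convexity}. A standard route is to first prove the ``midpoint'' form of uniform convexity: for all $x,y$,
\[
\phi_p\lr{\frac{x+y}{2}} \leq \frac12\phi_p(x) + \frac12\phi_p(y) - \frac{\mu}{p}\,\frac{1}{2^p}\norm{x-y}_p^p,
\]
which with $\mu = 2^{1-p}$ follows directly from the rearranged Clarkson inequality above after multiplying by $\frac1p$. Then I would upgrade this midpoint (Jensen-type) uniform convexity to the differential definition. One clean way: apply the midpoint inequality iteratively along a dyadic sequence between $x$ and $y$ to get, for any $\lambda \in [0,1]$ dyadic (hence all $\lambda\in[0,1]$ by continuity), the inequality $\phi_p(\lambda x + (1-\lambda) y) \leq \lambda \phi_p(x) + (1-\lambda)\phi_p(y) - c\,\lambda(1-\lambda)\,(\ldots)\norm{x-y}_p^p$; then take $\lambda \to 1$ appropriately. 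Alternatively, and more simply, use the known equivalence (e.g. from \citep{nesterov2018lectures} or \citep{Kerdreux2021LocalAG}) that for a differentiable convex $f$ the bound $f(\frac{x+y}{2}) \leq \frac12 f(x) + \frac12 f(y) - \frac{\mu}{p\cdot 2^p}\norm{x-y}^p$ for all $x,y$ is equivalent (up to the same constant, since $\phi_p$ is homogeneous of a fixed degree and the power $p$ matches) to $f(y) \geq f(x) + \ban{\nabla f(x), y-x} + \frac{\mu}{p}\norm{y-x}^p$. Since the exponent appearing in Clarkson's inequality is exactly $p$, matching the degree in the definition, no loss in the constant arises from this conversion and we retain $\mu = 2^{1-p}$.

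For the conversion step I would spell out: fix $x, y \in \cB_p$ and for $t \in (0,1)$ set $z_t = x + t(y-x)$. Convexity gives $\phi_p(z_t) \leq (1-t)\phi_p(x) + t\phi_p(y)$, but we need the sharper statement. Applying the midpoint inequality to the pair $(x, z_{2t})$ for small $t$ (so that their midpoint is $z_t$) yields $\phi_p(z_t) \leq \frac12\phi_p(x) + \frac12\phi_p(z_{2t}) - \frac{\mu}{p 2^p}\norm{2t(y-x)}_p^p = \frac12\phi_p(x) + \frac12\phi_p(z_{2t}) - \frac{\mu (2t)^p}{p 2^p}\norm{y-x}_p^p$. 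Combining this telescoping-style with $\phi_p(z_{2t}) \leq (1-2t)\phi_p(x) + 2t\phi_p(y)$ and rearranging gives $\phi_p(z_t) \leq \phi_p(x) + t(\phi_p(y) - \phi_p(x)) - \frac{\mu t^p}{p}\norm{y-x}_p^p$ for $t \leq 1/2$; then divide by $t$, rearrange to $\frac{\phi_p(z_t) - \phi_p(x)}{t} \leq \phi_p(y) - \phi_p(x) - \frac{\mu t^{p-1}}{p}\norm{y-x}_p^p$, and let $t \to 0^+$: the left side tends to $\ban{\nabla\phi_p(x), y-x}$ and the last term vanishes, giving $\ban{\nabla\phi_p(x), y-x} \leq \phi_p(y) - \phi_p(x)$ — but this only recovers convexity, losing the $t^{p-1}$ term.

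So the genuinely careful argument is the iterated-dyadic one: for $n$-fold dyadic interpolation one accumulates $\sum$ of terms of order $2^{-p}$ at each scale, and because $p > 1$ the geometric-type sum of $\norm{x-y}_p^p$-multiples converges to a \emph{positive} constant times $\min(\lambda,1-\lambda)^{p-1}\norm{x-y}_p^p$ (this is exactly the content of the lemma in \citep{nesterov2018lectures} relating the two forms, and the constant there is engineered so that degree-$p$ uniform convexity in the differential sense holds with the same $\mu$). The cleanest presentation for the paper is therefore: (1) state Clarkson's inequality; (2) derive the midpoint/parallelogram-type inequality for $\phi_p$ with constant $2^{1-p}$; (3) invoke the standard equivalence between this midpoint form and \cref{def:uniform_convexity} (citing \citep{nesterov2018lectures}, Lemma, or \citep{Kerdreux2021LocalAG}), noting the matching degree $p$ preserves the constant; conclude $\mu = 2^{1-p}$. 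I expect the main obstacle — and the reason the paper promises a proof ``to provide clarity on the constant'' — is step (3): tracking that the midpoint constant $2^{1-p}$ transfers without degradation to the differential definition, which requires being precise about which equivalence lemma is used and verifying its constant bookkeeping, since naive arguments (like the $t\to 0$ one above) lose the term entirely.
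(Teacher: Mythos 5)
Your strategy (Clarkson $\Rightarrow$ midpoint inequality $\Rightarrow$ differential uniform convexity) is the same route the paper takes, and you correctly diagnose where the difficulty lies: the conversion from the midpoint inequality to the gradient inequality of \cref{def:uniform_convexity}. You also correctly identify that your first attempted conversion (apply the midpoint inequality to the shrinking pair $(x, z_{2t})$, then let $t\to 0$) fails because the extra term scales as $t^p$ and vanishes after dividing by $t$. However, your proposed fix does not quite close the gap, for two reasons.

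First, the exponent you state for the interpolated inequality, $\min(\lambda,1-\lambda)^{p-1}\norm{x-y}_p^p$, still does not survive the limit: dividing by $\lambda$ and sending $\lambda\to 0$ gives $\lambda^{p-2}\to 0$ for $p>2$. What one actually needs — and what both the dyadic iteration and the paper's one-shot argument deliver — is an extra term that is \emph{linear} in $\lambda$ with a fixed $\norm{x-y}_p^p$-scale coefficient, so that division by $\lambda$ leaves a nonvanishing term. The reason your $t\to 0$ attempt gave $t^p$ is that you applied Clarkson's inequality at the \emph{shrunken} scale $\norm{x - z_{2t}} = 2t\norm{x-y}$; the fix is to apply Clarkson once at the \emph{original} scale $\norm{x-y}$, and get $\lambda$ to enter only through a convex-combination coefficient. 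Concretely, for $\lambda\in[0,1/2]$ write
\begin{align*}
\lambda x + (1-\lambda) y \;=\; 2\lambda\cdot\frac{x+y}{2} \;+\; (1-2\lambda)\cdot y,
\end{align*}
apply plain convexity of $\phi_p$ to this three-point combination, and then substitute the Clarkson bound for $\phi_p\lr{\frac{x+y}{2}}$. This yields
\begin{align*}
\phi_p(\lambda x + (1-\lambda)y) \;\le\; \lambda\phi_p(x) + (1-\lambda)\phi_p(y) - \frac{2\lambda}{p}\Bnorm{\frac{x-y}{2}}_p^p,
\end{align*}
whose extra term is $\Theta(\lambda)$. Subtracting $\phi_p(y)$, dividing by $\lambda$, and letting $\lambda\to 0^+$ gives $\phi_p(x) \ge \phi_p(y) + \lan{\nabla\phi_p(y), x-y} + \frac{2}{p}\norm{\frac{x-y}{2}}_p^p$, i.e.\ $\mu = 2^{1-p}$. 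This is shorter than the dyadic iteration and is the argument the paper actually uses.

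Second, your constant bookkeeping is off in a way worth flagging: in the normalization you set up, Clarkson's inequality yields the midpoint form with $\mu_m = 1$ (extra term $\frac{1}{p}\norm{\frac{x-y}{2}}_p^p$), not $\mu_m = 2^{1-p}$, and the passage to the differential form \emph{does} degrade the constant — to $2^{1-p}$ with the one-shot argument above, or to $\frac{1}{2^{p-1}-1}$ if one carries out the full dyadic iteration (which the paper records as a remark and notes is still not $1$; indeed $\mu \le p\,2^{1-p} < 1$ already in $d=1$). So the claim that the constant ``transfers without degradation'' is precisely the step that needs a real argument rather than an equivalence-lemma citation, and that is what the paper's proof supplies.
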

We can now apply \cref{cor:FTRL_regret_bound_cor} with $r = p$. We have that $L_{\norm{\cdot}_p} = L$ since the losses are $L$-Lipschitz with respect to $\norm{\cdot}_p$. The diameter of $\cB_p$ measured by $\phi_p$ is $D = 1/p$. So FTRL guarantees $R_T \leq L \brb{2p_\star T}^{1/p_\star}$, which is optimal up to constants for $d > T$ as shown by the theorem below. 
\begin{theorem}\label{thm:highdim_LB}    
    Fix $d > T$ and let $\cA$ be any algorithm for OCO on $V = \cB_p$. There exists a sequence of $L$-Lipschitz losses w.r.t.\ $\norm{\cdot}_p$ such that $\cA$ suffers a regret of at least $L T^{1/p_\star}$. 
\end{theorem}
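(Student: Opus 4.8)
The plan is to use the probabilistic method, mirroring the structure already announced for the low-dimensional lower bound (Theorem~\ref{thm:lowdim_LB}) but exploiting the high-dimensional regime to force the competitor to concentrate its mass on a single coordinate. First I would fix a horizon $T$ with $d > T$ and, for each round $t = 1, \dots, T$, draw an independent sign $\sigma_t \in \{-1, +1\}$ uniformly at random and set the linear loss $\ell_t(x) = L \langle \sigma_t e_t, x \rangle$; since $\|L \sigma_t e_t\|_{p_\star} = L$, these losses are $L$-Lipschitz w.r.t.\ $\norm{\cdot}_p$ and, crucially, each round ``uses'' a fresh coordinate, which is possible precisely because $d > T$. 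Because the loss at round $t$ depends only on coordinate $t$ and the sign $\sigma_t$ is independent of everything the learner has seen through round $t-1$ (the learner's play $x_t$ is $\sigma_{1:t-1}$-measurable), we get $\E[\ell_t(x_t)] = L\,\E[\sigma_t]\,\E[x_{t,t}] = 0$, so $\E\big[\sum_{t=1}^T \ell_t(x_t)\big] = 0$.

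Next I would lower bound $-\E\big[\min_{u \in \cB_p} \sum_{t=1}^T \ell_t(u)\big] = \E\big[\max_{u \in \cB_p} \sum_{t=1}^T -L\sigma_t u_t\big]$. Restricting the maximum to competitors supported on the single coordinate achieving the worst-case alignment is not enough on its own; instead I would observe that $\sum_{t=1}^T -L \sigma_t u_t = -L\langle \sigma, u\rangle$ where $\sigma = \sum_{t=1}^T \sigma_t e_t$ has exactly $T$ nonzero entries each of magnitude $1$, so $\|\sigma\|_{p_\star} = T^{1/p_\star}$, and hence $\max_{u \in \cB_p} -L\langle \sigma, u\rangle = L\|\sigma\|_{p_\star} = L T^{1/p_\star}$ by the definition of the dual norm (with $\cB_p$ the unit ball). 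This holds deterministically for every realisation of the signs, so $\E\big[\max_{u \in \cB_p}\sum_t \ell_t(u)\big] = -L T^{1/p_\star}$, giving $\E[R_T] \geq L T^{1/p_\star}$; therefore some realisation of $\sigma_{1:T}$ yields regret at least $L T^{1/p_\star}$ against that fixed loss sequence.

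I expect this proof to be essentially clean, with the only subtlety being the measurability/independence bookkeeping that makes $\E[\ell_t(x_t)] = 0$ — one must be careful that $x_t$ is determined before $\sigma_t$ is revealed, which is immediate from the online protocol but worth stating. The key structural point that makes the high-dimensional case qualitatively different is that $d > T$ lets us spend a distinct coordinate per round, so the adversary pays nothing to keep the losses orthogonal while the competitor reaps $\|\sigma\|_{p_\star} = T^{1/p_\star}$; in the low-dimensional regime ($d \leq T$) coordinates must be reused and the analogous computation instead produces a sum-of-absolute-values that scales like $\sqrt{T}$ per coordinate with $d$ coordinates, recovering the $\sqrt{d^{1-2/p}T}$ rate. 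No deeper obstacle is anticipated; the matching upper bound $L(2p_\star T)^{1/p_\star}$ was already established via Corollary~\ref{cor:FTRL_regret_bound_cor} and Proposition~\ref{prop:UC_of_lp}.
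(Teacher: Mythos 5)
Your proposal is correct and follows essentially the same route as the paper: i.i.d.\ Rademacher signs on fresh coordinates, zero expected learner loss by independence, and the competitor extracting exactly $L\|\sigma\|_{p_\star} = LT^{1/p_\star}$ deterministically. The only cosmetic difference is that you invoke the dual-norm identity for the competitor's max while the paper exhibits the maximiser $x = -T^{-1/p}\sum_t Y_t e_t$ explicitly and computes; these are the same calculation.
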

This lower bound follows from an online-to-batch conversion of the lower bound for high-dimensional (offline) Lipschitz convex optimisation \citep{nemirovskij1983problem, GUZMAN20151, pmlr-v40-Guzman15} or an instantiation of the lower bound by \cite{sridharan2010convex} (Lemma 15). We include the details of the latter in \cref{sec:proof_highdimLB}.

\section{Anytime optimality through adaptive regularisation}\label{sec:anytime_bounds_through_adaptive_regularisation}

In the previous section (\cref{sec:FTRL_prelim}), we saw that the optimal regret is achieved with strongly-convex regularisation in the low-dimensional setting ($d \leq T$) and with  uniformly-convex regularisation in the high-dimensional setting ($d > T$). To be optimal, the learner with knowledge of 
the dimension $d$ and the time horizon $T$ can evaluate whether $d > T$ or $d \leq T$ and select the appropriate regularisation based on whether the problem is high or low dimensional. However, choosing the correct regulariser relies on knowing whether the problem is high ($d > T$) or low ($d \leq T$) dimensional, which itself relies on knowing the horizon $T$. In this section, we consider how to achieve anytime optimal regret bounds which hold without knowledge of $T$. 

We consider FTRL with regularisation that adapts to the dimension regime. Fix $t_0 = 3^{-2p/(p-2)} d$. Then, in early high-dimensional rounds, the uniformly-convex $\phi_p$ is used, until the threshold $t_0$ when the low-dimensional regime is reached and the regulariser switches to the strongly-convex $\phi_2$. In both cases, the step-size used is the one in \cref{cor:FTRL_regret_bound_cor}. Specifically, with $\phi_r(x) = \frac{1}{r}\norm{x}_r^r$ for $r \geq 2$, we consider FTRL with regulariser at time $t$ given by
\begin{align}\label{eq:FTRL_anytime_optimal_regularisers}
    \psi_t(x) = \begin{cases}
        \frac{1}{\eta_{t-1}} \phi_p(x), \quad \eta_{t-1} = \frac{1}{L (2 p_\star t)^{1/p_\star}}, & \text{ if } t \leq t_0, \\
        \frac{1}{\eta_{t-1}} \phi_2(x), \quad \eta_{t-1} = \frac{ \sqrt{d^{1-2/p}}}{L \sqrt{2t}}, & \text{ if } t > t_0.
    \end{cases}
\end{align} 
FTRL with this sequence of regularisers is anytime optimal as shown by the following theorem.
\begin{theorem}\label{thm:anytime_optimal_regret}
    Let $V = \cB_p$ ($p>2$) and consider proper convex losses $(\ell_t)_{t=1}^T$ that are $L$-Lipschitz with respect to $\norm{\cdot}$. Consider FTRL with regularisers given in (\ref{eq:FTRL_anytime_optimal_regularisers}). Then
    \begin{align*}
        R_T \leq \begin{cases}
           L \brb{2 p_\star T}^{1/p_\star} , & \text{ if } T \leq t_0, \\
            L \sqrt{2Td^{1 - 2/p}}, & \text{ if } T > t_0
        \end{cases}
    \end{align*}
\end{theorem}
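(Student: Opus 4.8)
The plan is to split into the two cases $T\le t_0$ and $T>t_0$ and, in each, invoke the FTRL regret decomposition from \cref{thm:FTRL_regret_bound} with the time-varying regularisers defined in \eqref{eq:FTRL_anytime_optimal_regularisers}. The case $T\le t_0$ is the easy one: here the regulariser is $\psi_t(x)=\frac{1}{\eta_{t-1}}\phi_p(x)$ for \emph{all} rounds $1,\dots,T$, so this is exactly the fixed-regulariser-with-step-size setting of \cref{cor:FTRL_regret_bound_cor} with $r=p$, $\mu=2^{1-p}$ (by \cref{prop:UC_of_lp}), $D=1/p$ and $L_{\norm{\cdot}_p}=L$; one checks the prescribed $\eta_{t-1}$ matches the corollary's choice and reads off $R_T\le L(2p_\star T)^{1/p_\star}$. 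The only mild subtlety is that the $\phi_p$-terms $\psi_t(x_{t+1})-\psi_{t+1}(x_{t+1})$ in \cref{thm:FTRL_regret_bound} are nonpositive because $1/\eta_{t-1}$ is nondecreasing in $t$ and $\phi_p\ge 0$, so they only help — this is already folded into the corollary, so I would just cite it.

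The substantive case is $T>t_0$. Here I would feed the full time-varying sequence directly into \cref{thm:FTRL_regret_bound} and bound the resulting sum in three pieces. First, the ``penalty'' term $\psi_T(u)-\min_x\psi_1(x)$: since $T>t_0$ we have $\psi_T(u)=\frac{1}{\eta_{T-1}}\phi_2(u)\le \frac{1}{\eta_{T-1}}\cdot\tfrac12 d^{1-2/p}=\frac{L}{\sqrt{2T}}\cdot\tfrac12 d^{1-2/p}\cdot\sqrt{d^{1-2/p}}$... more cleanly, $\phi_2(u)\le \tfrac12 d^{1-2/p}$ on $\cB_p$, and $1/\eta_{T-1}=L\sqrt{2T}/\sqrt{d^{1-2/p}}$, giving $\le \tfrac{L}{2}\sqrt{2Td^{1-2/p}}$, while $\min_x\psi_1(x)=\min_x\frac{1}{\eta_0}\phi_p(x)\ge 0$. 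Second, the per-round ``stability'' terms $\frac{r_t-1}{r_t\mu_t^{1/(r_t-1)}}\norm{g_t}_{|t\star}^{r_t/(r_t-1)}$: for $t\le t_0$ these are the $\phi_p$-terms with $r_t=p$, $\mu_t=2^{1-p}/\eta_{t-1}$, $\norm{\cdot}_{|t\star}=\norm{\cdot}_{p_\star}$, $\norm{g_t}_{p_\star}\le L$; for $t>t_0$ they are the $\phi_2$-terms with $r_t=2$, $\mu_t=1/\eta_{t-1}$, $\norm{\cdot}_{|t\star}=\norm{\cdot}_2$, $\norm{g_t}_2\le L$. Plugging in the step-sizes makes each batch telescope into (a constant times) $L\sqrt{Td^{1-2/p}}$ — essentially re-running the two computations behind \cref{cor:FTRL_regret_bound_cor} but only over the relevant index ranges, using $\sum_{t=1}^T t^{-1/r_\star}\le r_\star T^{1/r_\star}$.

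The delicate piece — and the place where the specific constant $t_0=3^{-2p/(p-2)}d$ is used — is the single ``switching'' term $\psi_{t_0}(x_{t_0+1})-\psi_{t_0+1}(x_{t_0+1})$, where the regulariser changes shape from $\frac{1}{\eta_{t_0-1}}\phi_p$ to $\frac{1}{\eta_{t_0}}\phi_2$. This difference need not be nonpositive pointwise, so I would bound it by $\frac{1}{\eta_{t_0-1}}\sup_{x\in\cB_p}\phi_p(x) - 0 \le \frac{1}{\eta_{t_0-1}}\cdot\frac1p = (2p_\star t_0)^{1/p_\star}\cdot\frac{L}{p}$ (discarding the negative $-\psi_{t_0+1}$ term), or more carefully keep both pieces; the point of choosing $t_0$ proportional to $d$ with that precise constant is exactly to make this $O\!\big((t_0)^{1/p_\star}\big)=O\!\big(d^{1/p_\star}\big)$ contribution no larger than the target $L\sqrt{Td^{1-2/p}}$ (recall $T>t_0\asymp d$, so $\sqrt{Td^{1-2/p}}\gtrsim d^{1-1/p}=d^{1/p_\star}$), and the constant $3^{-2p/(p-2)}$ is tuned so that, after summing all four contributions (penalty, two stability batches, switching term), the total collapses to exactly $L\sqrt{2Td^{1-2/p}}$ with no extra constant. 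So the main obstacle is purely bookkeeping: tracking the constants through the switching term and the two partial sums and verifying they assemble into the clean bound $L\sqrt{2Td^{1-2/p}}$; I expect the choice of $t_0$ to be reverse-engineered from this final inequality. I would also double-check the degenerate sub-case $t_0<1$ (i.e.\ $d$ small relative to the constant $3^{2p/(p-2)}$), where the $\phi_p$-phase is empty and the bound reduces to the pure-$\phi_2$ analysis of \cref{cor:FTRL_regret_bound_cor}.
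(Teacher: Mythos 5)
Your proposal is correct and follows essentially the same route as the paper's proof: cite \cref{cor:FTRL_regret_bound_cor} directly when $T\le t_0$, and for $T>t_0$ expand \cref{thm:FTRL_regret_bound}, drop the nonpositive telescoping pieces and $-\phi_2(x_{t_0+1})/\eta_{t_0}$, bound the switching term by $\sup_{\cB_p}\phi_p/\eta_{t_0-1}$, and observe that the $\phi_p$-phase (stability sum plus switching term) reproduces the bound $L(2p_\star t_0)^{1/p_\star}$ while the $\phi_2$-phase gives $L\sqrt{2Td^{1-2/p}}-L\sqrt{d^{1-2/p}t_0/2}$, with $t_0$ chosen so the last two cancel. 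The one bookkeeping detail you defer that actually matters is using the refined partial-sum bound $\sum_{t=t_0+1}^T t^{-1/2}\le 2(\sqrt{T}-\sqrt{t_0})$ (not the cruder full-sum bound) to manufacture the negative $-L\sqrt{d^{1-2/p}t_0/2}$ term that absorbs the $\phi_p$-phase contribution.
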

The proof is in \cref{sec:proofs_anytime_bounds_through_adaptive_regularisation} and consists of a careful application of \cref{thm:FTRL_regret_bound}. The time-step where the regulariser changes is handled by the specific value of the threshold $t_0 = 3^{-2p/(p-2)} d$. This value allows us to recover the same low-dimensional bound (including constants) as in \cref{sec:OPT_low_dim} achieved using strongly-convex regularisation from the start. For the high-dimensional setting, there is no switch in regulariser so the algorithm and regret bounds are identical to those in \cref{sec:OPT_high_dim}. In other words, \emph{being agnostic to the dimension regime comes at no cost to the regret bound}. The above procedure can be used with gradient-clipping techniques discussed by \cite{TVE_blog_OMD_vs_FTRL} to obtain a Lipschitz-adaptive anytime optimal algorithm. OMD can also be used to achieve anytime optimality with similar adaptive regularisation (see \cref{sec:appendix_OMD}).

Our anytime-optimal procedure is like a restarting technique except the step-size in the later low-dimensional time-steps accounts for the earlier 
time-steps. 
This makes constants not degrade.
In potentially more complicated settings requiring many switches in regularisation, not accounting for earlier time-steps in the step-size may come at the cost of more than just constants. 
A doubling-trick approach could also be used, though at the cost of worse constants (see e.g. \cref{app:extension_univ_opt_OMD}).

\section{Necessity of adaptive regularisation}\label{sec:necessity_adaptive_reg}

In the previous section, we demonstrated that adaptive regularisation achieves anytime optimal regret bounds for OCO on $\ell_p$-balls with $p > 2$. In this section, we show that for separable regularisers, adaptive regularisation is necessary for anytime optimality. We first show that the regularisers we have considered up to now are provably anytime sub-optimal: in \cref{sec:Failure_str_cvx} we show that strong convexity fails in high-dimension; in \cref{sec:Failure_unif_cvx}, we show that the uniformly-convex regulariser $\phi_p = \frac{1}{p}\norm{x}_p^p$ fails in low-dimension. Then, we present the main result of this section on the failure of using a fixed separable regulariser in \cref{sec:Failure_fixed_sep_reg}. All the missing proofs for this section can be found in \cref{sec:proofs_necessity_adaptive_reg}.

\subsection{Failure of strong convexity in high-dimensions}\label{sec:Failure_str_cvx}

We saw in \cref{sec:OPT_low_dim} that the strongly-convex regulariser $\phi_2(x) = \frac{1}{2}\norm{x}_2^2$ achieves the optimal $O(\sqrt{d^{1-2/p}T})$ regret guarantee in the low-dimensional setting ($d \leq T$) but this bound is sub-optimal in the high-dimensional setting ($d > T$) because of the polynomial dependence on the dimension.
We show that such a sub-optimal polynomial dependence on the dimension necessarily appears in the regret bound for any strongly-convex regulariser on $\cB_p$. This occurs since these strongly-convex regularisers can be shown to take values that scale polynomially with $d$ for points in the corners of the $\ell_p$-balls \citep[Example 4.1]{Aspremont18}.
The following two results establish that these sub-optimal regret bounds are not loose and that strongly-convex regularisers are provably sub-optimal in  high-dimensions. The first is a lower bound specific to FTRL with regulariser $\phi_2$.

\begin{proposition}\label{proposition:highdim_LB_FTRL_OSD}
    There exists a sequence of linear $L$-Lipschitz losses (in $\ell_p$-norm) for which FTRL with regulariser $\psi_t(x) = \frac{1}{\eta_{t-1}} \phi_2(x)$ and any sequence of decreasing $\eta_{t-1}$ suffers regret
    \begin{align*}
        R_T \geq L \cdot \min\Brb{\frac{T}{16}, \frac{1}{8}\sqrt{T d^{1-2/p}}}  .
    \end{align*}
\end{proposition}

The above proposition shows the regret of FTRL with regulariser $\phi_2$ scales polynomially with $d$ until it is linear in $T$. This demonstrates the analysis from \cref{sec:OPT_low_dim} is in fact tight and this algorithm is sub-optimal in high-dimensions. We now state a more general result that shows that using FTRL with any strongly-convex regulariser fails if the dimension is large enough. This also establishes that strongly-convex regularisers cannot be anytime optimal (see \cref{sec:Failure_fixed_sep_reg}).

\begin{theorem}\label{thm:FTRL_strg_cvx_LB}
    Consider a sign invariant\footnote{A function $f: \mathcal{X} \subset \cR^d \rightarrow \cR$ is sign invariant if for any $s \in \{-1,1\}^d$, $f(s \cdot x) = f(x)$ for all $x \in \mathcal{X}$ where $s \cdot x$ denotes coordinate-wise multiplication.} regulariser $\psi$ that is $\mu$-strongly-convex with respect to an arbitrary norm $\norm{\cdot}$ (s.t.\ $\norm{e_i} = 1$ for all $i$) and attains its minimum value $0$ at $x = 0$. Consider $V = \cB_p$ with $p > 2$ and assume losses are $L$-Lipschitz in $\ell_p$-norm. If $d \geq \brb{4T / \mu}^{p/(p-2)}$, there exists a sequence of linear $L$-Lipschitz losses (in $\ell_p$-norm) for which FTRL with regulariser $\psi_t(x) = \frac{1}{\eta_{t-1}} \psi(x)$ and any sequence of decreasing $\eta_{t-1}$ suffers regret $R_T \geq \frac{1}{8} LT$.
\end{theorem}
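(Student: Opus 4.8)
The plan is to prove the lower bound by analysing the iterates of FTRL directly, exploiting that sign-invariance plus strong convexity force $\psi$ to be large at the ``balanced'' corner $u^\star := d^{-1/p}\mathbf 1 \in \cB_p$.

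\textbf{Step 1: a structural lemma.} I would first show $\psi(x) \ge \tfrac{\mu}{2}\norm{x}_2^2$ for all $x$. For $x\ge 0$ coordinate-wise (the general case follows since $\psi(x)=\psi(\abs x)$), consider the path $w_0=0,w_1,\dots,w_d=x$ with $w_k=\sum_{i\le k}x_i e_i$. As $(w_{k-1})_k=0$ and $\psi$ is even in its $k$-th argument, $\partial_k\psi(w_{k-1})=0$; and $w_k-w_{k-1}=x_k e_k$ with $\norm{e_k}=1$, so $\mu$-uniform convexity of degree $2$ gives $\psi(w_k)\ge \psi(w_{k-1})+\tfrac\mu2 x_k^2$, and telescoping yields the claim. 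In particular $\psi(u^\star)\ge\tfrac\mu2 d^{1-2/p}$, so the hypothesis $d\ge(4T/\mu)^{p/(p-2)}$ reads precisely $\psi(u^\star)\ge 2T$ (and $\sup_{\cB_p}\psi\ge 2T$). The same one-coordinate-at-a-time argument, combined with the (KKT) optimality conditions for $x_t=\argmin_{x\in\cB_p}\{\tfrac1\eta\psi(x)+\langle G,x\rangle\}$ and one-dimensional $\mu$-strong convexity of $s\mapsto\psi(x_t+(s-x_{t,i})e_i)$, gives the quantitative bound I will use: if $-G=c\,\mathbf 1$ with $c\ge 0$, then $0\le x_{t,i}\le \eta c/\mu$ for every $i$.

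\textbf{Step 2: two loss sequences, and a dichotomy.} Fix any non-increasing $(\eta_{t-1})$. Consider the \emph{pull} sequence $g_t=-Ld^{-1/p_\star}\mathbf 1$ (linear, $L$-Lipschitz in $\norm\cdot_p$, with best comparator $u^\star$ of total loss $-LT$) and the \emph{oscillating} sequence $g_t=(-1)^t Ld^{-1/p_\star}\mathbf 1$ (total loss $0$). For the pull sequence $-G_{t-1}=(t-1)Ld^{-1/p_\star}\mathbf 1$, so Step 1 gives $\langle\mathbf 1,x_t\rangle\le \tfrac{\eta_{t-1}(t-1)L\,d^{1/p}}{\mu}$, and since the per-round regret is $L\big(1-d^{-1/p_\star}\langle\mathbf 1,x_t\rangle\big)$,
\begin{align*}
R_T \ \ge\ L\sum_{t=1}^{T}\max\!\Big(0,\ 1-\tfrac{\eta_{t-1}(t-1)L}{\mu\,d^{1-2/p}}\Big).
\end{align*}
For the oscillating sequence $G_{t-1}=0$ at odd $t$ (so $x_t=0$) and $G_{t-1}=-Ld^{-1/p_\star}\mathbf 1$ at even $t$; comparing $x_t$ with $\beta u^\star$ and using $\psi(\beta u^\star)\le\beta\psi(u^\star)$, $\psi(x_t)\ge 0$, together with $R_T=Ld^{-1/p_\star}\sum_{t\ \mathrm{even}}\langle\mathbf 1,x_t\rangle$, yields
\begin{align*}
R_T \ \ge\ L\sum_{t\ \mathrm{even}}\max\!\Big(0,\ 1-\tfrac{\psi(u^\star)}{\eta_{t-1}L}\Big).
\end{align*}
If $\eta_{t-1}(t-1)\le \tfrac{\mu d^{1-2/p}}{2L}$ on at least $3T/4$ rounds, the first bound gives $R_T\ge \tfrac12 L\cdot\tfrac{3T}{4}\ge\tfrac18 LT$; if instead $\eta_{t-1}$ is ``large'' on a constant fraction of rounds, the oscillating sequence should force FTRL's iterate back and forth between (roughly) $0$ and $u^\star$ while the best comparator stays fixed, again giving linear regret.

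\textbf{Main obstacle.} The hard part is Step 2's dichotomy: making it exhaustive over \emph{all} non-increasing $(\eta_{t-1})$ and \emph{all} admissible $\psi$. The two displayed bounds leave a genuine gap — e.g. $\eta_{t-1}\asymp \tfrac{\psi(u^\star)}{L(t-1)}$ renders both vacuous — so closing it requires either extracting a bound on $\langle\mathbf 1,x_t\rangle$ that also uses the (possibly very large) value $\psi(u^\star)$, or replacing the two fixed sequences by an adversary that reacts to the revealed iterates $x_1,\dots,x_t$: choosing sign patterns on $\mathbf 1$ (using that $d$ is large enough to supply ``fresh'' balanced blocks of coordinates) so that FTRL is permanently chasing a stale cumulative loss, and then carefully balancing the induced per-round losses against the loss of the best fixed comparator. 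Controlling the $\ell_p$-ball projection, and the interaction between the $(t-1)$ factor and the step-size decay in such an adaptive construction, is where the real work lies.
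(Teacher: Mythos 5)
Your Step 1 is essentially identical to the paper's \cref{lemma:proof_failure_strg_cvx_part1}: sign-invariance forces $\partial_i\psi$ to vanish on the $i$-th coordinate hyperplane, and telescoping strong convexity coordinate-by-coordinate gives $\psi(x)\geq\frac{\mu}{2}\norm{x}_2^2$. Your quantitative per-coordinate bound $x_{t,i}\lesssim\eta c/\mu$ under a pull $-G=c\mathbf 1$ also reproduces the paper's bound on $\lambda_t$ in the second half (the paper derives it in one shot by comparing the FTRL objective at $x_t$ against $0$ and projecting onto the direction $u$, but the conclusion is the same up to constants).

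The gap you flag in Step 2 is real, and it is precisely the point where your approach diverges from the paper's. You run two disjoint loss sequences, a pull on $\mathbf 1$ and an oscillation on $\mathbf 1$, and observe that the resulting conditions on $\eta$ — roughly $\eta(t-1)\lesssim \mu d^{1-2/p}/L$ for the pull bound and $\eta\gtrsim\psi(u^\star)/L$ for the oscillation bound — do not cover all decreasing step-size schedules, because $\psi(u^\star)\gtrsim\mu d^{1-2/p}$ makes the second threshold enormous. The paper avoids this entirely by using a \emph{single} loss sequence (\cref{sec:proofs_loss_construction}): oscillation on a single coordinate $e_1$ during the first half, followed by the pull toward $v$ in the second half. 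Oscillating on $e_1$ keeps the cumulative gradient at $-Le_1$ on even rounds, so the FTRL optimality condition compared against the feasible point $e_1$ gives $x_{t,1}\geq 1-\psi(e_1)/(\eta L)$. Because $\psi(e_1)$ is dimension-independent (unlike $\psi(u^\star)$), this yields a dimension-free dichotomy threshold $\eta\gtrsim 1/L$: if $\eta\geq 2/L$ the first half alone gives regret $\Omega(LT)$, and if $\eta<2/L$ then plugging that bound into the pull analysis with $d\geq(4T/\mu)^{p/(p-2)}$ gives regret $\Omega(LT)$ from the second half. Your proposed workaround — an adversary that adaptively feeds sign patterns on fresh coordinate blocks — is not carried out, and the much simpler fix of oscillating on a single coordinate is what you are missing. (As a minor remark, the paper's use of $\psi(e_1)=1$ relies on a normalization that is implicit in how the result is applied — via the class $\cF$ in \cref{thm:necessity_adaptive_reg} — and would need to be made explicit if one insisted on the theorem's assumptions alone; but this does not rescue your dichotomy, which fails for a different reason.)
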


The above theorem is a consequence of the following two lemmas (proofs in \cref{sec:proofs_necessity_adaptive_reg}).

\begin{lemma}\label{lemma:proof_failure_strg_cvx_part1}
    Consider a sign-invariant function $\psi$ that is $\mu$-strongly-convex with respect to an arbitrary norm $\norm{\cdot}$ (s.t.\ $\norm{e_i} = 1$ for all $i$) and attaining its minimum value $0$ at $x = 0$. Then $\psi(x) \geq \frac{\mu}{2} \norm{x}_2^2$.
\end{lemma}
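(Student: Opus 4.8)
The plan is to exploit sign-invariance \emph{one coordinate at a time}, peeling off the coordinates of $x$ and accumulating a quadratic loss at each step. The elementary tool is the ``midpoint'' form of strong convexity: from the defining inequality $\psi(y) \ge \psi(x) + \langle \nabla\psi(x), y-x\rangle + \tfrac{\mu}{2}\norm{y-x}^2$ (using a subgradient $g_m \in \partial\psi(m)$ at $m$ if $\psi$ is not differentiable), written at the midpoint $m = (a+b)/2$ against both $a$ and $b$ and then averaged, one gets
\[
  \psi\!\left(\tfrac{a+b}{2}\right) \le \tfrac12\psi(a) + \tfrac12\psi(b) - \tfrac{\mu}{8}\norm{a-b}^2,
\]
since the first-order terms cancel ($a+b-2m=0$) while $\norm{a-m}^2 + \norm{b-m}^2 = \tfrac12\norm{a-b}^2$.

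Then I would iterate as follows. Write $x = (x_1,\dots,x_d)$, set $x^{(0)} = x$, and for $k = 1,\dots,d$ let $x^{(k)}$ be $x^{(k-1)}$ with its $k$-th coordinate set to $0$, so $x^{(d)} = 0$. Let $y^{(k)}$ be $x^{(k-1)}$ with its $k$-th coordinate negated; then $x^{(k)} = \tfrac12\brb{x^{(k-1)} + y^{(k)}}$ is their midpoint and $x^{(k-1)} - y^{(k)} = 2x_k e_k$. Applying the midpoint inequality with $a = x^{(k-1)}$, $b = y^{(k)}$, using sign-invariance to get $\psi(y^{(k)}) = \psi(x^{(k-1)})$ and the normalisation $\norm{e_k} = 1$, I obtain
\[
  \psi(x^{(k-1)}) \ge \psi(x^{(k)}) + \tfrac{\mu}{8}\norm{2x_k e_k}^2 = \psi(x^{(k)}) + \tfrac{\mu}{2}x_k^2 .
\]
Summing over $k = 1,\dots,d$ telescopes to $\psi(x) \ge \psi(0) + \tfrac{\mu}{2}\sum_{k=1}^d x_k^2 = \tfrac{\mu}{2}\norm{x}_2^2$, using $\psi(0) = 0$.

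I expect the main subtlety — more a point of insight than a genuine obstacle — to be recognising that the coordinates must be reduced sequentially. Averaging over all $2^d$ sign patterns $\sigma$ at once only yields $\psi(x) \ge \tfrac{\mu}{2}\, 2^{-d}\sum_\sigma \norm{\sigma \odot x}^2$, and for an arbitrary norm merely normalised by $\norm{e_i} = 1$ this average cannot be compared to $\norm{x}_2^2$; the sequential peeling sidesteps this because each step only ever invokes $\norm{e_k}$. The remaining checks are routine: sign-invariance does apply at each intermediate point $x^{(k-1)}$ (negating a single coordinate is a sign flip), and if $\psi$ is only convex one uses subgradients at the midpoints, which exist since $\psi$ is finite near those points.
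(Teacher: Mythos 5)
Your proof is correct, and the overall strategy is the same as the paper's — peel off the coordinates of $x$ one at a time, gaining $\tfrac{\mu}{2}x_k^2$ per coordinate, and telescope — but the mechanism used in the key step is genuinely different. The paper first establishes a separate gradient-vanishing lemma: by sign-invariance and convexity, $0$ is the minimiser of $\psi$ along the $i$-th coordinate direction with the other coordinates fixed, hence $\partial_i\psi(x) = 0$ whenever $x_i = 0$; it then applies the one-sided strong-convexity inequality at $x_{-S_{n+1}}$ toward $x_{-S_n}$, and the first-order term dies because the gradient component vanishes. You instead invoke the symmetric midpoint form of strong convexity, applying the one-sided inequality at the midpoint toward both endpoints and averaging; the first-order terms cancel identically because $a + b - 2m = 0$, with no need to reason about where the gradient vanishes. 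Sign-invariance is then used only to identify $\psi(y^{(k)})$ with $\psi(x^{(k-1)})$. Your route is somewhat more elementary and self-contained — it skips the gradient-vanishing sub-lemma and, as you note, transparently handles non-differentiable $\psi$ via subgradients at the midpoint, whereas the paper's phrasing with $\nabla\psi$ implicitly assumes differentiability. Both proofs are sound, and the numerical bookkeeping (the factor $\tfrac{\mu}{8}\cdot 4 x_k^2 \norm{e_k}^2 = \tfrac{\mu}{2}x_k^2$) checks out.
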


\begin{lemma}\label{lemma:proof_failure_strg_cvx_part2}
    Consider $V = \cB_p$ with $p > 2$ and assume losses are $L$-Lipschitz in $\ell_p$-norm. Let $\psi$ be a convex function satisfying for some $\mu > 0$ and any $x \in \cR^d$, $\psi(x) \geq \frac{\mu}{2} \norm{x}_2^2 $. If $d \geq \brb{4T / \mu}^{p/(p-2)}$, there exists a sequence of linear $L$-Lipschitz losses (in $\ell_p$-norm) for which FTRL with regulariser $\psi_t(x) = \frac{1}{\eta_{t-1}} \psi(x)$ and any sequence of decreasing $\eta_{t-1}$ suffers regret $R_T \geq \frac{1}{8} LT$.
\end{lemma}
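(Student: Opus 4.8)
The plan is to fix an arbitrary non‑increasing sequence $(\eta_{t-1})_{t\ge 1}$ and exhibit, depending on it, a sequence of linear $L$‑Lipschitz losses forcing $R_T\ge\frac18 LT$ against FTRL with regulariser $\frac{1}{\eta_{t-1}}\psi$. Write $\kappa:=d^{1-2/p}$, so the hypothesis reads $\mu\kappa\ge 4T$. I will use the diagonal point $u^\star:=d^{-1/p}\mathbf 1\in\partial\cB_p$, which satisfies $\|u^\star\|_p=1$, and the linear loss direction $a:=-L\,d^{-1/p_\star}\mathbf 1$, which has $\|a\|_{p_\star}=L$, $\langle a,u^\star\rangle=-L$ and $\|a\|_2^2=L^2/\kappa$; note also $\psi(u^\star)\ge\frac\mu2\|u^\star\|_2^2=\frac{\mu\kappa}{2}\ge 2T$. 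Since the FTRL iterates are invariant under adding a constant to $\psi$, I may assume $\psi$ is minimised at $0$ with $\psi(0)=0$ (this is the case in the intended application, where $\psi$ comes from \cref{lemma:proof_failure_strg_cvx_part1}). The argument follows the same strategy as the $\psi=\phi_2$ case behind \cref{thm:highdim_LB_FTRL_OSD}, splitting on the size of the midpoint step size $\eta_{m-1}$ with $m:=\lceil T/2\rceil$.

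\emph{Small‑step regime (delayed corner).} Suppose $\eta_{m-1}\le c_1\,\frac{\mu\kappa}{TL}$ for an absolute constant $c_1$ to be fixed. Play the zero loss for $t\le m$ and the loss $x\mapsto\langle a,x\rangle$ for $t>m$. For $t\le m$ the iterate is $x_t=\argmin\psi=0$, so these rounds contribute nothing to the regret and leave the cumulative gradient at $0$; for $t>m$ we have a fresh ``all‑gradients‑$a$'' instance whose step sizes $\eta_m\ge\eta_{m+1}\ge\dots$ are all at most $\eta_{m-1}$. Comparing $x_t$ to $0$ in the FTRL optimality condition and using $\psi(x_t)\ge\frac\mu2\|x_t\|_2^2$ together with Cauchy–Schwarz gives $\|x_t\|_2\le \frac{2(t-1-m)\,\eta_{t-1}\|a\|_2}{\mu}$ and hence $-\langle a,x_t\rangle\le \frac{2(t-1-m)\,\eta_{t-1}L^2}{\mu\kappa}$ (and trivially $\le L$). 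Summing over $t>m$ and comparing against the competitor $u^\star$, whose total loss is $-(T-m)L\le-\frac{LT}{3}$, yields
\[
R_T\ \ge\ (T-m)L-\sum_{t>m}\frac{2(t-1-m)\,\eta_{t-1}L^2}{\mu\kappa}\ \ge\ \frac{LT}{3}-\frac{(T-m)^2\,\eta_{m-1}L^2}{\mu\kappa},
\]
which exceeds $\frac18 LT$ once $c_1$ is chosen small enough.

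\emph{Large‑step regime (early bouncing).} Otherwise $\eta_{m-1}>c_1\,\frac{\mu\kappa}{TL}$, and since the sequence is non‑increasing \emph{all} of $\eta_0,\dots,\eta_{m-1}$ exceed this value; because $\mu\kappa\ge4T$ this threshold is at least an absolute constant over $L$. Play alternating losses $g_t=(-1)^{t+1}v$ for $t$ up to an even number $\le m$ and the zero loss afterwards, where $v$ has $\|v\|_{p_\star}=L$ and is $-L$ times the dual vector of a witness point $w\in\cB_p$ chosen to maximise $L\|w\|_p-\psi(w)/\eta_{m-1}$. After each pair of rounds the cumulative gradient returns to $0$, so the competitor's total loss is $0$; FTRL plays $0$ on the ``cancelled'' rounds (loss $0$); and on each alternating round $t$ where $G_{t-1}=v$ the iterate $x_t=\argmin_{x\in\cB_p}\{\frac1{\eta_{t-1}}\psi(x)+\langle v,x\rangle\}$ satisfies, by testing optimality against $w$ and using $\psi(x_t)\ge0$,
\[
\langle -v,x_t\rangle\ \ge\ L\|w\|_p-\frac{\psi(w)}{\eta_{t-1}}\ \ge\ L\|w\|_p-\frac{\psi(w)}{\eta_{m-1}} .
\]
Because $\eta_{m-1}$ is large, a suitable choice of $w$ (a vertex direction scaled by the right amount, so that the quadratic drag $\psi(w)\ge\frac\mu2\|w\|_2^2$ is balanced against $L\|w\|_p$, using $\mu\kappa\ge4T$) makes this lower bound a constant fraction of $L$; summing over the $\sim m/2$ alternating rounds, and noting the trailing zero rounds add nothing negative, gives $R_T\ge\frac18 LT$. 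Fixing the constants so that the ``small‑step'' window $\eta_{m-1}\le c_1\frac{\mu\kappa}{TL}$ and the ``large‑step'' window overlap — which is exactly where $\mu d^{1-2/p}\ge4T$ is used — makes the dichotomy exhaustive over all non‑increasing step sizes.

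\emph{Where the difficulty lies.} The small‑step case is routine once one has the FTRL optimality estimate. The real work is in the large‑step case: one only has the one‑sided bound $\psi\ge\frac\mu2\|\cdot\|_2^2$ and no upper bound on $\psi$, so the witness $w$ (equivalently the loss direction $v$) must be selected so that $L\|w\|_p-\psi(w)/\eta_{m-1}$ is a genuine constant fraction of $L$ for \emph{every} step above the threshold, and the two thresholds must be matched exactly; the condition $\mu d^{1-2/p}\ge 4T$ is precisely what guarantees that there is no gap between the two regimes. Everything else is bookkeeping of absolute constants, and \cref{thm:FTRL_strg_cvx_LB} then follows by combining this with \cref{lemma:proof_failure_strg_cvx_part1}.
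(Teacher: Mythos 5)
Your high-level plan — split on whether the mid-horizon step size $\eta_{m-1}$ is large or small, use a ``slow drift toward the corner'' construction for small steps and an ``early bouncing'' construction for large steps — matches the strategy behind the paper's proof, which combines both constructions into a single sequence (alternating basis-vector losses for $t\le T/2$, then the constant diagonal direction $-v$ for $t>T/2$) and applies the dichotomy in the analysis. Your small-step analysis is sound: the FTRL first-order condition tested against $0$, together with $\psi\ge\frac\mu2\|\cdot\|_2^2$ and Cauchy--Schwarz, does give $\|x_t\|_2\le\frac{2(t-1-m)\eta_{t-1}\|a\|_2}{\mu}$, and the rest is bookkeeping that correctly uses $\mu\kappa\ge 4T$.

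The large-step case, however, has a genuine gap that cannot be closed from the hypotheses as you state them. Your bound reads $\langle -v,x_t\rangle \ge L\|w\|_p - \psi(w)/\eta_{m-1}$, and you assert that a ``suitable choice of $w$'' makes the right-hand side a constant fraction of $L$ whenever $\eta_{m-1}$ is above the threshold. But $\psi(x)\ge\frac\mu2\|x\|_2^2$ is a \emph{lower} bound on $\psi$, so it gives no control at all on $\psi(w)$ from above; ``balancing the quadratic drag against $L\|w\|_p$'' does not produce an upper bound on $\psi(w)$. Concretely, take $\psi(x)=\frac\mu2\|x\|_2^2 + M\|x\|_2^4$ with $M$ enormous: then $\max_{w\in\cB_p}\bigl(L\|w\|_p-\psi(w)/\eta_{m-1}\bigr)\to 0$ as $M\to\infty$, so no choice of $w$ gives a constant fraction of $L$, and the large-step argument produces no regret. (For such $\psi$ the iterates in fact stay very close to $0$, so the ``small-step'' style of analysis is the right one — but your small-step threshold was calibrated against $\mu$, not the much larger effective curvature, so the dichotomy does not cover this case.) In the paper's proof, the corresponding step is the test of FTRL optimality against $e_{i(t)}$, which uses $\psi(e_{i(t)})=1$; that normalisation is supplied not by the quadratic lower bound but by the way the lemma is applied (separable $\psi(x)=\sum_i g(x_i)$ with $g(1)=1$ in \cref{thm:necessity_adaptive_reg}, or a normalisation of $\psi$ in \cref{thm:FTRL_strg_cvx_LB}). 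In other words, some upper bound on $\psi$ at a witness point is indispensable in the large-step regime, and the paper's construction builds it in via the canonical basis vectors; your proposal needs to import the same normalisation (e.g.\ $\psi(e_i)\le 1$) explicitly before the witness argument can be made rigorous.
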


\subsection{Failure of uniform convexity in low-dimension}\label{sec:Failure_unif_cvx}

We saw in \cref{sec:OPT_high_dim} that the uniformly-convex regulariser of degree $p$, $\phi_p(x) = \frac{1}{p}\norm{x}_p^p$, achieves optimal regret guarantees in the high-dimensional setting ($d > T$). The next result shows its regret guarantees are provably sub-optimal in low-dimensions, where the optimal rate is $O(\sqrt{T d^{1-2/p}})$.

\begin{proposition}\label{thm:lowdim_LB_FTRL_pnormp}
    There exists a sequence of linear $L$-Lipschitz losses (in $\ell_p$-norm) for which FTRL with regulariser $\psi_t(x) = \frac{1}{\eta_{t-1}} \phi_p(x)$ and any sequence of decreasing $\eta_{t-1}$ suffers regret
    \begin{align*}
        R_T & \geq L \cdot \min\Brb{\frac{T}{8p}, \frac{T^{1/p_\star}}{8}}.    
    \end{align*}
\end{proposition}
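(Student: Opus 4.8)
The plan is to build an adversarial sequence of \emph{linear} losses all supported on a single coordinate $e_1$, and to use the separability of $\phi_p$: since the losses involve only $x_1$, the FTRL optimisation in \eqref{eq:FTRL_def} decouples across coordinates and the unconstrained minimiser puts all non-first coordinates at $0$, which is feasible as $|x_{t,1}|\le 1$; hence every iterate $x_t$ lies on the axis $e_1$, and writing $G_{t-1}=\sum_{s<t}g_{s,1}$ one has $x_{t,1}=-\sign(G_{t-1})\min\brb{1,(\eta_{t-1}\abs{G_{t-1}})^{1/(p-1)}}$. The construction has two phases governed by a cutoff $K\in\{0,\dots,T\}$, chosen \emph{after} inspecting the (fixed, decreasing) step sizes: a \emph{reactive} phase over rounds $1,\dots,T-K$, then an \emph{all-same-sign} phase over rounds $T-K+1,\dots,T$ with $g_t=Le_1$.

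In the reactive phase I set $g_t=L\,\sign(x_{t,1})\,e_1$ (and $g_t=Le_1$ when $x_{t,1}=0$), so that $\langle g_t,x_t\rangle=L\abs{x_{t,1}}\ge 0$ at every round; a one-line induction shows the running sum is forced back towards $0$, so $G_{t-1}\in\{0,\pm L\}$ throughout, $x_{t,1}=0$ on the odd rounds and $x_{t,1}=\mp\min\brb{1,(\eta_{t-1}L)^{1/(p-1)}}$ on the even ones. This gives a cumulative learner loss $A:=L\sum_{t\le T-K,\ t\text{ even}}\min\brb{1,(\eta_{t-1}L)^{1/(p-1)}}$ and a comparator loss of magnitude $O(L)$ over these rounds. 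In the appended phase the running sum grows to $\Theta(LK)$, so the comparator gains $\ge LK$, whereas the learner — whose iterates now satisfy $\abs{x_{t,1}}\le\min\brb{1,(\eta_{T-K}L\,i)^{1/(p-1)}}$ for $i$ ranging over $\{0,\dots,K\}$, using that $\eta$ is smallest at the end — recoups at most $L\sum_{i=0}^{K}\min\brb{1,(\eta_{T-K}L\,i)^{1/(p-1)}}$. Book-keeping the total learner loss against the global comparator loss $-\abs{G_T}$ then yields, for every $K$,
\[
 R_T\ \ge\ A\ +\ L\sum_{i=0}^{K}\max\!\brb{0,\,1-(\eta_{T-K}L\,i)^{1/(p-1)}}\ -\ O(L)\ \ge\ A\ +\ \tfrac{L}{2}\min\!\Brb{K,\ \tfrac{2^{-(p-1)}}{\eta_{T-K}L}}\ -\ O(L).
\]

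To conclude I fix $K_0=\lfloor 2^{-(p-1)/p}\,T^{1/p_\star}\rfloor$ and split on whether $\eta_{T-K_0}L\le 2^{-(p-1)}/K_0$. If so, the second term alone is $\tfrac{L}{2}K_0=\Theta(LT^{1/p_\star})$. If not, then monotonicity gives $\eta_{s-1}L\ge\eta_{T-K_0}L>2^{-(p-1)}/K_0=\Theta(T^{-1/p_\star})$ \emph{uniformly} for all $s\le T-K_0$, so each of the $\Theta(T)$ terms of $A$ is $\ge\tfrac12 K_0^{-1/(p-1)}=\Theta(T^{-1/p})$ (using $1/(p_\star(p-1))=1/p$), whence $A=\Theta(LT^{1-1/p})=\Theta(LT^{1/p_\star})$. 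Tracking the constants shows both cases give $R_T\ge\tfrac{L}{8}T^{1/p_\star}$ once $T$ is large enough to absorb the floors and the $O(L)$ slack; for the complementary range ($T\le p^p$, i.e.\ where $T^{1/p_\star}\ge T/p$, together with tiny-$T$ and tiny-step-size cases) the same inequality — in its pure-$K=T$ (all-same-sign) or pure-$K=0$ (reactive) specialisations plus elementary estimates — delivers $R_T\ge\tfrac{LT}{8p}$, so overall $R_T\ge L\min\brb{T/(8p),\,T^{1/p_\star}/8}$.

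The hard part is pinning down the right construction: neither the all-same-sign instance (beats only tiny step sizes, where the learner never leaves a neighbourhood of the origin) nor the reactive instance alone (beats only step sizes that are $\Omega(T^{-1/p_\star})$) suffices — for intermediate step sizes both give $o(T^{1/p_\star})$, and the point is that splicing them, with the all-same-sign block placed at the \emph{end} where $\eta$ is smallest (so the learner cannot quickly recoup the comparator's gain), makes the two mechanisms combine to a uniform $\Omega(T^{1/p_\star})$ bound. The remaining delicate points are the comparator bookkeeping across the phase boundary and chasing the constants to land exactly on $1/8$ and $1/(8p)$; the separability claim and the fact that $G_{t-1}$ stays bounded in the reactive phase are routine.
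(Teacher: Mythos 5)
Your construction is genuinely different from the paper's, though both rest on the same two mechanisms (a zero-mean phase that forces the learner to pay $\Theta(\eta^{1/(p-1)})$ per round, followed by a drift phase where the learner fails to catch up to the comparator). The paper keeps the loss sequence \emph{non-adaptive to the step sizes}: it alternates $\pm e_1$ for exactly the first $T/2$ rounds, then drives in the uniform direction $-v$ (all coordinates equal) for the remaining $T/2$, and performs the case split purely in the analysis around $k^\star = \lfloor 1/\eta\rfloor$ versus $T/2$. The uniform direction is what produces the $d^{(r_\star - p_\star)/(r_\star p_\star)}$ factor in the more general \cref{prop:algo_specific_LB_r_in_2_p}, which vanishes when $r=p$; for the $r=p$ statement here that extra structure is not needed, and your purely one-dimensional instance supported on $e_1$ is a legitimate simplification. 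In exchange, you make the adversary's split point $K$ depend on the step-size schedule: the drift block has length $K_0 \approx 2^{-(p-1)/p}\,T^{1/p_\star}$ and is placed \emph{at the end}, where $\eta$ is smallest, and the case split is on $\eta_{T-K_0}$ against a threshold. That is a different (and arguably cleverer) decomposition — it yields the $T^{1/p_\star}$ rate directly from either the reactive sum $A$ or the residual sum $\sum_i \max(0, 1-(\eta_{T-K}Li)^{1/(p-1)})$, and the exponent bookkeeping ($1/(p_\star(p-1))=1/p$) checks out.

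The parts you flag as delicate really are the parts that need work. The $O(L)$ slack is actually not needed — tracking indices carefully, $R_T \ge A + L\sum_{j=0}^{i_T-1}\max\bigl(0, 1-(\eta_{T-K_0}Lj)^{1/(p-1)}\bigr)$ with $i_T \ge K_0$, with no residual error term, so the display you wrote is if anything pessimistic. The more substantive gap is the small-to-intermediate $T$ regime: your main estimate needs $T$ large enough that the floor in $K_0$ and the requirement $T-K_0 \gtrsim T$ hold, and the claim that ``pure-$K=T$ or pure-$K=0$ specialisations plus elementary estimates'' cover the rest is not quite right — neither specialisation alone is uniform over step sizes (you say so yourself), and what actually saves the tiny-$T$ range is simply that the all-same-sign instance always forces $R_T \ge L$ (from the first round alone), which suffices whenever $\min(T/(8p), T^{1/p_\star}/8) \le 1$. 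A short calculation then shows the remaining window where neither that trivial bound nor your asymptotic one applies is essentially empty for the relevant $p$, so the proof can be completed, but this step should be made explicit rather than waved through. The paper sidesteps this entirely by fixing the split at $T/2$ and letting the threshold $k^\star \sim 1/\eta$ float, which makes the two-case analysis close cleanly for all $T\ge 6$ without adaptive-$K$ bookkeeping; that is the main structural advantage of the paper's version.
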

 We remark that a general result for uniformly-convex function of degree $p$ as we had for strong convexity in \cref{thm:FTRL_strg_cvx_LB} does not hold. This is because uniform convexity is a condition on the minimum curvature and so is not the reason for the failure of $\phi_p$ in the low-dimensional setting. 
The reason for the failure is that $\phi_p$ is only uniformly-convex of degree $p$ and does not satisfy some stronger curvature condition. Regularisers with stronger curvature conditions such as strong convexity that are optimal in the low-dimensional setting naturally also satisfy uniform convexity of degree $p > 2$ on $\cB_p$. 
The combination of the failure of strong convexity in high-dimensions and the necessity for strong curvature conditions in low-dimensions is the key insight for showing the necessity of adaptivity for separable regularisers in the next section.

\subsection{Failure of fixed separable regularisation}\label{sec:Failure_fixed_sep_reg}

In this section, we study the anytime optimality of FTRL under a fixed regulariser. While the previous two sections established that the two specific regularisers we considered in \cref{sec:FTRL_prelim} (which are each optimal for one regime) are not able to guarantee this, it does not rule out the existence of a regulariser that could. We consider the class of separable regularisers defined as
\begin{align}
    & \Psi = \bcb{\psi : \cB_p \rightarrow \cR : \psi(x) = \sum_{i=1}^d g(x_i), g \in \cF }, \label{eq:separable_regs_def} \\
    \text{ where } & \cF = \bcb{g : \cR \rightarrow \cR_{\geq 0}; \text{convex, sign-invariant, } 0 = \argmin_{x \in \cR} g(x), g(0) = 0, g(1) = 1}. \nonumber
\end{align}
The function class $\cF$ is a set of $1$-dimensional even regularisers scaled to be in $[0,1]$ for $x \in [-1,1]$ (e.g.\ $x^r$ for $r \geq 1$). The class $\Psi$ covers a wide range of regularisers including all of the form $\norm{x}_r^r$ for any $r \geq 1$. We can now state our main result on the failure of fixed separable regularisation. This result also holds for OMD with minor modification to the proof (see \cref{sec:appendix_OMD}).

\begin{theorem}\label{thm:necessity_adaptive_reg}
    FTRL with regulariser $\psi_t(x) = \frac{1}{\eta_{t-1}} \psi(x)$ for $\psi \in \Psi$ and any sequence of decreasing $\eta_{t-1}$ cannot be optimal across all dimensions. Specifically there are no constants $c_h,c_l > 0$ such that for all $T$, $R_T \leq c_h L T^{1-1/p}$ for all $d > T$ and $R_T \leq c_l L \sqrt{T d^{1-2/p}}$ for all $d \leq T$.
\end{theorem}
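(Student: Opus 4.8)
The plan is to show that any fixed separable regulariser $\psi \in \Psi$, when used in FTRL with decreasing step-sizes, must fail in at least one of the two dimension regimes, and the argument will dichotomise on how ``flat'' the one-dimensional profile $g$ is near the origin. The key quantity to track is $g(\delta)$ for small $\delta > 0$: either $g$ has enough curvature at scale $\delta$ that it behaves essentially like a strongly-convex regulariser (in which case the high-dimensional lower bound of \cref{sec:Failure_str_cvx} applies), or $g$ is so flat that $\psi$ behaves like $\phi_p$ or flatter (in which case the low-dimensional failure of \cref{sec:Failure_unif_cvx} applies). Concretely, I would first reduce to lower-bounding the regret of FTRL on a single coordinate or on a block of coordinates where the separable structure lets me compute the iterates $x_t$ explicitly: on coordinate $i$, $x_{t,i} = \argmin_{x}\{\frac{1}{\eta_{t-1}}g(x) + (\sum_{s<t} g_{s,i}) x\}$ subject to the $\ell_p$-ball constraint, which — because $g$ is convex and even and the constraint couples coordinates only through $\sum |x_i|^p \le 1$ — is governed by the derivative $g'$.

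The core dichotomy I would set up is in terms of the ratio $g(\delta)/\delta^2$ as $\delta \to 0$. Case 1: $\liminf_{\delta \to 0} g(\delta)/\delta^2 =: \mu/2 > 0$ (or more robustly, $g(\delta) \ge \tfrac{\mu}{2}\delta^2$ for all $\delta$ in some range, which convexity plus $g(1)=1$ lets me leverage). Then $\psi(x) = \sum_i g(x_i) \ge \tfrac{\mu}{2}\|x\|_2^2$ on the relevant scale, and I can essentially invoke \cref{lemma:proof_failure_strg_cvx_part2}: for $d$ large enough relative to $T$ (polynomially, $d \gtrsim (T/\mu)^{p/(p-2)}$), FTRL with this regulariser and any decreasing step-sizes suffers $R_T \ge \tfrac18 LT$, which violates the desired high-dimensional bound $c_h L T^{1-1/p}$ for large $T$ since $T^{1-1/p} = o(T)$. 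Case 2: $g(\delta)/\delta^2 \to 0$ along a sequence $\delta_k \to 0$, i.e.\ $g$ is strictly flatter than quadratic near $0$. Here I want to mimic \cref{thm:lowdim_LB_FTRL_pnormp}: construct an adversarial loss sequence, say losses alternating in sign on a single coordinate or on all coordinates symmetrically, that keeps the iterates pinned near the origin at a scale where the flatness of $g$ prevents the algorithm from moving fast enough to track the optimum. The flatness means the step-size-free ``effective strong convexity modulus'' at the relevant scale vanishes, so the per-round stability term $\|g_t\|_{\star}^2 / (\text{curvature})$ blows up, forcing $R_T = \omega(\sqrt{T d^{1-2/p}})$ in the low-dimensional regime (e.g.\ one can drive regret to $\Omega(LT/\mathrm{polylog})$ or at least super-$\sqrt T$).

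For the low-dimensional case I expect I should instead argue more carefully: even with decreasing step-sizes $\eta_{t-1}$, the FTRL iterate on a coordinate solves $\frac{1}{\eta_{t-1}} g'(x_{t,i}) = -\sum_{s<t} g_{s,i}$ (away from the boundary), so $|x_{t,i}| = |(g')^{-1}(\eta_{t-1}\sum_{s<t}g_{s,i})|$; by choosing losses $g_{s,i} = \pm L$ in an alternating or adversarially-chosen pattern I can make the partial sums $O(L)$ in magnitude at every step, so $|x_{t,i}| \le |(g')^{-1}(O(\eta_{t-1} L))|$. If $\eta_{t-1}$ is large this is bad anyway; if $\eta_{t-1}$ is small, then since $g$ is flat, $(g')^{-1}$ of a small argument is not small enough to follow a competitor $u$ that sits at distance $\Theta(d^{-1/p})$ in each coordinate — I would pick the comparator and the loss signs so that $\sum_t \langle g_t, x_t - u\rangle$ is linear in $T$ when $d = \Theta(T)$, giving $R_T \gtrsim LT \gg L\sqrt{Td^{1-2/p}} = LT^{1-1/p}$. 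The main obstacle, and where I would spend the most effort, is handling \textbf{arbitrary decreasing step-size sequences} simultaneously in both cases: the adversary must choose losses before seeing whether $\eta_{t-1}$ is ``large'' or ``small,'' so I need a single loss construction (likely the sign-flipping / constant-magnitude-gradient trick used in \cref{thm:highdim_LB_FTRL_OSD,thm:lowdim_LB_FTRL_pnormp}) whose regret lower bound is monotone or robust across all admissible step-size schedules. Reconciling this with the two-regime dichotomy on $g$ — and making sure the ``for all $T$, there exists $d$'' quantifier structure in the theorem statement is respected (for Case 1 I pick $d$ polynomially large in $T$; for Case 2 I pick $d = \Theta(T)$) — is the delicate bookkeeping that the proof must get right.
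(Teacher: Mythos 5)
Your high-level plan is the right one and is logically equivalent to the paper's: the contrast between \cref{sec:Failure_str_cvx} (quadratic growth fails in high dimension) and \cref{sec:Failure_unif_cvx} (sub-quadratic growth fails in low dimension) is exactly what drives the argument. The paper structures it as a single implication rather than a dichotomy: assume both regret bounds hold, deduce from the low-dimensional bound at $d=1$ (\cref{lemma:necessity_for_adaptive_reg_quadratic_growth_subthm}) that $g(x) \geq \frac{g(1/2)}{100c_l^2}x^2$ uniformly on $[-1,1]$, then sum over coordinates to get $\psi(x)\geq\frac{g(1/2)}{100c_l^2}\|x\|_2^2$ and invoke \cref{lemma:proof_failure_strg_cvx_part2}. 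Your Case~1 is precisely this second half, and it works — your worry about lifting a local quadratic lower bound to a global one is resolvable because $g$ is convex, increasing on $[0,1]$, and positive on $(0,1]$ (since $0$ is the unique minimiser), so $g(\delta)/\delta^2$ is bounded away from zero on all of $(0,1]$ once it is near $0$.

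The genuine gap is your Case~2, which you yourself flag as the hard part and do not carry out. This is the paper's \cref{lemma:necessity_for_adaptive_reg_quadratic_growth_subthm}, and it is where essentially all of the technical work lives. Two specific problems with your sketch. First, the target dimension: you propose $d=\Theta(T)$, which forces you to analyse multi-coordinate dynamics and the $\ell_p$-ball coupling; the paper takes $d=1$, where the low-dimensional bound reduces to $R_T\leq c_lL\sqrt{T}$ and the FTRL update is a scalar minimisation — this is much cleaner and suffices. Second, and more substantively, your intuition for why a sub-quadratic $g$ fails is backwards: a flatter-than-quadratic $g$ near the origin makes $(g')^{-1}$ expansive at small arguments, so the iterates \emph{overshoot} (jump towards the boundary too aggressively), rather than ``not moving fast enough.'' Moreover, appealing to the stability term $\|g_t\|_\star^2/\text{(curvature)}$ blowing up is upper-bound reasoning; it does not by itself yield a regret lower bound. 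The actual argument in the paper's lemma uses a single loss sequence (alternating $\pm L$ for $T/2$ rounds, then constant $-L$) and shows that \emph{any} decreasing step-size schedule is trapped: either $\eta$ is large enough that the first half's bouncing incurs $\Omega(T)$ regret, or $\eta$ is small, in which case the iterate must still reach $1/2$ within $O(\sqrt{T})$ of the constant-gradient rounds (else the second-half regret exceeds $c\sqrt T$), which pins $\eta\gtrsim\psi(1/2)/\sqrt T$; from those two sandwich constraints and the FTRL optimality condition one extracts $\psi(x)\gtrsim\psi(1/2)x^2/c^2$ pointwise. Without this quantitative lemma (or an equivalent), your Case~2 is a plan rather than a proof, and the dichotomy remains open.
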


\begin{proof}
    Let's assume there are constants $c_h,c_l > 0$ such that for all $T$, $R_T \leq c_h L T^{1-1/p}$ for all $d > T$ and $R_T \leq c_l L \sqrt{T d^{1-2/p}}$ for all $d \leq T$ and show a contradiction.
    We begin with a lemma showing the necessity of quadratic growth of a regulariser achieving optimal regret in the $1$-dimensional case (proof in \cref{sec:proof_necessity_for_adaptive_reg_quadratic_growth_subthm}).
    \begin{lemma}\label{lemma:necessity_for_adaptive_reg_quadratic_growth_subthm}
        Consider $d = 1$ ($V = \cB_p = [-1, 1]$) and $\psi \in \cF$. FTRL with regulariser $\psi_t(x) = \frac{1}{\eta_{t-1}}\psi(x)$ and arbitrary decreasing step-size $\eta_{t-1}$ can only guarantee $R_T \leq c L \sqrt{T}$ for some constant $c > 0$ and all sufficiently large $T$ if for all $x \in [-1, 1]$, $\psi(x) \geq \frac{\psi(1/2)}{100c^2} x^2$.
    \end{lemma}
    For $d = 1$, we have $\psi(x) = g(x)$ and under our assumption, $R_T \leq c_l L \sqrt{T}$ for all $T$, so for all $x \in [-1, 1]$, $g(x) \geq \frac{g(1/2)}{100c_l^2} x^2$ by \cref{lemma:necessity_for_adaptive_reg_quadratic_growth_subthm}. Hence in the general $d$-dimensional setting for $x \in \cB_p$:
    \begin{align*}
        \psi(x) = \sum_{i=1}^d g(x_i) \geq \frac{g(1/2)}{100c_l^2} \sum_{i=1}^d x_i^2 = \frac{g(1/2)}{100c_l^2} \norm{x}_2^2.
    \end{align*}
    We can now use this lower bound with \cref{lemma:proof_failure_strg_cvx_part2} (from \cref{sec:Failure_str_cvx}) and $\mu = \frac{g(1/2)}{50c_l^2}$. This gives that if $d \geq \Brb{\frac{200c_l^2 T}{g(1/2)}}^{p/(p-2)}$ then there exists a sequence of linear $L$-Lipschitz losses (in $\ell_p$-norm) for which $R_T \geq \frac{1}{8} LT$, contradicting that $R_T \leq c_h LT^{1-1/p}$ for all $d > T$.
\end{proof}

The above result establishes the need for regularisation adaptive to the dimension-regime for anytime optimality when using FTRL with separable regularisers from $\Psi$. 
In particular, having seen that regularisers $\frac{1}{2}\norm{x}_2^2$ (strong convexity) and $\frac{1}{p}\norm{x}_p^p$ (uniform convexity of degree $p$) fail to achieve optimal anytime regret in the previous sections, we may be tempted to consider $\frac{1}{r}\norm{x}_r^r$ for $r \in (2,p)$ that could trade-off the optimalities of strong-convexity in low-dimension and of uniform-convexity in high-dimension. However, \cref{thm:necessity_adaptive_reg} rules out this possibility. See also \cref{prop:algo_specific_LB_r_in_2_p} in \cref{sec:proofs_necessity_adaptive_reg} for a precise characterisation of the regret when using $\frac{1}{r}\norm{x}_r^2$.

\begin{remark}\label{rem:relaxation_sep_reg_necessity_adareg_result}
    We can slightly relax the constraint of separable regularisers in \cref{thm:necessity_adaptive_reg}. Firstly, in the definition of $\Psi$ (\ref{eq:separable_regs_def}), a different $g_i \in \cF$ can be used for each coordinate. The result then still holds but the quadratic lower bound on $\psi(x)$ stemming from \cref{lemma:necessity_for_adaptive_reg_quadratic_growth_subthm} in the proof will scale with $\min_{1\leq i\leq d} g_i(1/2)$ and the dimension from which the regret becomes linear in $T$ scales with $\min_{1\leq i\leq d} g_i(1/2)^{p/(p-2)}$. Secondly, \cref{thm:necessity_adaptive_reg} also holds more generally for regularisers $\psi$ which are within constants of a separable regulariser: $c_1 \psi'(x) \leq \psi(x) \leq c_2\psi'(x)$ for $\psi' \in \Psi$ and constants $c_1, c_2 > 0$. Finally, by \cref{thm:FTRL_strg_cvx_LB}, the result holds for any strongly-convex regulariser.
\end{remark}

\begin{remark}\label{rem:extension_coordinate_wise_stepsize}
    In \cref{sec:proofs_necessity_adaptive_reg}, we prove a more general version of \cref{lemma:proof_failure_strg_cvx_part2} for coordinate-wise step-sizes, where the FTRL update is allowed to have a different step-size $\eta_{t-1,i}$ for each coordinate: $x_{t} = \argmin_{x \in V} \bcb{\psi(x) + \sum_{i=1}^d \eta_{t-1,i}  \cdot  x_i  \sum_{s=1}^{t-1} g_{s,i}}$. Using this version in the proof of \cref{thm:necessity_adaptive_reg} gives the same result for any sequence of coordinate-wise decreasing step-sizes. This extension establishes the failure of a wider range of methods, in particular the diagonal versions of Adagrad-style algorithms \citep{JMLR:v12:duchi11a}.
\end{remark}

\begin{remark}\label{rem:universality_OMD}
We discuss the connection of our result to the universality of FTRL. It was shown by \cite{Sridharan2012LearningFA} that for a fixed $r \in [2,\infty)$ and constant $C > 0$, for any OCO problem for which a regret upper bound of $CT^{1-1/r}$ can be guaranteed for all $T$, then for any $T > e^{r-1}$, there exists a regulariser with which OMD/FTRL can guarantee a regret bound of $\widetilde{O}(T^{1-1/r})$\footnote{This result was recently improved by \citep{pmlr-v291-gatmiry25a} to omit any log-factors in $T$ for the case of online linear optimisation and $r = 2$}. We present this result in more detail in \cref{app:extension_univ_opt_OMD} and also provide an extension to include the setting where $r$ may change according to $T$ as in our setting. However, this latter result uses a doubling trick where different regularisers are used across separate intervals.
This poses a question on the universality of FTRL with fixed regularisation for more general OCO problems where the optimal rate of regret is horizon or dimension dependent. \cref{thm:necessity_adaptive_reg} offers a negative answer to this question for separable regularisers. However, the case of more general regularisers remains open. Note that in our setting, the non-separable regulariser from \cite{Sridharan2012LearningFA} is within a constant fraction of $\norm{x}_p^p$ so, \cref{thm:necessity_adaptive_reg} still applies (see \cref{rem:relaxation_sep_reg_necessity_adareg_result}). 
To extend our result beyond separable regularisers, one possible approach is to extend \cref{lemma:necessity_for_adaptive_reg_quadratic_growth_subthm} on the quadratic-growth of the regulariser beyond the $1$-dimensional case. 
\end{remark}

\begin{remark}
    The generality of the failure of separable regularization beyond $\ell_p$-ball structures is likely related to the concept of quadratically convex sets \citep{Levy19} for which several results are known on the possibility of getting regret that grows as $\sqrt{T}$ in the low-dimensional case, while it is likely that the regret is better in the high dimensional case (when taking into account dependence on other quantities like dimension). This is an interesting direction of future research. We also note that our results hold for $\ell_p$-balls translated away from 0, provided we consider regularisers satisfying our conditions on this translated ball (e.g. reaches its minimum in the centre of the translated $\ell_p$-ball).
\end{remark}

\subsection{Proof intuitions}

Many of the results discussed in this section so far are based on the same loss construction, with the following linear losses $\ell_t(x) = L \cdot x^Tg_t$ where $g_t \in \cB_{p_\star}$ is defined as
\begin{align*}
    g_t = \begin{cases}
        (-1)^t \cdot e_1, & t \leq \frac{T}{2}, \\
        - v, & t > \frac{T}{2},
    \end{cases}
\end{align*}
where $v \in \cB_{p_\star}$ is a vector with equal entries defined as $v_{t,i} = d^{-1/p_\star}$. 

\textbf{The above construction is motivated by the following intuition:}
The gradients of the losses in the first half of the rounds cancel each other. The competitor is thus only dependent on the losses in the second half of the rounds which are constant and place the competitor in the corner of the $\ell_p$-ball.
This two phase construction captures a bias-variance-like trade-off of FTRL with fixed regularisation: 
\begin{itemize}[leftmargin=*]
    \item If the step-size is small, FTRL will not suffer much regret in the first half of the rounds but in the second half it will not be able to reach the corner of the ball (the competitor) sufficiently quickly and suffer large-regret (it has high “bias”).
    \item If the step-size is large then FTRL will be able quickly reach the corner of the ball (the competitor) in the second half of the rounds but in the first it will suffer large regret because it moves too fast through the space and every other round it will get to close to $e_1$ which will make it suffer large loss in the next round (it has high “variance”).
\end{itemize}
This construction is designed so that the performance of FTRL is at its best when the step-size adequately balances the trade-off between the losses from both halves of the rounds. Analysing the resulting regret gives us many of our results. 

In the case of $\phi_2$ and $\phi_p$, by computing the points played by FTRL we can explicitly compute the regret suffered by FTRL for all step-sizes and obtain the lower bounds in \cref{proposition:highdim_LB_FTRL_OSD} and \cref{thm:lowdim_LB_FTRL_pnormp} respectively. 
For \cref{lemma:necessity_for_adaptive_reg_quadratic_growth_subthm}, we exploit that in the 1-dimensional case the losses in the second half of the rounds are equal to the losses in the odd first half of the rounds, which enables a direct comparison between points played in the first and second half of the rounds.
This comparison allows us to establish that in order to obtain $\sqrt{T}$ regret, the regulariser must have quadratic growth. The full details for all the proofs are in \cref{sec:proofs_necessity_adaptive_reg}.

\section{Bandit feedback}\label{sec:bandit_feedback}

In this section, we consider OCO on $\ell_p$-balls with bandit feedback and linear losses. In the bandit feedback environment, the learner only observes the loss evaluated at the point played $\ell_t(x_t)$ instead of the full loss function $\ell_t$. Similarly to the full information setting studied above, the optimal regret with bandit feedback also depends on the dimension regime. However, our main result in this section will show that for bandit feedback, sub-linear regret is not possible in the high dimensional regime.

The linear bandit problem has been extensively studied (see e.g.\ \cite{lattimore2020bandit}). A $\widetilde{O}(d^{1/2}T^{1/2})$ pseudo-regret\footnote{Pseudo-regret is defined as $\Bar{R}_T = \E \bsb{\sum_{t=1}^T \ell_t(x_t)} - \min_{x \in V} \E \bsb{\sum_{t=1}^T \ell_t(x)}$ where the expectation is with respect to the randomness in the learner's actions.} bound was established by \cite{pmlr-v83-bubeck18a} for $\ell_p$-balls ($p \in (1,2]$) and more generally for strongly-convex action sets by \cite{Kerdreux21_LinBan_UC}.
For $p = 1$, the same bound can be achieved via a reduction to the multi-armed bandit problem (see \cite{pmlr-v83-bubeck18a}). \cite{Kerdreux21_LinBan_UC} also established $\widetilde{O}(d^{1/p} T^{1-1/p})$ pseudo-regret bounds for uniformly-convex sets of degree $p$, which apply to $\ell_p$-balls with $p > 2$. However, since these regret guarantees are dimension-dependent they become vacuous in high-dimensions. The following result shows that with bandit feedback sub-linear regret bounds on $\ell_p$-balls are unachievable in high-dimensions.

\begin{theorem}\label{thm:Bandit_LB}
    Fix $T$, $\delta>0$ and $p \geq 1$. For any dimension $d$ sufficiently large and any OCO algorithm with bandit feedback on $V = \cB_p$, there exists a sequence of random linear losses $(\ell_t)_{t \in [T]}$ with sub-gradients $(g_t)_{t \in [T]}$ such that $\norm{g_t}_{p_\star} \leq L$ for all rounds $t$ with probability at least $1-\delta$ and $\E \bsb{\Bar{R}_T} \geq \frac{LT}{80}$, where the expectation is with respect to the randomness of the losses.
\end{theorem}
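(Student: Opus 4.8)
The plan is to plant a hidden favourable direction in the loss sequence and to show that bandit feedback does not let the learner find it within $T$ rounds, whereas the hindsight-optimal action can exploit it. Draw $i^\star$ uniformly from $[d]$ and put $g_t = -\gamma L e_{i^\star} + \xi_t$, where $\gamma$ is a fixed constant (say $\gamma=1/4$) and the $\xi_t$ are i.i.d.\ mean-zero ``noise'' vectors, independent of $i^\star$, whose only role is to corrupt the scalar feedback $\langle g_t, x_t\rangle$ so that a single round reveals almost nothing about whether $i^\star \in \mathrm{supp}(x_t)$. Since the losses are linear, $\min_{u\in\cB_p}\sum_t\langle g_t,u\rangle = -\|\sum_t g_t\|_{p_\star}$, and conditioning on $i^\star$ and using Jensen with $\E[\xi_t]=0$ gives $\E\|\sum_t g_t\|_{p_\star}\ge\|\E[\sum_t g_t\mid i^\star]\|_{p_\star}=\gamma L T$, so the competitor's expected cumulative loss is at most $-\gamma L T$. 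On the other side, because $\xi_t$ is mean-zero and independent of $x_t$ (a function of past feedback and internal randomness only), the learner's expected cumulative loss is exactly $-\gamma L\sum_t\E[x_{t,i^\star}]$. Hence $\E[\Bar{R}_T]\ge \gamma L\bigl(T-\sum_t\E[x_{t,i^\star}]\bigr)$, and the whole problem reduces to showing $\sum_t\E[x_{t,i^\star}]\le(1-\tfrac{1}{20\gamma})T$ once $d$ is large, which for $\gamma=1/4$ yields $\E[\Bar{R}_T]\ge LT/80$.

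To bound $\sum_t\E[x_{t,i^\star}]$ I would argue information-theoretically. Write the feedback as $Y_t=-\gamma L x_{t,i^\star}+\langle\xi_t,x_t\rangle$, a signal of size $\le\gamma L|x_{t,i^\star}|$ buried in noise of variance $\mathrm{Var}(\langle\xi_t,x_t\rangle)$; a KL/channel bound gives $I(Y_t;i^\star\mid\text{history})\lesssim \gamma^2 L^2\,\E_{i^\star}[x_{t,i^\star}^2]/\mathrm{Var}(\langle\xi_t,x_t\rangle)$, and since $\E_{i^\star\sim\mathrm{Unif}[d]}[x_{t,i^\star}^2]=\|x_t\|_2^2/d$ this is \emph{uniform in $x_t$} provided the noise is chosen so that $\mathrm{Var}(\langle\xi_t,x_t\rangle)$ is at least a constant times $\|x_t\|_2^2/d$ for all admissible $x_t$. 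Summing over rounds bounds the total information $I(i^\star;\text{history}_T)$ by a quantity $J$ that does not grow with $d$; Fano then gives $\E[\max_i\pr(i^\star=i\mid\text{history}_{t-1})]\le (J+1)/\log d$ for every $t$, and since $\E[x_{t,i^\star}\mid\text{history}]\le\|\pi_t\|_{p_\star}\le\|\pi_t\|_\infty^{1/p}$ for the posterior $\pi_t$ of $i^\star$, we obtain $\E[x_{t,i^\star}]\le\bigl((J+1)/\log d\bigr)^{1/p}$, which is below any prescribed constant $<1$ once $d$ is large enough. This closes the estimate. For $p=1$ one can use the even more elementary construction $g_t=L\zeta_t$ with $\zeta_t$ i.i.d.\ Rademacher: the learner's expected loss is $0$, while $\E\|\sum_t\zeta_t\|_\infty\to T$ as $d\to\infty$ because one of $d$ independent $\pm1$ walks of length $T$ is monotone with probability tending to $1$.

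The hard part will be producing a noise distribution that is simultaneously \emph{admissible} ($\|g_t\|_{p_\star}\le L$ for all $t$ with probability $\ge1-\delta$) and \emph{loud enough} in the feedback (the variance lower bound above). For $1\le p\le 2$ this is straightforward: a dense sign vector with entries of size $\Theta(L\,d^{-1/p_\star})$ keeps $\|\xi_t\|_{p_\star}\lesssim L$ surely, and the per-round information then scales like $d^{\,1-2/p}$, which vanishes for $p<2$ and is a harmless constant at $p=2$ (where $d\gg T$ is used through Fano). The difficulty is $p>2$: there $p_\star<2$ makes the budget $\|g_t\|_{p_\star}\le L$ so restrictive that any \emph{surely} (or sub-Gaussian) bounded dense noise has per-coordinate scale that is too small, the feedback is essentially noiseless, and the learner binary-searches to $i^\star$ in $O(\log d)$ rounds; sparse noise fails too, since it leaves the feedback ``clean'' on small probe sets. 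The intended remedy is presumably to spend the probability-$\delta$ slack on a dense heavy-tailed noise, but then one must (a) tune the tail so the norm bound still holds with probability $1-\delta$ after a union bound over the $T$ rounds, (b) re-establish the variance lower bound and check that the learner cannot profit from the rare large-norm rounds, and (c) possibly smooth the noise so the per-round KL stays finite despite atoms. Getting all of these to line up with consistent exponents of $d$, $T$ and $\delta$ across the whole range $p\ge1$, and in particular handling $p>2$, is where I expect the real work of the proof to be.
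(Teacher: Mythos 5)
Your construction — plant a single hidden favourable coordinate $i^\star$, mask it with mean-zero noise in the scalar feedback, and argue information-theoretically that the learner cannot locate $i^\star$ within $T$ rounds — is in essence the paper's proof, but only for its \emph{small-$p$} regime (the paper's Theorem~\ref{thm:High_dim_Bandit_LB_smallp} and the $p\to1$ variant do exactly this, with $g_{t,i^\star}\sim\normal(1/2,\sigma^2)$ and dense Gaussian noise of scale $\sigma\asymp d^{-1/p_\star}$, and a direct Pinsker/KL calculation rather than Fano). You also correctly diagnose why it breaks for large $p$: when $p_\star<2$ the budget $\|g_t\|_{p_\star}\le L$ forces dense noise with entries of size $O(Ld^{-1/p_\star})$, the per-round signal-to-noise ratio at coordinate scale is $\asymp d^{2/p_\star-1}=d^{1-2/p}$, which \emph{grows} with $d$ when $p>2$, and even the tighter Gaussian-channel bound gives $\Theta(\log d)$ bits of information per round, so a binary-search learner pins down $i^\star$ after $O(1)$ rounds no matter how large $d$ is. The heavy-tailed-noise repair you float is not the paper's route, and it is not clear it can work: once the learner probes sparse sets and medians over a few repeats to discard the rare large-norm rounds, the effective feedback is again nearly noiseless and the same binary search succeeds.

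What the paper actually does for $p>4/3$ is not a cleverer noise distribution but a \emph{different planted structure}. It draws $\xi\sim\mathrm{Unif}\{\pm1\}^d$ once and for all, sets $g_t\sim\normal(\varepsilon\xi,\,d^{-2/p_\star}I_d)$ with $\varepsilon=d^{-1/p_\star}$, and takes the hindsight competitor to be the corner $-d^{-1/p}\xi$. The competitor's edge is thereby spread across all $d$ coordinates, each worth $\Theta(\varepsilon^{p_\star})=\Theta(1/d)$ per round, and the learner must recover $d$ independent sign bits from $T$ scalar observations. Lemma~\ref{lemma:Lemma6_bubeck_linearbandit_LB} converts the regret into the number of wrongly-signed coordinate--round pairs, and a coordinate-by-coordinate Pinsker/KL bound (\cref{lemma:proof_bandit_bigp_TV_bound}) shows this count is $\Omega(dT)$ whenever $T<d/16$, giving $R_T\ge T/(4p_\star)\ge T/16$ uniformly over $p>4/3$ with only a polynomial requirement on $d$. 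So the missing idea in your proposal for $p>2$ is exactly this change of instance — moving from a sparse planted coordinate to a dense planted sign pattern — not a refinement of the noise.
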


The proof can be found in \cref{sec:proofs_bandit_feedback} and is based on information-theoretic arguments.
We note that lower bounds in Chapter 24 of \cite{lattimore2020bandit} give linear regret for high-dimensional stochastic linear bandits with $p=1,2$. 
However, as discussed in their Chapter 29, the noise in the stochastic case is outside the inner-product so these bounds do not apply for adversarial linear bandits.
We also remark that dimension-dependent bandit lower bounds from prior work such as the one in \citep{pmlr-v83-bubeck18a} only hold in the low-dimensional setting (e.g. in \citep{pmlr-v83-bubeck18a}, their lower bound Theorem 4 only holds for $T \geq d^{2/(1-q/2)}$ and does not give linear regret in high dimension).

\section{Conclusion}

In this work, we studied OCO on $\ell_p$-balls in $\mathbb{R}^d$ for $p > 2$, distinguishing between high-dimensional ($d > T$) and low-dimensional ($d \leq T$) regimes. In high-dimensions, FTRL achieves the optimal regret of $O(T^{1-1/p})$ with a uniformly-convex regulariser, while in low-dimensions it achieves the optimal regret of $O(T^{1/2}d^{1/2-1/p})$ with a strongly-convex regulariser. Importantly, we proved neither regulariser is optimal across both regimes. Therefore, when the dimension regime is unknown, we showed that FTRL with adaptive regularisation is anytime optimal. 
Furthermore, \emph{we established that this adaptivity is necessary to achieve anytime optimality for separable regularisers.} This is a first step in answering a question on the universality of FTRL with fixed regularisation for general OCO problems. However, it remains open whether there exists a fixed regulariser providing anytime optimality or whether adaptivity for non-separable regularisers is necessary.
Our results demonstrate that existing separable regularisers impose intrinsic limitations on FTRL and open up an interesting avenue of research to discover more sophisticated alternatives that potentially give algorithms that are fundamentally different.
The challenge in generalising our proof technique to rule out the existence of these alternative non-separable regularisers is in extending \cref{lemma:necessity_for_adaptive_reg_quadratic_growth_subthm} on the quadratic-growth of the regulariser beyond the 1-dimensional case.
Finally, for the linear bandit problem, we ruled out the possibility for sub-linear regret bounds in high-dimension. These results underscore the role of dimension and geometry in achieving optimal performance in OCO. 




\newpage
\bibliography{references}

\appendix

\section{Experiments}\label{app:experiments}

In this section, we present a numerical experiment in Figure \ref{fig:experiments} to validate some of our theoretical results on the optimality and sub-optimality of fixed and adpative regularisation for FTRL.
We run FTRL with different regularisers on the loss construction used in the proofs of our lower bounds from \cref{sec:necessity_adaptive_reg}, which is described in \cref{sec:proofs_loss_construction}.

For fixed $T$, we observe that the regret using FTRL with $\phi_p$ is constant across dimension, while the regret of FTRL with $\phi_2$ increases with dimension. In particular, $\phi_2$ outperforms $\phi_p$ in low-dimension while $\phi_p$ outperforms $\phi_2$ in high dimensions. This validates our results that $\phi_p$ is optimal in high dimensions (\cref{sec:OPT_high_dim}) but not in low-dimension (\cref{sec:Failure_unif_cvx}) and that $\phi_2$ is optimal in low-dimension (\cref{sec:OPT_low_dim}) but not in high dimensions (\cref{sec:Failure_str_cvx}). Furthermore, the adaptive procedure from \cref{sec:anytime_bounds_through_adaptive_regularisation} performs well in both low and high dimensions. However, this experiment suggests that the theoretical threshold $t_0 = 3^{-2p/(p-2)} d$ from \cref{thm:anytime_optimal_regret} is perhaps overly conservative in the transient setting between low and high dimensions (at least for this loss construction) and that a larger threshold $t_0 = 2d$ performs better here.

\begin{figure}[ht]
\centering
\includegraphics[width = 0.8\linewidth]{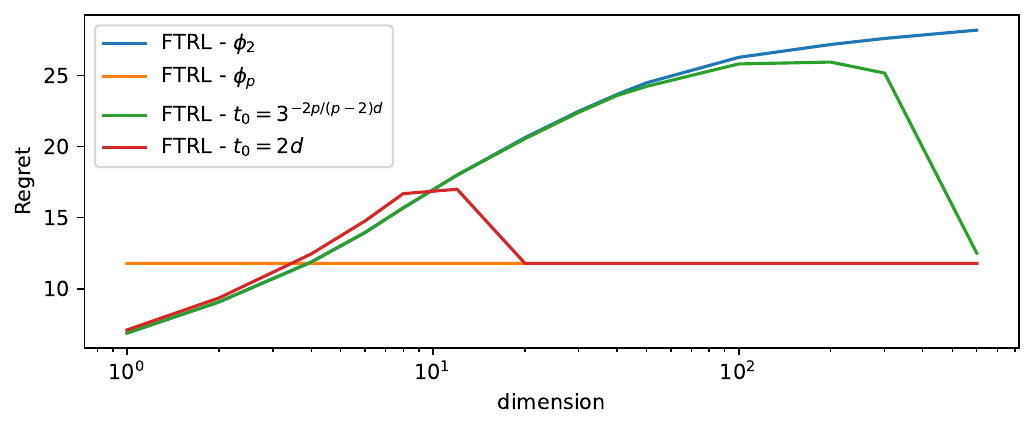}
\caption{Comparison of FTRL with different regularisation. We fix $T = 40$ (and $L = 1$, $p =10$) and vary the dimension. FTRL - $\phi_2$ refers to FTRL using the regulariser $\phi_2 = \frac{1}{2} \norm{x}_2^2$ from \cref{sec:OPT_low_dim} with $\eta_{t-1} = \sqrt{\frac{d^{1-2/p}}{2t}}$. FTRL - $\phi_p$ refers to FTRL using the regulariser $\phi_p = \frac{1}{p} \norm{x}_p^p$ from \cref{sec:OPT_high_dim} with $\eta_{t-1} = \frac{1}{(2p_\star t)^{1/p_\star}}$. The final two correspond to using the procedure from \cref{sec:anytime_bounds_through_adaptive_regularisation} with adaptive regularisation. The first with the threshold $t_0 = 3^{-2p/(p-2)} d$ from \cref{thm:anytime_optimal_regret}, while the second uses the threshold $t_0 = 2d$.}
\label{fig:experiments}
\end{figure}

\paragraph{Implementation Details:}

The experiment was run on google colab with the default settings (including CPU) and takes around 5 minutes run. All the details of the loss construction and algorithms are provided or referenced above. The closed-form updates are provided in \cref{app:pOMD_update}. Note that the Bregman projections onto $\ell_p$-balls are not available analytically, we use the minimize function from the scipy.optimize library to compute the projections numerically (with method='SLSQP').

\section{Closed-form update of FTRL with specific uniformly convex regulariser and related lemmas}\label{app:pOMD_update}

Consider a regulariser $\psi$ differentiable on $\cR^d$. Define the Bregman divergence of $\psi$ as $D_\psi(x,y) = \psi(x) - \psi(y) - \langle \nabla \psi(y), x - y \rangle$ for all $x, y \in \cR^d$.

\begin{lemma}\label{lemma:FTRL_closed_form_update}
    Fix $r \geq 2$. Let $\psi(x) = \frac{1}{r} \norm{x}_r^r$. Let $V = \cB_p$. Let $g_t \in \partial \ell_t(x_t)$. The update rule of FTRL using $\psi_t(x) = \frac{1}{\eta_{t-1}} \psi(x)$ as regularisers is
    \begin{align*}
        & \Tilde{G}_{t+1} = - \eta_{t} \sum_{s=1}^t g_s \\
        & x_{t+1} = \argmin_{x \in \cB_p} D_\psi\Brb{x, \text{sign}(\Tilde{G}_{t+1}) \abs{\Tilde{G}_{t+1}}^{r_\star-1}},
    \end{align*}
    where sign, power and absolute value functions are applied component-wise to vectors.
\end{lemma}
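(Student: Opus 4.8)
The plan is to carry out the standard reduction of one FTRL step to a Bregman projection onto $\cB_p$: rewrite the regularised linearised objective so that the unconstrained minimiser of the \quotes{unprojected} problem appears explicitly, identify that minimiser in closed form using the coordinate-separable structure of $\psi(x)=\frac{1}{r}\norm{x}_r^r$, and then recognise the constrained minimisation as the Bregman projection of that point.

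First I would unfold the update. At round $t+1$ the regulariser is $\psi_{t+1}(x)=\frac{1}{\eta_t}\psi(x)$, so by \eqref{eq:FTRL_def}, and since multiplying the objective by $\eta_t>0$ leaves the $\argmin$ unchanged,
\[
x_{t+1}=\argmin_{x\in\cB_p}\Bcb{\tfrac{1}{\eta_t}\psi(x)+\textstyle\sum_{s=1}^t\langle g_s,x\rangle}
=\argmin_{x\in\cB_p}\bcb{\psi(x)-\langle \tilde G_{t+1},x\rangle},\qquad \tilde G_{t+1}:=-\eta_t\textstyle\sum_{s=1}^t g_s,
\]
which already yields the stated formula for $\tilde G_{t+1}$.

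Next I would invert the gradient of $\psi$. Since $\psi(x)=\sum_{i=1}^d\frac{1}{r}\abs{x_i}^r$ and $r\ge2$, the scalar map $u\mapsto\abs{u}^r$ is continuously differentiable, so $\psi$ is differentiable on $\cR^d$ with $(\nabla\psi(x))_i=\sign(x_i)\abs{x_i}^{r-1}$. Setting $y^\star:=\sign(\tilde G_{t+1})\abs{\tilde G_{t+1}}^{r_\star-1}$ (all operations componentwise), one checks directly that $(\nabla\psi(y^\star))_i=\sign(\tilde G_{t+1,i})\abs{\tilde G_{t+1,i}}^{(r_\star-1)(r-1)}=\tilde G_{t+1,i}$, using the identity $(r_\star-1)(r-1)=1$, equivalently $r_\star-1=\tfrac{1}{r-1}$, which follows from $\tfrac{1}{r}+\tfrac{1}{r_\star}=1$. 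Hence $\nabla\psi(y^\star)=\tilde G_{t+1}$.

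Finally, expanding the Bregman divergence at the base point $y^\star$ and substituting $\nabla\psi(y^\star)=\tilde G_{t+1}$ gives
\[
D_\psi(x,y^\star)=\psi(x)-\psi(y^\star)-\langle\nabla\psi(y^\star),x-y^\star\rangle
=\bsb{\psi(x)-\langle\tilde G_{t+1},x\rangle}+\bsb{\langle\tilde G_{t+1},y^\star\rangle-\psi(y^\star)},
\]
where the second bracket is independent of $x$. Therefore $\argmin_{x\in\cB_p}D_\psi(x,y^\star)=\argmin_{x\in\cB_p}\bcb{\psi(x)-\langle\tilde G_{t+1},x\rangle}$, which by the first display equals $x_{t+1}$, exactly the claimed update. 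I do not anticipate a real obstacle: the only points worth a word of justification are differentiability of $\psi$ at coordinates equal to zero (fine because $r\ge2$) and the exponent identity $(r_\star-1)(r-1)=1$; everything else is the textbook equivalence between the FTRL/OMD iterate and a Bregman projection of the corresponding dual point.
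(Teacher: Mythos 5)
Your proof is correct and takes a genuinely more self-contained route than the paper's. The paper invokes Theorems 6.13 and 6.15 of Orabona's book: it first rewrites the constrained FTRL step as an unconstrained minimisation followed by a Bregman projection, then identifies the unconstrained minimiser via the Fenchel-conjugate machinery ($\nabla\psi^\star$ for $\psi = \frac{1}{r}\norm{\cdot}_r^r$). You instead cut out the intermediate ``unconstrained minimiser then project'' structure entirely: you observe that $D_\psi(x,y^\star) = \bsb{\psi(x) - \inprod{\tilde G_{t+1}, x}} + \text{const}(x)$ once you verify directly that $\nabla\psi(y^\star) = \tilde G_{t+1}$, which reduces the whole lemma to the coordinate-wise exponent identity $(r_\star-1)(r-1)=1$. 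This is a cleaner argument: it never needs to argue that the unconstrained minimiser exists or is unique, nor does it require invoking conjugate-duality facts about $\psi$. The paper's route, by contrast, aligns with the standard mirror-descent-as-dual-projection picture, which is pedagogically familiar and generalises immediately to any Legendre $\psi$; your route exploits the explicit separability and differentiability of the $\ell_r^r$ regulariser, which is all that is needed here. Both proofs are correct; yours is shorter and more elementary.
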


\begin{proof}
    Given $g_t \in \partial \ell_t(x_t)$, the update of FTRL with regulariser $\psi_t$ is (see (\ref{eq:FTRL_def}))
    \begin{align*}
        x_{t+1} = \argmin_{x \in V} \Bcb{\eta_t \langle \sum_{s=1}^t g_s, x \rangle + \psi(x)}.
    \end{align*}
    By Theorem 6.15 in \cite{Orabona2019-uy}, this update is equivalent to
    \begin{align*}
        & \Tilde{x}_{t+1} = \argmin_{x \in \cR^d}  \Bcb{\eta_t \langle \sum_{s=1}^t g_s, x \rangle + \psi(x)}, \\
        & x_{t+1} = \argmin_{x \in \cB_p} D_\psi\lrb{x, \Tilde{x}_{t+1}}.
    \end{align*}
    Now by Theorem 6.13 of \cite{Orabona2019-uy}, the first minimisation (over $\cR^d$) is equivalent to
    \begin{align*}
        \Tilde{x}_{t+1} = \nabla \psi^\star \Brb{- \eta_t \sum_{s=1}^t g_s},
    \end{align*}
    where $\psi^\star$ is the Fenchel conjugate of $\psi$.
    
    For an arbitrary norm $\norm{\cdot}$, the Fenchel conjugate of $f(x) = \frac{1}{r} \norm{x}^r$ is $f^\star(x) = \frac{1}{r_\star} \norm{x}^{r_\star}_\star$ (see Lemma 2.2 in \cite{Kerdreux2021LocalAG}). Therefore the Fenchel conjugate of $\psi(x)$ is $\psi^\star(x) = \frac{1}{r_\star} \norm{x}_{r_\star}^{r_\star}$ and $\nabla \psi^\star(x) = \sign(x) \abs{x}^{{r_\star}-1}$. Combining everything gives the result.
\end{proof}

We now provide two lemmas pertaining to the Bregman projections of the FTRL update in \cref{lemma:FTRL_closed_form_update} for specific cases that will be of use in the proofs in \cref{sec:proofs_necessity_adaptive_reg}.

\begin{lemma}\label{lemma:proj-const-vector-pOMD}
    Consider $z = c\cdot w$ where $w$ is a vector with all entries equal to $1$ and $c > d^{-1/p}$ so that $z \notin \cB_p$. The Bregman projection $\text{argmin}_{x \in \cB_p} D_\psi(x,z)$ with $\psi(x) = \frac{1}{r}\norm{x}_r^r$ of $z$ is $d^{-1/p} \cdot w$, the rescaled version of $w$ that has $\ell_p$-norm equal to 1.
\end{lemma}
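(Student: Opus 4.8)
\textbf{Proof plan for Lemma~\ref{lemma:proj-const-vector-pOMD}.}

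The plan is to use the standard first-order optimality characterisation of Bregman projections onto a convex set, combined with the symmetry of the problem. Write $\psi(x) = \frac{1}{r}\norm{x}_r^r$, so $\nabla\psi(x) = \sign(x)\abs{x}^{r-1}$ componentwise. Let $\bar{x} = d^{-1/p}\cdot w$ be the candidate minimiser; note $\bar{x}\in\partial\cB_p$ since $\norm{\bar x}_p = d^{-1/p}\cdot d^{1/p} = 1$. By convexity of $D_\psi(\cdot,z)$ over the convex set $\cB_p$, it suffices to verify the variational inequality $\langle \nabla_x D_\psi(\bar x, z),\, y - \bar x\rangle \geq 0$ for all $y\in\cB_p$, where $\nabla_x D_\psi(\bar x, z) = \nabla\psi(\bar x) - \nabla\psi(z)$.

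First I would compute both gradients explicitly. Since $\bar x = d^{-1/p}w$ has all entries equal to the positive constant $d^{-1/p}$, we get $\nabla\psi(\bar x) = d^{-(r-1)/p}\cdot w$. Since $z = c\cdot w$ with $c>0$, we get $\nabla\psi(z) = c^{r-1}\cdot w$. Hence $\nabla\psi(\bar x) - \nabla\psi(z) = (d^{-(r-1)/p} - c^{r-1})\cdot w =: -\lambda w$, where $\lambda = c^{r-1} - d^{-(r-1)/p} > 0$ because $c > d^{-1/p}$ and $t\mapsto t^{r-1}$ is increasing for $r\geq 2$. So the variational inequality reduces to showing $\langle -\lambda w,\, y - \bar x\rangle \geq 0$, i.e. $\langle w, y\rangle \leq \langle w, \bar x\rangle$ for all $y\in\cB_p$. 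Now $\langle w, \bar x\rangle = d\cdot d^{-1/p} = d^{1/p}$, and by Hölder's inequality $\langle w, y\rangle \leq \norm{w}_{p_\star}\norm{y}_p \leq d^{1/p_\star}\cdot 1 = d^{1-1/p} = d^{1/p_\star}$. Wait — I need $d^{1/p}$ on the right, not $d^{1/p_\star}$; let me recheck: $\norm{w}_{p_\star} = d^{1/p_\star}$ and $1/p_\star = 1-1/p$, so the Hölder bound gives $d^{1-1/p}$, which equals $d^{1/p}$ only when $p=2$. The resolution is that the correct dual exponent pairing here makes $\langle w,\bar x\rangle$ itself the maximum of $\langle w,y\rangle$ over $\cB_p$: indeed $\bar x$ is exactly the maximiser of the linear functional $y\mapsto\langle w,y\rangle$ over $\cB_p$, attained at the point where $\abs{y_i}^p \propto \abs{w_i}^{p_\star}$, i.e. all $\abs{y_i}$ equal, with the correct sign and $\ell_p$-norm $1$ — that is precisely $\bar x$. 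So I would invoke the dual-norm identity $\max_{y\in\cB_p}\langle w,y\rangle = \norm{w}_{p_\star}$ and observe the maximiser is $\bar x$, giving $\langle w,y\rangle \leq \langle w,\bar x\rangle$ directly.

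Assembling: the variational inequality $\langle\nabla\psi(\bar x) - \nabla\psi(z),\, y-\bar x\rangle = \lambda(\langle w,\bar x\rangle - \langle w,y\rangle) \geq 0$ holds for every $y\in\cB_p$ since $\lambda>0$. By the standard optimality condition for minimising a differentiable convex function over a convex set (e.g.\ Theorem~6.13-type results in \cite{Orabona2019-uy} or any convex analysis reference), $\bar x = d^{-1/p}w$ is the Bregman projection of $z$ onto $\cB_p$, which is the claimed conclusion. I do not anticipate a genuine obstacle here — the only subtle point is correctly identifying that $\bar x$ is the linear maximiser over $\cB_p$ (equivalently, keeping the dual exponents straight), and the argument is otherwise a routine application of the projection optimality condition together with the sign-invariance and coordinate symmetry of $\psi$.
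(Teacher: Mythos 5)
Your approach is essentially the same as the paper's: verify the first-order variational inequality $\langle\nabla\psi(\bar x)-\nabla\psi(z),\,y-\bar x\rangle\geq 0$ by observing that $\nabla\psi(\bar x)-\nabla\psi(z)$ is a negative multiple of $w$ and that $\bar x$ maximises $\langle w,\cdot\rangle$ over $\cB_p$ (the paper bounds $w^T y$ via $\norm{y}_1\leq d^{1-1/p}\norm{y}_p$, you via H\"older on the $\ell_p/\ell_{p_\star}$ pair — equivalent). The mid-proof confusion you flag is a self-inflicted arithmetic slip: $\langle w,\bar x\rangle = d\cdot d^{-1/p}=d^{1-1/p}=d^{1/p_\star}$, not $d^{1/p}$, so it matches the H\"older bound $\norm{w}_{p_\star}\norm{y}_p\leq d^{1/p_\star}$ exactly and your first route already closed; the fallback dual-norm-maximiser argument you switch to is also correct, so no gap remains.
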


\begin{proof}
    We make use of Lemma 5.4 in \cite{bubeck2011introduction}: if $f$ is a convex and differentiable function on $\cB_p$ then $x$ is a minimiser of $f(x)$ in $\cB_p$ if and only if $\nabla f(x)^T (y-x) \geq 0$ for all $y \in \cB_p$. Consider
    \begin{align*}
        f(x) & = D_\psi(x, z) = \psi(x) - \psi(z) - \nabla \psi(z)^T(x-z), \\
        \nabla f(x) & = \nabla \psi(x)- \nabla \psi(z), \\
        [\nabla \psi(x)]_i & = \text{sign}(x_i) \abs{x_i}^{r-1}
    \end{align*}
    Consider $x = d^{-1/p} \cdot w$. From the lemma mentioned above, it is enough to show that $\nabla f(x)^T(y-x) \geq 0$ for all $y \in \cB_p$:
    \begin{align*}
        \nabla f(x)^T(y-x) & = (\nabla \psi(x)- \nabla \psi(z))^T(y-x)  \\
        & = (d^{-(r-1)/p}\cdot w - c^{r-1} \cdot w)^T(y-x) \\
        & = (c^{r-1} - d^{-(r-1)/p}) w^T(x-y) \\
        & = (c^{r-1} - d^{-(r-1)/p}) (d^{1-1/p} - \sum_{i=1}^d y_i) \\
        & \geq (c^{r-1} - d^{-(r-1)/p}) (d^{1-1/p} - \norm{y}_1) \\
        & \geq (c^{r-1} - d^{-(r-1)/p}) (d^{1-1/p} - d^{1-1/p} \norm{y}_p) \\
        & \geq 0,
\end{align*}
where we used that $c^{r-1} - d^{-(r-1)/r} > 0$ and $\norm{y}_1 \leq d^{1-1/p} \norm{y}_p \leq d^{1-1/p}$ for all $y \in \cB_p$.
\end{proof}

\begin{lemma}\label{lemma:proj-e1-vector-pOMD}
    Consider $z = c \cdot e_1$ where $e_1$ is the first canonical basis vector and $\abs{c} > 1$ so that $z \notin \cB_p$. The Bregman projection $\text{argmin}_{x \in \cB_p} D_\psi(x,z)$ with $\psi(x) = \frac{1}{r}\norm{x}_r^r$ of $z$ is $\text{sign} (c) \cdot e_1$.
\end{lemma}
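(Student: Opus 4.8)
The plan is to mirror the structure of the proof of \cref{lemma:proj-const-vector-pOMD}, using the first-order optimality criterion for Bregman projections (Lemma 5.4 in \cite{bubeck2011introduction}): since $f(x) = D_\psi(x,z)$ is convex and differentiable on $\cB_p$, a point $x^\star \in \cB_p$ is the minimiser if and only if $\nabla f(x^\star)^T(y - x^\star) \geq 0$ for all $y \in \cB_p$. As in the previous lemma, $\nabla f(x) = \nabla \psi(x) - \nabla \psi(z)$ with $[\nabla\psi(x)]_i = \sign(x_i)\abs{x_i}^{r-1}$. The candidate minimiser is $x^\star = \sign(c)\cdot e_1$, which lies in $\cB_p$ (it has $\ell_p$-norm $1$). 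By sign-invariance of $\psi$ (it is even coordinate-wise) it suffices to treat $c > 1$, so $x^\star = e_1$.

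The key computation is to evaluate $\nabla f(x^\star) = \nabla\psi(e_1) - \nabla\psi(c\,e_1)$. Since $\nabla\psi(e_1) = e_1$ and $\nabla\psi(c\,e_1) = c^{r-1} e_1$, we get $\nabla f(x^\star) = (1 - c^{r-1})\,e_1$, a negative multiple of $e_1$ because $c > 1$ forces $c^{r-1} > 1$. Then for any $y \in \cB_p$,
\begin{align*}
    \nabla f(x^\star)^T(y - x^\star) = (1 - c^{r-1})(y_1 - 1) = (c^{r-1} - 1)(1 - y_1) \geq 0,
\end{align*}
where the inequality holds because $c^{r-1} - 1 > 0$ and $y_1 \leq \norm{y}_p \leq 1$ for every $y \in \cB_p$. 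By the optimality criterion, $x^\star = e_1$ is the Bregman projection, which is the claim for $c > 1$; the general case $\abs{c} > 1$ follows by the sign-invariance symmetry (replacing $y$ by its reflection in the first coordinate).

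I do not expect any real obstacle here — the argument is a direct and slightly simpler analogue of \cref{lemma:proj-const-vector-pOMD}, the main point being the observation that $\nabla\psi$ acts coordinate-wise so $\nabla\psi(c\,e_1)$ remains supported on the first coordinate, and the elementary bound $y_1 \leq \norm{y}_p \leq 1$ on $\cB_p$. The only thing to be slightly careful about is the reduction to $c>1$ via sign-invariance, which should be stated explicitly rather than left implicit.
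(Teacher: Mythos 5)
Your proof is correct and takes essentially the same route as the paper's: invoke the first-order optimality criterion for the Bregman projection, compute $\nabla f$ at the candidate $\sign(c)\,e_1$ using the coordinate-wise form of $\nabla\psi$, and conclude nonnegativity of the inner product from $|y_1|\leq 1$ on $\cB_p$. The only (cosmetic) difference is that you reduce to $c>1$ by a reflection-symmetry argument, whereas the paper carries $\sign(c)$ through the computation and verifies $\sign(c)(\sign(c)-y_1)\geq 0$ directly; both are fine.
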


\begin{proof}
    As in the proof of \cref{lemma:proj-const-vector-pOMD}, it is enough to show that $\nabla f(x)^T(y-x) \geq 0$ for all $y \in \cB_p$, with $x = \sign (c) \cdot e_1$, and
    \begin{align*}
        f(x) & = D_\psi(x, z) = \psi(x) - \psi(z) - \nabla \psi(z)^T(x-z), \\
        \nabla f(x) & = \nabla \psi(x)- \nabla \psi(z), \\
        [\nabla \psi(x)]_i & = \text{sign}(x_i) \abs{x_i}^{r-1}. \\
        \nabla f(x)^T(y-x) & = (\nabla \psi(x) - \nabla\psi(z))^T(y-x)  \\
        & = (\nabla \psi(\sign (c) \cdot e_1) - \nabla\psi(c \cdot e_1))^T(y-x)  \\
        & = \sign (c) \cdot (\abs{c}^{r-1} - 1) e_1^T(x-y) \\
        & = \sign (c) \cdot (\abs{c}^{r-1} - 1)  (\sign(c) - y_1) \\
        & \geq 0,
\end{align*}
where we used that $\abs{c} > 1$ and $y_1 \leq 1$ for all $y \in \cB_p$.
\end{proof}

\section[Proofs for Section \ref{sec:FTRL_prelim}]{Proofs for Section \ref{sec:FTRL_prelim}}\label{sec:FTRL_analysis}

\subsection[Proof of Theorem \ref{thm:FTRL_regret_bound}]{Proof of Theorem \ref{thm:FTRL_regret_bound}}\label{sec:proof_thm_FTRL_regret_bound}

We follow and extend the analysis of FTRL from \cite{Orabona2019-uy} (Section 7) which is closely related to the analysis in \cite{McMahanSurveyFTRL}. FTRL with uniformly convex regularisation was orginally considered in \cite{pmlr-v119-bhaskara20a} based on the analysis in \cite{McMahanSurveyFTRL}. Existence and unicity of the update can be handled along the same lines as Theorem 6.8 in \cite{Orabona2019-uy} with uniform convexity.

The analysis begins with the following expression for the regret. We refer the reader to \cite{Orabona2019-uy} for the proof.

\begin{lemma}{Lemma 7.1 of \cite{Orabona2019-uy}}
    Denote $F_t(x) = \psi_t (x) + \sum_{s=1}^{t-1} \ell_s(x)$ and set $x_t \in \argmin_{x \in V} F_t(x)$. Consider $\psi_{T+1} = \psi_T$. Then, for any $u \in V$ we have
    \begin{align}
        \sum_{t=1}^T \ell_t(x_t) - \ell_t(u) %
         & \leq \psi_{T}(u) - \min_{x \in V} \psi_1(x) + \sum_{t=1}^T \Bcb{F_t(x_t) - F_{t+1}(x_{t+1}) + \ell_t(x_t)} \label{eq:FTRL_regret_base_eq}
    \end{align}
\end{lemma}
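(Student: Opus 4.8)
The plan is to establish the decomposition (\ref{eq:FTRL_regret_base_eq}) by a short telescoping computation whose only inequality is the defining minimality of the auxiliary leader $x_{T+1}$; everything else is algebra. Throughout I write $F_t(x) = \psi_t(x) + \sum_{s=1}^{t-1}\ell_s(x)$ with the conventions $\sum_{s=1}^{0}\ell_s \equiv 0$ and $\psi_{T+1} = \psi_T$, so that $F_1 = \psi_1$ and $F_{T+1}(x) = \psi_T(x) + \sum_{s=1}^T \ell_s(x)$.

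First I would rewrite the regret against $u$ by adding and subtracting $\psi_T(u)$. Since $\psi_T(u) + \sum_{t=1}^T\ell_t(u) = F_{T+1}(u)$, this gives
\begin{align*}
    \sum_{t=1}^T \ell_t(x_t) - \sum_{t=1}^T \ell_t(u) = \psi_T(u) + \sum_{t=1}^T \ell_t(x_t) - F_{T+1}(u).
\end{align*}
Next, because $x_{T+1} \in \argmin_{x\in V} F_{T+1}(x)$ we have $F_{T+1}(u) \geq F_{T+1}(x_{T+1})$, and substituting yields the upper bound $\psi_T(u) + \sum_{t=1}^T \ell_t(x_t) - F_{T+1}(x_{T+1})$. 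Then I would telescope $F_{T+1}(x_{T+1}) = F_1(x_1) + \sum_{t=1}^T\brb{F_{t+1}(x_{t+1}) - F_t(x_t)}$ and use $F_1(x_1) = \psi_1(x_1) = \min_{x\in V}\psi_1(x)$ by minimality of $x_1$ for $\psi_1$ on $V$. Plugging this in and regrouping each $\ell_t(x_t)$ with the corresponding increment $F_t(x_t) - F_{t+1}(x_{t+1})$ gives precisely
\begin{align*}
    \sum_{t=1}^T \ell_t(x_t) - \sum_{t=1}^T \ell_t(u) \leq \psi_T(u) - \min_{x\in V}\psi_1(x) + \sum_{t=1}^T \brb{F_t(x_t) - F_{t+1}(x_{t+1}) + \ell_t(x_t)},
\end{align*}
which is (\ref{eq:FTRL_regret_base_eq}).

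The argument itself has no real obstacle — it is a clean ``follow-the-leader/be-the-leader''-style telescoping — so the only thing requiring care is well-posedness: the minimisers $x_t$, and in particular the fictitious point $x_{T+1}$, must actually exist, i.e. $\argmin_{x\in V}F_t(x)\neq\emptyset$. This I would justify exactly as for the FTRL update in the companion existence argument (Theorem 6.8 in \cite{Orabona2019-uy} adapted to uniform convexity): each $\psi_t$ is proper, closed, and strictly convex on $V$ by uniform convexity, hence $F_t = \psi_t + \sum_{s<t}\ell_s$ is proper, closed, and strictly convex, and attains a minimiser on the closed set $V$ under the standing assumptions. I would also remark that this decomposition uses neither convexity of the losses nor minimality of $x_t$ for $t \leq T$; those are only needed afterwards, when the per-round increments $F_t(x_t) - F_{t+1}(x_{t+1})$ are bounded in \cref{thm:FTRL_regret_bound} using that $x_t$ minimises $F_t$ together with the uniform convexity of $\psi_t$.
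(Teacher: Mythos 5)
Your proof is correct and is essentially the standard telescoping argument from Lemma 7.1 of Orabona's monograph, which is exactly what the paper defers to without reproducing. One minor nitpick in your closing remark: as written you do invoke minimality of $x_1$ to get $F_1(x_1) = \min_{x\in V}\psi_1(x)$, though this could be weakened to $F_1(x_1) \geq \min_{x\in V}\psi_1(x)$ for arbitrary $x_1\in V$, so the substantive content of the remark (that only $x_{T+1}$'s minimality matters for the inequality) is sound.
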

To bound the terms $F_t(x_t) - F_{t+1}(x_{t+1}) + \ell_t(x_t)$, we use the uniform convexity of the regularisers. In particular, we require the following result on uniformly convex functions, which is an extension of Corollary 7.7 of \cite{Orabona2019-uy}.

\begin{lemma}\label{lemma:FTRL_regret_proof_UC_property}
    Let $f: \cR^d \rightarrow \cR$ be closed, proper, sub-differentiable and $\mu$-uniformly convex of degree $r$ w.r.t.\ a norm $\norm{\cdot}$. Let $x^\star = \argmin_{x \in \text{dom} f} f(x)$. Then for all $x \in \text{dom } f$ and $g \in \partial f(x)$, we have
    \begin{align*}
        f(x) - f(x^\star) \leq \frac{r-1}{r \mu^{1/(r-1)}} \norm{g}_\star^{r/(r-1)}.
    \end{align*}
\end{lemma}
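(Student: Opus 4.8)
The plan is to reduce the claim to a one–dimensional maximisation after invoking the defining inequality of uniform convexity. First I would recall the sub-differentiable analogue of \cref{def:uniform_convexity}: if $f$ is $\mu$-uniformly-convex of degree $r$ w.r.t.\ $\norm{\cdot}$, then for all $x,y \in \text{dom}\, f$ and every $g \in \partial f(x)$,
\begin{align*}
    f(y) \geq f(x) + \langle g, y - x \rangle + \frac{\mu}{r}\norm{y - x}^r .
\end{align*}
(Uniform convexity with $r \geq 2$ also forces $f$ to be strictly convex and coercive, so the minimiser $x^\star$ exists and is unique; I would note this for completeness, since it is used implicitly.) The degenerate case $g = 0$ gives $x = x^\star$ and the bound is immediate, so I may assume $\norm{g}_\star > 0$.

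Next I would apply the displayed inequality with $y = x^\star$ and rearrange, then use the Fenchel--Young (dual-norm) inequality $\langle g, x - x^\star \rangle \leq \norm{g}_\star \norm{x - x^\star}$, to get
\begin{align*}
    f(x) - f(x^\star) \leq \langle g, x - x^\star \rangle - \frac{\mu}{r}\norm{x - x^\star}^r \leq \norm{g}_\star \norm{x - x^\star} - \frac{\mu}{r}\norm{x - x^\star}^r .
\end{align*}
Writing $a = \norm{g}_\star$ and $t = \norm{x - x^\star} \geq 0$, the right-hand side is at most $\max_{t \geq 0}\, h(t)$ with $h(t) = a t - \frac{\mu}{r} t^r$. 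Since $h'(t) = a - \mu t^{r-1}$ vanishes at $t^\star = (a/\mu)^{1/(r-1)}$, substituting back yields $h(t^\star) = \frac{r-1}{r}\,\mu^{-1/(r-1)} a^{r/(r-1)}$, which is exactly the claimed bound. Equivalently, one can bound the middle expression directly by Fenchel--Young, since the conjugate of $x \mapsto \frac{\mu}{r}\norm{x}^r$ is $x \mapsto \frac{r-1}{r}\mu^{-1/(r-1)}\norm{x}_\star^{r/(r-1)}$ (by Lemma~2.2 of \cite{Kerdreux2021LocalAG} together with the scaling rule for conjugates of $\lambda f$).

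I do not expect a real obstacle here: the computation is the standard generalisation of Corollary~7.7 of \cite{Orabona2019-uy} from $r = 2$ to general $r$. The only points needing a word of care are (i) passing from the differentiable form of uniform convexity in \cref{def:uniform_convexity} to its sub-differential characterisation, and (ii) the existence of $x^\star$, both of which are classical for uniformly convex functions; the rest is the elementary scalar optimisation above.
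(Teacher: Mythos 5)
Your proof is correct and reaches the same bound, but the bookkeeping is organised differently from the paper's. You plug $y = x^\star$ into the uniform-convexity inequality, bound the cross term $\langle g, x - x^\star \rangle$ by the dual-norm inequality, and then maximise the scalar expression $a t - \frac{\mu}{r} t^r$ over $t \geq 0$. The paper instead leaves $z$ free, writes $f(x^\star) = \min_z f(z) \geq f(x) + \min_{z \in \cR^d}\bcb{\langle g, z-x \rangle + \frac{\mu}{r}\norm{z-x}^r}$, recognises the minimisation as $-\mu$ times the Fenchel conjugate of $\frac{1}{r}\norm{\cdot}^r$ evaluated at $-g/\mu$, and plugs in the known conjugate $\frac{1}{r_\star}\norm{\cdot}_\star^{r_\star}$. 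These are the same computation — your dual-norm step followed by scalar optimisation is exactly what goes into that conjugate — and you explicitly flag the Fenchel–Young shortcut as an alternative, so you clearly see the equivalence. What your version buys is elementarity (no need to cite the conjugate of a norm power); what the paper's version buys is compactness. Your remarks on passing to the sub-differential form of \cref{def:uniform_convexity} and on existence/uniqueness of $x^\star$ are sensible hygiene that the paper's proof leaves implicit.
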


\begin{proof}
    By the uniform convexity of $f$, we have
    \begin{align*}
        f(x^\star) & = \min_{z \in \text{dom} f} f(z) \\
        & \geq \min_{z \in \text{dom} f} \Bcb{f(x) + \langle g, z-x \rangle + \frac{\mu}{r} \norm{z-x}^r} \\
        & \geq f(x) + \min_{z \in \cR^d} \Bcb{\langle g, z-x \rangle + \frac{\mu}{r} \norm{z-x}^r }\\
        & = f(x) + \min_{z \in \cR^d} \Bcb{\langle g, z \rangle + \frac{\mu}{r} \norm{z}^r} \\
        & = f(x) - \mu \max_{z \in \cR^d} \Bcb{ \langle \frac{- g}{\mu}, z \rangle - \frac{1}{r}\norm{z}^r} \\
        & = f(x) - \frac{\mu}{r_\star} \Bnorm{\frac{- g}{\mu}}_\star^{r_\star}  \\
        & = f(x) - \mu^{1-r_\star} \frac{\norm{g}_\star^{r_\star}}{r_\star} \\
        & = f(x) - \frac{r-1}{r \mu^{1/(r-1)}} \norm{g}^{r/(r-1)}_\star
    \end{align*}
    where we used that the fenchel conjugate of $\frac{\norm{x}^r}{r}$ is $\frac{\norm{x}_\star^{r_\star}}{r_\star}$  %
    Rearranging gives the result.
\end{proof}
Since $\psi_t$ is proper, closed, differentiable and $\mu_t$-uniformly convex of degree $r_t$ with respect to $\norm{\cdot}_{|t}$ and the losses are proper and convex, $F_t(x) + \ell_t(x) = \psi_t (x) + \sum_{s=1}^{t} \ell_s(x)$ is also proper, closed, sub-differentiable and $\mu_t$-uniformly convex of degree $r_t$ with respect to $\norm{\cdot}_{|t}$. Applying \cref{{lemma:FTRL_regret_proof_UC_property}} to $F_t + \ell_t$, we have with $x^\star_t = \argmin_{x \in V} F_t(x) + \ell_t(x)$
\begin{align*}
    F_t(x_t) - F_{t+1}(x_{t+1}) + \ell_t(x_t) & = \Brb{F_t(x_t) + \ell_t(x_t)} - \Brb{F_t(x_{t+1}) + \ell_t(x_{t+1})} + \psi_t(x_{t+1}) - \psi_{t+1}(x_{t+1}) \\
    & \leq \Brb{F_t(x_t) + \ell_t(x_t)} - \Brb{F_t(x_{t}^\star) + \ell_t(x_{t}^\star)} + \psi_t(x_{t+1}) - \psi_{t+1}(x_{t+1}) \\
    & \leq \frac{r_t-1}{r_t \mu_t^{1/(r_t-1)}} \norm{g_t}_{|t\star}^{r_t/(r_t-1)} + \psi_t(x_{t+1}) - \psi_{t+1}(x_{t+1}),
\end{align*}
where we used that $g_t \in \partial (F_t + \ell_t)(x_t)$ since $g_t \in \partial \ell_t(x_t)$ and $x_t = \argmin_{x \in V} F_t(x)$. We omit some technical details but the steps from \cite{Orabona2019-uy} extend to our setting. Plugging the above into (\ref{eq:FTRL_regret_base_eq}) gives \cref{thm:FTRL_regret_bound}. %

\subsection[Proof of Corollary \ref{cor:FTRL_regret_bound_cor}]{Proof of Corollary \ref{cor:FTRL_regret_bound_cor}}\label{sec:proof_cor_FTRL_regret_bound_cor}

Since $\psi$ is $\mu$-uniformly convex function on $V$ of degree $r$ with respect to $\norm{\cdot}$, then the regulariser used by FTRL in round $t$, $\psi_t = \frac{1}{\eta_{t-1}}\psi$ is $\frac{\mu}{\eta_{t-1}}$-uniformly convex function on $V$ of degree $r$ with respect to $\norm{\cdot}$. Since $\eta_t \leq \eta_{t-1}$, $\psi_t(x_{t+1}) - \psi_{t+1}(x_{t+1}) = \Brb{\frac{1}{\eta_{t-1}} - \frac{1}{\eta_t} }\psi(x_{t+1}) \leq 0$. By the Lipschitz condition on the losses, we have $\norm{g_t}_\star \leq L_{\norm{\cdot}}$. Applying \cref{thm:FTRL_regret_bound}, we have
\begin{align*}
    \sum_{t=1}^T \ell_t(x_t) - \ell_t(u) & \leq \frac{\psi(u)}{\eta_{T-1}} + \frac{L_{\norm{\cdot}}^{r_\star}}{r_\star \mu^{r_\star - 1}} \sum_{t=1}^T \eta_{t-1}^{r_\star - 1} \\
    & \leq \frac{L_{\norm{\cdot}} D^{1 - 1/r_\star} (r_\star - 1)^{1/r_\star} T^{1/r_\star} }{\mu^{1/r}} + \frac{L_{\norm{\cdot}}^{r_\star}}{r_\star \mu^{r_\star - 1}} \sum_{t=1}^T \Brb{\frac{D^{1/r_\star} \mu^{1/r}}{L_{\norm{\cdot}} (r_\star-1)^{1/r_\star} t^{1/r_\star}}}^{r_\star - 1} \\
    & \leq \frac{L_{\norm{\cdot}} D^{1/r} (r_\star - 1)^{1/r_\star} T^{1/r_\star} }{\mu^{1/r}} + \frac{L_{\norm{\cdot}} D^{1/r}}{r_\star (r_\star-1)^{1/r} \mu^{(r_\star - 1) (1 - 1/r)}} \sum_{t=1}^T \frac{1}{t^{1/r}}\\
    & = \frac{L_{\norm{\cdot}} D^{1/r} }{\mu^{1/r}} \Brb{(r_\star - 1)^{1/r_\star} T^{1/r_\star} + \frac{1}{r_\star (r_\star-1)^{1/r}} \sum_{t=1}^T \frac{1}{t^{1/r}}}.
\end{align*}
Now note that
\begin{align*}
    & \sum_{t=1}^T \frac{1}{t^{1/r}} \leq \int_{0}^T \frac{1}{x^{1/r}} dx = \Big[\frac{1}{1-1/r} x^{1-1/r}\Big]^T_0 = r_\star T^{1/r_\star} \\
    \implies & \sum_{t=1}^T \ell_t(x_t) - \ell_t(u) \leq \frac{L_{\norm{\cdot}} D^{1/r} T^{1/r_\star}}{\mu^{1/r}} \Brb{(r_\star - 1)^{1/r_\star} + \frac{1}{(r_\star-1)^{1/r}} }.
\end{align*}
The proof is concluded by noting that $(r_\star - 1)^{1/r_\star} + \frac{1}{(r_\star-1)^{1/r}} = r^{1/r} r_\star^{1/r_\star}$. \cref{lemma:optimizing-regret-UB} was helpful in finding the optimal step-size.

\subsection[Proof of Theorem \ref{thm:lowdim_LB}]{Proof of Theorem \ref{thm:lowdim_LB}}\label{sec:proof_lowdimLB}

We have $d \leq T$. Let $k = \lfloor T/d \rfloor \geq 1$. Let $Y_{i,j}$ be i.i.d.\ Rademacher random variables for $1 \leq i \leq d$, $1 \leq j \leq k$, i.e.\ $\pr(Y_{i,j} = 1) = \pr(Y_{i,j} = -1) = 1/2$. Let $e_1, ..., e_d$ be the canonical basis of $\mathbb{R}^d$. Define $g_t = L Y_{i,j} \cdot e_i$ where $t = k(i-1) + j$ (for $k$ rounds we stick to the same coordinate and draw i.i.d.\ Rademacher random variables). Denote the point played by $\cA$ by $x_t$ and fix the loss to be $\Tilde{\ell}_t(x) = g_t^T x$ (for $t > dk$, fix $\Tilde{\ell}_t(x) = 0$). The subgradient is $g_t$, which is bounded by $L$ in $\ell_{p_\star}$-norm. The point $x_t$ depends on the losses up to time $t-1$ but not on $\Tilde{\ell}_t$ and is independent of $Y_t$, so for all $t$:
\begin{align*}
    \E[\Tilde{\ell}_t(x_t)] = \E[Y_t L \cdot e_t^T x_t] = \E[Y_t] L e_t^T \E[x_t] = 0 \implies \E[\sum_{t=1}^{T} \Tilde{\ell}_t(x_t)] = 0.
\end{align*}
On the other hand, $u = - d^{-1/p} \sum_{i=1}^d \sign \Bcb{\sum_{j=1}^k Y_{i,j}} e_i \in \cB_p$ gives
\begin{align*}
    \min_{x \in \cB_p} \sum_{t=1}^T \Tilde{\ell}_t(x) & \leq \sum_{t=1}^T \Tilde{\ell}_t(u) \\
    & = - L d^{-1/p} \Brb{\sum_{i=1}^d \sign \Bcb{\sum_{j=1}^k Y_{i,j}} e_i}^T \Brb{\sum_{i=1}^d \sum_{j=1}^k Y_{i,j} e_i} \\
    & = - L d^{-1/p} \sum_{i=1}^d \Babs{\sum_{j=1}^k Y_{i,j}}.
\end{align*}
We now make use of a result from \cite{esfandiari2021regret} (proof of Lemma 7.2): fix $B > 0$, consider $X = \sum_{i=1}^B t_i R_i$ where $t_i$ are positive integers such that $\sum_{i=1}^B t_i = k$ and $R_i$ are i.i.d\ Rademacher random variables. Then $\E \bsb{\abs{X}} \geq k / \sqrt{3B}$. 

In our case, with $B=k$ and $t_i = 1$ for all $i$, we have that $\E[\abs{\sum_{j=1}^k Y_{i,j}}] \geq \sqrt{k/3} \geq \sqrt{T/6d}$ (since $k = \lfloor T/d \rfloor \geq T/2d$ for $T \geq d$) which gives
\begin{align*}
    \E\Bsb{\sum_{t=1}^T \Tilde{\ell}_t(x_t) -  \min_{x \in \cB_p} \sum_{t=1}^T \Tilde{\ell}_t(x)} \geq 0 + Ld^{-1/p} \sum_{i=1}^d \sqrt{\frac{T}{6d}} = Ld^{-1/p} \sqrt{\frac{Td}{6}} = L \sqrt{\frac{Td^{1-2/p}}{6}}
\end{align*}
The result follows by: $\sup_{\ell_1, ..., \ell_T} R_T \geq \E\Bsb{\sum_{t=1}^T \Tilde{\ell}_t(x_t) - \min_{x \in \cB_p} \sum_{t=1}^T \Tilde{\ell}_t(x)} \geq L \sqrt{\frac{Td^{1-2/p}}{6}}$.

\subsection[Proof of Theorem \ref{thm:highdim_LB}]{Proof of Theorem \ref{thm:highdim_LB}}\label{sec:proof_highdimLB}

We have $d > T$. For $t \in \lcb{1,...,T}$, let $Y_t$ be i.i.d.\ Rademacher random variables, i.e.\ $\pr(Y_t = 1) = \pr(Y_t = -1) = 1/2$. Let $e_1, ..., e_d$ be the canonical basis of $\mathbb{R}^d$. At time-step $t$, denote the point played by $\cA$ by $x_t$ and fix the loss to be $\Tilde{\ell}_t(x) = Y_t L e_t^T x$. The subgradient is $Y_t L e_t$, which is bounded by $L$ in $\ell_{p_\star}$-norm. The point $x_t$ depends on the losses up to time $t-1$ but not on $\Tilde{\ell}_t$ and is independent of $Y_t$, so for all $t$:
    \begin{align*}
        \E[\Tilde{\ell}_t(x_t)] = \E[Y_t L e_t^T x_t] = \E[Y_t] L e_t^T \E[x_t] = 0 \implies \E[\sum_{t=1}^T \Tilde{\ell}_t(x_t)] = 0.
    \end{align*}
    On the other hand,
    \begin{align*}
        \min_{x \in \cB_p} \sum_{t=1}^T \Tilde{\ell}_t(x) & = L \min_{x \in \cB_p} x^T \Brb{\sum_{t=1}^T Y_t e_t}
    \end{align*}
    is attained at $x = -T^{-1/p} \sum_{t=1}^T Y_t e_t \in \cB_p$, giving
    \begin{align*}
        \min_{x \in \cB_p} \sum_{t=1}^T \Tilde{\ell}_t(x) & = - L T^{-1/p} \sum_{t,t'=1}^T Y_t Y_{t'} e_t^T e_{t'} = - L T^{-1/p} \sum_{t=1}^T Y_t^2 = - L T^{1-1/p} = - L T^{1/{p_\star}}.
    \end{align*}
    The result follows by: $\sup_{\ell_1, ..., \ell_T} R_T \geq \E\Bsb{\sum_{t=1}^T \Tilde{\ell}_t(x_t) - \min_{x \in \cB_p} \sum_{t=1}^T \Tilde{\ell}_t(x)} = L T^{1/{p_\star}}.$

\subsection[Uniform Convexity of psi-p]{Uniform Convexity of $\boldsymbol{\psi_p}$}\label{app:UC-psi-pOMD}

In this section, we provide the proof of Proposition \ref{prop:UC_of_lp} on the $\mu$-uniform convexity of degree $p$ of $\psi(x) = \frac{1}{p}\norm{x}_p^p$ on $\cB_p$ for $p>2$.

Consider $x,y \in \cB_p$. Following the steps in Remark 2.1 of \cite{Zlinescu1983OnUC}, using convexity of $\psi$ we have for $\lambda \in [0,1/2]$,
\begin{align*}
    \psi(\lambda x + (1-\lambda) y) & = \psi\Brb{2\lambda \Brb{\frac{x+y}{2}} + (1-2\lambda)y} \\
    & \leq 2\lambda \psi\Brb{\frac{x+y}{2}} + (1-2\lambda) \psi(y) \\
    & = \frac{2\lambda}{p} \Bnorm{\frac{x+y}{2}}_p^p + \frac{(1-2\lambda)}{p} \norm{y}^p_p.
\end{align*}

From Clarkson's inequality (equation (2.1) in \cite{Ball94}), we have that
\begin{align*}
    \Bnorm{\frac{x+y}{2}}_p^p + \Bnorm{\frac{x-y}{2}}_p^p \leq \frac{\norm{x}_p^p}{2} + \frac{\norm{y}_p^p}{2}.
\end{align*}
Using this in the above we have
\begin{align}
    \psi(\lambda x + (1-\lambda) y) & \leq \frac{2\lambda}{p}\frac{\norm{x}_p^p}{2} + \frac{2\lambda}{p} \frac{\norm{y}_p^p}{2} - \frac{2\lambda}{p} \Bnorm{\frac{x-y}{2}}_p^p + \frac{(1-2\lambda)}{p} \norm{y}^p_p \notag \\
     & = \lambda \frac{\norm{x}_p^p}{p} + (1-\lambda) \frac{\norm{y}_p^p}{p}  - \frac{2\lambda}{p} \Bnorm{\frac{x-y}{2}}_p^p \notag \\
     & \leq \lambda \psi(x) + (1-\lambda) \psi(y) - \frac{2\lambda(1-\lambda)}{p} \Bnorm{\frac{x-y}{2}}_p^p. \label{eq:psi_mu_improvement1}
\end{align}
This is an alternative characterisation of uniform convexity, we now show (following steps in Definition 3.2 of \cite{Kerdreux2021LocalAG}) that it is equivalent to our original one (Definition \ref{def:uniform_convexity}). From the convexity and differentiability of $\psi$,
\begin{align*}
    \psi(y) + \lambda \langle \nabla \psi(y), x-y \rangle & = \psi(y) + \langle \nabla \psi(y), [y + \lambda (x-y)] - y \rangle \\
    & \leq \psi(y + \lambda(x-y)) \\
    & \leq \lambda \psi(x) + (1-\lambda) \psi(y) - \frac{2\lambda(1-\lambda)}{p} \Bnorm{\frac{x-y}{2}}_p^p.
\end{align*}
Rearrenging,
\begin{align*}
    \implies & \lambda \langle \nabla \psi(y), x-y \rangle \leq \lambda (\psi(x) - \psi(y)) - \frac{2\lambda(1-\lambda)}{p} \Bnorm{\frac{x-y}{2}}_p^p \\
    \implies & \langle \nabla \psi(y), x-y \rangle \leq  (\psi(x) - \psi(y)) - \frac{2(1-\lambda)}{p} \Bnorm{\frac{x-y}{2}}_p^p \\
    \implies & \psi(x) \geq  \psi(y) + \langle \nabla \psi(y), x-y \rangle + \frac{2}{p} \Bnorm{\frac{x-y}{2}}_p^p,
\end{align*}
as $\lambda \rightarrow 0$. So for any $x,y \in \cB_p$ we have the condition of uniform convexity with $\mu = 2^{1-p}$. $\hfill\square$

\begin{remark}
    It is not possible to get the parameter of uniform convexity $\mu = 1$. Consider the $1$-dimensonal case, $x=1, y=-1$: 
    \begin{align*}
        \psi(x) + \langle \nabla \psi(x), y-x \rangle + \frac{\mu}{p} \norm{x-y}_p^p & = \frac{1}{p} + (y-x) + \frac{\mu}{p}(1+1)^p \\
        & = \frac{1}{p} + (-1-1) + \frac{\mu2^p}{p} \\
        & = \frac{1 - 2p + \mu 2^p}{p}. 
    \end{align*}
    This is less or equal than $\psi(y) = \frac{1}{p}$ when
    \begin{align*}
        \frac{1 - 2p + \mu 2^p}{p} \leq \frac{1}{p} \implies 1 - 2p + \mu 2^p \leq 1 \implies \mu \leq p 2^{1-p}.
    \end{align*}
    So our constant may be loose by a factor of $p$ but $\mu = 1$ is not possible since $p 2^{1-p} < 1$ as soon as $p > 2$.
    
    In fact, we can slightly improve $\mu$ from $\frac{1}{2^{p-1}}$ to $\frac{1}{2^{p-1}-1}$ (we present our results with $\frac{1}{2^{p-1}}$ because it only changes the results by a small constant and slightly avoids clutter). Here is how: In the first step of the proof, we used convexity of $\psi$ to obtain the following bound,
    \begin{align*}
        \psi(2\lambda \Brb{\frac{x+y}{2}} + (1-2\lambda)y) \leq 2\lambda \psi\Brb{\frac{x+y}{2}} + (1-2\lambda) \psi(y).
    \end{align*}
    However, from (\ref{eq:psi_mu_improvement1}), we have that
    \begin{align}\label{eq:psi_mu_improvement2}
        \psi(\lambda x + (1-\lambda) y) \leq \lambda \psi(x) + (1-\lambda) \psi(y) - \frac{2\lambda}{p} \Bnorm{\frac{x-y}{2}}_p^p,
    \end{align}
    and this provides a tighter bound than just using convexity:
    \begin{align*}
        \psi\Brb{2\lambda \Brb{\frac{x+y}{2}} + (1-2\lambda)y} & \leq 2\lambda \psi\Brb{\frac{x+y}{2}} + (1-2\lambda) \psi(y) - \frac{2\cdot 2\lambda}{p} \Bnorm{\frac{(x+y)/2-y}{2}}_p^p \\
        & \leq \lambda \psi(x) + (1-\lambda) \psi(y) - \frac{2\lambda}{p} \Bnorm{\frac{x-y}{2}}_p^p \Brb{1 + 2^{1-p}},
    \end{align*}
    where we followed similar steps as in the original proof (Clarkson's inequality). This provides an even tighter bound than (\ref{eq:psi_mu_improvement2}) and applying these tighter bounds recursively gives
    \begin{align*}
        \psi\Brb{2\lambda \Brb{\frac{x+y}{2}} + (1-2\lambda)y} & \leq \lambda \psi(x) + (1-\lambda) \psi(y) - \frac{2\lambda}{p} \Bnorm{\frac{x-y}{2}}_p^p \cdot \frac{1}{1-2^{1-p}},
    \end{align*}
    using that $\sum_{t=0}^\infty (2^{1-p})^t = 1/(1-2^{1-p})$. Following the same steps for the remainder of the proof gives uniform convexity of $\psi$ with $\mu = \frac{1}{2^{p-1}-1}$.
\end{remark}

\subsection{Helper lemma}

\begin{lemma}\label{lemma:optimizing-regret-UB}
    Fix $a,b > 0$, $n > 1$. Let $f(x) = \frac{a}{x} + b x^{n-1}$ for $ x>0$. Then $f$ is minimised at $x^\star = (a/b(n-1))^{1/n}$ and 
    \begin{align*}
        f(x^\star) = a^{1-1/n} b^{1/n} \Brb{\frac{n}{n-1}}^{(n-1)/n}n^{1/n}.
    \end{align*}
\end{lemma}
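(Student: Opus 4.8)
The plan is a direct one-variable calculus argument. First I would differentiate: for $x > 0$,
\[
f'(x) = -a x^{-2} + b(n-1)x^{n-2} = x^{-2}\brb{b(n-1)x^{n} - a}.
\]
Since $a,b>0$ and $n>1$, the bracket $b(n-1)x^{n}-a$ is strictly increasing on $(0,\infty)$, is negative for small $x$, positive for large $x$, and has its unique zero at $x^\star = \brb{a/(b(n-1))}^{1/n}$. The prefactor $x^{-2}$ is positive, so $f'(x)<0$ for $x<x^\star$ and $f'(x)>0$ for $x>x^\star$; hence $f$ strictly decreases then strictly increases, and $x^\star$ is the unique global minimiser. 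This sign analysis of $f'$ is the point I'd emphasise, because it avoids any appeal to convexity of $f$: when $n\in(1,2)$ the term $b x^{n-1}$ is concave, so $f$ need not be convex, and the factored form of $f'$ handles all $n>1$ uniformly. This is the only step requiring a little care — otherwise the lemma is routine.

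Next I would evaluate $f$ at $x^\star$. Writing $x^\star=\brb{a/(b(n-1))}^{1/n}$, we get $a/x^\star = a\cdot\brb{b(n-1)/a}^{1/n} = a^{1-1/n}\brb{b(n-1)}^{1/n}$ and, similarly, $b(x^\star)^{n-1} = a^{1-1/n} b^{1/n}(n-1)^{-(n-1)/n}$. Adding,
\[
f(x^\star) = a^{1-1/n}b^{1/n}\brb{(n-1)^{1/n} + (n-1)^{-(n-1)/n}}.
\]
Factoring $(n-1)^{-(n-1)/n}$ out of the bracket leaves $(n-1)^{1/n+(n-1)/n}+1 = (n-1)+1 = n$, so $f(x^\star) = a^{1-1/n}b^{1/n}\, n\,(n-1)^{-(n-1)/n}$.

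Finally I would reconcile this with the stated closed form by checking $\brb{n/(n-1)}^{(n-1)/n}n^{1/n} = n^{(n-1)/n}(n-1)^{-(n-1)/n}n^{1/n} = n\,(n-1)^{-(n-1)/n}$, which is exactly the factor obtained above; substituting back gives $f(x^\star) = a^{1-1/n}b^{1/n}\Brb{\frac{n}{n-1}}^{(n-1)/n}n^{1/n}$, as claimed. The only mild bookkeeping is tracking fractional exponents in this last simplification; no genuine obstacle arises.
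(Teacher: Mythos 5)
Your proof is correct and follows essentially the same direct calculus route as the paper's (differentiate, solve $f'=0$, substitute back, simplify the fractional exponents). The one thing you add is the explicit sign analysis of $f'$ showing the critical point is a global minimiser even when $f$ is not convex (e.g.\ for $n\in(1,2)$) — the paper's proof tacitly assumes this, so your version is marginally more careful but not a different approach.
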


\begin{proof}
    Setting the derivative of $f$ to $0$ and solving gives
    \begin{align*}
        -\frac{a}{x^2} + (n-1)bx^{n-2} = 0 & \implies x^\star = \Brb{\frac{a}{(n-1) b}}^{1/n}.
    \end{align*}
    Plugging into $f$ gives
    \begin{align*}
        f(x^\star) & = a \cdot \Brb{\frac{(n-1)b}{a}}^{1/n} + b \cdot \Brb{\frac{a}{(n-1)b}}^{(n-1)/n} \\
        & = a^{1-1/n} (n-1)^{1/n} b^{1/n} + b^{1 - 1 + 1/n} a^{1-1/n}  (n-1)^{1/n - 1} \\
        & = a^{1-1/n} b^{1/n} (n-1)^{1/n} \Brb{1 + \frac{1}{n-1}} \\
        & = a^{1-1/n} b^{1/n} (n-1)^{1/n} \frac{n}{n-1} \\
        & = a^{1-1/n} b^{1/n} \Brb{\frac{n}{n-1}}^{(n-1)/n}n^{1/n}.
    \end{align*}
\end{proof}

\section[Proofs for Section \ref{sec:anytime_bounds_through_adaptive_regularisation}]{Proofs for Section \ref{sec:anytime_bounds_through_adaptive_regularisation}}\label{sec:proofs_anytime_bounds_through_adaptive_regularisation}

\subsection[Proof of Theorem \ref{thm:anytime_optimal_regret}]{Proof of Theorem \ref{thm:anytime_optimal_regret}}

If $T \leq t_0$, then we have FTRL with fixed regulariser $\phi_p$ and from \cref{cor:FTRL_regret_bound_cor} we have $R_T \leq L \brb{2 p_\ast T}^{1/p_\ast}$ as in \cref{sec:OPT_high_dim}.
If $T > t_0$, \cref{thm:FTRL_regret_bound} gives
\begin{align*}
    R_T & \leq \psi_{T}(u) - \min_{x \in V} \psi_1(x) + \sum_{t=1}^T \Bcb{\frac{(r_t-1)}{r_t \mu_t^{1/(r_t - 1)}} \norm{g_t}_{|t\star}^{r_t/(r_t-1)} + \psi_t(x_{t+1}) - \psi_{t+1}(x_{t+1})} \\
    & \leq \frac{\phi_2(u)}{\eta_{T-1}} - \min_{x \in V} \phi_p(x) + \sum_{t=1}^{t_0} \Bcb{2 \frac{\eta_{t-1}^{p_\star -1}}{p_\star} \norm{g_t}_{p_\star}^{p_\star}} + \sum_{t=1}^{t_0-1} \Bcb{\phi_p(x_{t+1}) \Brb{\frac{1}{\eta_{t-1}} - \frac{1}{\eta_t}}} \\ & \quad  + \frac{\phi_p(x_{t_0+1})}{\eta_{t_0-1}} - \frac{\phi_2(x_{t_0+1}) }{\eta_{t_0}} + \sum_{t=t_0 + 1}^T \Bcb{\frac{\eta_{t-1}}{2} \norm{g_t}_{2}^{2} + \phi_2(x_{t+1}) \Brb{\frac{1}{\eta_{t-1}} - \frac{1}{\eta_t}}} \\
    & \leq \frac{\sup_{x \in \cB_p} \phi_p(x)}{\eta_{t_0 - 1}} + \sum_{t=1}^{t_0} \Bcb{2 \frac{\eta_{t-1}^{p_\star -1}}{p_\star} \norm{g_t}_{p_\star}^{p_\star}} + \frac{\phi_2(u)}{\eta_{T-1}} + \sum_{t=t_0 + 1}^T \Bcb{\frac{\eta_{t-1}}{2} \norm{g_t}_{2}^{2}}.
\end{align*}
The first two terms correspond to the regret of FTRL with fixed $\phi_p$ regularisation on $t_0$ rounds. Substituting the values of $\eta_{t-1}$ and some algebra gives (see similar steps in the proof of \cref{cor:FTRL_regret_bound_cor})
\begin{align*}
    \frac{\sup_{x \in \cB_p} \phi_p(x)}{\eta_{t_0 - 1}} + \sum_{t=1}^{t_0} \Bcb{2 \frac{\eta_{t-1}^{p_\star -1}}{p_\star} \norm{g_t}_{p_\star}^{p_\star}} \leq L \brb{2p_\star t_0}^{1/p_\star}.
\end{align*}
The last two terms correspond to the regret of FTRL with fixed $\phi_2$ regularisation over the remaining $T - t_0$ rounds.
\begin{align*}
    \frac{\phi_2(u)}{\eta_{T-1}} + \sum_{t=t_0 + 1}^T \Bcb{\frac{\eta_{t-1}}{2} \norm{g_t}_{2}^{2}} & = \frac{L\sqrt{d^{1-2/p}T}}{ \sqrt{2}} + \frac{L  \sqrt{d^{1-2/p}}}{2 \sqrt{2}} \sum_{t=t_0 + 1}^T \frac{1}{\sqrt{t}} \\
    & \leq \frac{L\sqrt{d^{1-2/p}T}}{ \sqrt{2}} + \frac{L  \sqrt{d^{1-2/p} T}}{\sqrt{2}} - \frac{L  \sqrt{d^{1-2/p} t_0}}{\sqrt{2}} \\
    & = L\sqrt{2 d^{1-2/p}T} - L  \sqrt{d^{1-2/p} t_0/2},
\end{align*}
where we used that $\sum_{t=t_0 + 1}^T \frac{1}{\sqrt{t}} \leq \int_{t_0}^T \frac{1}{\sqrt{x}} dx = \Big[2 \sqrt{x} \Big]^T_{t_0} = 2(\sqrt{T} - \sqrt{t_0})$. Combining, we have
\begin{align*}
    R_T \leq L\sqrt{2 d^{1-2/p}T} + L \brb{2p_\star t_0}^{1/p_\star} - L  \sqrt{d^{1-2/p} t_0/2}.
\end{align*}
The proof is concluded by $t_0 = 3^{-2p/(p-2)} d$ guaranteeing $ \brb{2p_\star t_0}^{1/p_\star} - \sqrt{d^{1-2/p} t_0/2} < 0$ since
\begin{align*}
    (2p_\star t_0)^{1/p_\star} \leq \frac{3}{\sqrt{2}} t_0^{1/p_\star} = 3 t_0^{\frac{p-1}{p} - \frac{1}{2}} \sqrt{t_0/2} = 3 t_0^{\frac{p-2}{2p}} \sqrt{t_0/2} = \sqrt{d^{1-2/p}t_0/2}.
\end{align*}
\section[Proofs for Section \ref{sec:necessity_adaptive_reg}]{Proofs for Section \ref{sec:necessity_adaptive_reg}}\label{sec:proofs_necessity_adaptive_reg}

\subsection{Loss construction for proofs}\label{sec:proofs_loss_construction}

Many of the proofs in this section share the same loss construction, which we describe here. 
Assume that T is divisible by $4$ (use $T-1$, $T-2$ or $T-3$ if not). We define the following linear losses $\ell_t(x) = L \cdot x^Tg_t$ where $g_t \in \cB_{p_\star}$ is defined as
\begin{align*}
    g_t = \begin{cases}
        (-1)^t \cdot e_1, & t \leq \frac{T}{2}, \\
        - v, & t > \frac{T}{2},
    \end{cases}
\end{align*}
where $v \in \cB_{p_\star}$ is a vector with equal entries defined as $v_{t,i} = d^{-1/p_\star}$ (so that $\norm{v}_{p_\star} = 1$). Note that $\norm{v}_p = d^{1/p - 1/p_\star}$. 
The cumulative loss of the competitor:
\begin{align}\label{eq:proof_loss_construction_cumloss_competitor}
    & \sum_{t=1}^T \ell_t(x) = \frac{LT}{2} x^Tv \implies \min_{x \in \cB_p} \sum_{t=1}^T \ell_t(x) =  - \frac{LT}{2} \frac{v^Tv}{\norm{v}_p} = - \frac{LT}{2} \frac{d^{1-2/p_\star}}{d^{1/p-1/p_\star}} = -\frac{LT}{2}.
\end{align}
The cumulative sum of sub-gradients used in the FTRL update:
\begin{align}\label{eq:proof_loss_construction_cumsum_subg}
    L \sum_{s=1}^{t-1} g_s = L \cdot \begin{cases}
        - e_1, & \quad \text{ if $t \leq \frac{T}{2}$ is even, } \\
        0, & \quad \text{ if $t \leq \frac{T}{2}$ is odd, } \\
        - \Brb{t - 1 - \frac{T}{2}} \cdot v, & \quad \text{ if $t > \frac{T}{2}$. } \\
    \end{cases}
\end{align}

\subsection[Proofs of Proposition \ref{proposition:highdim_LB_FTRL_OSD} and Proposition \ref{thm:lowdim_LB_FTRL_pnormp}]{Proofs of Proposition \ref{proposition:highdim_LB_FTRL_OSD} and Proposition \ref{thm:lowdim_LB_FTRL_pnormp}}

The two propositions are special cases of the following proposition.

\begin{proposition}\label{prop:algo_specific_LB_r_in_2_p}
    For $r \in [2,p]$, define $\phi_r(x) = \frac{1}{r} \norm{x}_r^r$. There exists a sequence of linear $L$-Lipschitz losses (in $\ell_p$-norm) for which FTRL with regulariser $\psi_t(x) = \frac{1}{\eta_{t-1}} \phi_r(x)$ and any sequence of decreasing $\eta_{t-1}$ suffers regret
    \begin{align*}
        R_T \geq L \cdot \min\Brb{  \frac{T}{8r}, \frac{d^{(r_\star - p_\star) / r_\star p_\star} T^{1/{r_\star}}}{8}}.
    \end{align*}
\end{proposition}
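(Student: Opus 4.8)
The plan is to feed FTRL the explicit two-phase loss sequence from \cref{sec:proofs_loss_construction} and track its iterates in closed form via \cref{lemma:FTRL_closed_form_update}, evaluating the Bregman projections onto $\cB_p$ using \cref{lemma:proj-e1-vector-pOMD,lemma:proj-const-vector-pOMD}. In the first phase $t\le T/2$ the accumulated subgradient alternates between $0$ and $-Le_1$ by \eqref{eq:proof_loss_construction_cumsum_subg}, so FTRL plays $x_t=0$ on odd rounds and $x_t=\min\{1,(L\eta_{t-1})^{r_\star-1}\}\,e_1$ on even rounds; the learner pays $L\min\{1,(L\eta_{t-1})^{r_\star-1}\}$ on each even round, while the competitor's loss over the entire phase telescopes to $0$. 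In the second phase $t>T/2$ the accumulated subgradient is $-L(t-1-\tfrac T2)\,v$ with $v$ the $\ell_{p_\star}$-normalised all-ones vector, so the unconstrained FTRL iterate is the positive-constant vector $c_t\,w$ with $c_t=(L\eta_{t-1}(t-1-\tfrac T2)d^{-1/p_\star})^{r_\star-1}$ and $w$ all-ones; its Bregman projection onto $\cB_p$ is $\min\{c_t,d^{-1/p}\}\,w$ (observe $d^{-1/p}w=v/\norm{v}_p$), so the learner pays $-Ld^{1/p}\min\{c_t,d^{-1/p}\}$, and the competitor's total loss is $-LT/2$ by \eqref{eq:proof_loss_construction_cumloss_competitor}. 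Assembling, the regret equals $L\sum_{j=1}^{T/4}\min\{1,(L\eta_{2j-1})^{r_\star-1}\}+L\sum_{s=0}^{T/2-1}\bigl(1-d^{1/p}c_{T/2+s+1}\bigr)_+$, a sum of two nonnegative quantities: a \quotes{phase-one penalty} that is large when step-sizes are large and a \quotes{phase-two catch-up penalty} that is large when step-sizes are small, so no schedule can make both small.

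Next I would collapse this to a one-variable bound using monotonicity of $\eta_{t-1}$. Put $m:=L\eta_{T/2-1}=L\min_{j\le T/4}\eta_{2j-1}$; then the first sum is at least $\tfrac T4\min\{1,m^{r_\star-1}\}$, and since $\eta_{t-1}\le\eta_{T/2-1}$ for all $t>T/2$ one gets $d^{1/p}c_{T/2+s+1}\le\kappa\,m^{r_\star-1}s^{r_\star-1}$ with $\kappa:=d^{\,1/p-(r_\star-1)/p_\star}$, so the second sum is at least $\sum_{s=0}^{T/2-1}(1-\kappa m^{r_\star-1}s^{r_\star-1})_+$. Writing $q:=r_\star-1\in(0,1]$ we thus obtain $R_T/L\ge g(m):=\tfrac T4\min\{1,m^q\}+\sum_{s=0}^{T/2-1}(1-\kappa m^q s^q)_+$, and it remains to show $g(m)\ge\min\{\tfrac T{8r},\ \tfrac18 d^{(r_\star-p_\star)/(r_\star p_\star)}T^{1/r_\star}\}$ for every $m\ge0$; since $m$ is fully determined by the learner's decreasing step-sizes, this proves the proposition.

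For the final optimisation I would split on $m$. Set $S:=(2\kappa m^q)^{-1/q}$, the largest $s$ with $(1-\kappa m^q s^q)_+\ge\tfrac12$. If $m\ge1$ then $g(m)\ge\tfrac T4$; if $S\ge T/2-1$ then all $T/2$ summands of the second sum are $\ge\tfrac12$, so again $g(m)\ge\tfrac T4$; in either case $g(m)\ge\tfrac T4\ge\tfrac T{8r}$. In the remaining case ($m<1$ and $S<T/2-1$) the $\lfloor S\rfloor+1\ge S$ summands with $s\le S$ each contribute $\ge\tfrac12$, giving $g(m)\ge\tfrac T4 m^q+\tfrac S2=\tfrac T4 m^q+\tfrac1{2(2\kappa)^{1/q}m}$; minimising the right-hand side over $m>0$ via \cref{lemma:optimizing-regret-UB} with $n=q+1=r_\star$, then crudely bounding $4^{1/r_\star}\le4$ and the remaining numerical factor by $\ge1$, yields $g(m)\ge\tfrac18\kappa^{-1/r_\star}T^{1/r_\star}$. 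A short exponent computation — using $r_\star-1=1/(r-1)$, $1/r_\star=(r-1)/r$ and $1/p+1/p_\star=1$ — gives $\kappa^{-1/r_\star}=d^{(p-r)/(pr)}=d^{(r_\star-p_\star)/(r_\star p_\star)}$, finishing the bound. \cref{thm:highdim_LB_FTRL_OSD} and \cref{thm:lowdim_LB_FTRL_pnormp} then follow by taking $r=2$ (so $\kappa^{-1/2}=d^{1/2-1/p}$ and the bound reads $\min\{\tfrac T{16},\tfrac18\sqrt{Td^{1-2/p}}\}$) and $r=p$ (so $\kappa=1$ and it reads $\min\{\tfrac T{8p},\tfrac18 T^{1/p_\star}\}$).

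I expect the main obstacle to be the second-phase bookkeeping and its exponent arithmetic: one has to confirm that the unconstrained FTRL iterate is a positive multiple of the all-ones vector, pin down $d^{-1/p}$ as the exact scaling at which it hits $\partial\cB_p$ (so \cref{lemma:proj-const-vector-pOMD} applies and the projection coincides with the competitor $v/\norm{v}_p$), and then chase the powers of $d$ through $\kappa$ without slipping on $r_\star-1=1/(r-1)$ and $1/p_\star=(p-1)/p$. A secondary nuisance is that $g(m)$ is piecewise — one must handle the thresholds $m\ge1$ and $S$ versus $T/2-1$ cleanly, together with the degenerate cases $m=0$ and $S<1$ and with $T$ not divisible by $4$ — but these only affect constants and the last is already dealt with in the loss-construction preamble.
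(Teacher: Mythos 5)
Your proposal tracks the paper's proof closely: same two-phase loss sequence from \cref{sec:proofs_loss_construction}, same explicit iterates via \cref{lemma:FTRL_closed_form_update,lemma:proj-e1-vector-pOMD,lemma:proj-const-vector-pOMD}, the same reduction to the single scalar $L\eta_{T/2-1}$ by monotonicity, and the same optimisation through \cref{lemma:optimizing-regret-UB}. Where you depart is in bounding the phase-two sum. The paper keeps the exact contributions up to the boundary-hitting index $k^\star$ and controls $\sum_{k\le m}k^{r_\star-1}$ by an integral, arriving at a prefactor $r_\star^{1/r_\star}/2^{1+1/r_\star}\ge 1/4$; you instead count only the $\lfloor S\rfloor+1$ rounds whose summand is $\ge 1/2$ and credit each with $1/2$, which is cleaner but throws away the amount by which those summands exceed $1/2$ as well as all the smaller positive summands past $S$. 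That costs a factor of up to $2$: carrying your numbers through, the prefactor is $r^{1/r}r_\star^{1/r_\star}/(2\cdot 4^{1/r_\star})$, which is only $\ge 1/8$ (with equality in the limit $r_\star\to1$). Since $1/8$ is already the constant stated in the proposition, you have no slack left for the $T\mapsto T-3$ rounding that the loss construction requires when $T$ is not a multiple of $4$; after rounding you obtain $\frac18(T-3)^{1/r_\star}$, strictly below the claimed $\frac18 T^{1/r_\star}$. This constant slippage is the only genuine issue --- the iterate formulas, the monotone bound $d^{1/p}c_{T/2+s+1}\le\kappa m^{r_\star-1}s^{r_\star-1}$, the exponent identity $\kappa^{-1/r_\star}=d^{(r_\star-p_\star)/(r_\star p_\star)}$, and the $r=2$, $r=p$ specialisations all check out. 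Replacing the half-threshold count by the paper's integral estimate, or quoting a slightly weaker constant, closes the gap.
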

We now prove this proposition. The loss construction is described in \cref{sec:proofs_loss_construction}. From \cref{lemma:FTRL_closed_form_update},
\begin{align*}
    & x_{t+1} = \argmin_{x \in \cB_p} D_{\phi_r}\Brb{x, \text{sign}\Brb{- \eta_t \sum_{s=1}^t g_s} \Big|{- \eta_t \sum_{s=1}^t g_s}\Big|^{r_\star - 1}}.
\end{align*}
Define $\alpha_{t-1} = \min\bcb{1, \eta_{t-1}}$. Using (\ref{eq:proof_loss_construction_cumsum_subg}), the points played by FTRL on are given by
\begin{itemize}
    \item For $t \leq T/2$ odd: $x_t = 0$
    \item For $t \leq T/2$ even: $x_t = \alpha_{t-1}^{r_\star - 1} \cdot e_1$ by \cref{lemma:proj-e1-vector-pOMD}.
    \item For $t > T/2$: 
    \begin{align*}
        x_t & = \min\Brb{\frac{1}{\norm{w}_p}, \Bcb{\eta_{t-1} d^{-1/p_\star}(t - 1 - \frac{T}{2})}^{r_\star - 1}} \cdot w \\
        & = \min\Brb{1, d^{1- r_\star / p_\star}\Bcb{\eta_{t-1}(t - 1 - \frac{T}{2})}^{r_\star - 1}} \cdot \frac{v}{\norm{v}_p}
    \end{align*}by \cref{lemma:proj-const-vector-pOMD} where $w$ is a vector with equal entries equal to $1$.
\end{itemize}
Fix $\eta = \eta_{T/2 - 1}$, $\alpha = \min\bcb{1, \eta}$. Using that $\eta_{t-1} \geq \eta_t$, the loss in the first half of the rounds is lower bounded as
\begin{align*}
    \sum_{t=1}^{T/2} \ell_t(x_t) = \sum_{k = 1}^{T/4} \ell_{2k}(x_{2k}) = \sum_{k = 1}^{T/4} \alpha_{2k - 1}^{r_\star - 1} e_1^T x_{2k} = \sum_{k = 1}^{T/4} \alpha_{2k-1}^{r_\star - 1} e_1^T e_1 \geq \frac{{\alpha}^{r_\star - 1} T}{4}.
\end{align*}
If $\alpha \geq 1$, we have $R_T \geq \frac{T}{8} \geq \frac{T}{4r}$ and we are done. So for the rest we assume that $\alpha = \eta \leq 1/2$. Let $k^\star = \lfloor d^{(r_\star / p_\star - 1)/(r_\star - 1)}/ \eta \rfloor$, $m = \min(k^\star, T/2-1)$. Note that $v^T v = d^{1-2/p_\star} = \norm{v}_p$. The losses in the second half is lower-bounded as
\begin{align*}
    \sum_{t=T/2+1}^T \ell_t(x_t) = - \sum_{t=T/2+1}^T x_t^T v & \geq - \sum_{k=1}^{T/2-1} \min\Bcb{1, d^{1-r_\star / p_\star} (\eta k)^{r_\star-1}} \cdot \frac{v^T v}{\norm{v}_p} \\
    & = - \sum_{k=1}^{T/2-1} \min\Bcb{1, d^{1-r_\star / p_\star} (\eta k)^{r_\star-1}} \\
    & = - d^{1-r_\star / p_\star} \sum_{k=1}^{m} (\eta k)^{r_\star-1} - \Brb{\frac{T}{2} - 1 - m}.
\end{align*}
We bound the sum with an integral as follows,
\begin{align*}
    \sum_{k=1}^m k^{{r_\star}-1} \leq \int_{0}^{m} (x+1)^{{r_\star}-1} \, dx = \frac{1}{{r_\star}} \Bsb{(x+1)^{{r_\star}}}^{m}_0 \leq \frac{1}{{r_\star}} (m+1)^{r_\star}.
\end{align*}
We get
\begin{align*}
    \sum_{t=T/2+1}^T \ell_t(x_t) \geq - d^{1-r_\star / p_\star} \frac{\eta^{{r_\star}-1}}{{r_\star}} (m+1)^{r_\star} - \Brb{\frac{T}{2} - 1 - m}.
\end{align*}
Using the cumulative loss of the competitor from (\ref{eq:proof_loss_construction_cumloss_competitor}), the regret is
\begin{align*}
    R_T & \geq \frac{T}{2} + \frac{\eta^{{r_\star}-1} T}{4}  - d^{1-r_\star / p_\star} \frac{\eta^{{r_\star}-1}}{{r_\star}} (m+1)^{r_\star} - \Brb{\frac{T}{2} - 1 - m} \\
    & = \frac{\eta^{{r_\star}-1} T}{4}  - d^{1-r_\star / p_\star} \frac{\eta^{{r_\star}-1}}{{r_\star}} (m+1)^{r_\star} + (1 + m).
\end{align*}
Let's consider two cases:
\begin{itemize}[leftmargin=*]
    \item $\boldsymbol{k^\star \geq T/2-1}$: $m = T/2 - 1$. By the definition of $k^\star$:
    \begin{align*}
        \frac{d^{(r_\star / p_\star - 1)/(r_\star - 1)}}{\eta} \geq \Big\lfloor \frac{d^{(r_\star / p_\star - 1)/(r_\star - 1)}}{\eta} \Big\rfloor &= k^\star \geq \frac{T}{2} - 1  \implies \frac{\eta}{d^{(r_\star / p_\star - 1)/(r_\star - 1)}} \Brb{\frac{T}{2}-1} \leq 1 \\
        & \implies \frac{\eta}{d^{(r_\star / p_\star - 1)/(r_\star - 1)}} \frac{T}{2} \leq 1 + \frac{\eta}{d^{(r_\star / p_\star - 1)/(r_\star - 1)}} \leq \frac{3}{2},
    \end{align*}
    since $\eta \leq 1/2$ and $d^{(r_\star / p_\star - 1)/(r_\star - 1)} \geq 1$ (recall $r \leq p$). Using this in the regret, we get
    \begin{align*}
        R_T & \geq - d^{1-r_\star / p_\star} \frac{\eta^{{r_\star}-1}}{{r_\star}} \Brb{\frac{T}{2}}^{r_\star} +  \frac{T}{2}  \\
        & = - \frac{1}{{r_\star}} \Brb{\frac{\eta}{d^{(r_\star / p_\star - 1)/(r_\star - 1)}} \frac{T}{2}}^{{r_\star}-1} \frac{T}{2} +  \frac{T}{2}  \\
        & \geq - \frac{1}{{r_\star}} \Brb{\frac{3}{2}}^{{r_\star}-1} \frac{T}{2} +  \frac{T}{2}  \\
        & = \frac{T}{2} \Brb{1 - \frac{1}{{r_\star}} \Brb{\frac{3}{2}}^{{r_\star}-1}}  \\
        & \geq \frac{T}{4} \Brb{1 - \frac{1}{{r_\star}}}  = \frac{T}{4r},
    \end{align*}
    where we used that ${r_\star} \in [1,2]$ and that $f(x) = 1 - \frac{(3/2)^{x-1}}{x} \geq \frac{1}{2}(1-1/x)$ for $x \in [1,2]$.
    \item $\boldsymbol{k^\star < T/2-1}$: $m = k^\star$. By the definition of $k^\star$:
    \begin{align*}
         \frac{d^{(r_\star / p_\star - 1)/(r_\star - 1)}}{\eta} &\geq \Big\lfloor \frac{d^{(r_\star / p_\star - 1)/(r_\star - 1)}}{\eta} \Big\rfloor = k^\star  \implies \frac{\eta}{d^{(r_\star / p_\star - 1)/(r_\star - 1)}} k^\star \leq 1 \\
        & \implies \frac{\eta}{d^{(r_\star / p_\star - 1)/(r_\star - 1)}} (k^\star + 1) \leq 1 + \frac{\eta}{d^{(r_\star / p_\star - 1)/(r_\star - 1)}} \leq \frac{3}{2},
    \end{align*}        
    since again  $\eta \leq 1/2$ and $d^{(r_\star / p_\star - 1)/(r_\star - 1)} \geq 1$. We also have $k^\star + 1 \geq \frac{d^{(r_\star / p_\star - 1)/(r_\star - 1)}}{\eta}$. Using this in the regret, we get
    \begin{align*}
        R_T & = \frac{\eta^{{r_\star}-1} T}{4}  - d^{1-r_\star / p_\star}\frac{\eta^{{r_\star}-1}}{{r_\star}} (k^\star+1)^{r_\star} + (1 + k^\star) \\
        & = \frac{\eta^{{r_\star}-1} T}{4}  - \frac{k^\star +1}{{r_\star}} \Brb{\frac{\eta}{d^{(r_\star / p_\star - 1)/(r_\star - 1)}}(k^\star+1)}^{{r_\star}-1} + (1 + k^\star) \\
        & \geq \frac{\eta^{{r_\star}-1} T}{4}  - \frac{1 + k^\star}{{r_\star}} \Brb{\frac{3}{2}}^{{r_\star}-1} + (1 + k^\star) \\
        & = \frac{\eta^{{r_\star}-1} T}{4}  + (1 + k^\star) \Brb{1 - \frac{1}{{r_\star}} \Brb{\frac{3}{2}}^{{r_\star}-1}} \\
        & \geq \frac{\eta^{{r_\star}-1} T}{4}  + \frac{(1 + k^\star)}{2} \Brb{1 - \frac{1}{{r_\star}}} \\
        & = \frac{\eta^{{r_\star}-1} T}{4}  + \frac{d^{(r_\star / p_\star - 1)/(r_\star - 1)}}{2 r \eta} \\
        & \geq \frac{{r_\star}^{1/{r_\star}}}{2^{1/r + 2/{r_\star}}} d^{(r_\star / p_\star -1) / r_\star} T^{1/{r_\star}} \geq  \frac{d^{(r_\star - p_\star) / r_\star p_\star}T^{1/{r_\star}}}{4}
    \end{align*}
    where again we used that $1 - \frac{1}{{r_\star}} \Brb{\frac{3}{2}}^{{r_\star}-1} \geq \frac{1}{2}(1-1/{r_\star})$ since ${r_\star} \in [1,2]$ and in the lstar step we minimised over $\eta$ using Lemma \ref{lemma:optimizing-regret-UB}.
\end{itemize}
Combining both cases, we have that $R_T \geq \min\Brb{  \frac{T}{4r}, \frac{d^{(r_\star - p_\star) / r_\star p_\star} T^{1/{r_\star}}}{4}}$. If $T$ is not divisible by $4$ and we use $T-1$, $T-2$ or $T-3$, we have $R_T \geq \min\Brb{  \frac{T-3}{4r}, \frac{d^{(r_\star - p_\star) / r_\star p_\star} (T-3)^{1/{r_\star}}}{4}} \geq \min\Brb{  \frac{T}{8r}, \frac{d^{(r_\star - p_\star) / r_\star p_\star} (T-3)^{1/{r_\star}}}{8}}$ for $T \geq 6$, concluding the proof.

\subsection[Proof of Lemma \ref{lemma:proof_failure_strg_cvx_part1}]{Proof of Lemma \ref{lemma:proof_failure_strg_cvx_part1}}

Consider $a \in \argmin_{z \in \cR} \psi(x_1,...,x_{i-1}, z, x_{i+1}, ... x_d)$. Since $\psi$ is sign-invariant, $-a$ is also in the argmin. Consider $g(z) = \psi(x_1,...,x_{i-1}, z, x_{i+1}, ... x_d)$. It is straightforward to show that the strong-convexity of $\psi$ applies $g$. By convexity, we have
\begin{align*}
    g(0) = g\Brb{\frac{1}{2}a + \frac{1}{2} (-a)} \leq \frac{1}{2}g(a) + \frac{1}{2} g(-a) = g(a) = \min_{z \in \cR} g(z),
\end{align*}
and by strong convexity, it is actually the unique minimiser. Hence 
\begin{align}
    0 = \argmin_{z \in \cR} \psi(...,x_{i-1}, z, x_{i+1}, ...) & \implies  \frac{\partial \psi(x)}{\partial x_i}\Big|_{x_i = 0} = 0 \nonumber \\
    & \implies \nabla\psi(x)^T e_i = 0 \quad \text{ for any } x \in \cB_p \text{ s.t.\ } x_i = 0. \label{eq:proof_failure_strg_cvx_part1property2}
\end{align}

For a set $S$ and a vector $x$, denote $x_{-S}$ the vector $x$ with the coordinates in $S$ replaced by $0$. Denote $S_n = \bcb{1, ..., n}$. We prove the following claim by induction on $n \leq d$:
\begin{align*}
    \psi(x) \geq \psi(x_{- S_n}) + \frac{\mu}{2} \sum_{i=1}^n x_i^2.
\end{align*}
\textbf{Base Case:}, by strong convexity
\begin{align*}
    \psi(x) & \geq \psi(x_{-\{1\}}) + \langle \nabla \psi(x_{-\{1\}}), x - x_{-\{1\}} \rangle + \frac{\mu}{2} \norm{x - x_{-\{1\}}}^2 \\
    & = \psi(x_{-\{1\}}) + \langle \nabla \psi(x_{-\{1\}}), x_1 e_1 \rangle + \frac{x_1^2 \mu}{2} \\
    & = \psi(x_{-\{1\}}) + \frac{x_1^2 \mu}{2} \quad \text{ using (\ref{eq:proof_failure_strg_cvx_part1property2}).}
\end{align*}
\textbf{Inductive Step:} suppose true for $k$. Similarly to the base case: by strong convexity,
\begin{align*}
    \psi(x_{-S_n}) & \geq \psi(x_{-S_{n+1}}) + \langle \nabla \psi(x_{-S_{n+1}}), x_{-S_n} - x_{-S_{n+1}} \rangle + \frac{\mu}{2} \norm{x_{-S_n} - x_{-S_{n+1}}}^2 \\ 
    & = \psi(x_{-S_{n+1}}) + x_{n+1} \langle \nabla \psi(x_{-S_{n+1}}), e_{n+1} \rangle + \frac{\mu}{2} x_{n+1}^2 \\ 
    & = \psi(x_{-S_{n+1}}) + \frac{\mu}{2} x_{n+1}^2  \quad \text{ using (\ref{eq:proof_failure_strg_cvx_part1property2}).}
\end{align*}
The result follows by the inductive hypothesis:
\begin{align*}
    \psi(x) & \geq \psi(x_{- S_n}) + \frac{\mu}{2} \sum_{i=1}^n x_i^2 \\
    & \geq \psi(x_{-S_{n+1}}) + \frac{\mu}{2} x_{n+1}^2 + \frac{\mu}{2} \sum_{i=1}^n x_i^2 \\
    & \geq \psi(x_{-S_{n+1}}) + \frac{\mu}{2} \sum_{i=1}^{n+1} x_i^2.
\end{align*}
When $n = d$, we have $\psi(x) \geq \frac{\mu}{2} \norm{x}_2^2$.

\subsection[Proof of Lemma \ref{lemma:proof_failure_strg_cvx_part2}]{Proof of Lemma \ref{lemma:proof_failure_strg_cvx_part2}}

As discussed in \cref{rem:extension_coordinate_wise_stepsize}, we prove a more general version of \cref{lemma:proof_failure_strg_cvx_part2} for coordinate-wise step-sizes, where the FTRL update is allowed to have a different step-size $\eta_{t-1,i}$ for each coordinate: $x_{t} = \argmin_{x \in V} \bcb{\psi(x) + \sum_{i=1}^d \eta_{t-1,i}  \cdot  x_i  \sum_{s=1}^{t-1} g_{s,i}}$.

We consider a slight variation of the loss construction described in \cref{sec:proofs_loss_construction}:
Assume that T is divisible by $4$ (use $T-1$, $T-2$ or $T-3$ if not). We define the following linear losses $\ell_t(x) = L \cdot x^Tg_t$ where $g_t \in \cB_{p_\star}$ is defined as
\begin{align*}
    g_t = \begin{cases}
        (-1)^t \cdot v, & t \leq 2 \\
        (-1)^t \cdot e_{i(t)}, & 2 < t \leq \frac{T}{2}, \\
        - v, & t > \frac{T}{2},
    \end{cases}
\end{align*}
where $v \in \cB_{p_\star}$ is a vector with equal entries defined as $v_{t,i} = d^{-1/p_\star}$ (so that $\norm{v}_{p_\star} = 1$) and $i(t) = \argmax_{i \in [d]} \eta_{2 \lfloor (t-1)/2 \rfloor, i}$ (i.e.\ the coordinate of the largest step-size in the previous even round).
Note that $\norm{v}_p = d^{1/p - 1/p_\star}$. 
The cumulative loss of the competitor:
\begin{align}
    & \sum_{t=1}^T \ell_t(x) = \frac{LT}{2} x^Tv \implies \min_{x \in \cB_p} \sum_{t=1}^T \ell_t(x) =  - \frac{LT}{2} \frac{v^Tv}{\norm{v}_p} = - \frac{LT}{2} \frac{d^{1-2/p_\star}}{d^{1/p-1/p_\star}} = -\frac{LT}{2}.
\end{align}
The cumulative sum of sub-gradients used in the FTRL update:
\begin{align}
    L \sum_{s=1}^{t-1} g_s = L \cdot \begin{cases}
        - e_{i(t)}, & \quad \text{ if $t \leq \frac{T}{2}$ is even, } \\
        0, & \quad \text{ if $t \leq \frac{T}{2}$ is odd, } \\
        - \Brb{t - 1 - \frac{T}{2}} \cdot v, & \quad \text{ if $t > \frac{T}{2}$. } \\
    \end{cases}
\end{align}

\begin{itemize}
    \item First we consider $2 < t \leq T/2$. When $t$ is odd, $x_t = \argmin_{x\in\cB_p} \psi(x) = 0$. When $t$ is even, 
    \begin{align*}
        x_t = & \argmin_{x \in \cB_p} \Bcb{\psi(x) - \eta_{t-1, i(t)} L e_{i(t)}^Tx} \\ & \implies \psi(x_t) - \eta_{t-1, i(t)} L x_t^Te_{i(t)} \leq \psi(e_{i(t)}) - \eta_{t-1, i(t)} L e_{i(t)}^Te_{i(t)} = 1- L \eta_{t-1, i(t)} \\
        & \implies \ell_t(x_t) = L x_t^T e_{i(t)} \geq L - \frac{1}{\eta_{t-1, i(t)} } \geq L - \frac{1}{\eta_{t-1, i(T/2)} } \quad \text{ by def of } i(t) \\
        & \qquad \qquad \geq L - \frac{1}{\eta_{T/2, i(T/2)} }\geq L/2,
    \end{align*}
    when $\eta := \eta_{T/2, i(T/2)} = \max_{i \in [d]} \eta_{T/2,i} \geq 2/L$. So we have ($-1$ accounts for first 2 rounds not being like the rest)
    \begin{align*}
        \sum_{t=1}^{T/2} g_t^T x_t \geq \begin{cases}
            L (T/4 - 1), & \text{ if } \eta \geq 2/L \\
            0, & \text{ if } \eta < 2/L.
        \end{cases}
    \end{align*}
    Hence if $\eta \geq 2/L$, we have $R_T \geq LT/4 - 1$ and the statement of the theorem holds. If $\eta < 2/L$, we look to the second half of the rounds.
    \item Let's now consider $t > T/2$ and assume $\eta < 2/L$. Note that by definition of $\eta$, we have the for all $t \geq T/2$ and for all $i \in [d]$, $\eta_{t,i} \leq \eta \leq 2/L$. Fix $\beta_t = t - T/2 - 1$. The FTRL update is
    \begin{align*}
        x_t = \argmin_{x \in \cB_p} \Bcb{\psi(x) - L \beta_t \cdot x^T(\eta_{t-1} \odot v)}.
    \end{align*}
    Let $u = v/\norm{v}_p$ be the competitor. We can write $x_t = \lambda_t u + \alpha_t u^{\perp}$ ($\lambda_t > 0$) as a component in the direction of $u$ and a component orthogonal to $u$. We have
    \begin{align*}
        \psi(x_t) \geq \frac{\mu}{2} \norm{x_t}_2^2 = \frac{\mu}{2} (\lambda_t^2 \norm{u}_2^2 + \alpha_t^2 \norm{u^{\perp}}_2^2) \geq \frac{1}{2}\lambda_t^2 \mu d^{1-2/p}.
    \end{align*}
    Now from the FTRL update, (in the first implication, we use that $\eta_{t-1,i} \leq \eta$ and $x_{t,i} \geq 0, v_i \geq 0$)
    \begin{align*}
        \psi(x_t) - L \beta_t x_t^T (\eta_{t-1} \odot v) \leq 0 & \implies \frac{1}{2} \lambda_t^2 \mu d^{1-2/p} \leq \eta L \beta_t x_t^T v \\
        & \implies \frac{1}{2} \lambda_t^2 \mu d^{1-2/p} \leq \eta L \beta_t \lambda_t \\
        & \implies \lambda_t \leq \frac{2 \eta L \beta_t}{\mu d^{1-2/p}} \\
        & \implies \ell_t(x_t) = - L \cdot v^Tx_t = - L \lambda_t \geq - L \frac{2 \eta L \beta_t}{\mu d^{1-2/p}} \geq - L\frac{4\beta_t}{\mu d^{1-2/p}},
    \end{align*}
    since $\eta \leq 2/L$. If $d \geq (4T/\mu)^{p/(p-2)}$, we have for all $t \leq T$
    \begin{align*}
        & \ell_t(x_t) \geq - L \frac{4\beta_t}{\mu d^{1-2/p}} \geq - L \frac{\beta_t}{T} \geq -\frac{L}{2} \\
        \implies & R_T \geq \frac{LT}{2} +  \sum_{t=T/2+1}^T \ell_t(x_t) \geq \frac{LT}{2} - \frac{LT}{4} = \frac{LT}{4}.
    \end{align*}
\end{itemize}

If $T$ is not divisible by $4$ and we use $T-1$, $T-2$ or $T-3$, we have $R_T \geq \frac{L(T-3)}{4} - 1 \geq \frac{LT}{8}$ for $T \geq 6 + \frac{8}{L}$, concluding the proof.

\subsection[Proof of Lemma \ref{lemma:necessity_for_adaptive_reg_quadratic_growth_subthm}]{Proof of Lemma \ref{lemma:necessity_for_adaptive_reg_quadratic_growth_subthm}}\label{sec:proof_necessity_for_adaptive_reg_quadratic_growth_subthm}

Assume there exists a constant $c > 0$ such that for all $T$ and any sequence of losses, $R_T \leq c L \sqrt{T}$. 

Consider $T > 16c^2$ and a multiple of $4$. We define the following linear losses $\ell_t(x) =  x \cdot g_t$ where $g_t \in [-1,1]$ is defined as
\begin{align*}
    g_t = \begin{cases}
        (-1)^t \cdot L, & t \leq \frac{T}{2}, \\
        -L, & t > \frac{T}{2},
    \end{cases}
\end{align*}
Recall that the FTRL update is $x_t = \argmin_{x \in [-1,1]} \bcb{\eta_{t-1} \brb{\sum_{s=1}^{t-1} g_s} \cdot x + \psi(x)}$. Set $\eta = \eta_{T/2-1}$. With this sequence of losses, the points played by FTRL satisfy
\begin{itemize}
    \item for $t \leq T/2 + 1$ and $t$ odd, we have $\sum_{s=1}^{t-1} g_s = 0$, so $x_t = 0$.
    \item for $t \leq T/2 + 1$ and $t$ even, we have $\sum_{s=1}^{t-1} g_s = - L $ so $x_{t} = \argmin_{x \in [-1,1]} \bcb{- \eta_{t-1} x + \psi(x)}$. For $t < t' \leq T/2$ (both even), we have
    \begin{align*}
        - \eta_{t'-1} L x_{t'} + \psi(x_{t'}) & \leq - \eta_{t'-1} L x_{t} + \psi(x_{t}) & \quad \text{ using the definition of } x_{t'} \\
        & = - \eta_{t-1} L x_{t} + \psi(x_{t}) + L (\eta_{t-1} - \eta_{t'-1}) x_{t} & \\
        & \leq - \eta_{t-1} L x_{t'} + \psi(x_{t'}) + L (\eta_{t-1} - \eta_{t'-1}) x_{t} & \quad \text{ using the definition of } x_{t} & \\
        \implies (\eta_{t-1} - \eta_{t'-1)} L x_{t'} & \leq  (\eta_{t-1} - \eta_{t'-1}) L x_{t} & \\
        \implies x_{t'} & \leq  x_{t} &  \quad \text{ using that } \eta_{t'-1} \leq \eta_{t-1}.
    \end{align*}
    So for all $t \leq T/2$ even, we have $x_t \geq x_{T/2}$.
    \item for $t > T/2$, we have $\sum_{s=1}^{t-1} g_s = - L (t - T/2 - 1)$ so $x_t = \argmin_{x \in [-1,1]} \bcb{-\eta_{t-1} L (t - T/2 - 1) \cdot x + \psi(x)}$.
\end{itemize}  
The regret can then be written as follows
\begin{align}\label{eq:1D_Quadratic_reg_for_sqrtT_reg_Regret_expression}
    R_T = \sum_{t=1}^T \ell_t(x_t) - \Brb{-\frac{L T}{2}} \geq \frac{L T}{2} + \frac{L T}{4} x_{T/2} - L \sum_{t = T/2 + 1}^T x_t,
\end{align}
from which we can show the series of following statements.
\begin{enumerate}
    \item We first show $\max_{2 \leq t \leq \lceil 2c\sqrt{T} \rceil} x_{\frac{T}{2} + t} \geq \frac{1}{2}$: if not, $x_{\frac{T}{2} + t} < \frac{1}{2}$ for all $t \leq \lceil 2c\sqrt{T} \rceil$ and from (\ref{eq:1D_Quadratic_reg_for_sqrtT_reg_Regret_expression}):
    \begin{align*}
        R_T & \geq \frac{LT}{2} - L \Brb{\sum_{t = T/2 + 1}^{T/2 + \lfloor 2c\sqrt{T} \rfloor} x_t + \sum_{t = T/2 + \lceil 2c\sqrt{T} \rceil + 1}^T x_t} \\
        & > \frac{LT}{2} - L \Brb{\sum_{t = T/2 + 1}^{T/2 + \lfloor 2c\sqrt{T} \rfloor} \frac{1}{2} + \sum_{t = T/2 + \lceil 2c\sqrt{T} \rceil + 1}^T 1} \\
        & = \frac{LT}{2} - \frac{L}{2} \lfloor 2c\sqrt{T} \rfloor - L \Brb{ T - \lceil 2c\sqrt{T} \rceil - \frac{T}{2}} \\
        & \geq \frac{L}{2} \lceil 2c\sqrt{T} \rceil \\
        & > cL\sqrt{T},
    \end{align*}
    which contradicts our initial assumption that $R_T \leq cL\sqrt{T}$ so we must have $\max_{2 \leq t \leq \lceil 2c\sqrt{T} \rceil} {x_{\frac{T}{2} + t} \geq \frac{1}{2}}$. Note that $2c\sqrt{T} < T/2$ is ensured by $T > 16c^2$.
    \item Next, we show that ${\eta \geq \frac{\psi(1/2)}{2cL \sqrt{T}}}$: let $t^\star = \argmax_{2 \leq t \leq \lceil 2c\sqrt{T} \rceil} x_{\frac{T}{2} + t}$, by the definition of $x_{\frac{T}{2} + t^\star}$:
    \begin{align*}
         & 0 \geq -\eta_{\frac{T}{2} + t^\star - 1} L \Brb{\frac{T}{2} + t^\star -1 - \frac{T}{2}} x_{\frac{T}{2} + t^\star} + \psi(x_{\frac{T}{2} + t^\star}) \\
         \implies & \eta_{\frac{T}{2} + t^\star - 1} L \Brb{t^\star - 1}  \geq \psi(1/2) \\
         \implies & \eta = \eta_{T/2 - 1} \geq \eta_{\frac{T}{2} + t^\star - 1}  \geq \frac{\psi(1/2)}{L (t^\star - 1)} \geq \frac{\psi(1/2)}{2cL\sqrt{T}}.
    \end{align*}
    where in the first implication, we used that $\psi(x_{\frac{T}{2} + t^\star}) \geq \psi(1/2)$ (since $\psi$ is increasing on $[0,1]$ and $x_{\frac{T}{2} + t^\star} \geq 1/2$) and $x_{\frac{T}{2} + t^\star} \leq 1$.
    \item From (\ref{eq:1D_Quadratic_reg_for_sqrtT_reg_Regret_expression}), we also have $R_T \geq \frac{LT}{4} x_{T/2}$. To achieve $R_T \leq cL\sqrt{T}$, we must have ${x_{T/2} \leq \frac{4c}{\sqrt{T}}}$.
    \item By the definition of $x_{T/2} = \argmin_{x \in [-1,1]} \bcb{- \eta L x + \psi(x)}$, for any $x \in [4c/\sqrt{T},1]$ we have
    \begin{align*}
        & - \eta L x_{T/2} + \psi(x_{T/2}) \leq - \eta L x + \psi(x) \\
        \implies & \psi(x) \geq \eta L (x-x_{T/2}) \geq \eta L \Brb{x - \frac{4c}{\sqrt{T}}} \geq \frac{\psi(1/2)}{2c\sqrt{T}} \Brb{x - \frac{4c}{\sqrt{T}}} \\
        \implies & \psi\Brb{\frac{5c}{\sqrt{T}}} \geq \frac{\psi(1/2)}{2c\sqrt{T}} \frac{c}{\sqrt{T}} = \frac{\psi(1/2)}{2T}.
    \end{align*}
    \item Now fix $x \in [0,1]$. There exists $T$ (multiple of $4$) such that $x \in \Bsb{\frac{5c}{\sqrt{T+4}}, \frac{5c}{\sqrt{T}}}$. Using that $\psi$ is increasing on $[0,1]$ and from the previous point, we have
    \begin{align*}
        \psi(x) \geq  \psi\Brb{\frac{5c}{\sqrt{T+4}}} \geq \frac{\psi(1/2)}{2(T+4)} \geq \frac{\psi(1/2)}{2(T+4)} \frac{T}{25c^2} x^2 \geq \frac{\psi(1/2)}{100c^2} x^2,
    \end{align*}
    using that $T/(T+4) \geq 1/2$ for $T \geq 8$. The result is shown with $\mu = \psi(1/2) / 100c^2$.
\end{enumerate}

\section{Universal optimality of mirror descent for time-varying regret rates}\label{app:extension_univ_opt_OMD}

In this section, we present an extension of the result of \cite{Sridharan2012LearningFA} on the universality of OMD. We first briefly review the considered setting along with their result. We refer the reader to Part II / Chapters 5 \& 7 of \cite{Sridharan2012LearningFA} for the complete details. We present the results with respect to OMD but they also hold for FTRL up to slightly different constants.

We consider general OCO with linear losses (i.e.\ online linear optimisation (OLO)): The action set $\mathcal{H} \subset \mathcal{B}$ is a convex and centrally symmetric set  that is a subset of an arbitrary real vector space $\mathcal{B}$. The subgradients of the linear losses belong to a set $\mathcal{X} \subset \mathcal{B}^\star$ that is a subset of the dual $\mathcal{B}^\star$ of $\mathcal{B}$. We focus on linear losses for simplicity but the results hold for general OCO where the subgradients of the losses are in $\mathcal{X}$ since the regret can be bounded by the linearised regret, see e.g. Corollary 64 in \cite{Sridharan2012LearningFA} or Section 2.3 in \cite{Orabona2019-uy}.

The regret is defined as 
\begin{align*}
    R_T(A, g_1, ..., g_T) = \sum_{t=1^T} \langle A(g_{1:t-1}), g_t \rangle - \inf_{h \in \mathcal{H}} \sum_{t=1}^T \langle h, g_t \rangle,
\end{align*}
where $g_t$ are the subgradients defining the linear losses such that $g_t \in \mathcal{X}$ but otherwise are arbitrary / adversarial and $A$ is a learning algorithm. The minimax regret is
\begin{align*}
    \mathcal{V}_T(\mathcal{H}, \mathcal{X}) = \inf_{A} \sup_{g_1,...,g_T \in \mathcal{X}} R_T(A, g_1, ..., g_T).
\end{align*}

We re-state the main the result on the universality of mirror descent from \cite{Sridharan2012LearningFA}.

\begin{theorem}[Theorem 71 of \cite{Sridharan2012LearningFA}]\label{thm:Thm71_srid}
    If for some constant $V > 0$ and some $q \in [2,  \infty)$, $\mathcal{V}_T(\mathcal{H}, \mathcal{X}) \leq VT^{1-1/q}$ for all $T$, then for any $T > e^{q-1}$, there exists a regularizer function $\Psi$ and step-size $\eta$, such that the regret of the mirror descent algorithm (OMD) $\mathcal{A}_{MD}$ using $\Psi$ against any $g_1, ..., g_T \in \mathcal{X}$ chosen by the adversary is bounded as
    \begin{align*}
        R_T(\mathcal{A}_{MD}, g_1, ..., g_T) \leq 6002 \cdot V \cdot \log^2(T) \cdot T^{1-1/q}.
    \end{align*}
\end{theorem}

The result states that any regret bound with constant rate that is achievable across all time horizons can be matched by OMD up to logarithmic factors. 
The extension that we discuss next handles the case where there may be multiple regret bounds with different constant rates that exchange ordering at different time horizon intervals.

\begin{theorem}
    Let $K > 0$ be an integer. If for $k = 1, ..., K$, there exists constants $V_k > 0$ (w.r.t.\ $T$) and $q_k \geq 2$ such that $\mathcal{V}_T(\mathcal{H}, \mathcal{X}) \leq  \min_{k = 1,...,K} \Bcb{V_k T^{1-1/q_k}}$ for all $T$, then for any $T > e^{q-1}$, there exists a procedure $\mathcal{A}_{MD+}$ running OMD over intervals of doubling lengths such that the corresponding regret against any $g_1, ..., g_T \in \mathcal{X}$ chosen by the adversary is bounded as
    \begin{align*}
        R_T(\mathcal{A}_{MD+}, g_1,...,g_T) \leq  (2+\sqrt{2}) \cdot 6002 \cdot \log^2(T) \cdot \min_{k} \bcb{V_k T^{1-1/q_k}}.
    \end{align*}
    The procedure does not require knowledge of $T$.
\end{theorem}

This matches up to a factor of $2+\sqrt{2}$ the regret bound we would get by using \cref{thm:Thm71_srid} / Theorem 71 of \cite{Sridharan2012LearningFA} with advanced knowledge of $T$, and otherwise matches the minimax regret up to constant and logarithmic factors. 

This procedure and result could be used to obtain a result similar to \cref{thm:anytime_optimal_regret}. However, the bounds are worst due to the additional logarithmic factors and much larger constants. Therefore \cref{thm:anytime_optimal_regret} remains a valuable contribution.

We now provide the proof and procedure based on the doubling-trick.

\begin{proof}
Fix $T$, $T_i = 2^{i}$, $B = \min \Bcb{j \in \mathbb{N} : \sum_{i=0}^j T_i \geq T}$. We have:
\begin{align*}
    2^B - 1 = \sum_{i=0}^{B-1} T_i < T \leq \sum_{i=0}^B T_i = \sum_{i=0}^B 2^i = 2^{B+1} - 1 \implies T \in [2^B, 2^{B+1} - 1].
\end{align*}

Consider the following procedure $\mathcal{A}_{MD+}$. For $i = 0, 1, ..., B$:
\begin{itemize}
    \item set $k(i) = \argmin_{k}\Bcb{V_k T_i^{1-1/q_k}}$.
    \item Use \cref{thm:Thm71_srid} / Theorem 71 of \cite{Sridharan2012LearningFA} to get a regulariser for OMD that achieves the regret upper-bound of $6002 V_{k(i)} \log^2(T_i) \cdot T_i^{1-1/q_{k(i)}}$ over $T_i$ rounds. Use this on the $T_i$ rounds $\Bcb{\sum_{j=0}^{i-1} T_j + 1, ..., \sum_{j=0}^{i-1} T_j + T_i}$. When $i = B$, just run it up to round $T$.
\end{itemize}

Let $k_\star = \argmin_{k} \Bcb{V_k T^{1-1/q_k}}$.
The regret is bounded as follows:
\begin{align*}
    R_T(\mathcal{A}_{MD+}, g_1,...,g_T) & \leq \sum_{i=0}^B 6002 V_{k(i)} \log^2(T_i) \cdot T_i^{1-1/q_{k(i)}} \\
    & \leq 6002 \log^2(T) \sum_{i=0}^B V_{k(i)} \cdot T_i^{1-1/q_{k(i)}} \quad \text{ since for all i, } T_i \leq T \\
    & \leq 6002 \log^2(T) \sum_{i=0}^B V_{k_\star} \cdot T_i^{1-1/q_{k_\star}} \quad \text{ by definition of } k(i) \\
    & = 6002 \cdot V_{k_\star} \cdot \log^2(T) \cdot \sum_{i=0}^B  (2^{1-1/q_{k_\star}})^i \\
    & = 6002 \cdot V_{k_\star} \cdot \log^2(T) \cdot \frac{(2^{1-1/q_{k_\star}})^{B+1} - 1}{2^{1-1/q_{k_\star}} - 1} \\
    & \leq \frac{2^{1-1/q_{k_\star}}}{2^{1-1/q_{k_\star}} - 1} \cdot 6002 \cdot V_{k_\star} \cdot \log^2(T) \cdot (2^B)^{1-1/q_{k_\star}} \\
    & \leq \frac{2^{1-1/q_{k_\star}}}{2^{1-1/q_{k_\star}} - 1} \cdot 6002 \cdot V_{k_\star} \cdot \log^2(T) \cdot T^{1-1/q_{k_\star}} \quad \text{ since } 2^B \leq T \\
    & \leq \frac{\sqrt{2}}{\sqrt{2} - 1} \cdot 6002 \cdot V_{k_\star} \cdot \log^2(T) \cdot T^{1-1/q_{k_\star}} \quad \text{ since } q_{k_\star} \geq 2 \\
    & = (2+\sqrt{2}) \cdot 6002 \cdot \log^2(T) \cdot \min_{k} \bcb{V_k T^{1-1/q_k}} .
\end{align*}

\end{proof}
\section[Proofs of Section \ref{sec:bandit_feedback}]{Proofs of Section \ref{sec:bandit_feedback}}\label{sec:proofs_bandit_feedback}

Throughout this section, we will use the notation $R_T$ for the pseudo-regret. In fact since a randomized learner is equivalent to a random choice of deterministic learners, we will consider in the proofs below deterministic learners and the regret is equal to the pseudo-regret. In addition, since our loss constructions are oblivious to the learner's actions, even for randomised learners, the pseudo-regret is equal to the regret.

We split the proof into the case where $p > 4/3$ is ``large" and the case where $p \in[1,4/3]$ is ``small" and consider separate loss constructions for each. We first highlight the intuition of the loss constructions. 

\begin{itemize}[leftmargin=*]
    \item \textbf{For $\boldsymbol{p > 4/3}$}, we take inspiration from the loss construction which \cite{pmlr-v83-bubeck18a} use to prove a ${\Omega}(d\sqrt{T})$ lower bound for low-dimensional ($d^2 < T$) $\ell_p$-balls with $p > 2$. The construction consists of linear Gaussian losses where the mean of each coordinate is the same distance from $0$ but the learner does not know the sign. When the dimension is large enough, the learner does not acquire enough information to determine the signs of these means in the $T$ rounds to get sub-linear regret. \textbf{This construction will not work when $\boldsymbol{p \leq 4/3}$} because when $p$ is close to $1$, the lack of a distinct corner in the $\ell_p$-ball allows any point on the boundary (including $\pm e_1$) with correct signs to achieve similar loss to the competitor (a corner). The learner can therefore focus on $\pm e_1$ simplifying the problem to one-dimension where sub-linear regret is achievable.
    \item \textbf{For $\boldsymbol{p \leq 4/3}$}, the construction consists of linear Gaussian losses where the mean vector has a single non-zero positive entry, unknown to the learner. When the dimension is large enough, it does not acquire enough information to determine the non-zero coordinate of the mean in the $T$ rounds to get sub-linear regret. \textbf{This construction will not work when $\boldsymbol{p > 4/3}$} because for $p \gg 1$ the learner can exploit the $\ell_p$-ball's proximity to the hypercube by playing points with all coordinates close to $-1$, bypassing the need to identify the correct non-zero mean coordinate.
\end{itemize}
We present the proofs with a Lipschitz constant of $1$ but they extend straightforwardly to arbitrary $L > 0$.

\subsection[Case p > 4/3]{Case $\boldsymbol{p > 4/3}$}\label{sec:proof_bandit_LB_largep}

\begin{theorem}\label{thm:High_dim_Bandit_LB_largep}
    Fix $T$ and $\delta>0$. Consider $p > 4/3$ and 
    \begin{align*}
        d > \max \Bcb{16T, \frac{1}{c_1} \log \frac{C_1 T}{\delta}, \Brb{\frac{1}{c_1 {p_\star}} \log \frac{C_1 T}{\delta}}^{{p_\star}/2}, e^2},
    \end{align*}
    for some universal constants $c_1, C_1$. For any OCO algorithm with bandit feedback on $V = \cB_p$, there exists a sequence of random linear losses $(\ell_t)_{t \in [T]}$ with sub-gradients $(g_t)_{t \in [T]}$ such that $\norm{g_t}_{p_\star} \leq 1$ for all rounds $t$ with probability at least $1-\delta$ and
    \begin{align*}
        \E \bsb{R_T} \geq \frac{T}{80},
    \end{align*}
    where the expectation is with respect to the randomness of the losses.
\end{theorem}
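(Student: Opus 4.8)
The plan is a randomised oblivious construction in which the adversary hides a sign vector that the learner cannot recover in $T$ rounds once $d$ is large. Take $\theta$ uniform on $\{-1,+1\}^d$, set $g_t=\varepsilon\theta+\xi_t$ with $\xi_t\sim\normal(0,\sigma^2 I_d)$ i.i.d., and let the losses be the linear maps $\ell_t(x)=\inprod*{g_t,x}$; I would pick $\varepsilon=\sigma$ of order $d^{-1/p_\star}$, taken as large as the gradient budget permits. Two bookkeeping steps come first. First, the competitor $u^\star=-d^{-1/p}\theta\in\cB_p$ has $\sum_t\E[\ell_t(u^\star)\mid\theta]=-\varepsilon T d^{1/p_\star}$, and since $x_t$ depends only on $\xi_1,\dots,\xi_{t-1}$ the noise contributes zero in expectation, so averaging over $\theta$ as well,
\[
  \E_\theta[R_T] \;=\; \varepsilon T d^{1/p_\star} \;-\; \varepsilon \sum_{t=1}^T \E_\theta\E\bsb{\inprod*{-\theta,x_t}}.
\]
Second, $\norm{g_t}_{p_\star}\le\varepsilon d^{1/p_\star}+\norm{\xi_t}_{p_\star}$ and $\norm{\xi_t}_{p_\star}^{p_\star}=\sum_i\abs{\xi_{t,i}}^{p_\star}$ is a sum of i.i.d.\ light-tailed (for $p_\star$ near $4$, only sub-Weibull of shape $2/p_\star$) variables, so a Bernstein/Orlicz tail and a union bound over $t\in[T]$ pin $\norm{\xi_t}_{p_\star}$ near $\sigma d^{1/p_\star}(\E\abs{Z}^{p_\star})^{1/p_\star}$ with probability $1-\delta$; this forces $\norm{g_t}_{p_\star}\le1$ and is the source of the logarithmic lower bounds on $d$ (and the mild $d>e^2$) in the statement.

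The core is to show the sum above is bounded away from the trivial $Td^{1/p_\star}$ by a constant factor, i.e.\ the learner cannot align its plays with the corner $-d^{-1/p}\theta$. Let $\hat\theta_{t,i}=\sign(-x_{t,i})$ be the learner's implicit sign guess, a function of the first $t-1$ observations. On coordinates where the guess is correct $(-\theta_i)x_{t,i}=\abs{x_{t,i}}$, on the others it is nonpositive, so Hölder's inequality gives $\inprod*{-\theta,x_t}\le\sum_{i:\hat\theta_{t,i}=\theta_i}\abs{x_{t,i}}\le(d-d_H(\hat\theta_t,\theta))^{1/p_\star}\norm{x_t}_p\le(d-d_H(\hat\theta_t,\theta))^{1/p_\star}$, and concavity of $z\mapsto(d-z)^{1/p_\star}$ yields $\E_\theta\E\bsb{\inprod*{-\theta,x_t}}\le(d-\E_\theta\E[d_H(\hat\theta_t,\theta)])^{1/p_\star}$, reducing everything to a lower bound on the expected Hamming error $d_H$. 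Fix a coordinate $i$ and let $P_i^{\pm}$ be the law of the first $t-1$ observations conditioned on $\theta_i=\pm1$ (with $\theta_{-i}$ uniform); then $\pr[\hat\theta_{t,i}\neq\theta_i]\ge\tfrac12-\tfrac12\TV(P_i^+,P_i^-)$, hence $\E_\theta\E[d_H(\hat\theta_t,\theta)]\ge\tfrac d2-\tfrac12\sum_i\TV(P_i^+,P_i^-)$. Pinsker together with the chain rule for $\KL$ shows the $s$-th observation contributes $\tfrac{(2\varepsilon x_{s,i})^2}{2\sigma^2\norm{x_s}_2^2}$ to $\KL(P_i^+\,\|\,P_i^-)$ (two Gaussians with common variance $\sigma^2\norm{x_s}_2^2$ and means $2\varepsilon x_{s,i}$ apart); using $\TV(P,Q)^2\le\tfrac14\KL(P\|Q)+\tfrac14\KL(Q\|P)$ and summing gives $\TV(P_i^+,P_i^-)^2\le\tfrac{\varepsilon^2}{\sigma^2}\sum_{s<t}\E_\theta\E[x_{s,i}^2/\norm{x_s}_2^2]$. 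The identity $\sum_i x_{s,i}^2/\norm{x_s}_2^2=1$ and Cauchy--Schwarz then collapse the coordinate sum to $\sum_i\TV(P_i^+,P_i^-)\le\tfrac{\varepsilon}{\sigma}\sqrt{d(t-1)}\le\tfrac{\varepsilon}{\sigma}\sqrt{dT}$ (for a randomised learner one conditions on the internal randomness and uses convexity of $\TV$).

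Assembling: with $\varepsilon=\sigma$ and $d>16T$, $\E_\theta\E[d_H(\hat\theta_t,\theta)]\ge\tfrac d2(1-\sqrt{T/d})\ge\tfrac{3d}{8}$ for every $t$, hence $\E_\theta\E\bsb{\inprod*{-\theta,x_t}}\le(5/8)^{1/p_\star}d^{1/p_\star}$, and therefore
\[
  \E_\theta[R_T]\;\ge\;\varepsilon T d^{1/p_\star}\brb{1-(5/8)^{1/p_\star}}.
\]
Here $\varepsilon d^{1/p_\star}=(1+(\E\abs{Z}^{p_\star})^{1/p_\star})^{-1}$ is a universal constant bounded below, and the hypothesis $p>4/3$, i.e.\ $p_\star<4$, is precisely what keeps $1-(5/8)^{1/p_\star}$ bounded below by a universal constant; tracking these constants gives $\E[R_T]\ge T/80$. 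I expect the main obstacle to be exactly this quantitative bookkeeping: the information argument has to be tight enough that the dimension threshold is only $d$ of order $T$ rather than a larger $p$-dependent power, which is why the exact identity $\sum_i x_{s,i}^2/\norm{x_s}_2^2=1$ is essential — a crude per-coordinate bound $\KL\le2\varepsilon^2T/\sigma^2$ would give $\sum_i\TV$ of order $d\sqrt T$ and a vacuous threshold. One then has to verify that $p_\star<4$ and the sub-Weibull concentration of $\norm{\xi_t}_{p_\star}^{p_\star}$ uniformly over the $T$ rounds are jointly compatible with the clean constants $16T$ and $T/80$, which is the remaining computational nuisance.
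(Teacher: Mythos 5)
Your plan is essentially the paper's proof: the same Gaussian mean-shift construction with $\varepsilon=\sigma=d^{-1/p_\star}$, the same per-coordinate Pinsker/KL-chain-rule estimate yielding a $\sqrt{dT}$ information budget, and the same requirement $p_\star\le 4$ to keep the final constant alive. The regret bookkeeping is organized slightly differently --- you bound $\E\inprod{-\theta,x_t}$ through the Hamming error of an implicit sign guess plus H\"older and Jensen, whereas the paper counts wrong-sign coordinates and linearizes $(d-|W_t^C|)^{1/p_\star}$ (Lemma \ref{lemma:Lemma6_bubeck_linearbandit_LB}) --- but these are interchangeable, and your end constant $1-(5/8)^{1/p_\star}$ is in fact slightly better than the paper's $\tfrac{1}{4p_\star}$.

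One technical wrinkle needs repair. Your $P_i^\pm$ are the laws of the observations conditional on $\theta_i=\pm1$ with $\theta_{-i}$ still random, so they are Gaussian mixtures, and the per-round conditionals in the KL chain rule are \emph{not} literally Gaussian with variance $\sigma^2\norm{x_s}_2^2$; the claim that ``the $s$-th observation contributes $\frac{(2\varepsilon x_{s,i})^2}{2\sigma^2\norm{x_s}_2^2}$ to $\KL(P_i^+\,\|\,P_i^-)$'' is not correct as stated. The clean fix is exactly what the paper does in the proof of \cref{lemma:proof_bandit_bigp_TV_bound}: fix an entire sign vector, compare $p_{\xi_{i+}}$ and $p_{\xi_{i-}}$ that differ only in coordinate $i$ (so each round is genuinely Gaussian with the stated mean gap and common variance), bound $\TV(p_{\xi_{i+}},p_{\xi_{i-}})$ via Pinsker and the chain rule, and only then average over $\xi_{-i}$ using convexity of total variation. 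With that adjustment the argument goes through as you intend, and the Lipschitz check via concentration of $\norm{\xi_t}_{p_\star}$ together with the $p_\star\le 4$ restriction delivers the stated $T/80$ and the threshold on $d$.
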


\subsubsection{Proof}

The following loss construction and analysis is inspired from the proof of Theorem 4 of \cite{pmlr-v83-bubeck18a}. Their construction is designed for the low-dimensional setting in such a way that the learner has to balance exploration and exploitation rounds. We only consider the losses corresponding to exploration rounds and generalize the analysis to the high-dimensional setting.

Let $\varepsilon > 0$ be such that $\varepsilon^{p_\star} = 1/d$. Let $T < \alpha d$ (with $\alpha = 1/16$). For a fixed $\xi \in \bcb{-1,1}^d$, define the losses as $\ell_t(x) = x^T g_t^\xi$ where $g_t^\xi \sim \normal (\varepsilon \xi, \frac{1}{d^{2/{p_\star}}} I_d)$ (i.i.d.). We show that (when $\xi$ is sampled uniformly at the start and fixed throughout the rounds)
\begin{align*}
    \E_\xi \E_{g_t^\xi} \bsb{R_T} \geq \frac{T}{16}.
\end{align*}
We use $\E_\xi$ for the expectation with respect to $\xi$ and $\E_{g_t^\xi}$ for the expectation with respect to $g_t^\xi$ with $\xi$ fixed. We will also use $x_{t,i}$ to mean the $i$-th coordinate of $x_t$.

For fixed $\xi$: $\E_{g_t^\xi}\bsb{\ell_t(x)} = \E\bsb{x^T g_t^\xi} = \varepsilon \cdot x^T \xi$. So the competitor $x^\star = \argmin_{x \in V} \varepsilon \cdot \xi^T x = - d^{-1/p} \xi$. Let us define $\Bar{x} = \frac{1}{T} \sum_{t=1}^T \E [x_t]$. In particular one has
\begin{align*}
    \E_{g_t^\xi}\bsb{R_T} = \varepsilon T \cdot  \xi^T(\Bar{x} - x^\star).
\end{align*}

The following lemma expresses the expected regret in terms of the expected number of rounds and coordinates for which the learner plays on the wrong side of $\xi$. The proof is in \cref{sec:proof_Lemma6_bubeck_linearbandit_LB}.

\begin{lemma}[Generalization of Lemma 6 of \cite{pmlr-v83-bubeck18a}]\label{lemma:Lemma6_bubeck_linearbandit_LB}
    \begin{align*}
        \E_{g_t^\xi} \bsb{R_T} \geq \frac{\varepsilon^{p_\star}}{{p_\star}} \cdot \E_{g_t^\xi} \Bsb{\sum_{t=1}^T \sum_{i=1}^d \I\bcb{x_{t,i} \xi_i \geq 0}}.
    \end{align*}
\end{lemma}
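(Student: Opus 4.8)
The plan is to reduce the bound to a deterministic, per-round inequality on $\cB_p$ and then average. Starting from the identity $\E_{g_t^\xi}\bsb{R_T} = \varepsilon T\,\xi^T(\bar x - x^\star)$ already established above, I would expand using $x^\star = -d^{-1/p}\xi$, $\xi^T\xi = d$, and $1 - 1/p = 1/p_\star$ to get
\begin{align*}
    \E_{g_t^\xi}\bsb{R_T} = \varepsilon \sum_{t=1}^T \Brb{\xi^T\E[x_t] + d^{1/p_\star}}.
\end{align*}
Since $x_t \in \cB_p$ almost surely, it then suffices to prove the pointwise bound
\begin{align}\label{eq:plan_det_bound}
    \varepsilon\,\xi^T x + \varepsilon d^{1/p_\star} \geq \frac{\varepsilon^{p_\star}}{p_\star}\sum_{i=1}^d \I\bcb{x_i\xi_i \geq 0} \qquad \text{for every } x \in \cB_p,
\end{align}
then apply it with $x = x_t$, take $\E_{g_t^\xi}$ of both sides, and sum over $t$.

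To establish \eqref{eq:plan_det_bound} I would first use the sign-symmetry of $\cB_p$ to assume without loss of generality that $\xi = (1,\dots,1)$, so that the right-hand side equals $\frac{\varepsilon^{p_\star}}{p_\star}k$ with $k = \babs{\bcb{i : x_i \geq 0}}$. Splitting $\xi^T x = \sum_i x_i$ by the sign of $x_i$: the $k$ nonnegative coordinates contribute at least $0$, and by Hölder's inequality the $d-k$ negative coordinates satisfy $\sum_{i : x_i < 0}\abs{x_i} \leq (d-k)^{1/p_\star}\norm{x}_p \leq (d-k)^{1/p_\star}$, so $\xi^T x = \sum_i x_i \geq -(d-k)^{1/p_\star}$. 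Plugging in $\varepsilon = d^{-1/p_\star}$ (hence $\varepsilon^{p_\star} = 1/d$), inequality \eqref{eq:plan_det_bound} reduces to the scalar statement $1 - (1-u)^{1/p_\star} \geq u/p_\star$ for $u = k/d \in [0,1]$, which follows at once from the concavity of $u \mapsto (1-u)^{1/p_\star}$ on $[0,1]$ (the function lies below its tangent line at $u = 0$).

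I expect the only step with real content to be the lower bound $\xi^T x \geq -(d-k)^{1/p_\star}$: the idea is to separate coordinates according to the sign of $x_i\xi_i$ and to control the ``wrong-sign'' block by Hölder, exploiting that its cardinality is precisely what the indicator counts while its $\ell_p$-mass is at most one. The remaining pieces --- the regret identity, pushing the expectation through a deterministic inequality, and the concavity bound --- are routine. A minor point to handle carefully is the boundary case $x_i = 0$, which the convention $\I\bcb{x_i\xi_i \geq 0}$ counts and which is consistent with letting such coordinates contribute $0$ to $\sum_i x_i$.
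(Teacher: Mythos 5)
Your proof is correct and takes essentially the same approach as the paper: both split $\xi^Tx_t$ by the sign of $x_{t,i}\xi_i$, drop the nonnegative part, bound the wrong-sign block via H\"older to get $-|W_t|^{1/p_\star}\varepsilon$, and close with the tangent-line inequality for the concave map $s\mapsto s^{1/p_\star}$. The only cosmetic difference is that you first isolate a deterministic per-round inequality on $\cB_p$ (normalising $\xi$ by sign symmetry) and then average, whereas the paper carries the expectations through each step.
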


And now the next lemma shows that the expected number of rounds and coordinates for which the learner plays on the wrong side of $\xi$ is linear in both $T$ and $d$. The proof is in \cref{sec:2nd_standard_calc_bound}.
\begin{lemma}\label{lemma:proof_bandit_bigp_TV_bound}
    With $T < \alpha d = \frac{1}{16}$, we have $\E_{\xi} \E_{g_t^\xi}  \sum_{t=1}^T \sum_{i=1}^d \I \bcb{x_{t,i} \xi_i \geq 0} \geq \frac{dT}{4}$.
\end{lemma}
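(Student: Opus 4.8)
The plan is to run a standard Assouad/information-theoretic argument. Recall the construction: $\xi$ is drawn uniformly from $\{-1,1\}^d$ and then $\ell_t(x)=x^\top g_t^\xi$ with $g_t^\xi\sim\normal(\varepsilon\xi,\sigma^2 I_d)$ i.i.d., where $\varepsilon=d^{-1/p_\star}$ and $\sigma^2=d^{-2/p_\star}$; note the two are tuned so that $\varepsilon^2=\sigma^2$, which is what makes the final count work. As noted at the start of the section it suffices to treat a deterministic learner, so $x_t$ is a fixed function of the observed history $H_{t-1}=(\ell_1(x_1),\dots,\ell_{t-1}(x_{t-1}))$, and in particular $\{x_{t,i}\ge 0\}$ is an event measurable with respect to $H_{t-1}$. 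I would write $\pr^{+}_{t-1}$ (resp. $\pr^{-}_{t-1}$) for the law of $H_{t-1}$ when $\xi_i=+1$ (resp. $-1$) with the remaining coordinates $\xi_{-i}$ held fixed, and $\Bar{\pr}$ for the interaction law when $\xi$ is fully uniform.

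First I would reduce to a single coordinate. Fix $t,i$; conditioning on $\xi_{-i}$ (so $\xi_i$ is uniform on $\{-1,1\}$), a one-line computation gives
\begin{align*}
    \E_\xi\E_{g^\xi}\bsb{\I\{x_{t,i}\xi_i\ge 0\}\mid\xi_{-i}}
    =\tfrac12\brb{\pr^{+}_{t-1}(x_{t,i}\ge 0)+\pr^{-}_{t-1}(x_{t,i}\le 0)}
    \ge \tfrac12\brb{1-\TV(\pr^{+}_{t-1},\pr^{-}_{t-1})},
\end{align*}
using $\pr^{-}_{t-1}(x_{t,i}\le 0)\ge 1-\pr^{-}_{t-1}(x_{t,i}\ge 0)$ and the definition of total variation (valid since $\{x_{t,i}\ge 0\}$ is $H_{t-1}$-measurable; this also sidesteps the event $x_{t,i}=0$). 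Summing over $t\in[T]$, $i\in[d]$ and taking $\E_{\xi_{-i}}$, it then suffices to establish $\sum_{t=1}^T\sum_{i=1}^d\E_{\xi_{-i}}[\TV(\pr^{+}_{t-1},\pr^{-}_{t-1})]\le dT/2$, since this yields the claimed bound $\tfrac{dT}{2}-\tfrac{dT}{4}=\tfrac{dT}{4}$.

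Next I would control the total variation by Pinsker and the divergence decomposition. By Jensen (concavity of $\sqrt{\cdot}$), Pinsker, and the data-processing inequality (replacing the prefix $H_{t-1}$ by $H_T$), $\E_{\xi_{-i}}[\TV(\pr^{+}_{t-1},\pr^{-}_{t-1})]\le\sqrt{\tfrac12\,\E_{\xi_{-i}}[\KL(\pr^{+}_{T}\|\pr^{-}_{T})]}$. Decomposing round by round: conditionally on $H_{s-1}$ the observation $\ell_s(x_s)=x_s^\top g_s^\xi$ is $\normal(\varepsilon x_s^\top\xi,\sigma^2\norm{x_s}_2^2)$, and flipping $\xi_i$ changes only the mean, by $2\varepsilon x_{s,i}$, so the per-round conditional KL is $\tfrac{(2\varepsilon x_{s,i})^2}{2\sigma^2\norm{x_s}_2^2}=\tfrac{2\varepsilon^2 x_{s,i}^2}{\sigma^2\norm{x_s}_2^2}=2\tfrac{x_{s,i}^2}{\norm{x_s}_2^2}$ (using $\varepsilon^2=\sigma^2$). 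Hence, writing $\pr^{i,+}$ for the interaction law with $\xi_i=+1$ and $\xi_{-i}$ uniform, $\E_{\xi_{-i}}[\KL(\pr^{+}_{T}\|\pr^{-}_{T})]=\sum_{s=1}^T\E_{\pr^{i,+}}\bsb{2x_{s,i}^2/\norm{x_s}_2^2}$.

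The main obstacle, and the only non-routine point, is summing this over $i$: the naive bound $x_{s,i}^2/\norm{x_s}_2^2\le 1$ loses a factor of $d$, so one must use $\sum_i x_{s,i}^2/\norm{x_s}_2^2=1$, yet the $d$ expectations $\E_{\pr^{i,+}}[\cdot]$ are taken under different measures. The fix is that $\Bar{\pr}=\tfrac12(\pr^{i,+}+\pr^{i,-})$ for every $i$, hence $\pr^{i,+}\le 2\Bar{\pr}$ as measures, giving $\sum_i\E_{\pr^{i,+}}[x_{s,i}^2/\norm{x_s}_2^2]\le 2\,\E_{\Bar{\pr}}\bsb{\sum_i x_{s,i}^2/\norm{x_s}_2^2}=2$. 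Therefore $\sum_{i=1}^d\E_{\xi_{-i}}[\KL(\pr^{+}_{T}\|\pr^{-}_{T})]\le 4T$, and Cauchy--Schwarz over $i$ gives, for each $t$, $\sum_{i=1}^d\E_{\xi_{-i}}[\TV(\pr^{+}_{t-1},\pr^{-}_{t-1})]\le\sqrt{d\cdot\tfrac12\cdot 4T}=\sqrt{2dT}$. Summing over $t\in[T]$ yields $\sum_{t,i}\E_{\xi_{-i}}[\TV]\le T\sqrt{2dT}$, which is at most $dT/2$ exactly when $T\le d/8$; since $T<d/16$ this holds, and the lemma follows (the case where $T$ is not a multiple of $4$ is handled as in the surrounding proofs and changes only constants).
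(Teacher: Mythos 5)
Your proof is correct, and its skeleton matches the paper's: condition on $\xi_{-i}$, reduce the per-coordinate probability of error to a total-variation distance, pass to KL via Pinsker, compute the per-round Gaussian KL (which equals $2x_{s,i}^2/\norm{x_s}_2^2$ after using $\varepsilon^2=\sigma^2$), and then sum over $i$ using $\sum_i x_{s,i}^2/\norm{x_s}_2^2=1$. The one place you genuinely diverge is the measure-switching step needed to perform that sum. The paper symmetrises Pinsker (bounding $\TV$ by both $\KL(p_{\xi_{i+}}\|p_{\xi_{i-}})$ and $\KL(p_{\xi_{i-}}\|p_{\xi_{i+}})$, which puts the inner expectations under $\E_{g^{\xi_{i+}}}$ and $\E_{g^{\xi_{i-}}}$), recombines them exactly into the unconditional $\E_\xi$, and then uses two applications of Jensen (concavity of $\sqrt{\cdot}$) so that the $d$ inner expectations are all under the single mixture law; this yields $\sqrt{dT}$ and uses $T<d/16$ tightly. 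You instead stay with the one-sided $\KL(p^{i,+}\|p^{i,-})$ and use the likelihood-ratio domination $\pr^{i,+}\le 2\Bar{\pr}$ to move all $d$ expectations under $\Bar{\pr}$ at once before applying Cauchy--Schwarz; this is a slightly cruder bound that costs a factor $2$ inside the square root, so you end with $\sqrt{2dT}$ and the constraint $T\le d/8$, which $T<d/16$ comfortably satisfies. Both approaches are clean; the paper's exact recombination is tighter by a constant, while your domination trick is arguably more robust (it avoids tracking both Pinsker directions and works unchanged if one wanted a one-sided KL). One small note: the lemma itself imposes no divisibility condition on $T$, so the remark about $T$ being a multiple of $4$ is unnecessary here.
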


Combining both lemmas, we have
\begin{align*}
    \E\bsb{R_T} & \geq \frac{1}{{p_\star} } \varepsilon^{p_\star} \cdot \E \Bsb{\sum_{t=1}^T \sum_{i=1}^d \I\bcb{x_{t,i} \xi_i \geq 0}} \geq \frac{1}{{4 p_\star} } \varepsilon^{p_\star} T d = \frac{1}{{4 p_\star}} T \geq \frac{T}{16},
\end{align*}
since $p > 4/3$ so ${p_\star} \leq 4$. Now to ensure the Lipschitz-condition with high-probability, we get an extra factor of $1/5$ (see the next section), concluding the proof.

\subsubsection{Bound on Sub-gradients}

Recall that $p > 4/3$ so $p_\star \leq 4$. Fix $\xi \in \bcb{-1,1}^d$. $g_t \sim \normal(\varepsilon \xi, d^{-2/{p_\star}} I_d)$. So $g_t = d^{-1/{p_\star}} X + \varepsilon \xi$ where $X \sim \normal(0,I_d)$. From \cite{winkelbauer2012moments}, we have
\begin{align*}
    & \E \bsb{\norm{X}_{p_\star}} \leq \Brb{\E \Bsb{\sum_{i=1}^d \abs{X_i}^{p_\star}} }^{1/{p_\star}} = \Brb{\E \Bsb{\sum_{i=1}^d 2^{{p_\star}/2} \frac{\Gamma(({p_\star}+1)/2)}{\sqrt{\pi}}}}^{1/{p_\star}} \leq \sqrt{2} \Brb{\frac{3}{4}}^{1/{p_\star}} d^{1/{p_\star}} \leq 2d^{1/{p_\star}}. \\
    \implies & \E \bsb{\norm{g_t}_{p_\star}} \leq 2 + \varepsilon d^{1/{p_\star}} = 2 + 1 = 3.
\end{align*}

Fix $\delta > 0$.
\begin{itemize}
    \item \textbf{For ${p_\star} \leq 2$:} By Theorem 1.1 in \cite{paouris2017random} for some constants $C_1, c_1 > 0$,
    \begin{align*}
        \pr \Brb{\norm{X}_{p_\star} \leq (1+\beta) \E \bsb{\norm{X}_{p_\star}}} \geq 1 - C_1 \exp(-c_1 \beta^2 d).
    \end{align*}
    Assuming $d \geq \frac{1}{c_1} \log \frac{C_1 T}{\delta}$, we have $\beta = \sqrt{\frac{1}{c_1d} \log \frac{C_1 T }{\delta}} \leq 1$ and
    \begin{align*}
        \pr \Brb{\norm{X}_{p_\star} \leq (1+\beta) \E \bsb{\norm{X}_{p_\star}}} \geq 1 - \frac{\delta}{T}.
    \end{align*}
    \item \textbf{For $2 < {p_\star} \leq 4$:} By Theorem 1.1 in \cite{paouris2017random} for some constants $C_1, c_1 > 0$, 
    \begin{align*}
        \pr \Brb{\norm{X}_{p_\star} \leq (1+\beta) \E \bsb{\norm{X}_{p_\star}}} \geq 1 - C_1 \exp(-c_1 \beta {p_\star} d^{2/{p_\star}}).
    \end{align*}
    Assuming $d \geq \Brb{\frac{1}{c_1 {p_\star}} \log \frac{C_1 T}{\delta}}^{{p_\star}/2}$, we have $\beta = {\frac{1}{c_1qd^{2/{p_\star}}} \log \frac{C_1 T }{\delta}} \leq 1$  and
    \begin{align*}
        \pr \Brb{\norm{X}_{p_\star} \leq (1+\beta) \E \bsb{\norm{X}_{p_\star}}} \geq 1 - \frac{\delta}{T}.
    \end{align*}
\end{itemize}

In both cases, with probability at least $1- \delta/T$
\begin{align*}
    & \norm{X}_{p_\star} \leq (1+\beta) \E \bsb{\norm{X}_{p_\star}} \leq 2 \E \bsb{\norm{X}_{p_\star}} \leq 4 d^{1/{p_\star}}, \\
    \implies & \norm{g_t}_{p_\star} \leq d^{-1/p_\star} \norm{X}_{p_\star} + \varepsilon d^{1/p_\star} \leq 4 + 1 = 5.
\end{align*}
By a union bound over all rounds, with probability $1-\delta$, $\norm{g_t}_{p_\star} \leq 5$ for all rounds $t$. So rescaling the losses by a factor of $5$ gives sub-gradients whose $\ell_{p_\star}$-norm is bounded by $1$ with high-probability and a regret bound of:
\begin{align*}
    \E \bsb{R_T} \geq \frac{T}{80}.
\end{align*}

\subsubsection[Proof of Lemma \ref{lemma:Lemma6_bubeck_linearbandit_LB}]{Proof of Lemma \ref{lemma:Lemma6_bubeck_linearbandit_LB}}\label{sec:proof_Lemma6_bubeck_linearbandit_LB}

Let $W_t = \Bcb{i \in [d] : x_{t,i} \xi_i < 0}$ and $S = \E \Bsb{\sum_{t=1}^T \sum_{i=1}^d \I\bcb{x_{t,i} \xi_i \geq 0}}$. We have that
\begin{align*}
    \E_{g_t^\xi}\bsb{R_T} & = \varepsilon T \cdot \xi^T(\Bar{x} - x^\star) \\
    & = \varepsilon  \sum_{t=1}^T \E_{g_t^\xi} \Bsb{\sum_{i \notin W_t} \xi_i x_{t,i}} + \varepsilon \sum_{t=1}^T \E_{g_t^\xi} \Bsb{ \sum_{i \in W_t} \xi_i x_{t,i}} + \varepsilon T d^{1/{p_\star}} \\
    & \geq \varepsilon \sum_{t=1}^T \E_{g_t^\xi} \Bsb{ \sum_{i \in W_t} \xi_i x_{t,i}} + \varepsilon T d^{1/{p_\star}}
\end{align*}
Therefore, it is sufficient to show that
\begin{align*}
    \varepsilon \sum_{t=1}^T \E_{g_t^\xi} \Bsb{ \sum_{i \in W_t} \xi_i x_{t,i}} + \varepsilon T d^{1/{p_\star}} \geq \frac{\varepsilon^{p_\star} S}{p_\star}.
\end{align*} 
Since $\norm{x_{t,W_t}}_p \leq 1$ (we use $x_{t,W_t}$ to mean that the coordinates of $x_t$ that are not in $W_t$ are $0$), by Holder's inequality we know that
\begin{align*}
    \varepsilon \sum_{i \in W_t} \xi_i x_{t,i} = (x_{t, W_t})^T(\varepsilon \xi_{W_t}) \geq - \norm{x_{t,W_t}}_p \norm{-\varepsilon \xi_{W_t}}_{p_\star} \geq - \abs{W_t}^{1/{p_\star}} \varepsilon.
\end{align*}
Noting that (see (\ref{eq:proof_concave_xq_eq}) below)
\begin{align}\label{eq:concave_xq_eq}
    \abs{W_t}^{1/{p_\star}} \varepsilon = ((d - \abs{W_t^C}) \varepsilon^{p_\star})^{1/{p_\star}} \leq (d \varepsilon^{p_\star})^{1/{p_\star}} - \frac{1}{{p_\star}} \varepsilon^{p_\star} \abs{W_t^C},
\end{align}
we have
\begin{align*}
    & \varepsilon \sum_{i \in W_t} \xi_i x_{t,i} \geq \frac{1}{{p_\star}} \varepsilon^{p_\star} \abs{W_t^C} - (d \varepsilon^{p_\star})^{1/{p_\star}} = \frac{1}{{p_\star}} \varepsilon^{p_\star} \sum_{i=1}^d \I\bcb{x_{t,i} \xi_i \geq 0} - (d \varepsilon^{p_\star})^{1/{p_\star}} \\
    \implies & \varepsilon \sum_{t=1}^T \E_{g_t^\xi} \Bsb{ \sum_{i \in W_t} \xi_i x_{t,i} } \geq \frac{\varepsilon^{p_\star}}{{p_\star}} \E_{g_t^\xi} \Bsb{\sum_{t=1}^T  \sum_{i=1}^d \I\bcb{x_{t,i} \xi_i \geq 0}} - T (d \varepsilon^{p_\star})^{1/{p_\star}} = \frac{\varepsilon^{p_\star} S}{p_\star} - \varepsilon T  d^{1/{p_\star}},
\end{align*} 
which concludes the proof.

\textbf{Proof of (\ref{eq:concave_xq_eq}):} Since $x^{1/{p_\star}}$ is concave: for all $x, y \in \cR$, $x^{1/{p_\star}} \leq y^{1/{p_\star}} + \frac{1}{{p_\star}} y^{-1/p} (x-y)$. In particular, with $x = \varepsilon^{p_\star} (d-s)$, $y = \varepsilon^{p_\star} d$, we have
\begin{align}\label{eq:proof_concave_xq_eq}
    \varepsilon (d-s)^{1/{p_\star}} \leq \varepsilon d^{1/{p_\star}} - \frac{1}{{p_\star}} \frac{\varepsilon^{p_\star} s}{(\varepsilon^{p_\star} d)^{1/p}} = \varepsilon d^{1/{p_\star}} - \frac{1}{{p_\star}} \varepsilon^{p_\star} s,
\end{align}
since $\varepsilon^{p_\star} d = 1$. Using $s = \abs{W_t^C}$ gives the result.

\subsubsection[Proof of Lemma \ref{lemma:proof_bandit_bigp_TV_bound}]{Proof of Lemma \ref{lemma:proof_bandit_bigp_TV_bound}}\label{sec:2nd_standard_calc_bound}

At round $t$ conditioned on $\xi$, the observed feedback is
\begin{align*}
    f_t^\xi := x_t^T g_t^\xi \sim \normal(\varepsilon \cdot x_t^T \xi, \sigma^2_t), \text{ where } \sigma^2_t = \frac{\norm{x_t}^2_2}{d^{2/{p_\star}}}.
\end{align*}
Denote $p_\xi$ for the law of the observed feedback up to time $T$ conditioned on $\xi$, i.e., the law of $(f_1^\xi ,...,f_T^\xi)$. Consider $\xi$ and $\xi'$ differing only in coordinate $i \in d$. By Pinsker's inequality we have
\begin{align*}
    \TV(p_{\xi}, p_{\xi'}) \leq \sqrt{\frac{1}{2} \KL(p_{\xi}, p_{\xi'})}.
\end{align*}
By the chain rule for the KL divergence / operations on conditional densities:
\begin{align*}
    \KL(p_{\xi}, p_{\xi'}) & = \E_{(f_1, ..., f_T) \sim p_{\xi}} \Bsb{\log \frac{p_{\xi} \brb{f_1, ..., f_T} }{p_{\xi'} \brb{f_1, ..., f_T}}} \\
    & = \sum_{t=1}^T  \E_{(f_1, ..., f_t) \sim p_{\xi}} \Bsb{\log \frac{p_{\xi} \brb{f_t | f_{t-1} ..., f_1} }{p_{\xi'} \brb{f_t | f_{t-1} ..., f_1}}} \\
    & = \sum_{t=1}^T  \E_{(f_1, ..., f_{t-1}) \sim p_{\xi}}   \Bcb{\E_{f_t \sim p_{\xi}(\cdot | f_1, ..., f_{t-1})} \Bsb{\log \frac{p_{\xi} \brb{f_t | f_{t-1} ..., f_1} }{p_{\xi'} \brb{f_t | f_{t-1} ..., f_1}}}}.
\end{align*}
Now since $x_t$ is a deterministic function of $f_1, ..., f_{t-1}$ and given $x_t$, $f_t \sim \normal(\varepsilon x_t^T \xi, \sigma_t^2)$ under $p_\xi$, we have that the inner expectation is a KL divergence between Gaussians:
\begin{align*}
    & \E_{f_t \sim p_{\xi}(\cdot | f_1, ..., f_{t-1})} \Bsb{\log \frac{p_{\xi} \brb{f_t | f_{t-1} ..., f_1} }{p_{\xi'} \brb{f_t | f_{t-1} ..., f_1}}} = \frac{\brb{\varepsilon x_t^T \xi - \varepsilon x_t^T \xi'}^2}{2 \sigma^2_t} = \frac{4 \varepsilon^2 x_{t,i}^2}{2 \sigma_t^2} = \frac{2 \varepsilon^2 x_{t,i}^2}{\sigma_t^2} \\
    \implies & \KL(p_{\xi}, p_{\xi'}) = 2 \sum_{t=1}^T \E_{(f_1, ..., f_{t-1}) \sim p_{\xi}} \Bsb{\frac{\varepsilon^2 x_{t,i}^2}{\sigma_t^2}} = 2 \sum_{t=1}^T \E_{p_{\xi}} \Bsb{\frac{\varepsilon^2 x_{t,i}^2}{\sigma_t^2}} \\
    \implies & \TV(p_{\xi}, p_{\xi'}) \leq \sqrt{\sum_{t=1}^T \E_{p_\xi} \Bsb{\frac{\varepsilon^2 x_{t,i}^2}{\sigma_t^2}}}.
\end{align*}

Now, using $\xi_{-i}$ to refer to all the coordinates of $\xi$ except the $i$-th and $\xi_{i+}$ (resp. $\xi_{i,-}$) to denote that the $i$-th coordinate of $\xi$ is $+1$ (resp. $-1$),
\begin{align*}
    \E_\xi \Bsb{\E_{p_\xi} \bsb{\sum_{t=1}^T \I \bcb{x_{t,i} \xi_i \geq 0}}} & = \E_{\xi_{-i}} \E_{\xi_i}  \Bsb{\E_{p_\xi} \bsb{\sum_{t=1}^T \I \bcb{x_{t,i} \xi_i \geq 0}} | \xi_{-i}} \\
    & = \frac{1}{2} \E_{\xi_{-i}} \Bsb{\E_{p_{\xi_{i+}}} \bsb{\sum_{t=1}^T \I \bcb{x_{t,i} \cdot 1 \geq 0}} + \E_{p_{\xi_{i-}}} \bsb{\sum_{t=1}^T \I \bcb{x_{t,i} \cdot (-1) \geq 0}} } \\
    & = \frac{1}{2} \E_{\xi_{-i}} \Bsb{\E_{p_{\xi_{i+}}} \bsb{\sum_{t=1}^T \I \bcb{x_{t,i} \geq 0}} + \E_{p_{\xi_{i-}}} \bsb{\sum_{t=1}^T 1 - \I \bcb{x_{t,i} > 0}} } \\
    & \geq \frac{T}{2} + \frac{1}{2} \sum_{t=1}^T \E_{\xi_{-i}} \Bsb{\E_{p_{\xi_{i+}}} \bsb{\I \bcb{x_{t,i} \geq 0}} - \E_{p_{\xi_{i-}}} \bsb{ \I \bcb{x_{t,i} \geq 0}} } \\
    & \geq \frac{T}{2} - \frac{1}{4} \sum_{t=1}^T  \E_{\xi_{-i}} \Bsb{\TV(p_{\xi_{i+}}, p_{\xi_{i-}}) + \TV(p_{\xi_{i-}}, p_{\xi_{i+}}) } \quad \text{ using Pinsker's inequality} \\
    & = \frac{T}{2} - \frac{T}{4} \E_{\xi_{-i}} \Bsb{\TV(p_{\xi_{i+}}, p_{\xi_{i-}}) + \TV(p_{\xi_{i-}}, p_{\xi_{i+}}) } \\
    & \geq \frac{T}{2} - \frac{T}{4} \E_{\xi_{-i}} \Bsb{\sqrt{\sum_{t=1}^T \E_{g_t^{\xi_{i+}}} \frac{\varepsilon^2 x_{t,i}^2}{\sigma_t^2}} + \sqrt{\sum_{t=1}^T \E_{g_t^{\xi_{i-}}} \frac{\varepsilon^2 x_{t,i}^2}{\sigma_t^2}}} \\
    & = \frac{T}{2} - \frac{T}{2} \E_{\xi} \Bsb{\sqrt{\sum_{t=1}^T \E_{g_t^{\xi}} \frac{\varepsilon^2 x_{t,i}^2}{\sigma_t^2}} }.
\end{align*}
Summing over all possible coordinates $i$, we get:
\begin{align*}
    \frac{1}{T} \sum_{i=1}^d \E_\xi \Bsb{\E_{p_\xi} \bsb{\sum_{t=1}^T \I \bcb{x_{t,i} \xi_i \geq 0}}} \geq \frac{d}{2} - \frac{1}{2} \sum_{i=1}^d \E_\xi \Bsb{ \sqrt{\sum_{t=1}^T \E_{g_t^\xi} \frac{\varepsilon^2 x_{t,i}^2}{\sigma_t^2}}}.
\end{align*}
Note that due to the concavity of the square-root:
\begin{align*}
    \sum_{i=1}^d \E_\xi \Bsb{ \sqrt{\sum_{t=1}^T \E_{g_t^\xi} \frac{\varepsilon^2 x_{t,i}^2}{\sigma_t^2}}} & \leq \sum_{i=1}^d  \sqrt{\sum_{t=1}^T \E_{\xi, g_t^\xi} \frac{\varepsilon^2 x_{t,i}^2}{\sigma_t^2}} \\
    & = d \frac{1}{d} \sum_{i=1}^d  \sqrt{\sum_{t=1}^T \E_{\xi, g_t^\xi} \frac{\varepsilon^2 x_{t,i}^2}{\sigma_t^2}} \\
    & \leq d  \sqrt{\frac{1}{d} \sum_{i=1}^d  \sum_{t=1}^T \E_{\xi, g_t^\xi} \frac{\varepsilon^2 x_{t,i}^2}{\sigma_t^2}} \\
    &  = \sqrt{d \sum_{t=1}^T \E_{\xi, g_t^\xi} \frac{\varepsilon^2 \norm{x_t}_2^2}{\sigma_t^2}}.
\end{align*}
So we get
\begin{align*}
    \frac{1}{T} \E_{\xi} \E_{g_t^\xi}  \sum_{t=1}^T \sum_{i=1}^d \I \bcb{x_{t,i} \xi_i \geq 0} & \geq \frac{d}{2} - \sqrt{d \sum_{t=1}^T \E_{\xi, g_t^\xi} \frac{\varepsilon^2 \norm{x_t}_2^2 }{\sigma_t^2}} \\
    & = \frac{d}{2} - \sqrt{d^{1 + 2/{p_\star}}T\varepsilon^2} \\
    & = \frac{d}{2} - \sqrt{dT} \\
    & \geq d \Brb{\frac{1}{2} - \sqrt{\alpha}} \\
    & = \frac{d}{4}
\end{align*}
since $\varepsilon = (1/d)^{1/{p_\star}}$ and $T < \alpha d = d / 16$.

\subsection[Case 1 < p <= 4/3]{Case $\boldsymbol{1 < p \leq 4/3}$}\label{sec:proof_bandit_LB_smallp}

\begin{theorem}\label{thm:High_dim_Bandit_LB_smallp}
    Fix $T$ and $\delta>0$. Consider $p \in (1, 4/3]$ and 
    \begin{align*}
        d > \max \Bcb{(128p_\star T)^2, \Brb{\frac{1}{c_1 {p_\star}} \log \frac{C_1 T}{\delta}}^{{p_\star}/2}, e^2},
    \end{align*}
    for some universal constants $c_1, C_1$. For any OCO algorithm with bandit feedback on $V = \cB_p$, there exists a sequence of random linear losses $(\ell_t)_{t \in [T]}$ with sub-gradients $(g_t)_{t \in [T]}$ such that $\norm{g_t}_{p_\star} \leq 1$ for all rounds $t$ with probability at least $1-\delta$ and
    \begin{align*}
        \E \bsb{R_T} \geq \frac{T}{16},
    \end{align*}
    where the expectation is with respect to the randomness of the losses.
\end{theorem}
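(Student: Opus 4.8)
The plan is to mirror the proof of \cref{thm:High_dim_Bandit_LB_largep} but with the construction adapted to the near-cross-polytope geometry of $\cB_p$ for $p$ close to $1$. First I would set up the losses: draw a hidden coordinate $i^\star$ uniformly from $[d]$ at the start and keep it fixed, and at each round draw $g_t \sim \normal(\varepsilon e_{i^\star},\sigma^2 I_d)$ i.i.d.\ with $\ell_t(x)=\langle x,g_t\rangle$, where $\varepsilon$ is a small universal constant and $\sigma \asymp (\sqrt{p_\star}\,d^{1/p_\star})^{-1}$ so that $\|g_t\|_{p_\star}$ stays $O(1)$. Then $\E_g[\ell_t(x)\mid i^\star]=\varepsilon x_{i^\star}$; since $\cB_p$ has corners at $\pm e_i$, the competitor is $x^\star=-e_{i^\star}$ with expected per-round loss $-\varepsilon$, so $\E[R_T]=\varepsilon\sum_{t=1}^T\E\brb{1+x_{t,i^\star}}$, and every summand is non-negative because $x_{t,i^\star}\ge -1$.

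The second step is to reduce the bound to a ``non-localisation'' statement. Call round $t$ \emph{committed} if $x_{t,i^\star}\le -\tfrac12$; on an uncommitted round $1+x_{t,i^\star}>\tfrac12$, hence $\E[R_T]\ge\tfrac{\varepsilon}{2}(T-\E[N])$ where $N$ is the number of committed rounds, and it suffices to prove $\E[N]\le T/2$. A committed round forces $i^\star$ into the set $B_t=\{i:x_{t,i}\le-\tfrac12\}$, and the $\ell_p$-budget $\|x_t\|_p\le 1$ gives $|B_t|\le 2^p$. If the learner's plays were independent of $i^\star$, this would contribute only $2^pT/d$ to $\E[N]$, so the heart of the matter is that bandit feedback over only $T$ rounds does not let the learner concentrate its plays on the single hidden coordinate $i^\star$ among $d$ candidates.

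For this I would use the information-theoretic machinery of \cref{lemma:proof_bandit_bigp_TV_bound}: let $p_i$ be the law of the feedback sequence in the world $i^\star=i$ and $q$ the law under pure noise; bound $\abs{\pr_{p_i}(x_{t,i}\le-\tfrac12)-\pr_q(x_{t,i}\le-\tfrac12)}\le\TV(p_i^{(t-1)},q^{(t-1)})\le\sqrt{\tfrac12\KL(p_i^{(t-1)},q^{(t-1)})}$, expand $\KL$ by the chain rule, and use that the feedback variance is $\sigma^2\|x_t\|_2^2$ so the per-round, per-coordinate KL term is $\tfrac{\varepsilon^2 x_{t,i}^2}{2\sigma^2\|x_t\|_2^2}$. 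Averaging over the uniform prior on $i^\star$, applying Jensen/Cauchy--Schwarz as in \cref{sec:2nd_standard_calc_bound}, and using $\sum_i \tfrac{x_{t,i}^2}{\|x_t\|_2^2}=1$, the aim is to control $\tfrac1d\sum_i\TV(p_i^{(t-1)},q^{(t-1)})$ by a quantity of order $\varepsilon\sqrt{p_\star T}\,d^{1/p_\star-1/2}$; since $p\le 4/3$ forces $1/p_\star\le 1/4$, this is below $1/4$ precisely once $d\gtrsim(p_\star T)^2$, which is the origin of the threshold $d>(128p_\star T)^2$ in the statement. Summing over $t$ then gives $\E[N]\le T/2$ and hence $\E[R_T]\ge\varepsilon T/4$. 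Finally I would establish the high-probability Lipschitz bound as in the large-$p$ case: write $g_t=\varepsilon e_{i^\star}+\sigma X$ with $X\sim\normal(0,I_d)$, bound $\E\|X\|_{p_\star}\lesssim\sqrt{p_\star}\,d^{1/p_\star}$ via Gaussian absolute moments, apply the concentration inequality of \cite{paouris2017random} with a union bound over the $T$ rounds, and rescale the losses by a universal constant so that $\|g_t\|_{p_\star}\le L$ for all rounds with probability at least $1-\delta$; the rescaling costs only a constant factor, leaving $\E[R_T]\ge LT/16$.

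I expect the genuine obstacle to be the information-theoretic step. Unlike the large-$p$ construction, the single-active-coordinate prior does not split cleanly into ``flip one coordinate, average the rest'', so comparing each $p_i$ to the null law $q$ introduces an a priori circular quantity ($\sum_t \E[x_{t,i^\star}^2/\|x_t\|_2^2]$, large exactly when the learner has already localised $i^\star$): a learner can in principle play highly concentrated, individually very informative queries. The crux is to show that the $\ell_p$-geometry — which limits how much $\ell_2$-mass a feasible query can load onto few coordinates, via Hölder's inequality as in \cref{eq:concave_xq_eq} — together with the quadratic relation $d\gtrsim(p_\star T)^2$ prevents localisation of $i^\star$ within $T$ rounds, and to carry the constants through so that the final threshold is $d>(128 p_\star T)^2$.
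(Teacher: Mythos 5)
Your proposal is correct and follows essentially the same route as the paper's proof: same single-active-coordinate Gaussian construction (the paper uses mean shift $1/2$ on the hidden coordinate $Y$; you write ``small universal $\varepsilon$'', but nothing prevents taking $\varepsilon=1/2$ and that is what hits the constants $T/16$ and $(128p_\star T)^2$), same noise scale $\sigma \asymp (\sqrt{p_\star}\,d^{1/p_\star})^{-1}$, same competitor $-e_{i^\star}$, the same chain-rule KL / Pinsker comparison to a mean-zero null world $p_0$ (the paper's \cref{lemma:proof_bandit_smallp_TV_bound}), the same Jensen step giving $\frac1d\sum_i\TV(p_0,p_i)\lesssim\sigma^{-1}\sqrt{T/d}$, and the same use of \cite{paouris2017random} for the Lipschitz bound. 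The only non-cosmetic difference is where the $\ell_p$-budget enters the pigeonhole: you define committed rounds via a fixed threshold $x_{t,i^\star}\le -1/2$ and bound the bad set size by $2^p$, then argue $\E[N]\le T/2$; the paper instead introduces a dimension-dependent threshold $\alpha=(d/2)^{-1/p}$ and counts good pairs $\{(t,i):y_{t,i}\ge-\alpha\}$, showing there are at least $dT/2$ of them under $p_0$. These are two sides of the same coin and give identical rates; the paper's version is marginally cleaner because the $(1-\alpha)$ prefactor tends to $1$, whereas your version needs the bad-set bound and the $\E[N]\le T/2$ slack, but the constants work out comparably.

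One small conceptual correction to your framing of the crux: the ``circularity'' you worry about (a learner concentrating its $\ell_2$ mass on the localised coordinate) is \emph{not} resolved by $\ell_p$-geometry or Hölder's inequality. It is resolved, as in the paper's proof and in your own calculation, by the fact that the expectation is taken under the null $p_0$ (where no localisation can happen, since the plays are identically distributed across worlds) together with averaging over the uniform prior, which gives $\sum_i\E_{p_0}[x_{t,i}^2/\|x_t\|_2^2]=1$ unconditionally. The $\ell_p$-budget is used only for the pigeonhole bound $|\{i:x_{t,i}\le-1/2\}|\le 2^p$ and for calibrating $\sigma$ so that the sub-gradients stay Lipschitz; it plays no role in controlling the information gain.
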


\subsubsection{Proof}

Before the start of the game, draw $Y \sim Unif(1,...,d)$ and define the losses as $\ell_t(x) = x^T g_t^Y$ where
\begin{align*}
    g_{t,i}^Y \sim \begin{cases}
        \normal(0, \sigma^2), & \text{ if } i \neq Y, \\
        \normal(1/2, \sigma^2), & \text{ if } i = Y,
    \end{cases}
\end{align*}
where $\sigma = (8\sqrt{{p_\star}}d^{1/{p_\star}})^{-1}$. We show that $\E_Y \E_{\ell_1,...,\ell_T} R_T \geq T/16$.

Fix $\alpha \in [0,1]$ and define $A_i(\alpha) = \bcb{t :y_{t,i} \geq -\alpha}$. Then
\begin{align*}
    \E \bsb{R_T | Y = i} & = \frac{T}{2} + \frac{1}{2} \E \Bsb{\sum_{t=1}^T y_{t,i} | Y = i} \\
    & = \frac{T}{2} + \frac{1}{2} \E \Bsb{\sum_{t \notin A_i(\alpha)} y_{t,i} + \sum_{t \in A_i(\alpha)} y_{t,i} | Y = i} \\
    & \geq \frac{T}{2} + \frac{1}{2} \E \Bsb{\sum_{t \notin A_i(\alpha)} (-1) + \sum_{t \in A_i(\alpha)} (-\alpha) | Y = i} \\
    & = \frac{T}{2} - \frac{1}{2} \E \Bsb{T - \abs{A_i(\alpha)} + \alpha \abs{A_i(\alpha)} | Y = i} \\
    & = (1-\alpha) \frac{1}{2} \E \Bsb{\abs{A_i(\alpha)} | Y = i} \\
    & = (1-\alpha) \frac{1}{2} \E \Bsb{ \sum_{t=1}^T \I \bcb{y_{t,i} \geq - \alpha} | Y = i}.
\end{align*}
The following lemma bounds the expected number of rounds where the learner suffers large regret (as measured by $\alpha$) when $Y = i$ compared to an environment where all coordinates of $g_t$ are $0$-mean for all $t$ (i.e. there is no better direction). We denote $E_{p_0}$ expectations with respect to this environment. The proof is in \cref{sec:proof_lemma_bandit_smallp_TV_bound}.

\begin{lemma}\label{lemma:proof_bandit_smallp_TV_bound}
    Let $\sigma^2_t = \norm{y_t}^2_2 \cdot \sigma^2$, then
    \begin{align*}
        \E \bsb{\sum_{t=1}^T \I\bcb{y_{t,i} \geq - \alpha} | Y = i} \geq \E_{p_0} \bsb{\sum_{t=1}^T \I\bcb{y_{t,i} \geq - \alpha}} - T \sqrt{\sum_{t=1}^T \E_{p_0} \Bsb{\frac{y_{t,i}^2}{ 8 \sigma_t^2}}} \\
    \end{align*}
\end{lemma}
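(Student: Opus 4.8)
The plan is to treat $Z := \sum_{t=1}^T \I\bcb{y_{t,i} \geq -\alpha}$ as a bounded statistic of the observed feedback and run a change-of-measure argument between the planted environment $\{Y=i\}$, whose law on the feedback sequence I denote $p_i$ (so $\E[\,\cdot\mid Y=i] = \E_{p_i}[\,\cdot\,]$), and the null environment $p_0$ in which $g_{t,j}\sim\normal(0,\sigma^2)$ for all $t$ and all $j$ (no signal, same noise). Since a randomised learner is a mixture of deterministic ones (as noted at the start of this section), I may assume the learner is deterministic; then for each round $t$ the played point $y_t$, and hence $\sigma_t^2 = \norm{y_t}_2^2\,\sigma^2$, are the \emph{same} fixed functions of the past feedbacks $(f_1,\dots,f_{t-1})$ under both $p_i$ and $p_0$, where $f_s := \langle y_s, g_s\rangle$ is the scalar revealed in round $s$. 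In particular $Z$ is a deterministic function of the feedback sequence with values in $[0,T]$.

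First I would record the variational bound: since $Z\in[0,T]$,
\[
    \E_{p_i}[Z] - \E_{p_0}[Z] \;\geq\; -\,T\cdot\TV(p_i, p_0),
\]
a standard consequence of the variational characterisation of total-variation distance. Then, using symmetry of $\TV$ to orient the divergence conveniently and Pinsker's inequality,
\[
    \TV(p_i, p_0) \;=\; \TV(p_0, p_i) \;\leq\; \sqrt{\tfrac{1}{2}\,\KL(p_0\,\|\,p_i)}.
\]

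The main step is the chain-rule expansion of $\KL(p_0\,\|\,p_i)$. Conditionally on $(f_1,\dots,f_{t-1})$, the point $y_t$ is a fixed vector and $g_t$ is Gaussian with covariance $\sigma^2 I$ and mean $0$ under $p_0$ (resp.\ $\tfrac{1}{2}e_i$ under $p_i$), so the observation $f_t = \langle y_t, g_t\rangle$ is Gaussian with common variance $\sigma_t^2$ and mean $0$ under $p_0$ versus $\tfrac{1}{2}y_{t,i}$ under $p_i$. Since the KL between two Gaussians of common variance $\sigma_t^2$ whose means differ by $\tfrac{1}{2}y_{t,i}$ equals $\tfrac{(y_{t,i}/2)^2}{2\sigma_t^2}=\tfrac{y_{t,i}^2}{8\sigma_t^2}$, the chain rule for KL gives
\[
    \KL(p_0\,\|\,p_i) \;=\; \sum_{t=1}^T \E_{(f_1,\dots,f_{t-1})\sim p_0}\Bsb{\KL\brb{\normal(0,\sigma_t^2)\,\|\,\normal(\tfrac{1}{2} y_{t,i},\sigma_t^2)}} \;=\; \sum_{t=1}^T \E_{p_0}\Bsb{\frac{y_{t,i}^2}{8\sigma_t^2}},
\]
with the convention that a round with $y_t=0$ (hence $\sigma_t=0$) contributes $0$, the two conditional laws then being the common point mass at $0$. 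Chaining the three displays, using $\sqrt{\tfrac{1}{2}x}\leq\sqrt{x}$, and rearranging yields exactly the claimed inequality.

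I do not expect a serious obstacle: the only points needing care are (i) the reduction to deterministic learners, which is precisely what makes $Z$, $y_t$ and $\sigma_t$ measurable with respect to the past feedback so that both the variational bound and the chain rule for KL apply cleanly, and (ii) the degenerate rounds $y_t=0$, handled by the convention above. This is essentially the template already used for \cref{lemma:proof_bandit_bigp_TV_bound}, adapted from comparing two planted coordinates to comparing a planted coordinate against the null environment.
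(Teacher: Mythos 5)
Your proposal is correct and takes essentially the same approach as the paper: bound the difference of expectations of the bounded statistic $\sum_t \I\{y_{t,i}\geq -\alpha\}$ by $T\cdot\TV(p_0,p_i)$, apply Pinsker, expand $\KL(p_0\,\|\,p_i)$ by the chain rule using that $y_t$ (hence $\sigma_t$) is a deterministic function of past feedback, recognise each conditional term as a Gaussian KL with common variance $\sigma_t^2$ and mean gap $\tfrac12 y_{t,i}$, and finally drop the factor $\tfrac12$ via $\sqrt{x/2}\leq\sqrt{x}$. The only cosmetic difference is that the paper bounds each round's indicator by $\TV$ separately and then sums, while you apply the variational bound once to the $[0,T]$-valued sum; your explicit treatment of the degenerate $y_t=0$ rounds is a small but welcome addition the paper leaves implicit.
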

From the lemma, we have
\begin{align*}
    \E\bsb{R_T | Y = i} \geq (1-\alpha) \frac{1}{2} \Bcb{\E_{p_0} \bsb{\sum_{t=1}^T \I\bcb{y_{t,i} \geq - \alpha}} - T \sqrt{\sum_{t=1}^T \E_{p_0} \Bsb{\frac{ y_{t,i}^2}{8\sigma_t^2}}}}.
\end{align*}
Taking an expectation with respect to $Y$ we have:
\begin{align*}
    \E \bsb{R_T} & = \frac{1}{d} \sum_{i=1}^d \E\bsb{R_T | Y = i} \\
    & \geq \frac{(1-\alpha) }{2d} \sum_{i=1}^d \Bcb{\E_{p_0} \bsb{\sum_{t=1}^T \I\bcb{y_{t,i} \geq - \alpha}} - T \sqrt{\sum_{t=1}^T \E_{p_0} \Bsb{\frac{ y_{t,i}^2}{8\sigma_t^2}}}} \\
    & = \frac{(1-\alpha)}{2d} \Bcb{\E_{p_0} \bsb{\sum_{t=1}^T \sum_{i=1}^d \I\bcb{y_{t,i} \geq - \alpha}} - T \sum_{i=1}^d \sqrt{\sum_{t=1}^T \E_{p_0} \Bsb{\frac{ y_{t,i}^2}{8\sigma_t^2}}}}.
\end{align*}

For the first term: fix $\alpha = (d/2)^{-1/p}$ and note that if $\sum_{i=1}^d \I\bcb{y_{t,i} \geq - \alpha} < d/2$, then there are more than $d/2$ coordinates of $y_t$ whose value is less than $-\alpha$, this means
\begin{align*}
    \norm{y_t}_p^p > \frac{d}{2} \alpha^p = \frac{d}{2} \frac{2}{d} = 1,
\end{align*}
which contradicts $y_t \in \cB_p$ so we must have $\sum_{i=1}^d \I\bcb{y_{t,i} \geq - \alpha} \geq d/2$.

For the second term, using Jensen's inequality and the concavity of $\sqrt{x}$, 
\begin{align*}
    \frac{1}{d} \sum_{i=1}^d \sqrt{\sum_{t=1}^T \E_{p_0} \Bsb{\frac{ y_{t,i}^2}{8 \sigma_t^2}}} \leq \sqrt{\frac{1}{d} \sum_{i=1}^d \sum_{t=1}^T \E_{p_0} \Bsb{\frac{ y_{t,i}^2}{8 \sigma_t^2}}} = \sqrt{\frac{1}{d} \sum_{t=1}^T \E_{p_0} \Bsb{\frac{ \norm{y_t}_2^2}{8 \sigma_t^2}}} = \frac{1}{2\sigma} \sqrt{\frac{T}{2d}}.
\end{align*}
Combining we have
\begin{align*}
    \E \bsb{R_T} & \geq  \frac{1-\alpha}{2} \Brb{\frac{T}{2} - T \frac{1}{2\sigma} \sqrt{\frac{T}{2d}}}.
\end{align*}
\begin{itemize}
    \item Since $d \geq e^2 \geq 2^{p+1}$, $1-\alpha = 1 - (2/d)^{1/p} \geq 1 - 2^{-p/p} = 1/2$.
    \item If $d \geq (128p_\star T)^2$ then $d \geq 2T / \sigma^2$ and:
    \begin{align*}
        \frac{T}{2} - T \frac{1}{2\sigma} \sqrt{\frac{T}{2d}} \geq \frac{T}{2} - \frac{T}{4} = \frac{T}{2}.
    \end{align*}
    The condition on $d$ follows from the definition of $\sigma$ and ${p_\star} \geq 4$:
    \begin{align*}
        d \geq (128 {p_\star} T)^2 \geq (128qT)^{1/(1-2/{p_\star})} & \implies d^{1-2/{p_\star}} \geq 128qT \implies d \geq \frac{128 {p_\star} T}{d^{2/{p_\star}}} = \frac{2T}{\sigma^2}.
    \end{align*}
\end{itemize}
We hence have $\E\bsb{R_T} \geq T / 16$.  The following section ensures that the Lipschitz-condition is satisfied with high-probability.

\subsubsection{Bound on sub-gradients}

Recall that $p \leq 4/3$, so ${p_\star} \geq 4$. Given $Y = i$, $g_t^Y \sim \normal(\frac{1}{2} e_i, \sigma^2 I_d)$. So $g_t = \sigma X + \frac{1}{2} e_i$ where $X \sim \normal(0,I_d)$. From \cite{winkelbauer2012moments}, we have
\begin{align*}
    & \E \bsb{\norm{X}_{p_\star}} \leq \Brb{\E \Bsb{\sum_{i=1}^d \abs{X_i}^{p_\star}} }^{1/{p_\star}} = \Brb{\E \Bsb{\sum_{i=1}^d 2^{{p_\star}/2} \frac{\Gamma(({p_\star}+1)/2)}{\sqrt{\pi}}}}^{1/{p_\star}}.
\end{align*}
From \cite{batir2017bounds} (Theorem 2.2), we have
\begin{align*}
    & \Gamma\Brb{\frac{{p_\star}+1}{2}} = \Gamma\Brb{\frac{{p_\star}-1}{2} + 1} \leq \sqrt{2\pi} \Brb{\frac{{p_\star}-1}{2}}^{({p_\star}-1)/2} \exp\Brb{-\frac{{p_\star}-1}{2}} \sqrt{2 \frac{{p_\star}-1}{2}} \leq 2\sqrt{\pi}{p_\star}^{{p_\star}/2}e^{-({p_\star}-1)/2} \\
    \implies & \Brb{2^{{p_\star}/2} \frac{\Gamma(({p_\star}+1)/2)}{\sqrt{\pi}}}^{1/{p_\star}} \leq 2 \sqrt{{p_\star}} \\
    \implies & \E \bsb{\norm{X}_{p_\star}} \leq 2 \sqrt{{p_\star}} d^{1/{p_\star}} \\
    \implies & \E \bsb{\norm{g_t}_{p_\star}} \leq 2 \sqrt{{p_\star}} \sigma d^{1/{p_\star}} + \frac{1}{2}.
\end{align*}
Fix $\delta > 0$. By Theorem 1.1 in \cite{paouris2017random} for some constants $C_1, c_1 > 0$, 
\begin{align*}
    \pr \Brb{\norm{X}_{p_\star} \leq (1+\beta) \E \bsb{\norm{X}_{p_\star}}} \geq 1 - C_1 \exp(-c_1 \beta {p_\star} d^{2/{p_\star}}).
\end{align*}
Assuming $d \geq \Brb{\frac{1}{c_1 {p_\star}} \log \frac{C_1 T}{\delta}}^{{p_\star}/2} $, we have $\beta = {\frac{1}{c_1qd^{2/{p_\star}}} \log \frac{C_1 T }{\delta}} \leq 1$  and
\begin{align*}
    \pr \Brb{\norm{X}_{p_\star} \leq (1+\beta) \E \bsb{\norm{X}_{p_\star}}} \geq 1 - \frac{\delta}{T}.
\end{align*}
So with probability at least $1- \delta/T$
\begin{align*}
    & \norm{X}_{p_\star} \leq (1+\beta) \E \bsb{\norm{X}_{p_\star}} \leq 2 \E \bsb{\norm{X}_{p_\star}} \leq 4 \sqrt{{p_\star}} d^{1/{p_\star}}, \\
    \implies & \norm{g_t}_{p_\star} \leq 4 \sqrt{{p_\star}} \sigma d^{1/{p_\star}} + \frac{1}{2} \leq 1,
\end{align*}
where the final inequality follows from $\sigma \leq \frac{1}{8\sqrt{{p_\star}} d^{1/{p_\star}}}$

By a union bound over all rounds, with probability $1-\delta$, $\norm{g_t}_{p_\star} \leq 1$ for all rounds $t$.

\subsubsection[Proof of Lemma \ref{lemma:proof_bandit_smallp_TV_bound}]{Proof of Lemma \ref{lemma:proof_bandit_smallp_TV_bound}}\label{sec:proof_lemma_bandit_smallp_TV_bound}

Given $Y = i$, the observed feedback at round $t$ is exactly
\begin{align*}
    f_t^Y := y_t^T g_t^Y \sim \normal(\frac{1}{2} y_{t,i}, \sigma^2_t), \text{ where } \sigma^2_t = \norm{y_t}^2_2 \sigma^2.
\end{align*}
Denote $p_i$ for the law of the observed feedback up to time $T$ given $Y = i$, i.e., the law of $(f_1^i ,...,f_T^i)$. Denote $p_0$ (use $Y = 0$ in notation), the law of the observed feedback up to time $T$ when all coordinates of $g_t$ are $0$-mean for all $t$ (i.e. there is no better direction). Under $p_0$, $f_t^0 \sim \normal(0, \sigma_t^2)$. By the definition of the total-variation distance ($\TV$),
\begin{align*}
    & \E \bsb{\I\bcb{y_{t,i} \geq - \alpha} | Y = 0} - \E \bsb{\I\bcb{y_{t,i} \geq - \alpha} | Y = i} \leq \TV(p_0, p_i) \\
    \implies & \E \bsb{\sum_{t=1}^T \I\bcb{y_{t,i} \geq - \alpha} | Y = 0} - \E \bsb{\sum_{t=1}^T \I\bcb{y_{t,i} \geq - \alpha} | Y = i} \leq T \TV(p_0, p_i).
\end{align*}
By Pinsker's inequality we have
\begin{align*}
    \TV(p_{0}, p_{i}) \leq \sqrt{\frac{1}{2} \KL(p_{0}, p_{i})}.
\end{align*}
By the chain rule for the KL divergence / operations on conditional densities:
\begin{align*}
    \KL(p_{0}, p_{i}) & = \E_{(f_1, ..., f_T) \sim p_{0}} \Bsb{\log \frac{p_{0} \brb{f_1, ..., f_T} }{p_{i} \brb{f_1, ..., f_T}}} \\
    & = \sum_{t=1}^T  \E_{(f_1, ..., f_t) \sim p_{0}} \Bsb{\log \frac{p_{0} \brb{f_t | f_{t-1} ..., f_1} }{p_{i} \brb{f_t | f_{t-1} ..., f_1}}} \\
    & = \sum_{t=1}^T  \E_{(f_1, ..., f_{t-1}) \sim p_{0}}   \Bcb{\E_{f_t \sim p_{0}(\cdot | f_1, ..., f_{t-1})} \Bsb{\log \frac{p_{0} \brb{f_t | f_{t-1} ..., f_1} }{p_{i} \brb{f_t | f_{t-1} ..., f_1}}}}.
\end{align*}
Now since $y_t$ is a deterministic function of $f_1, ..., f_{t-1}$ and given $y_t$, $f_t \sim \normal(0, \sigma_t^2)$ under $p_0$ and $f_t \sim \normal(\frac{1}{2} y_{t,i}, \sigma_t^2)$ under $p_i$, we have that the inner expectation is a KL divergence between Gaussians:
\begin{align*}
    & \E_{f_t \sim p_{0}(\cdot | f_1, ..., f_{t-1})} \Bsb{\log \frac{p_{0} \brb{f_t | f_{t-1} ..., f_1} }{p_{i} \brb{f_t | f_{t-1} ..., f_1}}} = \frac{\brb{0 - y_{t,i}/2}^2}{2 \sigma^2_t} = \frac{ y_{t,i}^2}{8\sigma_t^2} \\
    \implies & \KL(p_{0}, p_{i}) = \sum_{t=1}^T \E_{(f_1, ..., f_{t-1}) \sim p_{0}} \Bsb{\frac{ y_{t,i}^2}{8 \sigma_t^2}} = \sum_{t=1}^T \E_{p_{0}} \Bsb{\frac{ y_{t,i}^2}{8 \sigma_t^2}} \\
    \implies & \TV(p_{0}, p_{i}) \leq \sqrt{\sum_{t=1}^T \E_{p_0} \Bsb{\frac{ y_{t,i}^2}{8 \sigma_t^2}} }.
\end{align*}
Combining gives the result.

\subsection[Case p -> 1]{Case $\boldsymbol{p \rightarrow 1}$}

\begin{theorem}
    Fix $T$ and $\delta>0$. Consider $d > \max \Bcb{T/\delta, 8^4 e^4 T^2, 8}$ and $p \in [1, 1 + 1/\log d]$. For any OCO algorithm with bandit feedback on $V = \cB_p$, there exists a sequence of random linear losses $(\ell_t)_{t \in [T]}$ with sub-gradients $(g_t)_{t \in [T]}$ such that $\norm{g_t}_{p_\star} \leq 1$ for all rounds $t$ with probability at least $1-\delta$ and
    \begin{align*}
        \E \bsb{R_T} \geq \frac{T}{16},
    \end{align*}
    where the expectation is with respect to the randomness of the losses.
\end{theorem}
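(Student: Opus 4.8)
The plan is to recycle the loss construction and information-theoretic lower bound of the case $p\in(1,4/3]$ (\cref{thm:High_dim_Bandit_LB_smallp}) almost verbatim, and to redo only the high-probability bound on the Lipschitz constant — the single place where the earlier argument breaks down, since here $p_\star$ is large (and $+\infty$ at $p=1$). Concretely, I would draw $Y\sim\mathrm{Unif}(1,\dots,d)$ before the game and set $\ell_t(x)=x^\top g_t^Y$ with $g_{t,i}^Y\sim\normal\big(\tfrac12\I\{i=Y\},\sigma^2\big)$ i.i.d., for a noise level $\sigma$ fixed below. Exactly as in \cref{sec:proof_bandit_LB_smallp}, with $\alpha=(2/d)^{1/p}$ one gets $\E[R_T\mid Y=i]\ge(1-\alpha)\tfrac12\,\E\big[\sum_{t=1}^T\I\{y_{t,i}\ge-\alpha\}\mid Y=i\big]$; comparing against the null environment $p_0$ in which every coordinate of $g_t$ is zero-mean, Pinsker's inequality together with the chain rule for KL (using $\sigma_t^2=\norm{y_t}_2^2\,\sigma^2$, so $\KL(p_0,p_i)\le\sum_t\E_{p_0}[y_{t,i}^2/(8\sigma_t^2)]$) and Jensen's inequality give
\begin{align*}
    \E[R_T]\;\ge\;\tfrac{1-\alpha}{2}\Big(\tfrac{T}{2}-\tfrac{T}{\sqrt{8}\,\sigma}\sqrt{T/d}\Big).
\end{align*}
The counting bound $\sum_i\I\{y_{t,i}\ge-\alpha\}\ge d/2$ (else $\norm{y_t}_p^p>\tfrac d2\alpha^p=1$) is insensitive to $p$ and still holds at $p=1$, and $1-\alpha\ge\tfrac12$ whenever $d\ge 2^{p+1}$, implied by $p\le 1+1/\log d\le 2$ and $d>8$. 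Hence $\E[R_T]\ge T/16$ as soon as $d\ge 2T/\sigma^2$.

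It remains to choose $\sigma$ and bound the subgradients, and this is the new ingredient. For $p\in[1,1+1/\log d]$ we have $p_\star\ge\log d$, hence $d^{1/p_\star}\le e$, so by the triangle inequality in $\ell_{p_\star}$ and with $X_t\sim\normal(0,I_d)$,
\begin{align*}
    \norm{g_t^Y}_{p_\star}\;\le\;\sigma\norm{X_t}_{p_\star}+\tfrac12\;\le\;\sigma\,d^{1/p_\star}\norm{X_t}_\infty+\tfrac12\;\le\;\sigma e\norm{X_t}_\infty+\tfrac12 .
\end{align*}
A Gaussian maximal inequality gives $\pr(\norm{X_t}_\infty>s)\le 2d\,e^{-s^2/2}$, so $s^\star:=\sqrt{2\log(2d^2)}\le\sqrt{6\log d}$ makes this at most $1/d$, and a union bound over the $T$ rounds gives $\pr(\exists\,t:\norm{X_t}_\infty>s^\star)\le T/d<\delta$ by the hypothesis $d>T/\delta$. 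Taking $\sigma=\big(2e\sqrt{6\log d}\,\big)^{-1}\le\big(2e\,s^\star\big)^{-1}$ then forces $\norm{g_t^Y}_{p_\star}\le\sigma e\,s^\star+\tfrac12\le 1$ for every $t$ on this event; and with this $\sigma$, $2T/\sigma^2=48e^2T\log d\le d$ whenever $d\ge 8^4e^4T^2$ (as $T^2$ dominates $T\log T$). Combined with the displayed regret bound this yields $\E[R_T]\ge T/16$ with probability at least $1-\delta$ on the Lipschitz event, and the residual conditions $d>8$, $d>e^2$ are used only for $2^{p+1}\le d$ and $d^{1/p_\star}\le e$.

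The genuinely new obstacle is precisely this last step: because $p_\star$ can be arbitrarily large, and infinite at $p=1$, the $\ell_{p_\star}$-moment and concentration tools used for $p>4/3$ and $p\in(1,4/3]$ in \cref{sec:proofs_bandit_feedback} degrade or do not apply, and one must route the subgradient bound through $\norm{\cdot}_\infty$ via $d^{1/p_\star}\le e$ and a Gaussian maximal inequality. This forces $\sigma$ to scale like $1/\sqrt{\log d}$ rather than like $1/(\sqrt{p_\star}\,d^{1/p_\star})$, which is exactly why the dimension must satisfy $d\gtrsim T\log d$, conveniently subsumed by $d>8^4e^4T^2$. Everything else — the construction, the Pinsker/KL lower bound, and the $\ell_p$-ball counting argument — carries over unchanged from the $p\in(1,4/3]$ case.
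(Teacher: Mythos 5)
Your proof is correct and follows essentially the same approach as the paper: identical loss construction, identical Pinsker/KL information-theoretic lower bound, and the same mechanism for the Lipschitz control (routing the $\ell_{p_\star}$-norm bound through $\norm{\cdot}_\infty$ via $d^{1/p_\star}\le e$ and forcing $\sigma\asymp 1/\sqrt{\log d}$). The only difference is the concentration tool for $\norm{X_t}_\infty$ — you use a union-bound Gaussian maximal inequality where the paper uses Borell--TIS — which changes $\sigma$ only by a constant factor and leads to the same conclusion.
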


\subsubsection{Proof}

We use almost the same loss construction as the proof of \cref{thm:High_dim_Bandit_LB_smallp}. Before the start of the game, draw $Y \sim Unif(1,...,d)$ and define the losses as $\ell_t(x) = x^T g_t^Y$ where
\begin{align*}
    g_{t,i}^Y \sim \begin{cases}
        \normal(0, \sigma^2), & \text{ if } i \neq Y, \\
        \normal(1/2, \sigma^2), & \text{ if } i = Y,
    \end{cases}
\end{align*}
where $\sigma = (4\sqrt{2} \exp(1) \sqrt{\log d})^{-1}$. The only difference with the proof of \cref{thm:High_dim_Bandit_LB_smallp} being the value of $\sigma$. We also follow the same steps until we reach for $\alpha = (2/d)^{1/p}$:
\begin{align*}
    \E \bsb{R_T} & \geq  \frac{1-\alpha}{2} \Brb{\frac{T}{2} - T \frac{1}{2\sigma} \sqrt{\frac{T}{2d}}}.
\end{align*}
\begin{itemize}
    \item From $d \geq 8$, we have $p \leq 2$ and $2^{p+1} \leq 8 \leq d$ so $1-\alpha = 1 - (2/d)^{1/p} \geq 1 - 2^{-p/p} = 1/2$.
    \item From the definition of $\sigma = (4\sqrt{2} \exp(1) \sqrt{\log d})^{-1}$
    \begin{align*}
        \frac{1}{2 \sigma} \sqrt{\frac{T}{2d}} = 2 \exp(1) \sqrt{\frac{T \log d }{d}} \leq 2 \exp(1) \sqrt{\frac{T \sqrt{d}}{d}} = 2 \exp(1) \sqrt{\frac{T}{\sqrt{d}}} \leq \frac{1}{4}
    \end{align*}
    since $d \geq 8^4 e^4 T^2$. This gives: 
    \begin{align*}
        \frac{T}{2} - T \frac{1}{2\sigma} \sqrt{\frac{T}{2d}} \geq \frac{T}{2} - \frac{T}{4} = \frac{T}{2}.
    \end{align*}
\end{itemize}
We hence have $\E\bsb{R_T} \geq T / 16$.  The following section ensures that the Lipschitz-condition is satisfied with high-probability.

\subsubsection{Bound on sub-gradients}

Recall that $p \leq 1 + \frac{1}{\log d}$ so ${p_\star} \geq \frac{\log d}{1} + 1$. We have
\begin{align*}
    \E \bsb{\norm{X}_\infty} \leq \sqrt{2 \log d}.
\end{align*}
By the Borell-TIS inequality,
\begin{align*}
    \pr\Brb{\norm{X}_\infty > \E \bsb{\norm{X}_\infty} + \beta} \leq \exp(-\beta^2 / 2).
\end{align*}
So with probability $1-\delta/T$, we have (using $d > T/\delta$)
\begin{align*}
    & \norm{X}_\infty \leq \E \bsb{\norm{X}_\infty} + \sqrt{2 \log \frac{T}{\delta}} \leq \sqrt{2 \log d} + \sqrt{2 \log \frac{T}{\delta}} \leq 2\sqrt{2 \log d} \\
    \implies & \norm{g_t}_{p_\star} \leq \sigma \norm{X}_{p_\star} + \frac{1}{2} \leq \sigma d^{1/{p_\star}} \norm{X}_\infty + \frac{1}{2} \leq \sigma \exp(1) 2 \sqrt{2 \log d} + \frac{1}{2} = 1
\end{align*}
where the final equality follows from $\sigma = (4\sqrt{2} \exp(1) \sqrt{\log d})^{-1}$ and we also used that $d^{1/{p_\star}} \leq e$.

By a union bound over all rounds, with probability $1-\delta$, $\norm{g_t}_{p_\star} \leq 1$ for all rounds $t$.

\section{Results for Online Mirror Descent (OMD)}\label{sec:appendix_OMD}

\subsection{OMD with uniformly-convex regularisation}

The results in this section are from \cite{sridharan2010convex} (Proposition 7). We include them for completeness.

Let $\psi : \cR^d \rightarrow \cR$ be a proper, closed and differentiable $\mu$-uniformly convex function\footnote{The function $\psi$ can be defined on a subset $X \subseteq \cR^d$ but conditions on its behaviour on the boundary of $X$ are then required for OMD to be well defined. For simplicity, we consider $\psi$ defined on $\cR^d$, though the results in this section hold more generally (see Theorem 6.7 of \cite{Orabona2019-uy} for more detail).} on $V$ of degree $r > 2$ w.r.t.\ a norm $\norm{\cdot}$. The Bregman Divergence w.r.t.\ $\psi$ is defined for all $x,y \in \cR^d$ as
\begin{align*}
    D_\psi(x,y) = \psi(x) - \psi(y) - \langle \nabla \psi(y), x - y \rangle. 
\end{align*}
Given $x_1 \in V$, at time-step $t = 1, ..., T$, Online Mirror Descent (OMD) with step-size $\eta_t > 0$ outputs the following update where $g_t \in \partial \ell_t(x_t)$,
\begin{align}\label{eq:OMD_update}
    x_{t+1} = \argmin_{x \in V} \Bcb{\eta_t \langle g_t, x \rangle + D_\psi(x, x_t)}.
\end{align}
The standard regret bound of OMD stems from the following one-step regret bound lemma (e.g.\ see Lemma 6.9 in \cite{Orabona2019-uy}).

\begin{lemma}\label{lemma:OMD-analysis-base}
    The iterates (\ref{eq:OMD_update}) of OMD satisfy for all $u \in V$,
    \begin{align*}
        \ell_t(x_t) - \ell_t(u) \leq \langle g_t, x_t - x_{t+1} \rangle + \frac{D_\psi(u, x_{t}) - D_\psi(u, x_{t+1}) - D_\psi(x_{t+1}, x_{t})}{\eta_t}
    \end{align*}
\end{lemma}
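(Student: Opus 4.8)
The plan is to follow the standard three-term decomposition for mirror descent. First I would use the convexity of $\ell_t$ together with $g_t \in \partial \ell_t(x_t)$ to write $\ell_t(x_t) - \ell_t(u) \leq \langle g_t, x_t - u\rangle$, and then split the right-hand side as $\langle g_t, x_t - x_{t+1}\rangle + \langle g_t, x_{t+1} - u\rangle$. The first summand already appears in the claimed bound, so everything reduces to controlling $\langle g_t, x_{t+1} - u\rangle$.

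For that term, I would invoke the first-order optimality condition for the constrained minimisation defining $x_{t+1}$ in (\ref{eq:OMD_update}). Since $x_{t+1}$ minimises $x \mapsto \eta_t \langle g_t, x\rangle + D_\psi(x, x_t)$ over the convex set $V$, and this objective is differentiable with gradient $\eta_t g_t + \nabla\psi(x) - \nabla\psi(x_t)$, the optimality characterisation (Lemma 5.4 of \cite{bubeck2011introduction}, already used for the projection arguments earlier in the paper) gives $\langle \eta_t g_t + \nabla\psi(x_{t+1}) - \nabla\psi(x_t), u - x_{t+1}\rangle \geq 0$ for every $u \in V$. Rearranging yields $\eta_t\langle g_t, x_{t+1} - u\rangle \leq \langle \nabla\psi(x_{t+1}) - \nabla\psi(x_t), u - x_{t+1}\rangle$.

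The final step is the three-point (generalised Pythagorean) identity for Bregman divergences: for any $a,b,c$ one has $\langle \nabla\psi(b) - \nabla\psi(c), a - b\rangle = D_\psi(a,c) - D_\psi(a,b) - D_\psi(b,c)$, which follows by expanding each divergence from its definition and cancelling. Applying it with $a = u$, $b = x_{t+1}$, $c = x_t$ turns the bound above into $\eta_t\langle g_t, x_{t+1} - u\rangle \leq D_\psi(u, x_t) - D_\psi(u, x_{t+1}) - D_\psi(x_{t+1}, x_t)$; dividing by $\eta_t$ and adding back $\langle g_t, x_t - x_{t+1}\rangle$ produces exactly the claimed inequality.

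I do not expect a genuine obstacle here, since this is a classical one-step bound; the only point requiring a little care is the justification of the first-order optimality condition, which implicitly needs $\psi$ differentiable on (a neighbourhood of) the iterates and $D_\psi(\cdot, x_t)$ well-behaved on $V$. The footnote attached to the statement already flags that for general domains one should appeal to Theorem 6.7 of \cite{Orabona2019-uy}; since here $\psi$ is taken differentiable on all of $\cR^d$, this is immediate.
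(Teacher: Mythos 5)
Your argument is correct and is exactly the standard proof that the paper defers to (it cites Lemma 6.9 of \cite{Orabona2019-uy} rather than reproducing it): subgradient inequality, split through $x_{t+1}$, first-order optimality of the OMD update, and the three-point Bregman identity. No gap; this matches the cited reference's approach.
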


From Lemma \ref{lemma:OMD-analysis-base} and the uniform convexity of $\psi$, we can bound the regret of OMD.

\begin{theorem}\label{thm:OMD_with_unifcvx}
    The iterates (\ref{eq:OMD_update}) of OMD with decreasing step-size $\eta_{t+1} \leq \eta_t$ ($1 \leq t \leq T$) satisfy for all $u \in V$ (recall that ${r_\star}$ is the conjugate of $r$, i.e.\ $1/r + 1/{r_\star} = 1$),
    \begin{align}\label{eq:unif_cvx_OMD_bound_varying_ss}
        \sum_{t=1}^T \ell_t(x_t) - \ell_t(u) \leq \max_{1\leq t\leq T} \frac{D_\psi(u,x_t)}{\eta_T} + \frac{1}{{r_\star} \mu^{{r_\star}-1}} \sum_{t=1}^T \eta_t^{{r_\star}-1} \dnorm{g_t}^{{r_\star}}.
    \end{align}
    If the step-sizes are constant: $\eta_t = \eta$ ($1 \leq t \leq T$), we have
    \begin{align}\label{eq:unif_cvx_OMD_bound}
        \sum_{t=1}^T \ell_t(x_t) - \ell_t(u) \leq \frac{D_\psi(u, x_1)}{\eta} + \frac{\eta^{{r_\star}-1}}{{r_\star} \mu^{{r_\star}-1}} \sum_{t=1}^T \dnorm{g_t}^{{r_\star}}.
    \end{align}
\end{theorem}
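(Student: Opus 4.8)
The plan is to derive both bounds from the one-step inequality in \cref{lemma:OMD-analysis-base}, summed over $t=1,\dots,T$, and then to treat separately the telescoping Bregman terms $\brb{D_\psi(u,x_t) - D_\psi(u,x_{t+1})}/\eta_t$ and the ``stability'' terms $\langle g_t, x_t - x_{t+1}\rangle - D_\psi(x_{t+1},x_t)/\eta_t$. This is the standard OMD analysis (as in \cite{sridharan2010convex}, Proposition 7) with the quadratic lower bound coming from strong convexity replaced by the degree-$r$ lower bound coming from uniform convexity.

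For the Bregman terms with time-varying step-size I would use summation by parts: writing $c_t = 1/\eta_t$ and $b_t = D_\psi(u,x_t)$, one has $\sum_{t=1}^T c_t(b_t - b_{t+1}) = c_1 b_1 - c_T b_{T+1} + \sum_{t=2}^T (c_t - c_{t-1}) b_t$. Since $\eta_t$ is decreasing, $c_t - c_{t-1} \geq 0$; since $D_\psi \geq 0$, the boundary term $-c_T b_{T+1}$ can be dropped and each interior $b_t$ bounded by $\max_{1\le t\le T} D_\psi(u,x_t)$; together with $b_1 \le \max_{1\le t\le T} D_\psi(u,x_t)$ this collapses the sum to at most $\max_{1\le t\le T} D_\psi(u,x_t)/\eta_T$, giving the first term of (\ref{eq:unif_cvx_OMD_bound_varying_ss}). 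In the constant step-size case this reduces to the plain telescope $\sum_t \brb{D_\psi(u,x_t) - D_\psi(u,x_{t+1})}/\eta \leq D_\psi(u,x_1)/\eta$, the first term of (\ref{eq:unif_cvx_OMD_bound}).

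For the stability terms I would invoke the $\mu$-uniform convexity of $\psi$ of degree $r$ (\cref{def:uniform_convexity}), which applied to the definition of $D_\psi$ yields $D_\psi(x_{t+1},x_t) \geq \frac{\mu}{r}\norm{x_{t+1}-x_t}^r$. Bounding $\langle g_t, x_t - x_{t+1}\rangle \leq \dnorm{g_t}\norm{x_t-x_{t+1}}$ by H\"older and then maximising the scalar map $a \mapsto \dnorm{g_t}\,a - \frac{\mu}{r\eta_t}a^r$ over $a \geq 0$ (equivalently, Fenchel--Young for the conjugate pair $a^r/r$ and $a^{r_\star}/r_\star$) gives $\langle g_t, x_t - x_{t+1}\rangle - D_\psi(x_{t+1},x_t)/\eta_t \leq \frac{\eta_t^{r_\star-1}}{r_\star \mu^{r_\star-1}}\dnorm{g_t}^{r_\star}$, using $r/(r-1) = r_\star$ and $1/(r-1) = r_\star - 1$. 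Summing over $t$ and adding the Bregman contribution from the previous step produces (\ref{eq:unif_cvx_OMD_bound_varying_ss}); specialising to $\eta_t \equiv \eta$ gives (\ref{eq:unif_cvx_OMD_bound}).

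There is no real obstacle: the argument is routine. The only points needing care are bookkeeping of the conjugate exponents in the Fenchel--Young step and, for the varying step-size, the summation by parts — where one must use $D_\psi \ge 0$ both to discard the negative boundary term and to bound the interior coefficients by the maximum, which is exactly what turns the decreasing-$\eta_t$ structure into the $1/\eta_T$ factor.
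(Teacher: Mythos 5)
Your proposal is correct and follows essentially the same route as the paper's proof: one-step inequality from \cref{lemma:OMD-analysis-base}, uniform convexity to lower-bound $D_\psi(x_{t+1},x_t)$, H\"older plus Fenchel--Young to bound the stability term by $\frac{\eta_t^{r_\star-1}}{r_\star\mu^{r_\star-1}}\dnorm{g_t}^{r_\star}$, and Abel summation with $D_\psi\ge 0$ and decreasing $\eta_t$ to collapse the Bregman telescope to $\max_t D_\psi(u,x_t)/\eta_T$. The minor indexing difference in your summation-by-parts identity is cosmetic and matches the paper's computation.
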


\begin{proof}
By the uniform convexity of $\psi$, $D_\psi(x_{t+1}, x_{t}) \geq \frac{\mu}{r} \norm{x_t - x_{t+1}}^r$. Using this in Lemma \ref{lemma:OMD-analysis-base} along with Hölder's inequality, we have for all $u \in V$,
\begin{align*}
    \ell_t(x_t) - \ell_t(u) & \leq \langle g_t, x_t - x_{t+1} \rangle + \frac{D_\psi(u, x_{t}) - D_\psi(u, x_{t+1})}{\eta_t} - \frac{\mu}{r} \frac{\norm{x_t - x_{t+1}}^r}{\eta_t} \\
    & \leq \norm{g_t}_\star \norm{x_t - x_{t+1}} + \frac{D_\psi(u, x_{t}) - D_\psi(u, x_{t+1})}{\eta_t} - \frac{\mu}{r} \frac{\norm{x_t - x_{t+1}}^r}{\eta_t}.
\end{align*}
Consider $f(x) = \frac{1}{r}\abs{x}^r$. Then the Fenchel conjugate of $f$ is $f^\star(y) = \frac{1}{{r_\star}} \abs{y}^{r_\star}$ (see Lemma 2.2 in \cite{Kerdreux2021LocalAG}) and from Fenchel's inequality, we have $xy \leq \frac{1}{r}\abs{x}^r + \frac{1}{{r_\star}}\abs{y}^{r_\star}$, which we use in the following,
\begin{align*}
    \|g_t\|_\star \| x_t - x_{t+1} \| & = \Brb{\frac{\eta_t^{1/r}}{\mu^{1/r}} \dnorm{g_t}} \cdot \Brb{\frac{\mu^{1/r}}{\eta_t^{1/r}}\norm{x_t - x_{t+1}}} \\
    & \leq \frac{1}{{r_\star}} \Brb{\frac{\eta_t^{1/r}}{\mu^{1/r}} \dnorm{g_t}}^{{r_\star}} + \frac{1}{r} \Brb{\frac{\mu^{1/r}}{\eta_t^{1/r}}\norm{x_t - x_{t+1}}}^r \\
    & = \frac{\eta_t^{{r_\star}-1}}{{r_\star} \mu^{{r_\star}-1}} \dnorm{g_t}^{{r_\star}} + \frac{\mu}{r} \frac{\norm{x_t - x_{t+1}}^r}{\eta_t},
\end{align*}
where we used that $r = {r_\star}/({r_\star}-1)$. Plugging this into the above inequality,
\begin{align}\label{eq:one-ste$r$-OMD-bound}
    \ell_t(x_t) - \ell_t(u) & \leq \frac{\eta_t^{{r_\star}-1}}{{r_\star} \mu^{{r_\star}-1}} \dnorm{g_t}^{{r_\star}} + \frac{D_\psi(u, x_{t}) - D_\psi(u, x_{t+1})}{\eta_t}.
\end{align}
Denoting $D = \max_{1\leq t\leq T} D_\psi(u,x_t)$, the result follows by summing $t$ over all rounds,
\begin{align*}
    \sum_{t=1}^T \ell_t(x_t) - \ell_t(u) & \leq  \sum_{t=1}^T \Brb{\frac{D_\psi(u, x_t)}{\eta_t} - \frac{D_\psi(u, x_{t+1})}{\eta_t} } + \frac{1}{{r_\star} \mu^{{r_\star}-1}} \sum_{t=1}^T \eta_t^{{r_\star}-1} \dnorm{g_t}^{{r_\star}} \\
    & = \frac{D_\psi(u, x_1)}{\eta_1} - \frac{D_\psi(u, x_{T+1})}{\eta_{T}} + \sum_{t=1}^{T-1} \Brb{\frac{1}{\eta_{t+1}} - \frac{1}{\eta_t} } D_\psi(u, x_{t+1}) + \frac{1}{{r_\star} \mu^{{r_\star}-1}} \sum_{t=1}^T \eta_t^{{r_\star}-1} \dnorm{g_t}^{{r_\star}} \\
    & \leq  \frac{D}{\eta_1} + D \sum_{t=1}^{T-1} \Brb{\frac{1}{\eta_{t+1}} - \frac{1}{\eta_t} } + \frac{1}{{r_\star} \mu^{{r_\star}-1}} \sum_{t=1}^T \eta_t^{{r_\star}-1} \dnorm{g_t}^{{r_\star}} \\
    & =  \frac{D}{\eta_1} + D \Brb{\frac{1}{\eta_{T}} - \frac{1}{\eta_1} } + \frac{1}{{r_\star} \mu^{{r_\star}-1}} \sum_{t=1}^T \eta_t^{{r_\star}-1} \dnorm{g_t}^{{r_\star}} \\
    & =  \frac{D}{\eta_T} + \frac{1}{{r_\star} \mu^{{r_\star}-1}} \sum_{t=1}^T \eta_t^{{r_\star}-1} \dnorm{g_t}^{{r_\star}}.
\end{align*}
For constant step-size, the result follows similarly by summing (\ref{eq:one-ste$r$-OMD-bound}) over $t$, giving a telescoping sum,
\begin{align*}
    \sum_{t=1}^T \ell_t(x_t) - \ell_t(u) & \leq  \frac{D_\psi(u, x_1) - D_\psi(u, x_{T+1})}{\eta} + \frac{\eta^{{r_\star}-1}}{{r_\star} \mu^{{r_\star}-1}} \sum_{t=1}^T \dnorm{g_t}^{{r_\star}} \\
    & \leq  \frac{D_\psi(u, x_1)}{\eta} + \frac{\eta^{{r_\star}-1}}{{r_\star} \mu^{{r_\star}-1}} \sum_{t=1}^T \dnorm{g_t}^{{r_\star}},
\end{align*}
which concludes the proof.
\end{proof}

\subsubsection{Regret bounds}\label{sec:OMD-regretbound}
 
When we have $L$-Lipschitz losses w.r.t.\ $\norm{\cdot}$ and we can bound $D_\psi(u, x_1) < D$, then the regret bound (\ref{eq:unif_cvx_OMD_bound}) for constant step-sizes becomes
\begin{align*}
    R_T \leq \frac{D}{\eta} + \eta^{{r_\star} - 1} \frac{TL^{{r_\star}}}{{r_\star} \mu^{{r_\star}-1}}.
\end{align*}
Assuming the time-horizon $T$ is known, optimising the above bound w.r.t.\ $\eta$ using Lemma \ref{lemma:optimizing-regret-UB} gives
\begin{align}\label{eq:OMD_reg_bound_const_ss}
    R_T \leq \frac{r^{1/r}}{\mu^{1/r}} L D^{1/r} T^{1/{r_\star}},
\end{align}
for $\eta = (Dp/T)^{1/{r_\star}} \mu^{1/r} / L$. With $r = 2$, we recover the standard regret bound of OMD using a strongly-convex regulariser $R_T \leq L\sqrt{2DT/\mu}$.

\subsubsection{Anytime and adaptive bounds}\label{sec:anytime-bound}

{When $T$ is unknown,} we can use the bound in (\ref{eq:unif_cvx_OMD_bound_varying_ss}) and the time-varying step-size
\begin{align}\label{eq:OMD_anytime_regret_bound}
    \eta_t = \frac{D_{\max}^{1/{r_\star}} (r-1)^{1/{r_\star}} \mu^{1/r}}{L} \cdot \frac{1}{t^{1/{r_\star}}} \quad \text{ to get } \quad R_T \leq \frac{L D_{\max}^{1/r} r^{1/r} {r_\star}^{1/{r_\star}} T^{1/{r_\star}}}{\mu^{1/r}},
\end{align}
where $D_{\max}$ is a bound on $ \max_{1\leq t\leq T} D_\psi(u,x_t)$. Though this can be unbounded when $D$ is bounded, for our purposes of $\ell_p$-balls, $D_{\max}$ will only be a constant away from $D$. The doubling trick can also be used to obtain anytime bounds that depend on $D$ instead of $D_{\max}$ (see e.g. \cite{lattimore2020bandit}). This uses constant step-size OMD on time-horizons of doubling lengths until the unknown true $T$ is reached.

We can also obtain bounds that adapt to the sequence of observed subgradients of the form
\begin{align*}
    R_T = \frac{D_{\max}^{1/r} r^{1/r} {r_\star}^{1/{r_\star}}}{\mu^{1/r}} \cdot \Brb{\sum_{i=1}^T \norm{g_i}_\star^{r_\star}}^{1/{r_\star}}
\end{align*}
by using $\eta_t = D_{\max}^{1/{r_\star}} (r-1)^{1/{r_\star}} \mu^{1/r} / (\sum_{i=1}^T \norm{g_i}_\star^{r_\star})^{1/{r_\star}}$. This follows the same lines as for OMD with strongly convex regulariser (see Section 4.2.1 of \cite{Orabona2019-uy}).

\subsection[OMD on lp-balls]{OMD on $\ell_p$-balls}

\begin{itemize}[leftmargin=*]
    \item \textbf{For the low-dimensional setting,} consider OMD with regulariser $\phi_2(x) = \frac{1}{2} \norm{x}_2^2$. We have $D_{\max} = \sup_{x,y \in \cB_p} \frac{1}{2}\norm{x-y}_2^2  = 2 d^{1-2/p}$ and using that $\phi_2$ is $1$-strongly-convex with respect to $\norm{\cdot}_2$, we have from (\ref{eq:OMD_anytime_regret_bound}) with $r = 2$ and $\eta_t = \frac{1}{L} \sqrt{\frac{2d^{1-2/p}}{t}}$:
    \begin{align*}
        R_T \leq 2L \sqrt{2d^{1-2/p} T}.
    \end{align*}
    \item \textbf{For the high-dimensional setting,} consider OMD with regulariser $\phi_p(x) = \frac{1}{p} \norm{x}_p^p$. We have $D_{\max} = \sup_{x,y \in \cB_p} D_{\phi_p}(x,y)  = 2$ (see below) and using that $\phi_p$ is $2^{1-p}$-uniformly-convex of degree $p$ with respect to $\norm{\cdot}_p$, we have from (\ref{eq:OMD_anytime_regret_bound}) with $r = p$ and $\eta = \frac{1}{L} \Brb{\frac{p-1}{t}}^{1-1/p}$:
    \begin{align*}
        R_T \leq 2p^{1/p} p_\star^{1-1/p_\star} L T^{1-1/p}.
    \end{align*}
    To show $D_{\max} = 2$: Fix $x, y \in \cB_p$ and $\psi(x) = \frac{1}{p} \norm{x}_p^p$. The sign, power and absolute value functions below are applied component-wise to vectors.
    \begin{align*}
        D_\psi(x,y) & = \psi(x) - \psi(y) - \langle \nabla \psi(y), x-y \rangle \\
        & = \frac{1}{p} \norm{x}_p^p - \frac{1}{p}\norm{y}_p^p + \sum_{i=1}^d \Bcb{ \text{\sign}(y_i) \cdot \abs{y_i}^{p-1} (y_i - x_i)} \\
        & = \frac{1}{p} \norm{x}_p^p - \frac{1}{p} \norm{y}_p^p + \sum_{i=1}^d \Bcb{\abs{y_i}^{p} - \text{\sign}(y_i) \cdot \abs{y_i}^{p-1} x_i} \\
        & = \frac{1}{p} \norm{x}_p^p + \Brb{1 - \frac{1}{p}} \norm{y}_p^p -\sum_{i=1}^d \Bcb{\text{\sign}(y_i) \cdot \abs{y_i}^{p-1} x_i} \\
        & \leq \frac{1}{p} + \Brb{1- \frac{1}{p}} - \sum_{i=1}^d \Bcb{ \text{\sign}(y_i) \cdot \abs{y_i}^{p-1} x_i}.
    \end{align*}
    We show that the last term is bounded by $1$ by using Holder's inequality,
    \begin{align*}
        \langle \text{sign}(y) \cdot \abs{y}^{p-1}, x \rangle \leq \norm{\abs{y}^{p-1}}_q \norm{x}_p \leq 1,
    \end{align*}
    where in the last inequality we used (recall that $q = p/(p-1)$)
    \begin{align*}
        \norm{\abs{y}^{p-1}}_q & = \Brb{\sum_{i=1}^d \abs{y_i}^{q\cdot(p-1)}}^{1/q} = \Brb{\sum_{i=1}^d \abs{y_i}^{p}}^{1/q}  = \norm{y}_p^{p/q} \leq 1.
    \end{align*}
    Hence $\sup_{x,y \in \cB_p} D_\psi(x,y) \leq 2 = D_{\max}$.
    \item \textbf{We now show how to achieve anytime optimal bounds.} Fix $t_0 = \Brb{\sqrt{2} p^{1/p} p_\star^{1/p_\star} }^{2p/(p-2)} \cdot d$.
    \begin{proposition}
        Consider running OMD with the following regularizers 
        \begin{align*}
            \psi_t(x) = \begin{cases}
                \phi_p(x) = \frac{1}{p} \norm{x}_p^p, \quad \eta_t = \frac{(p-1)^{1/p_\star}}{L t ^{1/p_\star}}, & \text{ if } t \leq t_0, \\
                \phi_2(x) = \frac{1}{2} \norm{x}_2^2, \quad \eta_{t} = \frac{\sqrt{2d^{1-2/p}}}{L \sqrt{t}}, & \text{ if } t > t_0.
            \end{cases}
        \end{align*} 
        Assume $\loss_t$ convex, closed, and $\partial \loss_t(x_t)$ not empty. Then, OMD guarantees
        \begin{align*}
            R_T \leq \begin{cases}
               2p^{1/p}p_\star^{1/p_\star} L T^{1/p_\star}, & \text{ if } t \leq t_0, \\
                2L\sqrt{2Td^{1 - 2/p}}, & \text{ if } t > t_0
            \end{cases}
        \end{align*}
    \end{proposition}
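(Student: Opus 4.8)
The plan is to follow the proof of the FTRL analogue \cref{thm:anytime_optimal_regret} almost line for line, replacing its FTRL one-step bound by the OMD one-step regret bound obtained in the proof of \cref{thm:OMD_with_unifcvx} (from \cref{lemma:OMD-analysis-base} together with uniform convexity and Fenchel's inequality), and then splitting the sum over rounds at the switching time $t_0$. Throughout I use: $\phi_p$ is $\mu_p$-uniformly-convex of degree $p$ w.r.t.\ $\norm{\cdot}_p$ with $\mu_p=2^{1-p}$ (\cref{prop:UC_of_lp}); $\phi_2$ is $1$-strongly-convex w.r.t.\ $\norm{\cdot}_2$; the OMD iterates lie in $\cB_p$, so $D_{\phi_p}(u,x_t)\le D_{\max,p}:=2$ and $D_{\phi_2}(u,x_t)=\frac{1}{2}\norm{u-x_t}_2^2\le D_{\max,2}:=2d^{1-2/p}$ (computed in \cref{sec:appendix_OMD}); and the losses are $L$-Lipschitz w.r.t.\ $\norm{\cdot}_p$, so $\norm{g_t}_{p_\star}\le L$ and $\norm{g_t}_2\le\norm{g_t}_{p_\star}\le L$.

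\textbf{Case $T\le t_0$.} Here only $\phi_p$ is ever used, with step-sizes $\eta_t=(p-1)^{1/p_\star}/(Lt^{1/p_\star})$. With $\mu=\mu_p$, $r=p$ and $D_{\max}=D_{\max,p}=2$ these are exactly the step-sizes appearing in the anytime OMD bound $(\ref{eq:OMD_anytime_regret_bound})$ derived from $(\ref{eq:unif_cvx_OMD_bound_varying_ss})$; the latter therefore yields $R_T\le L D_{\max}^{1/p}p^{1/p}p_\star^{1/p_\star}T^{1/p_\star}/\mu_p^{1/p}=2p^{1/p}p_\star^{1/p_\star}LT^{1/p_\star}$, using $D_{\max}^{1/p}/\mu_p^{1/p}=2^{1/p}/2^{1/p-1}=2$ and $1/p+1/p_\star=1$.

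\textbf{Case $T>t_0$.} I would sum the OMD one-step bound, applied at round $t$ with the regulariser and step-size active at that round (its derivation involves only the regulariser of that step, via \cref{lemma:OMD-analysis-base}, so it is unaffected by the switch and by how earlier iterates were produced). This gives
\begin{align*}
R_T\le\sum_{t=1}^{t_0}\Brb{\frac{\eta_t^{p_\star-1}}{p_\star\mu_p^{p_\star-1}}\norm{g_t}_{p_\star}^{p_\star}+\frac{D_{\phi_p}(u,x_t)-D_{\phi_p}(u,x_{t+1})}{\eta_t}}+\sum_{t=t_0+1}^{T}\Brb{\frac{\eta_t}{2}\norm{g_t}_2^2+\frac{D_{\phi_2}(u,x_t)-D_{\phi_2}(u,x_{t+1})}{\eta_t}}.
\end{align*}
In each phase the step-sizes are decreasing, so the Bregman-divergence terms telescope exactly as in \cref{thm:OMD_with_unifcvx}: the first sum of divergence terms is at most $D_{\max,p}/\eta_{t_0}$ (a negative boundary term $-D_{\phi_p}(u,x_{t_0+1})/\eta_{t_0}$ being dropped), and the second is at most $D_{\max,2}/\eta_T$. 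The $\phi_p$-phase contribution is then at most the high-dimensional OMD bound run for $t_0$ rounds, namely $2p^{1/p}p_\star^{1/p_\star}Lt_0^{1/p_\star}$ (the Case 1 computation with $T$ replaced by $t_0$, using $\sum_{t\le t_0}t^{-1/p}\le p_\star t_0^{1/p_\star}$ and $\mu_p^{p_\star-1}=\frac{1}{2}$), while the $\phi_2$-phase contribution is $D_{\max,2}/\eta_T+\frac{L^2}{2}\sum_{t=t_0+1}^T\eta_t\le\sqrt{2}\,L\sqrt{Td^{1-2/p}}+L\sqrt{2d^{1-2/p}}\brb{\sqrt{T}-\sqrt{t_0}}$, via $\sum_{t=t_0+1}^T t^{-1/2}\le 2(\sqrt{T}-\sqrt{t_0})$. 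Adding,
\begin{align*}
R_T\le 2\sqrt{2}\,L\sqrt{Td^{1-2/p}}+\Brb{2p^{1/p}p_\star^{1/p_\star}Lt_0^{1/p_\star}-\sqrt{2}\,L\sqrt{d^{1-2/p}t_0}},
\end{align*}
so it remains to check the bracket is $\le 0$; since $1/p_\star-\frac{1}{2}=\frac{1}{2}-\frac{1}{p}$ this is equivalent to $(t_0/d)^{1/2-1/p}\le(\sqrt{2}\,p^{1/p}p_\star^{1/p_\star})^{-1}$, which is precisely the property the choice of $t_0$ guarantees (the exponent is chosen so that $\frac{2p}{p-2}\brb{\frac{1}{2}-\frac{1}{p}}=1$ makes it hold). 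Hence $R_T\le 2\sqrt{2}\,L\sqrt{Td^{1-2/p}}=2L\sqrt{2Td^{1-2/p}}$.

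The main obstacle, as in \cref{thm:anytime_optimal_regret}, is the bookkeeping at the switching round: one must phrase the one-step bound so that it isolates the current regulariser (so \cref{lemma:OMD-analysis-base} applies unchanged to the step producing $x_{t_0+2}$, irrespective of the fact that $x_{t_0+1}$ came from a $\phi_p$-step), split the telescoping of the Bregman terms exactly at $t_0$ while discarding the negative cross-term, and then carry out the slightly delicate algebra showing that the $O(t_0^{1/p_\star})$ overhead of running $\phi_p$-OMD in the first $t_0$ rounds is dominated by the $O(\sqrt{d^{1-2/p}t_0})$ slack freed up by summing $\eta_t$ only from $t_0+1$ onwards in the $\phi_2$-phase — this is exactly what pins down the value of $t_0$, and the identities $1/p+1/p_\star=1$ and $\mu_p^{p_\star-1}=\frac{1}{2}$ are what make the constants line up.
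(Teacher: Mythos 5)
Your proposal is correct and takes essentially the same route as the paper: sum the one-step OMD bound with the regulariser active at each round, split the sum at $t_0$, telescope the Bregman terms within each phase (discarding the negative boundary term at the switch), bound the $\phi_p$-phase by its high-dimensional regret run for $t_0$ rounds, and finally absorb that overhead into the slack freed by summing $\eta_t$ only from $t_0+1$ onward. The computations, including the use of $\mu_p^{p_\star-1}=\tfrac12$ and the identity $(p_\star-1)^{1/p_\star}+(p_\star-1)^{-1/p}=p^{1/p}p_\star^{1/p_\star}$ to recombine the two $\phi_p$-phase terms, match the paper's.

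One thing you should have checked explicitly rather than asserting: the required inequality you derive, $(t_0/d)^{1/2-1/p}\le(\sqrt{2}\,p^{1/p}p_\star^{1/p_\star})^{-1}$, is \emph{not} satisfied by the paper's stated $t_0=\brb{\sqrt{2}\,p^{1/p}p_\star^{1/p_\star}}^{2p/(p-2)}\cdot d$. With that choice one gets $(t_0/d)^{1/2-1/p}=\sqrt{2}\,p^{1/p}p_\star^{1/p_\star}>1$, so the bracket $2p^{1/p}p_\star^{1/p_\star}Lt_0^{1/p_\star}-L\sqrt{2d^{1-2/p}t_0}$ is \emph{positive}, not nonpositive. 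The exponent in the definition of $t_0$ should be $-2p/(p-2)$, i.e.\ $t_0=\brb{\sqrt{2}\,p^{1/p}p_\star^{1/p_\star}}^{-2p/(p-2)}\cdot d$ (this also gives $t_0<d$, consistent with the FTRL version $t_0=3^{-2p/(p-2)}d$); with this corrected value your inequality holds with equality and the argument closes. So the mechanism you describe is right, but the claim that ``this is precisely the property the choice of $t_0$ guarantees'' takes the paper's $t_0$ at face value when in fact it contains a sign typo that your own derivation exposes.
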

    
    \begin{proof}
        If $T \leq t_0$, we have just run OMD with $\phi_p$ as regulariser over all rounds and the regret bound is the one for the high-dimensional setting above.
        
        Otherwise, from the standard bounds from the OMD analysis
        \begin{align*}
            R_T & \leq \sum_{t=1}^{t_0} \Brb{\frac{D_{\psi_p}(u, x_t)}{\eta_t} - \frac{D_{\psi_p}(u, x_{t+1})}{\eta_t} } + \frac{L^{p_\star}}{p_\star \mu^{p_\star-1}_p} \sum_{t=1}^T \eta_t^{p_\star-1} \\
            & \quad + \sum_{t=t_0 + 1}^T \Brb{\frac{D_{\psi_2}(u, x_t)}{\eta_t} - \frac{D_{\psi_2}(u, x_{t+1})}{\eta_t} } + \frac{L^2}{2\mu_2} \sum_{t=t_0+1}^T \eta_t \\
            & \leq \frac{D_p}{\eta_{t_0}} +  \frac{L^{p_\star}}{p_\star \mu^{p_\star-1}_p} \sum_{t=1}^T \eta_t^{p_\star-1} + \frac{D_2}{\eta_T} + \frac{L^2}{2\mu_2} \sum_{t=t_0 + 1}^T \eta_t \\
            & \leq 2 p^{1/p} p_\star^{1/p_\star} L t_0^{1/p_\star} + \frac{D_2}{\eta_T} + \frac{L\sqrt{D_2}}{2\mu} \sum_{t=t_0+1}^T \frac{1}{\sqrt{t}} \\
            & \leq 2 p^{1/p} p_\star^{1/p_\star} L t_0^{1/p_\star} + L\sqrt{TD_2} + L\sqrt{D_2} (\sqrt{T} - \sqrt{t_0}) \\
            & = 2L\sqrt{2Td^{1-2/p}} +  2 p^{1/p} p_\star^{1/p_\star} L t_0^{1/p_\star} - L\sqrt{2d^{1-2/p}t_0},
        \end{align*}
        where we used $D_2 = \sup_{x,y \in \cB_p} D_{\psi_2}(x,y) \leq 2d^{1-2/p}$, $D_p = \sup_{x,y \in \cB_p} D_{\psi_p}(x,y) \leq 2$, $\mu_2 = 1$ and $\mu_p = 2^{1-p}$. The proof is concluded by noting that $2 p^{1/p} p_\star^{1/p_\star} L t_0^{1/p_\star} - L\sqrt{2d^{1-2/p}t_0}$ is negative for $t_0 = \Brb{\sqrt{2} p^{1/p} p_\star^{1/p_\star} }^{2p/(p-2)} \cdot d$.
    \end{proof}
\end{itemize}

\subsection{Failure of fixed separable regularisation for OMD}

\begin{proposition}
    OMD with regulariser $\psi \in \Psi$ and any sequence of decreasing $\eta_{t}$ cannot be optimal across all dimensions. Specifically there are no constants $c_h,c_l > 0$ such that for all $T$, $R_T \leq c_h L T^{1-1/p}$ for all $d > T$ and $R_T \leq c_l L \sqrt{T d^{1-2/p}}$ for all $d \leq T$.
\end{proposition}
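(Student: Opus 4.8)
The plan is to reuse the proof of \cref{thm:necessity_adaptive_reg} essentially verbatim, since that proof uses FTRL only through two lemmas. Assume for contradiction that there are constants $c_h,c_l>0$ with $R_T\le c_hLT^{1-1/p}$ for all $d>T$ and $R_T\le c_lL\sqrt{Td^{1-2/p}}$ for all $d\le T$. The argument invokes (i) \cref{lemma:necessity_for_adaptive_reg_quadratic_growth_subthm}, which forces a one-dimensional regulariser achieving $O(\sqrt T)$ regret to obey $\psi(x)\ge\frac{\psi(1/2)}{100c^2}x^2$; (ii) the purely algebraic lifting $\psi(x)=\sum_i g(x_i)\ge\frac{g(1/2)}{100c_l^2}\norm{x}_2^2$; and (iii) \cref{lemma:proof_failure_strg_cvx_part2}, which turns this quadratic lower bound into $R_T\ge LT/8$ once $d$ is large. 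Step (ii) is independent of the algorithm, so it suffices to re-establish OMD versions of \cref{lemma:necessity_for_adaptive_reg_quadratic_growth_subthm} and \cref{lemma:proof_failure_strg_cvx_part2}; I would obtain them by rerunning those proofs with the OMD update (\ref{eq:OMD_update}) in place of the FTRL update and $x_1=0=\argmin_{x\in V}\psi(x)$.

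For the one-dimensional statement I keep the loss sequence of \cref{sec:proofs_loss_construction} ($g_t=(-1)^tL$ for $t\le T/2$, then $g_t=-L$). In one dimension with a sign-symmetric $\psi\in\cF$ the Bregman projection onto $[-1,1]$ is truncation, so the dual iterate $\theta_t:=\nabla\psi(x_t)$ equals the (clipped) partial sum $-\sum_{s<t}\eta_s g_s$. The two monotonicity facts behind the FTRL proof survive: over even indices $t\le T/2$ one has $\theta_{t+2}-\theta_t=-L(\eta_t-\eta_{t+1})\le0$, so the even first-half iterates are non-increasing (hence each is at least $x_{T/2}$), and over $t>T/2$ one has $\theta_{t+1}-\theta_t=\eta_tL\ge0$, so the second-half iterates are non-decreasing. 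The one genuinely new point is that the FTRL first-order identity $\psi'(x_{T/2})=\eta_{T/2-1}L$ is replaced by the telescoping estimate $\nabla\psi(x_{T/2})=\theta_{T/2}\ge L\eta_{T/2-1}$, which by convexity still gives $\psi(x)\ge\psi(x_{T/2})+L\eta_{T/2-1}(x-x_{T/2})\ge L\eta_{T/2-1}(x-x_{T/2})$ for $x\ge x_{T/2}$; this is exactly the inequality used in Step 4 of the proof of \cref{lemma:necessity_for_adaptive_reg_quadratic_growth_subthm}. The degenerate branch where some early step-size is so large that the iterates truncate to $\pm1$ in the first half is handled as the ``$\alpha\ge1$'' cases in that proof (the first-half regret is then already $\Omega(LT)$). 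With these substitutions the five numbered steps — $x_{T/2}=O(1/\sqrt T)$; a second-half iterate reaching $1/2$ within $O(\sqrt T)$ rounds; $\eta_{T/2-1}\ge\frac{\psi(1/2)}{2cL\sqrt T}$; $\psi(5c/\sqrt T)\ge\frac{\psi(1/2)}{2T}$; and the final quadratic bound — go through unchanged.

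For the high-dimensional statement I use the construction in the proof of \cref{lemma:proof_failure_strg_cvx_part2} (exploit, in the early rounds, the coordinate of largest current step-size, then play $-Lv$), which also covers coordinate-wise step-sizes as in \cref{rem:extension_coordinate_wise_stepsize}. As there, split on whether the largest step-size at round $T/2$ exceeds $2/L$: if it does, the first-half OMD iterate reaches magnitude $\ge1/2$ in the exploited coordinate (using $\psi(e_i)=1$ for $\psi\in\Psi$), so $R_T\ge LT/8$; otherwise every second-half step-size is $\le2/L$ and $\sum_{s>T/2}\eta_s\le T/L$. In this case the mirror image $\tilde x_t$ of the second-half iterate satisfies, up to projection, $\nabla\psi(\tilde x_t)=-\sum_{s<t}\eta_s g_s$, of $\ell_{p_\star}$-norm $O(T)$; since $\psi(x)\ge\frac\mu2\norm{x}_2^2$ forces $\norm{(\nabla\psi)^{-1}(\theta)}_2\le2\norm{\theta}_2/\mu$, we get $v^T\tilde x_t\le\norm{v}_2\norm{\tilde x_t}_2=O(T\norm{v}_2/\mu)$ with $\norm{v}_2=d^{1/p-1/2}\to0$, and since the Bregman projection onto $\cB_p$ for a separable regulariser does not increase $v^Tx$ (an extension of \cref{lemma:proj-const-vector-pOMD} and \cref{lemma:proj-e1-vector-pOMD}, used there to show the projection of a vector of equal entries is a rescaling of it), also $v^Tx_t=O(T\norm{v}_2/\mu)$. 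Hence $v^Tx_t\le1/2$ — so $\ell_t(x_t)=-Lv^Tx_t\ge-L/2$ — once $d$ is a suitable power of $T/\mu$; then the second-half loss is $\ge-LT/4$, the first-half loss is $\ge-O(1)$ (the iterates stay on the correct side of the exploited coordinates, exactly as for FTRL), the comparator loses $-LT/2$, and $R_T\ge LT/4-O(1)\ge LT/8$, contradicting $R_T\le c_hLT^{1-1/p}$. Feeding $\mu=\frac{g(1/2)}{50c_l^2}$ from the lifting closes the argument.

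The hard part is that OMD, unlike FTRL, has no memoryless closed form and Bregman projections onto $\ell_p$-balls are not explicit for $p>2$: the analysis has to run on the unprojected mirror iterate together with the monotonicity of its dual, and it depends on the contraction property that projecting onto $\cB_p$ (with a separable regulariser) does not inflate the inner product with $v$. For the regularisers one actually cares about, $\psi(x)=\frac1r\norm{x}_r^r$ with $r\ge1$, this is exactly \cref{lemma:proj-const-vector-pOMD}; establishing it for an arbitrary separable $\psi\in\Psi$ is the most delicate point, although — as the paper notes in \cref{rem:relaxation_sep_reg_necessity_adareg_result} — it is enough to treat $\psi$ that are, up to multiplicative constants, of this form, and for coordinate-wise step-sizes one still needs the oblivious ``largest current step-size'' coordinate choice to make the first-half construction function.
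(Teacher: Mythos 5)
The high-level plan — reuse the FTRL proof of \cref{thm:necessity_adaptive_reg} and just re-prove \cref{lemma:necessity_for_adaptive_reg_quadratic_growth_subthm} and \cref{lemma:proof_failure_strg_cvx_part2} for OMD — is exactly what the paper does. But the central claim that you can ``keep the loss sequence of \cref{sec:proofs_loss_construction}'' and that the five numbered steps ``go through unchanged'' is wrong, and this is precisely where the paper's OMD proof has to do something new.

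For FTRL the alternating losses $g_t=(-1)^tL$ over $t\le T/2$ make the \emph{cumulative} gradient $\sum_{s<t}g_s$ vanish at every odd round, so $x_t=0$ there and the first-half learner loss is exactly $\sum_{\text{even }t\le T/2}Lx_t\ge\tfrac{T}{4}Lx_{T/2}$, and $\sum_{s\le T/2}g_s=0$ lets the second-half analysis start from a clean slate. For OMD the state is $\theta_t=-\sum_{s<t}\eta_sg_s$, and with those same losses $\theta_t=L(\eta_1-\eta_2+\cdots-\eta_{t-1})$ at odd $t$, which is only $0$ when the step-sizes are constant; in general it can be as large as $L\eta_1$. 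Consequently (i) the odd first-half rounds contribute $-Lx_t<0$, so the lower bound $R_T\ge\frac{LT}{2}+\frac{LT}{4}x_{T/2}-L\sum_{t>T/2}x_t$ that Step~3 relies on is no longer available (pairwise you only get $\sum_k(x_{2k}-x_{2k-1})\ge x_2\ge x_{T/2}$, not $\ge\frac{T}{4}x_{T/2}$); and (ii) the dual iterate entering the second half is $\theta_{T/2+1}>0$ rather than $0$, so the Step~2 inequality $\psi(1/2)\le L\sum_{i>T/2}\eta_i\le 2c\sqrt T\,L\eta$ becomes $\psi(1/2)\le\theta_{T/2+1}+L\sum\eta_i$ and you lose the bound $\eta\ge\frac{\psi(1/2)}{2cL\sqrt T}$ whenever $\theta_{T/2+1}$ is not small. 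The same defect reappears in the high-dimensional lemma: the case split on whether the first half yields linear regret, and the clean statement that the second-half dual iterate is a pure multiple of $v$, both presuppose $\sum_{s\le T/2}\eta_sg_s=0$.

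The fix — and the one genuinely new idea in the paper's OMD proofs that your write-up misses — is to change the odd-round first-half losses to $g_t=-\frac{\eta_{t+1}}{\eta_t}L$ (and $g_t=-\frac{\eta_{t+1}}{\eta_t}L\cdot e_1$ in the high-dimensional lemma). Then $\eta_tg_t+\eta_{t+1}g_{t+1}=0$ for each odd $t$, so the OMD dual iterate is exactly $0$ at every odd $t\le T/2+1$ and $\theta_t=L\eta_t$ at even $t$, reproducing the FTRL state $x_t=\argmin_{x\in V}\{\psi(x)-\eta_tLx\}$ and letting the rest of the FTRL argument be transplanted essentially verbatim (with $\eta:=\eta_{T/2}$ in place of $\eta_{T/2-1}$). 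Your observation that the Bregman projection onto $[-1,1]$ is truncation, that the even-indexed duals are monotone, and that the Step~4 first-order inequality only needs $\nabla\psi(x_{T/2})\ge L\eta$ are all fine; they are necessary but not sufficient without the modified construction. (The secondary point you flag as delicate — that the projection onto $\cB_p$ of a multiple of $w=(1,\dots,1)$ is a multiple of $w$ for a separable sign-invariant regulariser — the paper handles by coordinate symmetry rather than by a monotonicity-of-inner-product claim, but either route is plausible; this is not the load-bearing gap.)
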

The proof is identical to the proof of \cref{thm:necessity_adaptive_reg} for FTRL with the corresponding versions of \cref{lemma:proof_failure_strg_cvx_part2} and \cref{lemma:necessity_for_adaptive_reg_quadratic_growth_subthm} for OMD given below.

\begin{lemma}\label{lemma:OMD_necessity_for_adaptive_reg_quadratic_growth_subthm}[\cref{lemma:necessity_for_adaptive_reg_quadratic_growth_subthm} for OMD]
    Consider $d = 1$ ($V = \cB_p = [-1, 1]$) and $\psi \in \cF$. OMD with regulariser $\psi$ and arbitrary decreasing step-size $\eta_{t}$ can only guarantee $R_T \leq c L \sqrt{T}$ for some constant $c > 0$ and all sufficiently large $T$ if for all $x \in [-1, 1]$, $\psi(x) \geq \frac{\psi(1/2)}{100c^2} x^2$.
\end{lemma}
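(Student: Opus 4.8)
The plan is to mirror, step for step, the proof of \cref{lemma:necessity_for_adaptive_reg_quadratic_growth_subthm} for FTRL, replacing the FTRL iterates by the OMD iterates. Assume OMD on $V=[-1,1]$ with regulariser $\psi\in\cF$ and decreasing step-sizes $(\eta_t)$ guarantees $R_T\le cL\sqrt T$ for all sufficiently large $T$, and fix one such $T$ (a multiple of $4$). I would use a two-block linear loss construction: $g_t=0$ for $t\le T/2$ and $g_t=-L$ for $t>T/2$. Initialising at $x_1=\argmin_{x\in[-1,1]}\psi(x)=0$ (the natural OMD start, which minimises $D_\psi(u,x_1)$), the zero-loss block freezes OMD at the origin, $x_t=0$ for $t\le T/2+1$; writing $\theta_t$ for $\psi'(x_t)$ (so $\theta_{T/2+1}=0$), the one-dimensional OMD/Bregman update on the second block collapses to $\theta_{t+1}=\mathrm{clip}_{[-\psi'(1),\psi'(1)]}(\theta_t+\eta_tL)$, so the iterates increase monotonically towards the competitor $u=1$, and $R_T=L\sum_{t>T/2}(1-x_t)$.

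Several of the five numbered steps of the FTRL proof then transfer almost verbatim. Since $R_T\le cL\sqrt T$ and $1-x_t\le1$, at most $2c\sqrt T$ second-block rounds can have $x_t<1/2$, and by monotonicity these are the first ones, so there is $k^{\star}\le 2c\sqrt T$ with $x_{T/2+1+k^{\star}}\ge1/2$; this forces $L\sum_{j=1}^{k^{\star}}\eta_{T/2+j}\ge\psi'(1/2)$, and since $(\eta_t)$ is decreasing, $\eta_{T/2+1}\ge\psi'(1/2)/(2cL\sqrt T)$ — the analogue of step (2). The analogue of step (4) is the optimality condition at the first ramp step: because $x_{T/2+2}=\argmin_{x\in[-1,1]}\{-\eta_{T/2+1}Lx+\psi(x)\}$ (using $D_\psi(x,0)=\psi(x)$), one has $\psi(x)\ge\psi(x_{T/2+2})+\eta_{T/2+1}L(x-x_{T/2+2})\ge\eta_{T/2+1}L(x-x_{T/2+2})$ for every $x\in[-1,1]$. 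Feeding the step-size bound into this and using $\psi'(1/2)\ge2\psi(1/2)$ (convexity), one would get $\psi(5c/\sqrt T)\ge\psi(1/2)/T$ provided the probe point satisfies $x_{T/2+2}\le 4c/\sqrt T$; then interpolating over $T$ exactly as in step (5) delivers $\psi(x)\ge\psi(1/2)x^2/(100c^2)$ on $[-1,1]$.

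The place where the FTRL argument genuinely does not transfer — and the main obstacle — is the probe-point bound, the analogue of step (3). For FTRL the alternating first half of the construction drives the middle iterate down to $O(c/\sqrt T)$, because each of the $T/4$ cancelling loss-pairs contributes a term $Lx_{T/2}$ to the first-half loss; OMD, being greedy rather than lazy, does not cooperate: under alternating $\pm L$ losses its iterate converges to $(\psi')^{-1}\!\big(L\sum_k(\eta_{2k-1}-\eta_{2k})\big)$, a nonzero constant, while in the two-block construction nothing in the regret forces $\eta_{T/2+1}$, hence $x_{T/2+2}$, to be small — an arbitrarily large step size simply sends OMD to the competitor with regret $O(L)$. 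I would resolve this by making the adversary adaptive and probing at a round where the OMD iterate is naturally small (e.g.\ among the monotone odd-time iterates $0=x_1\le x_3\le x_5\le\cdots$ of a short alternating block, together with the freeze trick), and by showing that $\sqrt T$-regret forces such a round to occur within $O(\sqrt T)$ steps — equivalently, that it forces the step-size sequence to decay slowly enough near index $\Theta(\sqrt T)$. Pinning down this last implication, and separately dispatching the regime where $\psi$ is so flat near $0$ that OMD is already provably sub-$\sqrt T$-optimal (so the hypothesis is vacuous), is the technical heart of the proof; everything else is the bookkeeping of the FTRL argument.
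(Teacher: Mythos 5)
The gap you flag at the end is real and it is not filled in your proposal. Your two-block construction (zero losses, then $-L$) discards exactly the mechanism that the FTRL argument uses to force the probe iterate $x_{T/2}$ to be small: in the FTRL proof the alternating first half contributes $\tfrac{LT}{4}x_{T/2}$ to the regret (step~(3)), and that term is what makes a large step-size expensive. Once you replace the alternating block by a zero block, a large step-size costs nothing in the first half and the adversary has no leverage — your ``arbitrarily large step size gives regret $O(L)$'' observation is precisely the symptom. You then gesture at an adaptive adversary and an auxiliary ``flat~$\psi$'' case split, but you leave both unproved, and those are not bookkeeping; they are the whole content.

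The paper's proof takes a different and much more economical route: it keeps the alternating first half but \emph{rescales the odd-round losses} so that the OMD dual state cancels the way the FTRL cumulative gradient does. Concretely it sets $g_t = -\tfrac{\eta_{t+1}}{\eta_t}L$ for odd $t\le T/2$, $g_t=L$ for even $t\le T/2$, and $g_t=-L$ thereafter. Since $\eta_{t+1}\le\eta_t$, these are still valid $L$-Lipschitz losses, and by the telescoping identity $\sum_{s<t}\eta_s g_s = 0$ at odd $t$ and $=-L\eta_t$ at even $t$ (for $t\le T/2+1$), the OMD iterate is $0$ at odd times and $\argmin_{x\in[-1,1]}\bigl\{\psi(x)-\eta_t Lx\bigr\}$ at even times — exactly the FTRL iterates. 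From there every one of the five steps of \cref{lemma:necessity_for_adaptive_reg_quadratic_growth_subthm} transfers verbatim, including step~(3), and no adaptive adversary or case analysis is needed. This is the idea your proposal is missing: rather than fight OMD's greediness, recalibrate the adversary's odd-round magnitudes so that the $\eta_s$-weighted sums that drive OMD reproduce the unweighted sums that drive FTRL.

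A smaller point: even in your sketch, $x_1$ need not be the $\psi$-minimiser (OMD takes $x_1\in V$ as given, and the lemma must hold for the adversary's choice of losses against \emph{any} allowed instantiation of the algorithm), and the dual-state identity $x_{t+1}=\argmin_{x\in V}\{\psi(x)+\sum_{s\le t}\eta_s g_s x\}$ is only valid while no projection has occurred; the paper handles both by initialising to $0$ via a prefix of zero losses if needed and by tracking when the iterate first reaches the boundary. These are genuine technicalities, but they are secondary to the missing scaling trick above.
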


\begin{proof}
    Assume there exists a constant $c > 0$ such that for all $T$ and any sequence of losses, $R_T \leq c L \sqrt{T}$. Consider $T > 16c^2$ and a multiple of $4$. 

    By considering the dual-version of OMD, we have that if there are no projections up to time $t$, the update of OMD at time $t+1$ can be written as
    \begin{align}\label{eq:OMD_Dual_version}
        x_{t+1} = \nabla \psi^\star_V \Brb{- \sum_{i=1}^t \eta_i g_i} = \argmin_{x \in V} \Bcb{\psi(x) + \sum_{i=1}^t \eta_i g_i^T x},
    \end{align}
    where $\psi^\star_V$ is the restriction of the fenchel conjugate of $\psi$ to $V = \cB_p = [-1,1]$.
    
    We now follow the same steps as FTRL with a slight modification to the loss $\ell_t(x) =  x \cdot g_t$ where $g_t \in [-1,1]$ is now defined as
    \begin{align*}
        g_t = \begin{cases}
            - \frac{\eta_{t+1}}{\eta_t} \cdot L, & t \leq \frac{T}{2} \text{ odd}, \\
            L, & t \leq \frac{T}{2} \text{ even}, \\
            -L, & t > \frac{T}{2},
        \end{cases}
    \end{align*}
    
    Assume $\eta_2$ is small enough s.t.\ $x_2 \in \text{int } V = (-1,1)$ (i.e. no projection is needed). If not, we can modify the losses slightly so that $x_3 = 0$ (if $\eta_3$ is large enough, if it is not then set $g_1 = g_2 = 0$ and start the above losses from $t=3$) and then proceed similarly (i.e.\ if $\eta_3$ is still so large that $x_4 = 1$, then again modify the losses slightly so that $x_5 = 0$ etc). Set $\eta = \eta_{T/2}$. With this sequence of losses, the points played by OMD satisfy
    \begin{itemize}[leftmargin=*]
        \item for $t \leq T/2 + 1$ and $t$ odd, we have $\sum_{s=1}^{t-1} \eta_s g_s = 0$, so $x_t = 0$.
        \item for $t \leq T/2 + 1$ and $t$ even, we have $\sum_{s=1}^{t-1} \eta_s g_s = - \eta_t \cdot L $ so $x_{t} = \argmin_{x \in [-1,1]} \bcb{- \eta_{t} x + \psi(x)}$. For $t < t' \leq T/2$ (both even), we have
        \begin{align*}
            - \eta_{t'} L x_{t'} + \psi(x_{t'}) & \leq - \eta_{t'} L x_{t} + \psi(x_{t}) & \quad \text{ using the definition of } x_{t'} \\
            & = - \eta_{t} L x_{t} + \psi(x_{t}) + L (\eta_{t} - \eta_{t'}) x_{t} & \\
            & \leq - \eta_{t} L x_{t'} + \psi(x_{t'}) + L (\eta_{t} - \eta_{t'}) x_{t} & \quad \text{ using the definition of } x_{t} & \\
            \implies (\eta_{t} - \eta_{t')} L x_{t'} & \leq  (\eta_{t} - \eta_{t'}) L x_{t} & \\
            \implies x_{t'} & \leq  x_{t} &  \quad \text{ using that } \eta_{t'} \leq \eta_{t}.
        \end{align*}
        So for all $t \leq T/2$ even, we have $x_t \geq x_{T/2}$.
        \item for $t > T/2$, we have $\sum_{s=1}^{t-1} \eta_s g_s \geq \sum_{s=1}^{t} \eta_s g_s$ so $x_t \leq x_{t+1}$.
    \end{itemize}  
    The regret can be written as follows\begin{align}\label{eq:1D_Quadratic_reg_for_sqrtT_reg_Regret_expression_OMD}
        R_T = \sum_{t=1}^T \ell_t(x_t) - \Brb{-\frac{LT}{2}} \geq \frac{LT}{2} + \frac{LT}{4} x_{T/2}  - L \sum_{t = T/2 + 1}^T x_t %
    \end{align}
    Following similar steps as the proof of \cref{lemma:necessity_for_adaptive_reg_quadratic_growth_subthm} for FTRL, we get
    \begin{enumerate}
        \item We first show that ${x_{\frac{T}{2} + \lceil 2c\sqrt{T} \rceil} \geq \frac{1}{2}}$: if not, $x_{\frac{T}{2} + \lceil 2c\sqrt{T} \rceil} < \frac{1}{2}$ and from (\ref{eq:1D_Quadratic_reg_for_sqrtT_reg_Regret_expression_OMD}):
        \begin{align*}
            R_T & \geq \frac{LT}{2} - L\sum_{t = T/2 + 1}^{T/2 + \lfloor 2c\sqrt{T} \rfloor} x_t - L\sum_{t = T/2 + \lceil 2c\sqrt{T} \rceil}^T x_t \\
            & > \frac{LT}{2} - L\sum_{t = T/2 + 1}^{T/2 + \lfloor 2c\sqrt{T} \rfloor} \frac{1}{2} - L\sum_{t = T/2 + \lceil 2c\sqrt{T} \rceil + 1}^T 1 \\
            & = \frac{LT}{2} - \frac{L}{2} \lfloor 2c\sqrt{T} \rfloor - L \Brb{ T - \lceil 2c\sqrt{T} \rceil - \frac{T}{2}} \\
            & \geq \frac{L}{2} \lceil 2c\sqrt{T} \rceil \\
            & > cL\sqrt{T},
        \end{align*}
        which contradicts our initial assumption that $R_T \leq cL \sqrt{T}$ so we must have $x_{\frac{T}{2} + \lceil 2c\sqrt{T} \rceil} \geq \frac{1}{2}$. Note that $2c\sqrt{T} < T/2$ is ensured by $T > 16c^2$.
        \item  Next, we show that ${\eta \geq \frac{\psi(1/2)}{2cL\sqrt{T}}}$. Until the points reach $1$ there are no projections so we can use (\ref{eq:OMD_Dual_version}) to write the OMD update as (note that even if $x_{\frac{T}{2} + \lceil 2c\sqrt{T}\rceil} = 1$ the following still holds)
        \begin{align*}
            & x_{\frac{T}{2} + \lceil 2c\sqrt{T}\rceil} = \argmin_{x \in V} \Bcb{\psi(x) + \sum_{i=1}^{\frac{T}{2} + \lceil 2c\sqrt{T}\rceil - 1} \eta_i g_i \cdot x} = \argmin_{x \in V} \Bcb{\psi(x) - x \sum_{i=\frac{T}{2}+1}^{\frac{T}{2} + \lceil 2c\sqrt{T}\rceil - 1} \eta_i } \\
             \implies & -x_{\frac{T}{2} + \lceil 2c\sqrt{T}\rceil} \sum_{i=\frac{T}{2}+1}^{\frac{T}{2} + \lceil 2c\sqrt{T}\rceil - 1} \eta_i  + \psi(x_{\frac{T}{2} + \lceil 2c\sqrt{T}\rceil}) \leq 0 \\
             \implies & \psi(1/2) \leq \sum_{i=\frac{T}{2}+1}^{\frac{T}{2} + \lceil 2c\sqrt{T}\rceil - 1} \eta_i \leq 2c\sqrt{T} \eta \\
             \implies & \eta \geq \frac{\psi(1/2)}{2c\sqrt{T}}.
        \end{align*}
        where in the second implication, we used that $\psi(x_{\frac{T}{2} + \lceil 2c\sqrt{T}\rceil}) \geq \psi(1/2)$ (since $\psi$ is increasing on $[0,1]$ and $x_{\frac{T}{2} + \lceil 2c\sqrt{T}\rceil} \geq 1/2$), $x_{\frac{T}{2} + \lceil 2c\sqrt{T}\rceil} \leq 1$ and $\eta_i \leq \eta_{T/2} = \eta$ for all $i \geq T/2$.
    \end{enumerate}
    The remaining steps are identical to the proof of \cref{lemma:necessity_for_adaptive_reg_quadratic_growth_subthm}.
\end{proof}

\begin{lemma}[\cref{lemma:proof_failure_strg_cvx_part2} for OMD]
    Consider $V = \cB_p$ with $p > 2$ and assume losses are $L$-Lipschitz in $\ell_p$-norm. Let $\psi$ be a convex function satisfying for some $\mu > 0$ and any $x \in \cR^d$, $\psi(x) \geq \frac{\mu}{2} \norm{x}_2^2 $. If $d \geq \brb{4T / \mu}^{p/(p-2)}$, there exists a sequence of linear $L$-Lipschitz losses (in $\ell_p$-norm) for which OMD with regulariser $\psi(x)$ and any sequence of decreasing $\eta_{t}$ suffers regret $R_T \geq \frac{1}{8} LT$.
\end{lemma}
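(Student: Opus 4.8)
The plan is to mirror the proof of \cref{lemma:proof_failure_strg_cvx_part2} for FTRL, using the dual form of OMD in place of FTRL's per‑round global minimisation: as long as no Bregman projection has been triggered up to round $t$, the OMD iterate is $x_{t+1}=\argmin_{x\in\cB_p}\bcb{\psi(x)+\sum_{i=1}^t\eta_i L g_i^T x}$, where $Lg_i\in\partial\ell_i(x_i)$ and $x_1=\argmin_{x\in\cB_p}\psi(x)=0$ (see the proof of \cref{lemma:OMD_necessity_for_adaptive_reg_quadratic_growth_subthm}). I would take $T$ a multiple of $4$ (else use $T-1,T-2,T-3$), let $v\in\cB_{p_\star}$ have all entries $d^{-1/p_\star}$ (so $\norm{v}_{p_\star}=1$, $\norm{v}_p=d^{1/p-1/p_\star}$, $v^Tv=\norm{v}_p$), and set $\ell_t(x)=Lg_t^Tx$ with, using a fresh coordinate $e_k$ for the $k$-th odd/even pair of rounds ($1\le k\le T/4$; there are enough since $d$ is huge),
\begin{align*}
    g_t=\begin{cases}
        -\tfrac{\eta_{2k}}{\eta_{2k-1}}\,e_k, & t=2k-1\le T/2,\\
        e_k, & t=2k\le T/2,\\
        -v, & t>T/2.
    \end{cases}
\end{align*}
The step-sizes being decreasing gives $\norm{g_t}_{p_\star}\le1$; the first-half gradients sum to $\sum_k\brb{1-\tfrac{\eta_{2k}}{\eta_{2k-1}}}e_k$, whose $\ell_{p_\star}$-norm is $\le(T/4)^{1/p_\star}$, so the competitor value is $-\tfrac{LT}{2}+O(LT^{1/p_\star})=-\tfrac{LT}{2}(1+o(1))$ — unlike FTRL, the ratios $\eta_{2k}/\eta_{2k-1}$ (needed so the dual accumulation resets) prevent an exact cancellation, but since $p_\star<2$ this correction is negligible.

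The pairing is arranged so that, in the dual form, the cumulative gradient resets to $0$ after each pair — hence $x_{2k-1}=0$ — and equals $-\eta_{2k}Le_k$ just before round $2k$, so $x_{2k}=\argmin_{x\in\cB_p}\bcb{\psi(x)-\eta_{2k}L\,e_k^Tx}$. Comparing this minimiser with $0$ gives $\eta_{2k}L\,e_k^Tx_{2k}\ge\psi(x_{2k})\ge0$, so $\ell_{2k}(x_{2k})=L\,e_k^Tx_{2k}\ge0$ (the first-half loss is nonnegative), and comparing with $e_k$ (using $\psi(e_k)=1$, as in the FTRL proof) gives $e_k^Tx_{2k}\ge1-1/(\eta_{2k}L)$. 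Setting $\eta:=\eta_{T/2}\le\eta_t$ for $t\le T/2$, I would split on $\eta$. If $\eta\ge2/L$ then $e_k^Tx_{2k}\ge1/2$ at all $T/4$ even rounds, so the first-half loss is $\ge LT/8$ and, with second-half losses $\ge-L$ and the competitor at $-\tfrac{LT}{2}(1+o(1))$, already $R_T\ge LT/8-o(LT)$. As in \cref{lemma:OMD_necessity_for_adaptive_reg_quadratic_growth_subthm}, the possibility that large step-sizes trigger Bregman projections in the first half only helps and is handled by the same device: restart the construction from the first round at which the step-sizes have fallen low enough to avoid projections (burning earlier rounds with zero losses), or note that the iterates are then pinned near corners of $\cB_p$ where the flipping gradients extract $\Omega(L)$ per round.

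For the remaining case $\eta<2/L$: the first-half loss is $\ge0$ and the learner enters round $T/2+1$ at $x=0$ with zero cumulative gradient, so for $t>T/2$, $x_t=\argmin_{x\in\cB_p}\bcb{\psi(x)-\sigma_tL\,v^Tx}$ with $\sigma_t=\sum_{i=T/2+1}^{t-1}\eta_i\le\eta\,(t-1-T/2)<T/L$. Decomposing $x_t=\lambda_t u+w_t$ along $u=v/\norm{v}_p$ and its orthocomplement, comparison of the minimiser with $0$ together with $\psi(x_t)\ge\tfrac{\mu}{2}\norm{x_t}_2^2\ge\tfrac{\mu}{2}\lambda_t^2 d^{1-2/p}$ and $v^Tx_t=\lambda_t$ (using $v^Tu=1$, $\norm{u}_2^2=d^{1-2/p}$) yields $\lambda_t\le 2\sigma_tL/(\mu d^{1-2/p})$ (or $\lambda_t\le0$, where $\ell_t(x_t)\ge0$). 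Plugging in $\sigma_t<T/L$ and the hypothesis $d^{1-2/p}=d^{(p-2)/p}\ge4T/\mu$ gives $\ell_t(x_t)=-L\lambda_t\ge-L/2$, so the second-half loss is $\ge-LT/4$ and $R_T\ge0-LT/4+\tfrac{LT}{2}(1-o(1))=LT/4-o(LT)$. Combining the two cases and absorbing the $o(LT)$ and divisibility corrections gives $R_T\ge\tfrac18LT$ for $T$ large enough.

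The step I expect to be the main obstacle — and the only real difference from FTRL — is controlling Bregman projections, which would otherwise break the dual form on which the whole argument rests. In the first half this is the burning-rounds argument above; in the second half one needs the OMD iterates to remain in $\cB_p$ throughout, i.e.\ $\norm{\nabla\psi^\star(\sigma_tLv)}_p<1$, which holds because $v$ has $\ell_p$-norm $d^{1/p-1/p_\star}$ vanishing in the dimension — and for the separable regularisers to which this lemma is applied it is immediate from the hypothesis, since then $\nabla\psi^\star(\sigma_tLv)$ is a constant vector whose $\ell_p$-norm is explicitly $\le 2\sigma_tL/(\mu d^{1-2/p})<1$.
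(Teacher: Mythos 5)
Your proposal is correct and follows essentially the same route as the paper: use the dual form of OMD, modify the odd-round gradients by the ratio $\eta_{2k}/\eta_{2k-1}$ so the cumulative dual resets to zero after each pair, split on the size of $\eta_{T/2}$, and for the second half use $\psi(x)\ge\tfrac{\mu}{2}\norm{x}_2^2$ to force $\lambda_t<1/2$ once $d^{1-2/p}\ge 4T/\mu$. The one genuine difference is cosmetic: you allocate a fresh coordinate $e_k$ to the $k$-th odd/even pair, whereas the paper's OMD proof reuses $e_1$ throughout the first half (the per-pair reset makes both work; the paper only switches to per-round coordinates $e_{i(t)}$ in the FTRL version to handle coordinate-wise step-sizes). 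One small cleanup: your analysis of the competitor value via $\norm{\sum_k(1-\eta_{2k}/\eta_{2k-1})e_k}_{p_\star}\le(T/4)^{1/p_\star}$ is both unnecessary and wrong-signed as written. All you need is that the competitor $u=-v/\norm{v}_p$ has $u_k=-d^{-1/p}<0$ while each coefficient $1-\eta_{2k}/\eta_{2k-1}\ge 0$, so the first-half contribution to $\sum_t\ell_t(u)$ is $\le 0$, hence $\min_x\sum_t\ell_t(x)\le -LT/2$ exactly — the deviation only makes the competitor \emph{more} negative and $R_T$ larger, so there is no $o(LT)$ loss to absorb. Finally, your use of $\psi(e_k)=1$ in the $\eta\ge 2/L$ branch is inherited from the paper's own FTRL proof (it is implicit there as well, being guaranteed by $\psi\in\Psi$ in the only place the lemma is invoked), so this is not a new gap you have introduced.
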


\begin{proof}
    We consider the loss construction described in \cref{sec:proofs_loss_construction} with a slight modification to the loss $\ell_t(x) =  L \cdot x^T g_t$ where $g_t \in \cB_{p_\star}$ is now defined as
    \begin{align*}
        g_t = \begin{cases}
            - \frac{\eta_{t+1}}{\eta_t} L \cdot e_1, & t \leq \frac{T}{2} \text{ odd}, \\
            L \cdot e_1, & t \leq \frac{T}{2} \text{ even}, \\
            -L \cdot v, & t > \frac{T}{2}.
        \end{cases}
    \end{align*}
    By again considering the dual-version of OMD, we have that if there are no projections up to time $t$, the update of OMD at time $t+1$ can be written as
    \begin{align}\label{eq:OMD_Dual_version2}
        x_{t+1} = \nabla \psi^\star_V \Brb{- L \sum_{i=1}^t \eta_i g_i} = \argmin_{x \in V} \Bcb{\psi(x) + L \sum_{i=1}^t \eta_i g_i^T x},
    \end{align}
    where $\psi^\star_V$ is the restriction of the fenchel conjugate of $\psi$ to $V = \cB_p$.
    \begin{itemize}
        \item First we consider $t \leq T/2$. As in the proof of \cref{lemma:OMD_necessity_for_adaptive_reg_quadratic_growth_subthm}, assume $\eta_2$ is small enough s.t.\ $x_2 \in \text{int } \cB_p$ (i.e.\ no projection is needed). By (\ref{eq:OMD_Dual_version2}), the steps are then the same as for FTRL since when $t$ is odd, $x_t = 0$ and when $t$ is even, $x_t = \argmin_{x \in \cB_p} \Bcb{\psi(x) - L \eta_{t} e_1^Tx}$. So we have
        \begin{align*}
            \sum_{t=1}^{T/2} x_t^T g_t \geq \begin{cases}
                T/4, & \text{ if } \eta_{T/2} \geq 2/L \\
                0, & \text{ if } \eta_{T/2} < 2/L.
            \end{cases}
        \end{align*}
        Hence if $\eta_{T/2-1} \geq 2/L$, we have $R_T \geq LT/4$ and the statement of the theorem holds. If $\eta_{T/2-1} < 2/L$, we look to the second half of the rounds.
        \item Let's now consider $t > T/2$ and assume $\eta_{T/2-1} < 2/L$. Fix $\beta_t = t - T/2 - 1$. Until the points reaches the boundary there are no projections so we can use (\ref{eq:OMD_Dual_version}) to write the OMD update as
        \begin{align*}
            x_t & = \argmin_{x \in \cB_p} \Bcb{\psi(x) + L \sum_{i=1}^{t-1} \eta_i g_i^T x} = \argmin_{x \in \cB_p} \Bcb{\psi(x) - L \cdot v^T x \sum_{i=T/2+1}^{t-1} \eta_i } \\ & = \argmin_{x \in \cB_p} \sum_{i=1}^d \Bcb{g(x_i) - L x_i d^{-1/q} \sum_{i=T/2+1}^{t-1} \eta_i }.
        \end{align*}            
        Let $u = v/\norm{v}_p$ be the competitor. Note that $x_t = \lambda_t u$ (since the update is coordinate invariant) so only reaches the boundary once $x_t = u$ and for which the above equality still holds (this is true because it is true for the last iterate before the projection and then $\sum_{i=1}^{t-1} \eta_i g_i$ is greater than for this last iterate so the argmin will give $x_t = u$). We have $\lambda_t \geq 0$ and
        \begin{align*}
            \psi(x_t) \geq \frac{\mu}{2} \norm{x_t}_2^2 = \frac{1}{2}\lambda_t^2 \mu d^{1-2/p}.
        \end{align*}
        Now from the OMD update,
        \begin{align*}
            \psi(x_t) - L v^T x_t \sum_{i=T/2+1}^{t-1} \eta_i  \leq 0 & \implies \frac{1}{2} \lambda_t^2 \mu d^{1-2/p} - \lambda_t L \sum_{i=T/2+1}^{t-1} \eta_i \leq 0 \\
            & \implies \lambda_t \leq \frac{2L}{\mu d^{1-2/p}} \sum_{i=T/2+1}^{t-1} \eta_i \leq \frac{2L \beta_t \eta}{\mu d^{1-2/p}} \leq \frac{4 \beta_t}{\mu d^{1-2/p}}  \\
            & \implies \ell_t(x_t) = - L v^Tx_t = - L \lambda_t \geq - \frac{4 L \beta_t}{\mu d^{1-2/p}},
        \end{align*}
        since $\eta_i \leq \eta_{T/2} \leq 2/L$ for all $i \geq T/2$. If $d \geq (4T/\mu)^{p/(p-2)}$, we have for all $t \leq T$
        \begin{align*}
            & \ell_t(x_t) \geq - \frac{4L\beta_t}{\mu d^{1-2/p}} \geq -\frac{L}{2} \\
            \implies & R_T \geq \frac{LT}{2} +  \sum_{t=T/2+1}^T \ell_t(x_t) \geq \frac{LT}{2} - \frac{LT}{4} = \frac{LT}{4}.
        \end{align*}
    \end{itemize}
    If $T$ is not divisible by $4$ and we use $T-1$, $T-2$ or $T-3$, we have $R_T \geq \frac{L(T-3)}{4} \geq \frac{LT}{8}$ for $T \geq 6$, concluding the proof.
\end{proof}

\newpage
\section{Beyond separability}

In this section, we extend our main result on the failure of separable regularisers (\cref{thm:necessity_adaptive_reg}) to non-separable regularisers. We consider the class of regularisers defined as
\begin{align*}
    & \Psi = \bcb{\psi : \cB_p \rightarrow \cR : \text{convex, sign and coordinate invariant, } 0 = \argmin_{x \in \cR} \psi(x), \psi(0) = 0, \psi(1/2 e_1) = 1},
\end{align*}
where a function $f: \mathcal{X} \subset \cR^d \rightarrow \cR$ is 
\begin{itemize}
    \item sign invariant if for any $s \in \{-1,1\}^d$, $f(s \cdot x) = f(x)$ for all $x \in \mathcal{X}$ where $s \cdot x$ denotes coordinate-wise multiplication.
    \item coordinate invariant if for any permutation $s: [d] \rightarrow [d]$, $f( s(x)) = f(x)$ for all $x \in \mathcal{X}$.
\end{itemize}
We also assume that $\psi$ is such that extensions of $x \in \cR^{d_1}$ to higher dimensional spaces $\cR^{d_2}$ ($d_2 > d_1$) by padding the extra coordinates with 0s does not change the value of $\psi(x)$.

This class is general since sign and coordinate invariance are mild natural assumptions for a regulariser (and any regulariser can be scaled to satisfy the other conditions).

We can now state a general result on the failure of fixed regularisation.
\begin{theorem}
    FTRL (\ref{eq:FTRL_def}) with fixed regulariser $\psi_t(x) = \frac{1}{\eta} \psi(x)$ for all $t$ and $\psi \in \Psi$ cannot satisfy both:
    \begin{enumerate}
        \item \label{enumerate:condition1} There exists $c > 0$ such that for all $T, d$, $T > d$, there exists $\eta$ such that $R_T \leq c \sqrt{T d^{1-2/p}}$.
        \item \label{enumerate:condition2} There exists $c' > 0$ such that for all $T, d$, $d > T$, there exists $\eta$ such that $R_T \leq c' T^{1-1/p}$.
    \end{enumerate}
    In other words, it cannot be optimal both in the low and high dimensional settings.
\end{theorem}

\begin{proof}
Let's assume \ref{enumerate:condition1}. is satisfied.

Fix $\varepsilon = \frac{p-2}{4}$ and $\lambda < \frac{3}{4} \left(\varepsilon \log^{\varepsilon} 3\right)^{1/p}$. For $n \geq 4$, define 
\begin{align*}
    w_n = \lambda \cdot \sum_{i=4}^n \frac{1}{(i \log^{1+\varepsilon} i)^{1/p}} e_i.
\end{align*}
We have $w_n \in \cB_p$ for all $n$ since:
\begin{align*}
    \norm{w_d}_p^p = \lambda^p \cdot \sum_{i=4}^d \frac{1}{i \log^{1+\varepsilon} i} \leq \lambda^p \int_{3}^\infty \frac{1}{x \log^{1+\varepsilon} x} dx = \lambda^p \int_{\log 3}^{\infty} u^{-1-\varepsilon} du = \lambda^p \cdot \Bsb{- \frac{1}{\varepsilon} u^{-\varepsilon}}^{\infty}_{\log 3} = \frac{\lambda^p}{\varepsilon \log^{\varepsilon} 3} < 1,
\end{align*}
where we used a $u = \log x$ substitution.

\begin{proposition}\label{prop:dim_dpce_reg}
    Fix $d_0 := \exp \Brb{ (64c^2)^{1/(1/2-1/p)}}$. If for all $T, d$, $T > d > d_0$, there exists $\eta$ such that FTRL with fixed regulariser $\psi_t(x) = \frac{1}{\eta} \psi(x)$ for all $t$ and $\psi \in \Psi$ satisfies $R_T \leq c \sqrt{T d^{1-2/p}}$ against any sequence of $1$-Lipschitz losses (in $\ell_p$ norm), then for all $d \geq d_0$
    \begin{align*}
        \psi(w_d) - \psi(w_{d-1}) > \frac{\lambda^2}{d \log d}.
    \end{align*}
\end{proposition}

Since we have assumed condition \ref{enumerate:condition1}, we have for $d \geq d_0$
\begin{align*}
    \psi(w_d) \geq \psi(w_{d-1}) \geq \sum_{i=d_0}^{d-1} \frac{\lambda^2}{i \log i} \geq \int_{d_0 }^d \frac{\lambda^2}{x \log x} dx = \lambda^2 [\log \log x]^d_{d_0} = \lambda^2 (\log \log d  - \log \log (d_0)).
\end{align*}
This condition allows us to use the following proposition which rules out condition \ref{enumerate:condition2}. being satisfied and concludes the proof.

\begin{proposition}\label{prop:failure_high_dim_gen_reg}
    Let $\psi \in \Psi$ satisfy for $d \geq d_0$ and $\lambda < \frac{3}{4} (\varepsilon \log^{\varepsilon} 3)^{1/p}$, $\psi(w_d) \geq \lambda^2 (\log \log d  - \log \log (d_0))$. Fix $T$. If 
    \begin{align*}
        d \geq \exp\left(\exp\left(\log \log d_0 + \frac{8T}{\left(\varepsilon \log^{\varepsilon} 3\right)^{2/p}} \right)\right), \quad \text{ recall } \varepsilon = \frac{p-2}{4}
    \end{align*}
    there exists a sequence of linear $1$-Lipschitz losses (in $\ell_p$-norm) for which FTRL with fixed regulariser $\psi_t(x) = \frac{1}{\eta} \psi(x)$ for all $t$ and any $\eta$ suffers regret $R_T \geq \frac{1}{8} T$.
\end{proposition}
\end{proof}

\subsection[Proof of Proposition \ref{prop:dim_dpce_reg}]{Proof of Proposition \ref{prop:dim_dpce_reg}}

Suppose not: there exists a $d \geq d_0$ such that $\psi(w_d) - \psi(w_{d-1}) \leq \frac{\lambda^2}{d \log d}$. 

Fix $T > d$ a multiple of $4$ such that
\begin{align*}
    \frac{1}{32 c^2 \cdot \lambda^2} d \log(d)^{2-2(1+\varepsilon)/p} \leq T \leq \frac{1}{16 c^2 \cdot \lambda^2} d \log(d)^{2-2(1+\varepsilon)/p}
\end{align*}
This choice of a multiple of $4$ is guaranteed by $d \geq d_0$, which also guarantees $T > d$.

The statement of the proposition assumes that there exists $\eta$ such that FTRL with fixed regulariser $\psi_t(x) = \frac{1}{\eta} \psi(x)$ for all $t$ and $\psi \in \Psi$ satisfies $R_T \leq c \sqrt{T d^{1-2/p}}$

We have
\begin{itemize}
    \item Firstly, by considering $g_t = -e_1$ for all $t$, we show in \cref{sec:LB_eta_2} that $\eta \geq \frac{1}{2c \sqrt{T d^{1-2/p}}}$.
    \item Define $G = -\frac{1}{\eta} \nabla \psi(w_{d-1})$ and for $t < T/2$, $\ell_t(x) = g_t^T x$ with $g_t = \frac{1}{T/2 - 1} \cdot G$. We have $\norm{g_t}_{p_\star} \leq 1$ and $g_t \in \cB_{p_\star}$ as required since $\norm{G}_{p_\star} \leq T/2 - 1$ (see \cref{sec:ub_norm_G}). We obtain
    \begin{align*}
        x_{T/2} & = \argmin_{x \in \cB_p} \Bcb{\psi(x) + \eta  \sum_{t=1}^{T/2 - 1}  \langle g_t, x \rangle} \\
        & = \argmin_{x \in \cB_p} \bcb{\psi(x) + \eta \langle G, x \rangle} \\
        & = w_{d-1}  \quad \text{ since } G = - \frac{1}{\eta} \nabla \psi(w_{d-1}) \iff \nabla \psi(w_{d-1}) + \eta \cdot G = 0 \text{ and } w_{d-1} \in \text{int } \cB_p.
    \end{align*}
    Note that $G^T e_d = 0$ from the sign and coordinate invariance of $\psi$.
    \item For $t \geq T/2$, set the loss to be $\ell_t(x) = (-1)^{t+1} e_d$ (i.e.\ we alternate between $\pm e_d$). This gives for all $k = 0, 1, ..., T/4 - 1$:
    \begin{align*}
        x_{T/2 + 1 + 2k} & = x_{T/2 + 1} = \argmin_{x \in \cB_p} \bcb{\psi(x) + \eta \cdot \langle G - e_d, x \rangle} \\
        x_{T/2 + 2k} & = x_{T/2} = w_{d-1}.
    \end{align*}
    We want to lower-bound $e_d^T x_{T/2+1}$, the loss suffered in each of these alternating rounds:
    \begin{align*}
        \psi(w_d) + \eta \cdot (G - e_d)^T w_d & = \psi(w_d) + \eta \cdot G^T w_{d-1} - \eta \frac{\lambda}{(d \log^{1+\varepsilon}(d))^{1/p}}  \\
        & \leq \psi(w_{d-1}) + \eta \cdot G^T w_{d-1} + \frac{\lambda^2}{d \log d} - \eta \frac{\lambda}{(d \log^{1+\varepsilon}(d))^{1/p}} \quad \text{ by assumption} \\
        & = \underbrace{\psi(x_{T/2}) + \eta \cdot G^T x_{T/2}}_{\text{ by def of } x_{T/2}, \quad \leq} + \frac{\lambda^2}{d \log d} - \eta \frac{\lambda}{(d \log^{1+\varepsilon}(d))^{1/p}} \\
        & \leq  \overbrace{\psi({x_{T/2+1}}) + \eta \cdot G^T x_{T/2+1}} + \frac{\lambda^2}{d \log d} - \eta \frac{\lambda}{(d \log^{1+\varepsilon}(d))^{1/p}} \\
        & = \underbrace{\psi({x_{T/2+1}}) + \eta \cdot (G - e_d)^T x_{T/2+1}}_{\text{ by def of } x_{T/2+1}, \quad \leq} + \eta \cdot  e_d^T  x_{T/2+1} + \frac{\lambda^2}{d \log d} - \eta \frac{\lambda}{(d \log^{1+\varepsilon}(d))^{1/p}} \\
        & \leq \overbrace{\psi({w_d}) + \eta \cdot (G - e_d)^T w_d} + \eta \cdot  e_d^T  x_{T/2+1} + \frac{\lambda^2}{d \log d} - \eta \frac{\lambda}{(d \log^{1+\varepsilon}(d))^{1/p}} \\
        \implies e_d^T x_{T/2+1} & \geq \frac{\lambda}{(d \log^{1+\varepsilon}(d))^{1/p}} - \frac{1}{\eta} \frac{\lambda^2}{d \log d} \\
        & \geq \frac{\lambda}{(d \log^{1+\varepsilon}(d))^{1/p}} - 2c \sqrt{Td^{1-2/p}} \cdot \frac{\lambda^2}{d \log d}  \quad \text{ by condition on $\eta$}
    \end{align*}
    \item Recall that $T$ satisfies
    \begin{align*}
        \frac{1}{32 c^2 \cdot \lambda^2} d \log(d)^{2-2(1+\varepsilon)/p} \quad \leq \quad  T \quad \leq \quad & \frac{1}{16 c^2 \cdot \lambda^2} d \log(d)^{2-2(1+\varepsilon)/p} \\
        \implies \frac{\lambda}{d^{1/p} \log^{(1+\varepsilon)/p}(d)} - 2c \sqrt{Td^{1-2/p}} \cdot \frac{\lambda^2}{d \log d} & \geq \frac{\lambda}{2 d^{1/p} \log^{(1+\varepsilon)/p}(d)} \\
        \implies R_T \geq \frac{T}{4} \cdot e_d^T x_{T/2+1} & \geq  \frac{T \lambda}{8 d^{1/p} \log^{(1+\varepsilon)/p}(d)}\\
         & \geq \frac{1}{256c^2 \cdot \lambda} d^{1-1/p} \cdot \log(d)^{2-3(1+\varepsilon)/p}.
    \end{align*}
    Now note that $c \sqrt{Td^{1-2/p}} \leq \frac{1}{4\lambda} \cdot d^{1-1/p}\cdot \log (d)^{1-(1+\varepsilon)/p}$ so $R_T \leq c \sqrt{Td^{1-2p}}$ is contradicted if 
    \begin{align*}
        \frac{1}{256 c^2 \cdot \lambda} d^{1-1/p} & \cdot \log(d)^{2-3(1+\varepsilon)/p} > \frac{1}{4\lambda} \cdot d^{1-1/p}\cdot \log (d)^{1-(1+\varepsilon)/p} \\
        & \iff \log(d)^{1-2(1+\varepsilon)/p} > 64 c^2 
        \\
        & \iff \log(d)^{1/2-1/p} > 64 c^2  \quad \text{ since } \varepsilon = \frac{p-2}{4}
        \\
        & \iff d > \exp( (64c^2)^{1/(1/2-1/p)}) = d_0.
    \end{align*}
\end{itemize}

\subsection{Lower bound on eta}\label{sec:LB_eta_2}

Consider the following linear losses $\ell_t(x) =  e_1^T x$.
The regret can be written as 
\begin{align*}
    R_T = T - \sum_{t=1}^T \ell_t(x_t) = T - \sum_{t = 1}^T e_1^T x_t.
\end{align*}

\begin{itemize}
    \item We first show $e_1^T x_{\lceil 2c\sqrt{Td^{1-2/p}} \rceil} \geq \frac{1}{2}$: if not, $e_1^T x_{t} < \frac{1}{2}$ for all $t \leq \lceil 2c\sqrt{Td^{1-2/p}} \rceil$ we have:
    \begin{align*}
        R_T & = T - \sum_{t = 1}^{\lceil 2c\sqrt{Td^{1-2/p}} \rceil} e_1^T x_t + \sum_{t = \lceil 2c\sqrt{Td^{1-2/p}} \rceil + 1}^T e_1^T x_t \\
        & > T - \sum_{t = 1}^{\lceil 2c\sqrt{Td^{1-2/p}} \rceil} \frac{1}{2} + \sum_{t = \lceil 2c\sqrt{Td^{1-2/p}} \rceil + 1}^T 1 \\
        & = T - \frac{1}{2} \lceil 2c\sqrt{Td^{1-2/p}} \rceil - (T - \lceil 2c\sqrt{Td^{1-2/p}} \rceil) \\
        & = \frac{1}{2} \lceil 2c\sqrt{Td^{1-2/p}} \rceil \\
        & > c \sqrt{Td^{1-2/p}},
    \end{align*}
    which contradicts our initial assumption that $R_T \leq c\sqrt{Td^{1-2/p}}$ so we must have $e_1^T x_{\lceil 2c\sqrt{Td^{1-2/p}} \rceil} \geq \frac{1}{2}$. Note that $2c\sqrt{Td^{1-2/p}} < T$ is ensured if $T > 4c^2 d^{1-2/p}$.
    \item Next, we show that ${\eta \geq \frac{\psi(1/2)}{2c \sqrt{Td^{1-2/p}}}}$. By the definition of $x_{\lceil 2c\sqrt{Td^{1-2/p}} \rceil}$:
    \begin{align*}
         & \psi(x_{\lceil 2c\sqrt{Td^{1-2/p}} \rceil}) -\eta \cdot \lceil 2c\sqrt{Td^{1-2/p}} \rceil \cdot e_1^T x_{\lceil 2c\sqrt{Td^{1-2/p}} \rceil} \leq 0 \\
         \implies & \psi(1/2 e_1) \leq \eta \cdot \lceil 2c\sqrt{Td^{1-2/p}} \rceil \\
         \implies & \eta \geq \frac{\psi(1/2 e_1)}{\lceil 2c\sqrt{Td^{1-2/p}} \rceil} \geq \frac{\psi(1/2 e_1)}{2c\sqrt{Td^{1-2/p}}} = \frac{1}{2c\sqrt{Td^{1-2/p}}}.
    \end{align*}
    where in the first implication, we used that $\psi(x_{\lceil 2c\sqrt{Td^{1-2/p}} \rceil}) \geq \psi(1/2 \cdot e_1)$ (any coordinate other than the first being non-zero increases $\psi$ but does not impact the loss) and $e_1^T x_{\lceil 2c\sqrt{Td^{1-2/p}} \rceil} \leq 1$.
\end{itemize}

\subsection[Upper bound on p-star-norm of G]{Upper bound on $\boldsymbol{\norm{G}_{p_\star}}$}\label{sec:ub_norm_G}

For clarity, set $u = \nabla \psi(w_{d-1})$, $q = p_\star$, $T_0 = T/2 - 1$.

\textbf{Claim: $\norm{u}_q \leq \eta T_0$}

\textbf{Proof:} Consider the losses $\ell_t(x) = - \frac{1}{\norm{u}_q} x^T u$ for all $t$. So $x_{t+1} = \argmin_{x \in \cB_p} \bcb{\psi(x) - \frac{\eta t}{\norm{u}_q} \cdot x^T u}$.

\begin{itemize}
    \item We have $\frac{1}{\norm{u}_q} \langle x_{T_0+1}, u \rangle \geq 3/4$: if not:
    \begin{align*}
        R_T = T - \sum_{t=1}^T \ell_t(x_t) \geq T_0 \cdot \left( 1 - \frac{1}{\norm{u}_q} \langle x_{T_0+1}, u \rangle \right) \geq \frac{T_0}{4} = \frac{T}{8} - 1/4,
    \end{align*}
    contradicting $R_T \leq c \sqrt{Td^{1-2/p}}$. We used that $\langle x_t, u \rangle \leq \langle x_{T_0+1}, u \rangle$ for all $t$ since:
    \begin{align*}
        \psi(x_{t+1}) - \frac{\eta t}{\norm{u}_q} \langle x_{t+1}, u \rangle & \leq \psi(x_{T_0+1}) - \frac{\eta t}{\norm{u}_q} \langle x_{T_0+1}, u \rangle, \quad \text{ by def of } x_{t+1} \\
        & \leq \psi(x_{T_0+1}) - \frac{\eta T_0}{\norm{u}_q} \langle x_{T_0+1}, u \rangle + \frac{\eta (T_0-t)}{\norm{u}_q} \langle x_{T_0+1}, u \rangle \\
        & \leq \psi(x_{t+1}) - \frac{\eta T_0}{\norm{u}_q} \langle x_{t+1}, u \rangle + \frac{\eta (T_0-t)}{\norm{u}_q} \langle x_{T_0+1}, u \rangle, \quad \text{ by def of } x_{T_0+1} \\
        \implies & \frac{\eta (T_0-t)}{\norm{u}_q} \langle x_{t+1}, u \rangle  \leq \frac{\eta (T_0-t)}{\norm{u}_q} \langle x_{T_0+1}, u \rangle \\
        \implies & \langle x_{t+1}, u \rangle  \leq \langle x_{T_0+1}, u \rangle.
    \end{align*}
    \item $ w_{d-1}^T \frac{u}{\norm{u}_q}  \leq \norm{w_{d-1}}_p \norm{\frac{u}{\norm{u}_q}}_q \leq \frac{\lambda}{(\varepsilon \log^\varepsilon (3))^{1/p}} < 3/4 \implies ( x_{T_0+1} - w_{d-1})^T u  > 0$.
    \item To conclude, from the definition of $x_{T_0+1}$, we have
    \begin{align*}
        \psi(w_{d-1}) - \frac{\eta T_0}{\norm{u}_q} \langle w_{d-1}, u \rangle & \geq \psi(x_{T_0+1}) - \frac{\eta T_0}{\norm{u}_q} \langle x_{T_0+1}, u \rangle \\
        &\geq \psi(w_{d-1}) + u^T(x_{T_0+1} - w_{d-1}) - \frac{\eta T_0}{\norm{u}_q} \langle x_{T_0+1}, u \rangle \quad \text{ by convexity of $\psi$}  \\
        \implies & \left(\frac{\eta T_0}{\norm{u}_q} - 1\right) (x_{T_0+1} - w_{d-1})^T u  \geq 0,
    \end{align*}
    from which we have $\norm{u}_q \leq \eta T_0$ since $( x_{T_0+1} - w_{d-1})^T u  > 0$.
\end{itemize}

\subsection[Proof of Proposition \ref{prop:failure_high_dim_gen_reg}]{Proof of Proposition \ref{prop:failure_high_dim_gen_reg}}

We consider two cases:

\textbf{Case 1: $\eta > 4$:} Assume that T is divisible by $4$ and define the following losses $\ell_t(x) = (-1)^t \cdot x^Te_1$. When $t$ is odd, $x_t = \argmin_{x\in\cB_p} \psi(x) = 0$. When $t$ is even, 
\begin{align*}
    x_t = \argmin_{x \in \cB_p} \Bcb{\psi(x) - \eta  e_1^Tx} & \implies \psi(x_t) - \eta  x_t^Te_{1} \leq \psi(1/2 e_1) - \frac{\eta }{2} e_1^Te_{1} = 1 - \frac{1}{2}  \eta \\
    & \implies \ell_t(x_t) =  x_t^T e_1 \geq \frac{1}{2} - \frac{1}{\eta} \geq \frac{1}{4} \\
    & \implies R_T = \sum_{t=1}^{T} \ell_t(x_t) \geq \frac{T}{8}
\end{align*}

\textbf{Case 2: $\eta \leq 4$:} Similarly to \cref{sec:ub_norm_G}, we consider $u = \nabla \psi(w_{d})$, $q = p_\star$ and the losses $\ell_t(x) = - \frac{1}{\norm{u}_q} x^T u$ for all $t$. So $x_{t+1} = \argmin_{x \in \cB_p} \bcb{\psi(x) - \frac{\eta t}{\norm{u}_q} \cdot x^T u}$.

\textbf{Case 2.1: $\boldsymbol{\frac{1}{\norm{u}_q} \langle x_{T+1}, u \rangle < 3/4}$:} 
\begin{align*}
    R_T = T - \sum_{t=1}^T \ell_t(x_t) \geq T \cdot \left( 1 - \frac{1}{\norm{u}_q} \langle x_{T+1}, u \rangle \right) \geq \frac{T}{4}.
\end{align*}

\textbf{Case 2.2: $\boldsymbol{\frac{1}{\norm{u}_q} \langle x_{T+1}, u \rangle \geq 3/4}$:} Following the same steps as in \cref{sec:ub_norm_G} we can show that $\norm{u}_q \leq \eta T$. Hence, as in the proof of \cref{prop:dim_dpce_reg}, we can instead set the losses to $\ell_t(x) = g_t^T x$ with $g_t =\frac{-1}{ T\eta} \cdot u  \in \cB_{p_\star}$ and obtain $x_{T+1} = w_d$. We obtain
\begin{align*}
    w_d = \argmin_{x \in \cB_p} \Bcb{\psi(x) - \langle u, x \rangle} & \implies \psi(w_d) - \langle u, w_d \rangle \leq 0 \\
    & \implies \lambda^2 ( \log \log d - \log \log d_0) \leq \langle u, w_d \rangle \leq  \norm{u}_q \norm{w_d}_p \leq \frac{\eta T \lambda}{\left(\varepsilon \log^{\varepsilon} 3\right)^{1/p}} \leq \frac{4 T \lambda}{\left(\varepsilon \log^{\varepsilon} 3\right)^{1/p}} \\
    & \implies \lambda ( \log \log d - \log \log d_0) \leq \frac{4T}{\left(\varepsilon \log^{\varepsilon} 3\right)^{1/p}} \\
    & \implies  \log \log d \leq  \log \log d_0 + \frac{4T}{\lambda \left(\varepsilon \log^{\varepsilon} 3\right)^{1/p}},
\end{align*}
which is a contradiction if 
\begin{align*}
    d > \exp\left(\exp\left(\log \log d_0 + \frac{4T}{\lambda \left(\varepsilon \log^{\varepsilon} 3\right)^{1/p}} \right)\right) = \exp\left(\exp\left(\log \log d_0 + \frac{8T}{\left(\varepsilon \log^{\varepsilon} 3\right)^{2/p}} \right)\right) .
\end{align*}
with $\lambda = \frac{1}{2}\left(\varepsilon \log^{\varepsilon} 3\right)^{1/p} < \frac{3}{4} \left(\varepsilon \log^{\varepsilon} 3\right)^{1/p}$. Hence, we must have Case 2.1 and $R_T \geq \frac{T}{4}$.

\end{document}